\documentclass[runningheads]{llncs}

\usepackage{eccv}
\usepackage{eccvabbrv}
\usepackage{graphicx}
\usepackage{booktabs}

\usepackage[accsupp]{axessibility}  %
\usepackage{hyperref}

\usepackage{orcidlink}

\usepackage[numbers]{natbib}
\usepackage{amssymb}
\usepackage{amsmath}
\usepackage{multirow}
\usepackage{booktabs}
\usepackage{lscape}    %
\usepackage{geometry}  %
\usepackage[table]{xcolor}
\usepackage{xcolor}
\usepackage{xspace}
\usepackage{tablefootnote}
\definecolor{matplotlibblue}{RGB}{20, 90, 140}

\newtheorem{assumption}{Assumption}
\newtheorem{observation}{Observation}

\newcommand{\approach}{{\texttt{DAVIS}}}

\begin{document}

\title{$\approach$: OOD Detection via Dominant Activations and Variance for Increased Separation} 

\titlerunning{OOD Detection via $\approach$}

\author{Abid Hassan \orcidlink{0000-1111-2222-3333} \and
Tuan Ngo\orcidlink{1111-2222-3333-4444} \and
Saad Shafiq\orcidlink{2222--3333-4444-5555} \and
Nenad Medvidovic\orcidlink{2222--3333-4444-5555}}

\authorrunning{A.~Hassan et al.}

\institute{University of Southern California, Los Angeles CA \\
\email{\{mdskabid, tkngo, sshafiq, neno\}@usc.edu}}

\maketitle

\begin{abstract}

    Detecting out-of-distribution (OOD) inputs is a critical safeguard for deploying ML models in real-world settings. Most post-hoc detection methods operate on penultimate feature representations derived from global average pooling (GAP), a lossy operation that discards valuable distributional statistics from activation maps prior to pooling. We argue that these overlooked statistics, particularly channel-wise variance and dominant (maximum) activations, are highly discriminative for OOD detection. We introduce $\approach$, a simple and broadly applicable post-hoc technique that enriches feature representations by incorporating these informative statistics, directly addressing the information loss induced by GAP. The proposed method achieves strong performance across diverse architectures and models, significantly reducing the false positive rate (FPR95). Our analysis further reveals the underlying mechanism driving these improvements, providing a principled foundation for moving beyond GAP-based representations in OOD detection.  Our code is available at: https://github.com/epsilon-2007/DAVIS

    \keywords{Out-of-distribution detection \and Deep Neural Network}

\end{abstract}

\section{Introduction}
\label{sec: introduction}

    \looseness-1
    Safe deployment of ML models in the open world hinges on a critical capability: recognizing and handling inputs that fall outside their training distribution. When faced with such out-of-distribution (OOD) data, which consist of samples from novel contexts or unknown classes, a robust model should signal its uncertainty rather than making a confident and likely incorrect prediction~\cite{msp, msp_failed_01, msp_failed_02}. The ability to reliably detect OOD inputs is thus paramount for safety-critical systems, from medical diagnosis~\cite{medical_diagnosis_01, medical_diagnosis_02} to autonomous driving~\cite{autonomous}.

    \looseness-1
    To address this challenge, numerous studies have explored approaches for detecting OOD samples in deep learning. While some methods modify the model's training objective~\cite{training_time_10, cider, ssn}, a particularly prominent line of work focuses on \textit{post-hoc} methods, which do not require costly retraining. In the post-hoc setting, the majority of prior work has concentrated on estimating OOD uncertainty from the activation space of a neural network, for example by leveraging model outputs~\cite{msp, deep_ensembles, odin, energy, ReAct, DICE, ASH, SCALE}, feature representations~\cite{knn, maha_distance}, or gradient information~\cite{GradNorm}. Despite their differences, these approaches share a fundamental and largely unexamined limitation. They operate on feature vectors produced by \textit{global average pooling (GAP)}. Although effective for classification, GAP is inherently a lossy operation because it summarizes each channel's spatial activation map, which contains a rich distribution of spatial responses, into a single scalar value. In doing so, it permanently discards potentially informative cues about the spread and peak intensity of the activation distribution, which we argue are powerful signals for identifying anomalies.

    \looseness-1

        \begin{figure*}[t]
        \centering
        \includegraphics[width=0.675\textwidth]{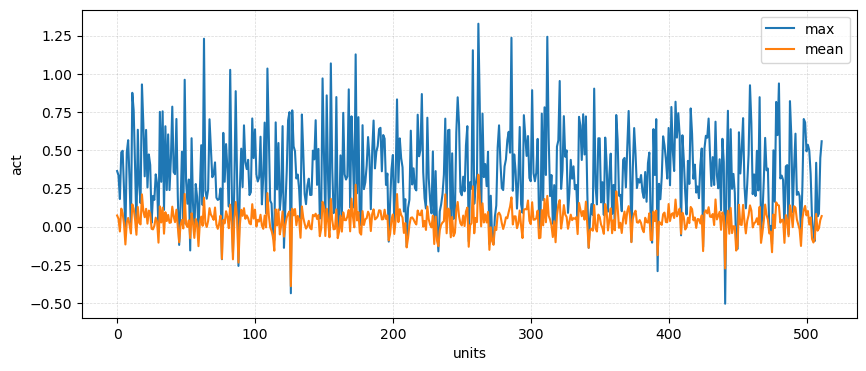}
        \vspace{-3mm}
        \caption{Dominant activations provide a stronger OOD signal than mean activations. The plot shows the average activation gap between ID (CIFAR-10) and OOD (Texture) samples for each penultimate-layer unit of a pre-trained ResNet-18. The gap from dominant (maximum) activations is consistently larger than that from mean activations.}
        \label{fig:raw_delta_plot}
        \vspace{-3mm}
    \end{figure*}
    
    \begin{figure*}[!ht]
        \centering
        \includegraphics[width=0.675\textwidth]{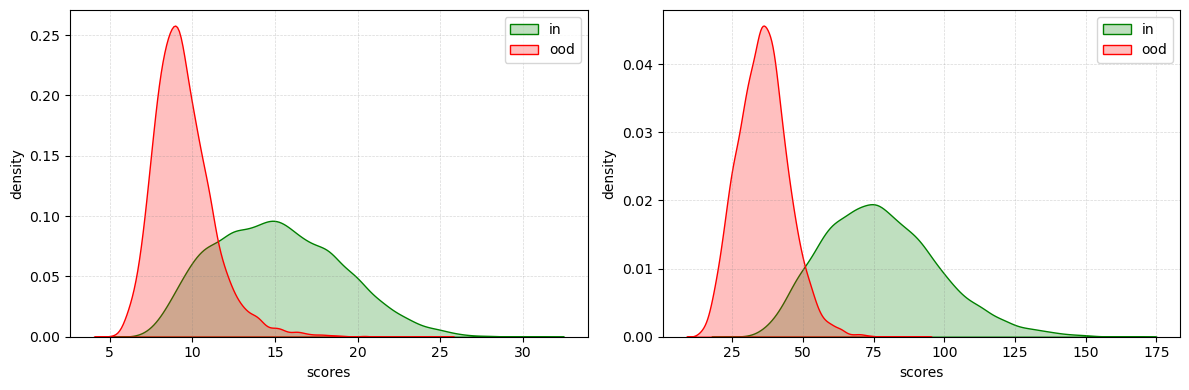}
        \vspace{-3mm}
        \caption{Using dominant activations improves OOD score separation using ResNet-18. \textit{Left:} Energy scores based on mean activations show substantial overlap between ID (CIFAR-10) and OOD (Texture) samples, resulting in poor separability. \textit{Right:} Using dominant activations shifts OOD scores away from ID scores, improving separation.}
        \label{fig:density_delta_plot}
        \vspace{-3mm}
    \end{figure*}

    \looseness-1
    Our work is motivated by the observation that pre-pooled activation distributions contain highly discriminative OOD signals that are suppressed by GAP. As shown in Figures~\ref{fig:raw_delta_plot} and~\ref{fig:density_delta_plot}, statistics such as dominant activation amplify ID-OOD separation, leading to more reliable OOD scoring. We reveal a structural mismatch between classification-oriented feature learning and the requirements of OOD detection: while GAP promotes spatial invariance beneficial for classification, it suppresses complementary activation statistics that encode distributional uncertainty. Based on this insight, we propose $\approach$ (\textit{Dominant Activations and Variance for Increased Separation}), a simple post-hoc plug-and-play method that enriches the penultimate feature representation by incorporating channel-wise maximum and variance statistics before downstream scoring. $\approach$ complements existing baselines by providing a more expressive feature representation. In summary, our contributions are as follows:

\begin{enumerate}
\vspace{-2mm}
    \item We identify information loss from global average pooling as a key yet overlooked weakness in post-hoc OOD detection. To address this, we propose $\approach$, a simple plug-and-play module that enriches feature representations by incorporating complementary pre-pooling statistics. 
    
    \item We evaluate $\approach$ on widely used benchmarks across diverse architectures, including Swin-B, ConvNeXt-B, ResNet-50, EfficientNet-B0, DenseNet-121, and MobileNet-V2. Compared to the previous best method on ResNet-50, ConvNeXt-B, and Swin-B,  $\approach$ achieves respective FPR95 reductions of 7.96\%,  16.53\%, and 25.09\% on a large-scale ImageNet benchmark.

    \item Additionally, we evaluate $\approach$ on a near-OOD benchmark using Swin-B, ConvNeXt-B, and ResNet-50, where it attains FPR95 reductions of 25.52\%, 12.01\%, and a modest 2.56\%, respectively. 
    
    \item We provide a statistical analysis, grounded in empirical evidence, that reveals the underlying mechanism of $\approach$ (Appendix~\ref{appendix: analysis}). This provides a principled justification for moving beyond the mean and leveraging richer distributional statistics for robust OOD detection.
\end{enumerate}

\section{Related Work}
\label{sec: related work}
 
\looseness-1 Existing work in OOD detection can be broadly grouped into three  paradigms: post-hoc methods that operate on pre-trained classifiers, training-time regularization techniques that modify the learning objective, and generative models that learn the ID density. Our work, $\approach$, belongs to the post-hoc paradigm. %
In this section, we review each of the three paradigms. 

\looseness-1
\textbf{Post-Hoc Methods.} Early OOD detection research primarily focused on designing scoring functions in the logit space, such as MSP~\cite{msp}, MaxLogit~\cite{max_logit}, and ODIN~\cite{odin}. Although initially effective, these methods are prone to overconfidence~\cite{msp_failed_01, msp_failed_02, msp}, motivating the development of more robust alternatives such as the Energy score~\cite{energy}. Beyond the logit space, approaches including GradNorm~\cite{GradNorm} and GradOrth~\cite{gradorth} leverage gradient information for uncertainty estimation. Other methods explore the activation space, employing distance-based techniques such as Mahalanobis distance~\cite{maha_distance} and kNN~\cite{knn, ssn}, while approaches like ViM~\cite{ViM,CombOOD} combine information from both logit and feature spaces.

More recently, a line of work has focused on directly modifying the penultimate-layer feature representation to enhance separability. Methods such as ReAct~\cite{ReAct}, DICE~\cite{DICE}, ASH~\cite{ASH}, and SCALE~\cite{SCALE} improve detection by rectifying, sparsifying, or pruning activations. Similarly, BATS~\cite{BATS}, LAPS~\cite{laps}, CORES~\cite{CORES}, NN-Guide~\cite{NNGuide}, and CADRef~\cite{CADRef} exploit class-aware feature distributions and manifold geometry.  More recent methods, including fDBD~\cite{fdbd} and NCI~\cite{NCI}, focus on improving computational efficiency.

However, all these methods operate on feature vectors produced by GAP, which discards potentially discriminative spatial statistics. In contrast, our work addresses this limitation at its source by enriching the feature representation before any downstream scoring is applied. Unlike prior rectification-based methods (e.g., BATS, LAPS), DAVIS enriches rather than constrains activation statistics, directly targeting the information loss induced by GAP.

\looseness-1
\textbf{Training-Time Regularization.} This line of research focuses on improving OOD detection by modifying the model's training objective~\cite{cider, ssn}. The resulting methods often incorporate an auxiliary dataset of outliers~\citep{OECC} or introduce regularization terms that encourage the model to produce less confident predictions on OOD samples~\cite{training_time_00, training_time_03, training_time_05, density_2022}. While effective, these approaches require a more complex training process and access to relevant outlier data, which may not always be available.

\textbf{Generative Models.} An alternative paradigm for OOD detection uses generative models to estimate the density of the ID data~\cite{density_2017,density_ood_2019_01,density_ood_2020_00,density_ood_2020_01,density_ood_2020_04,ssn,fdbd}. The intuition is that OOD samples will lie in low-density regions of the learned data manifold. However, it has been shown that deep generative models can assign high likelihoods to structurally simple OOD samples~\cite{density_ood_failed}, making them less reliable than discriminative approaches for OOD detection. $\approach$ builds on the strong empirical performance and reliability of discriminative classifiers.
\section{Method}
\label{sec: methods}

    \looseness-1 This section formally presents $\approach$. Although GAP is effective for classification, it is suboptimal for OOD detection because it typically discards cues such as peak intensity and variance that carry discriminative information between ID and OOD samples. By incorporating these properties, which are lost through simple global averaging, $\approach$ produces a more discriminative feature representation for separating ID from OOD samples, as shown in Figure~\ref{fig:density_delta_plot}.

\subsection{Dominant Activations and Variance for Increased Separation}

    \looseness-1 
    $\approach$ is a post-hoc module designed to counteract the information loss from GAP. We create a more discriminative feature representation by extracting richer statistics from the pre-pooling activation maps, before they are averaged. To formalize this, we consider a standard supervised classification setting. Let $\mathcal{X}$ denote the input space and $\mathcal{Y} = \{1, 2, \cdots, C\}$ the output label space for neural network $\theta$ trained on dataset $\mathcal{D}_{\text{in}}$. 

    For an input image $\mathbf{x}$, the network produces a set of $n$ spatial activation maps $g(\mathbf{x}) \in \mathbb{R}^{n \times k \times k}$. The conventional penultimate-layer feature vector, $h(\mathbf{x}) \in \mathbb{R}^n$, is obtained via GAP, i.e., $h(\mathbf{x}) = \texttt{Avg}(g(\mathbf{x}))$. $\approach$ replaces or augments this vector by computing more descriptive statistics from each of the $n$ channels. For each channel, in addition to the GAP-based mean $\mu(\mathbf{x})$ (equivalent to $h(\mathbf{x})$), we compute the maximum (dominant) activation $m(\mathbf{x})$ and the standard deviation $\sigma(\mathbf{x})$. We propose two formulations of the $\approach$ feature vector, $h^{\approach}(\mathbf{x})$:\footnote{We use ``$\approach$'' to denote both variants; the context clarifies any differences.}
    
    \vspace{-1mm}
    \begin{enumerate}
        \item $\approach(\mu,\sigma)$  augments the mean activation with its corresponding channel-wise standard deviation $\sigma(\mathbf{x})$, scaled by a hyperparameter $\gamma$\ as shown in Equation~\ref{eq: davis_mu_sigma}. We discuss a detailed hyperparameter selection process in Section~\ref{subsec: hyper-parameter selection} and Appendix~\ref{appendix: reproducibility}.
            \begin{small}
                \vspace{-1mm} 
                \begin{equation}
                        h^{\approach(\mu,\sigma)}(\mathbf{x}) = \mu(\mathbf{x}) + \gamma\sigma(\mathbf{x})
                        \label{eq: davis_mu_sigma}
                \end{equation}
            \end{small}
            
        \item $\approach(m)$ replaces the feature vector $h(\mathbf{x})$ by \textit{dominant activation} $m(\mathbf{x})$ as follows:
        \begin{small}
        \vspace{-1mm}
            \begin{equation}
                h^{\approach(m)}(\mathbf{x}) = m(\mathbf{x})
                \label{eq: davis_m}
            \end{equation}
        \end{small}
        \end{enumerate}
    The procedure described above  modifies the activation level of the feature vector and aims to increase the separation between ID and OOD samples.

\subsection{OOD Scoring with $h^{\approach}(\mathbf{x})$ } %

    \looseness-1 Contrary to the use of GAP feature vector, we compute OOD scores using an enriched feature representation, $h^{\approach}(\mathbf{x})$. This enriched vector preserves complementary information such as peak intensity and variance, which are otherwise discarded by GAP. Importantly, $\approach$ is model-agnostic and can be seamlessly integrated with existing downstream scoring functions that directly or indirectly uses the penultimate layer to derive the scores, including logit-based~\cite{msp}, energy-based~\cite{energy}, gradient-based~\cite{GradNorm}, and KNN~\cite{knn} distance methods. By replacing the conventional GAP feature with our enriched representation, we enhance score separability between ID and OOD samples without requiring retraining.

    For instance, $h^{\approach}(\mathbf{x})$ is passed through the original fully connected layer (with weights $\mathbf{W}$ and bias $\mathbf{b}$) to produce modified logits (Eq.~\ref{eq: logits}), which are used to compute logit-based \texttt{MSP}~\cite{msp}, energy-based \texttt{Energy}~\cite{energy}, and gradient-based \texttt{GradNorm}~\cite{GradNorm}. Specifically, \texttt{GradNorm} defines the OOD score as the L1 norm of gradients of the KL divergence between the model output and a uniform distribution. In case of distance-based \texttt{KNN}~\cite{knn} method, $h^{\approach}(\mathbf{x})$ is directly used to compute the OOD score as the Euclidean distance to the $k$ nearest ID features. Equation~\ref{eq: scoring_energy} illustrates the integration of $\approach$ with the energy-based scoring function, while other scoring formulations are detailed in Appendix~\ref{appendix: baseline methods}. Additionally, $\approach$ is compatible with existing techniques such as \texttt{ODIN}~\cite{odin}, \texttt{ReAct}~\cite{ReAct}, \texttt{DICE}~\cite{DICE}, \texttt{ASH}~\cite{ASH}, and \texttt{SCALE}~\cite{SCALE}. Brief summaries of these methods are provided in Appendix~\ref{appendix: baseline methods}.

    \begin{small}
        \begin{equation}
            f^{\approach}(\mathbf{x}) = \mathbf{W}^\top h^{\approach}(\mathbf{x}) + \mathbf{b}
            \label{eq: logits}
        \end{equation}
    \end{small}
    \vspace{-4mm}
    \begin{small}
        \begin{equation}
             S_{\texttt{Energy}}(\mathbf{x}; \theta) = - \mathbf{E}_{\theta}(\mathbf{x}) = \log{ \left( \sum_{j=1}^{C} \exp^{( f_j^{\approach}(\mathbf{x}; \theta) )} \right) }
            \label{eq: scoring_energy}
        \end{equation}
    \end{small}
    \vspace{-4mm}

\subsection{OOD Detection with {\approach}}

    The goal of OOD detection is to learn a decision boundary $G_{\lambda}(\mathbf{x} ; \theta)$  that classifies a test sample $\mathbf{x} \in \mathcal{X}$: %
    \vspace{-2mm}
    
        \begin{equation}
                G_{\lambda}(\mathbf{x} ; \theta) =
                \begin{cases}
                \mathrm{ID} & \text{if } \mathbf{x} \sim \mathcal{D}_{\text{in}} \\
                \mathrm{OOD} & \text{if } \mathbf{x} \sim \mathcal{D}_{\text{out}}
                \end{cases} \\
                =
                \begin{cases}
                    \mathrm{ID}  & \text{if } S(\mathbf{x}; \theta) \ge \lambda \\
                    \mathrm{OOD} & \text{if } S(\mathbf{x}; \theta) < \lambda
                \end{cases}
            \label{eq:decision_boundary}
        \end{equation} 
    where $S(\mathbf{x}; \theta)$ is a downstream OOD scoring function, and by convention~\cite{energy} $\lambda$ is a threshold calibrated such that 95\% of ID data ($\mathcal{D}_{\text{in}}$) is correctly classified. 

    Since $\approach$ complements existing  techniques mentioned above, we evaluate $\approach$ in conjunction with these approaches in Section~\ref{sec: ablation studies}. Subsequently, we formally characterize and explain why $\approach$ improves the separability of the scores between ID and OOD data as part of detailed study in Appendix~\ref{appendix: analysis}.

\section{Empirical Evaluation of \approach} 
\label{sec: experiments}

    We conduct a comprehensive empirical evaluation to validate the efficacy of $\approach$, testing its performance on standard benchmarks, its scalability to large-scale datasets, its robustness across diverse architectures, and its behavior in challenging near-OOD scenarios. We follow the experiment setting provided in OpenOOD benchmark~\cite{openood} and explain the experimental setup in section~\ref{subsec: experimental setup}. These empirical studies demonstrate the superior performance of $\approach$ over existing primary baselines that are reported in section~\ref{subsec: Results and Discussion} and ~\ref{subsec: near ood}.

\subsection{Experimental Setup}
\label{subsec: experimental setup}

    \textbf{Primary baselines.} 
    In our evaluation, we reproduced a representative selection of logit-, energy-, gradient-, and distance-based methods, including \texttt{MSP}, \texttt{ODIN}, \texttt{Energy}, \texttt{GradNorm}, \texttt{kNN}, \texttt{ReAct}, \texttt{DICE}, \texttt{ASH}, and \texttt{SCALE}.

    \noindent
    \textbf{Architectures.} To comprehensively evaluate $\approach$, we employ a diverse set of pre-trained backbones. For ImageNet, we use Swin-B~\cite{swin_transformer}, ConvNeXt-B~\cite{ConvNext}, EfficientNet-B0~\cite{EfficientNet}, ResNet-50~\cite{ResNet}, DenseNet-121~\cite{DenseNet}, and MobileNet-V2~\cite{MobileNet}. For CIFAR benchmarks, we use DenseNet-101, ResNet-18, ResNet-34, Wide-ResNet-28-10~\cite{wide-resnet}, and MobileNet-V2. For fair comparison, all methods share the same pre-trained backbone and are evaluated without auxiliary outlier exposure.

    Since several architectures considered here were not included in the original publications of primary baselines, we re-evaluated these methods under our experimental setup. In each case, we strictly followed the official hyperparameter selection protocols and publicly available implementations to maintain the integrity of the comparison.

    \noindent
    \textbf{Datasets.}  We follow the standardized OpenOOD benchmark~\cite{openood} and adopt its datasets, in addition to those used in seminal works~\cite{msp, MOS, energy, knn, GradNorm}. For the ImageNet benchmark, we evaluate on five commonly used far-OOD datasets: iNaturalist~\cite{iNaturalist}, SUN~\cite{sun}, Places~\cite{places365}, Textures~\cite{texture}, OpenImage-O~\cite{openood} and three near-OOD datasets: SSB-Hard~\cite{ssb-hard}, NINCO~\cite{NINCO}, ImageNet-O~\cite{max_logit}. 
    
    For CIFAR benchmarks, we use six standard far-OOD datasets: Textures~\cite{texture}, SVHN~\cite{svhn}, Places~\cite{places365}, LSUN-Crop~\cite{lsun}, LSUN-Resize~\cite{lsun}, iSUN~\cite{isun}. For near-OOD evaluation we consider CIFAR-100~\cite{cifar_dataset} and Tiny-ImageNet. 

    Importantly, our evaluation adheres to the OOD-free assumption and does not rely on a held-out OOD validation set for hyperparameter tuning. By avoiding access to OOD validation set and incorporating challenging datasets, we provide a rigorous and realistic assessment of $\approach$. 

    \noindent
    \textbf{Evaluation metrics.} We assess the effectiveness of $\approach$ by utilizing threshold-free metrics that are commonly used for evaluating OOD detection, as standardized in ~\cite{msp}. These metric includes (i) AUROC (Area Under the Receiver Operating Characteristic curve); represents the probability that a random ID sample is assigned a higher score than a random OOD sample. A value of 1.0 is perfect, while 0.5 is random (a higher value is better, $\uparrow$). (ii) FPR95 (false positive rate); is the percentage of OOD samples incorrectly classified as ID when the threshold $\lambda$ is set to correctly classify 95\% of ID samples. A lower value is better ($\downarrow$)~\cite{energy}.

\subsection{Results and Discussion}
\label{subsec: Results and Discussion}

    \begin{table}[!ht]
        \centering
        \caption{OOD detection performance on five benchmark datasets. $\boldsymbol{\downarrow}$/$\boldsymbol{\uparrow}$ denotes lower/higher is better. All values are percentages, with the best and second-best results being \textbf{highlighted} and \underline{underlined}, respectively. Results $^*$ are taken from CADRef~\cite{CADRef}}
        \resizebox{\textwidth}{!}{
        \begin{tabular}{l l cc cc cc cc cc cc}
        \toprule
        \multirow{2}{*}{\textbf{Model}} & \multirow{2}{*}{\textbf{Method}} & \multicolumn{2}{c}{\textbf{iNaturalist}} & \multicolumn{2}{c}{\textbf{SUN}} & \multicolumn{2}{c}{\textbf{Places}} & \multicolumn{2}{c}{\textbf{Textures}} & \multicolumn{2}{c}{\textbf{OpenImage-O}} & \multicolumn{2}{c}{\textbf{Average}} \\
        \cmidrule(lr){3-4} \cmidrule(lr){5-6} \cmidrule(lr){7-8} \cmidrule(lr){9-10} \cmidrule(lr){11-12} \cmidrule(lr){13-14}
        && \textbf{FPR95} $\downarrow$ & \textbf{AUROC} $\uparrow$ & \textbf{FPR95} $\downarrow$ & \textbf{AUROC} $\uparrow$ & \textbf{FPR95} $\downarrow$ & \textbf{AUROC} $\uparrow$ & \textbf{FPR95} $\downarrow$ & \textbf{AUROC} $\uparrow$ & \textbf{FPR95} $\downarrow$ & \textbf{AUROC} $\uparrow$  & \textbf{FPR95} $\downarrow$ & \textbf{AUROC} $\uparrow$  \\
        \midrule
        \multirow{17}{*}{\rotatebox{90}{\textbf{Swin-B}}}
                    &MSP                & 48.29 & 87.80 & 66.44 & 79.78 & 67.72 & 80.13 & 64.54 & 78.73 & 59.33 & 82.74 & 61.26 & 81.84 \\
                    &MaxLogit$^*$       & 55.48 & 79.86 & 67.86 & 72.23 & 69.54 & 71.97 & 61.93 & 73.98 & 65.52 & 71.08 & 64.07 & 73.82 \\
                    &ODIN               & 83.03 & 49.31 & 88.14 & 43.79 & 89.48 & 42.23 & 78.94 & 54.79 & 88.18 & 42.02 & 85.55 & 46.43 \\
                    &Energy             & 75.73 & 67.70 & 84.07 & 58.30 & 81.84 & 59.67 & 72.02 & 66.46 & 78.72 & 60.14 & 78.48 & 62.45 \\
                    &GEN$^*$            & \underline{32.94} & 92.69 & \textbf{56.61} & 85.02 & \textbf{59.95} & \underline{84.06} & 49.93 & 85.61 & 48.66 & 87.18 & \underline{49.62} & 86.91 \\
                    &ReAct              & 49.74 & 90.74 & 67.49 & 81.94 & 65.68 & 82.04 & 57.45 & 84.79 & 56.65 & 87.75 & 59.40 & 85.45 \\
                    &DICE               & 97.95 & 17.93 & 91.63 & 35.01 & 96.45 & 25.15 & 66.01 & 67.84 & 92.73 & 30.44 & 88.95 & 35.27 \\
                    &ViM$^*$            & 44.56 & 93.60 & 71.10 & 80.12 & 71.82 & 77.96 & 64.64 & 83.96 & 46.36 & 92.10 & 59.70 & 85.55 \\
                    &ASH-S              & 99.81 & 10.69 & 99.36 & 20.18 & 99.59 & 21.37 & 98.65 & 18.41 & 99.84 & 11.94 & 99.45 & 16.52 \\
                    &SCALE              & 98.93 & 24.86 & 99.07 & 26.90 & 97.56 & 27.93 & 94.54 & 38.08 & 97.76 & 24.95 & 97.57 & 28.54 \\
                    &GradNorm           & 78.22 & 86.25 & 78.53 & 82.27 & 78.15 & 79.97 & 77.87 & 76.93 & 65.83 & 85.78 & 75.72 & 82.24 \\
                    &OptFS$^*$          & 54.98 & 90.71 & 67.93 & 84.86 & 68.63 & 83.94 & 61.68 & 85.10 & 51.19 & 90.34 & 60.88 & 86.99 \\
                    &CARef$^*$          & 39.78 & 93.57 & 66.72 & 84.83 & 68.68 & 83.25 & 49.86 & 88.74 & \underline{43.52} & \underline{92.55} & 53.71 & 88.59 \\
                    &CADRef$^*$         & 37.87 & \underline{93.77} & 64.23 & \underline{85.12} & 66.71 & 83.57 & 47.38 & 89.08 & 46.63 & 92.32 & 52.56 & \underline{88.77} \\
                    &KNN                & 68.01 & 91.08 & 82.36 & 83.37 & 80.97 & 81.97 & 54.54 & 87.82 & 59.82 & 90.54 & 69.14 & 86.96 \\
        \cmidrule(lr){2-14}
        \rowcolor{gray!20} & $\approach({\mu,\sigma}) + \texttt{KNN}$ & \textbf{16.01} & \textbf{96.42} & \underline{59.23}	& \textbf{87.34}	& \underline{66.20}	& \textbf{85.08} &\textbf{17.71} & \textbf{94.42} & \textbf{26.71}	& \textbf{94.69} & \textbf{37.17} & \textbf{91.59}  \\
        \rowcolor{gray!20} & $\approach(m) + \texttt{KNN}$           & 61.66 & 90.27 & 85.13 & 80.57 & 87.87 & 78.12 & \underline{36.45} & \underline{90.71} & 62.22 & 87.75 & 66.67 & 85.48  \\
        \midrule
        \multirow{17}{*}{\rotatebox{90}{\textbf{ConvNeXt-B}}}
        & MSP               & 74.28 & 77.89 & 77.96 & 74.85 & 78.22 & 74.88 & 82.34 & 68.94 & 77.84 & 74.67 & 78.13 & 74.25 \\
        & MaxLogit$^*$      & 57.91 & 83.32 & 72.02 & 69.83 & 74.92 & 68.44 & 70.88 & 65.72 & 68.10 & 74.08 & 68.77 & 72.28 \\
        & ODIN              & 68.86 & 69.62 & 80.37 & 53.65 & 85.86 & 49.67 & 71.44 & 66.14 & 81.22 & 55.74 & 77.55 & 58.96 \\
        & Energy            & 30.71 & 94.09 & \underline{50.57} & \underline{89.24} & 51.92 & 88.89 & 70.12 & 75.63 & 48.21 & 88.82 & 50.31 & 87.33 \\
        & GEN$^*$           & \underline{25.69} & \underline{94.86} & 52.70 & 84.78 & 59.18 & 83.04 & 55.21 & 80.10 & 46.11 & 89.37 & 47.78 & 86.43 \\
        & ReAct             & 33.96 & 93.56 & 52.89 & 88.94 & 54.72 & 88.37 & 68.14 & 77.30 & 48.32 & 89.35 & 51.60 & 87.50 \\
        & DICE              & 45.76 & 90.63 & 60.65 & 85.04 & 64.94 & 83.37 & 67.66 & 75.52 & 54.89 & 86.50 & 58.78 & 84.21 \\
        & ViM$^*$           & 40.84 & 93.63 & 59.57 & 85.00 & 61.76 & 82.17 & 52.41 & 86.79 & 45.33 & 91.36 & 52.38 & 87.79 \\
        & ASH-S             & 36.88 & 91.60 & 53.01 & 87.51 & \underline{48.51} & \underline{88.33} & 79.11 & 76.68 & 54.72 & 85.66 & 54.45 & 85.96 \\
        & SCALE             & 30.34 & 93.80 & \textbf{48.71} & \textbf{89.57} & \textbf{45.64} & \textbf{90.01} & 72.54 & 78.46 & 47.65 & 88.86 & 48.97 & 88.14 \\
        & GradNorm          & 91.10 & 53.95 & 95.04 & 41.58 & 94.01 & 45.25 & 98.40 & 19.87 & 95.60 & 37.07 & 94.83 & 39.55 \\
        & OptFS$^*$         & 46.02 & 92.20 & 61.69 & 85.96 & 64.02 & 85.10 & 54.32 & 86.11 & 49.68 & 91.02 & 55.15 & 88.08 \\
        & CARef$^*$         & 29.79 & 94.81 & 58.04 & 87.37 & 63.09 & 85.40 & \underline{45.09} & \underline{90.04} & 38.82 & \textbf{93.39} & \underline{46.97} & \underline{90.20} \\
        & CADRef$^*$        & 33.08 & 94.50 & 60.16 & 86.84 & 65.09 & 84.92 & 49.17 & 89.27 & 44.85 & 92.85 & 50.47 & 89.68 \\
        & KNN               & 69.26 & 89.48 & 71.10 & 85.34 & 71.67 & 83.92 & 64.66 & 85.76 & 59.38 & 89.31 & 67.21 & 86.76  \\
        \cmidrule(lr){2-14}
        \rowcolor{gray!20} & $\approach({\mu,\sigma}) + \texttt{KNN}$ & \textbf{25.89} & \textbf{94.96} & 51.49 & 87.70 & 56.99 & 85.61 & \textbf{31.05} & \textbf{90.71} & \textbf{31.73} & \underline{92.97} & \textbf{39.43} & \textbf{90.23}  \\
        \rowcolor{gray!20} & $\approach(m) + \texttt{KNN}$            & 76.34 & 87.98 & 85.88 & 81.19 & 86.09 & 79.69 & 72.16 & 83.64 & 82.88 & 83.59 & 80.67 & 83.22  \\
        \midrule
        \multirow{11}{*}{\rotatebox{90}{\textbf{ResNet-50}}} 
                                      & MSP             & 52.83 & 88.39 & 69.11 & 81.64 & 72.06 & 80.54 & 66.26 & 80.43 & 66.97 & 83.89 & 65.45 & 82.98 \\
                                      & MaxLogit$^*$    & 50.77 & 91.14 & 60.39 & 86.43 & 66.03 & 84.03 & 54.91 & 86.38 & 57.89 & 89.13 & 58.00 & 87.42 \\
                                      & ODIN            & 41.82 & 92.25 & 57.11 & 86.77 & 64.69 & 84.12 & 47.30 & 87.82 & 59.15 & 87.54 & 54.01 & 87.70 \\
                                      & Energy          & 53.74 & 90.62 & 58.82 & 86.58 & 65.99 & 83.96 & 52.43 & 86.72 & 64.70 & 87.08 & 59.14 & 86.99 \\
                                      & GEN$^*$         & 45.76 & 92.44 & 65.54 & 85.52 & 69.24 & 83.46 & 59.24 & 85.41 & 60.44 & 89.31 & 60.04 & 87.23 \\
                                      & ReAct           & 19.56 & 96.40 & 23.95 & 94.46 & 33.48 & 91.97 & 46.40 & 90.31 & 49.78 & 89.06 & 34.64 & 92.44 \\
                                      & DICE            & 26.61 & 94.51 & 36.49 & 90.92 & 47.93 & 87.65 & 32.59 & 90.45 & 54.67 & 85.67 & 39.66 & 89.84 \\
                                      & ViM$^*$         & 71.80 & 87.42 & 81.80 & 81.07 & 83.12 & 78.39 & 14.84 & 96.83 & 58.68 & 89.30 & 62.05 & 86.60 \\
                                      & ASH-S           & 11.41 & 97.88 & 28.00 & 94.04 & 39.67 & 91.03 & 11.88 & 97.62 & 38.70 & 90.79 & 25.93 & 94.27 \\
                                      & SCALE           & \underline{10.37} & \underline{98.02} & \underline{25.78} & \underline{94.54} & \underline{36.86} & \textbf{91.96} & 14.56 & 96.75 & 36.23 & 92.30 & \underline{24.76} & \underline{94.71} \\
                                      & GradNorm        & 26.78 & 93.90 & 37.42 & 90.10 & 48.88 & 86.08 & 32.84 & 90.64 & 57.76 & 80.44 & 40.74 & 88.23 \\
                                      & OptFS$^*$       & 16.79 & 96.88 & 35.31 & 93.13 & 44.78 & 90.42 & 23.08 & 95.74 & 37.68 & 92.77 & 31.53 & 93.79 \\
                                      & CARef$^*$       & 17.46 & 96.54 & 44.89 & 89.51 & 57.64 & 85.41 & \underline{10.15} & \underline{97.94} & 37.73 & 92.57 & 33.57 & 92.39 \\
                                      & CADRef$^*$      & 16.08 & 96.90 & 39.23 & 91.26 & 51.12 & 87.80 & 12.60 & 97.14 & \textbf{32.69} & \textbf{93.93} & 30.34 & 93.41 \\
                                      & KNN             & 78.33 & 79.15 & 78.95 & 77.44 & 81.86 & 73.91 & 16.05 & 96.11 & 65.73 & 82.27 & 64.18 & 81.78 \\
    
        \cmidrule(lr){2-14}
        \rowcolor{gray!20}            & $\approach({\mu,\sigma}) + \texttt{SCALE}$ &  \textbf{9.61} & \textbf{98.10} & \textbf{24.49} & \textbf{94.62} & \textbf{36.01} & \underline{91.83} & 10.59 & 97.75 & \underline{33.30} & \underline{92.85} & \textbf{22.80} & \textbf{95.03} \\
        \rowcolor{gray!20}            & $\approach(m) + \texttt{SCALE}$            & 13.26 & 97.37 & 27.79 & 93.66 & 40.56 & 90.02 &  \textbf{9.52} & \textbf{98.11} & 34.48 & 92.50 & 25.12 & 94.33 \\
        \bottomrule
        \end{tabular}}
        \label{table: detailed_imagenet_benchmark_swin_convnext}
        \vspace{-6mm}
    \end{table}

    \begin{table}[!ht]
        \centering
        \caption{OOD detection results on ImageNet benchmarks. All values are percentages, averaged over five standard OOD datasets. Detailed results for each dataset are provided in Appendix~\ref{appendix: detailed_imagenet_benchmark}. $\boldsymbol{\downarrow}$/$\boldsymbol{\uparrow}$ denotes lower/higher is better.}
        \resizebox{0.75\textwidth}{!}{
        \begin{tabular}{l cc cc  cc}
        \toprule
        \multirow{2}{*}{\textbf{Method}} & \multicolumn{2}{c}{\textbf{EfficientNet-b0}} & \multicolumn{2}{c}{\textbf{DenseNet-121}} & \multicolumn{2}{c}{\textbf{MobileNet-v2}}\\
        \cmidrule(lr){2-3} \cmidrule(lr){4-5} \cmidrule(lr){6-7} 
        & \textbf{FPR95} $\downarrow$ & \textbf{AUROC} $\uparrow$ & \textbf{FPR95} $\downarrow$ & \textbf{AUROC} $\uparrow$ & \textbf{FPR95} $\downarrow$ & \textbf{AUROC} $\uparrow$ \\
        \midrule
        MSP          & 67.13 & 82.48 & 64.56 & 82.72 & 71.34 & 80.70 \\
        ODIN         & 68.18 & 78.41 & 51.23 & 87.39 & 58.65 & 86.73 \\
        Energy       & 80.50 & 75.60 & 53.00 & 87.50 & 61.56 & 86.27 \\
        ReAct        & 59.11 & 86.03 & 43.99 & 89.56 & 51.13 & 88.64 \\
        DICE         & 97.91 & 44.54 & 42.14 & 88.44 & 46.57 & 88.23 \\
        ASH-S        & 98.96 & 54.87 & 32.33 & 92.73 & 41.35 & 90.35 \\
        SCALE        & 98.68 & 56.31 & 36.09 & 91.83 & 36.92 & 92.04 \\
        GradNorm     & 90.60 & 54.81 & 44.86 & 86.34 & 44.78 & 88.77 \\
        KNN          & 78.17 & 74.72 & 73.68 & 71.72 & 74.84 & 70.87 \\
        \cmidrule(lr){1-7}
        \rowcolor{gray!20} $\approach({\mu,\sigma}) + \texttt{SCALE}$  & \textbf{46.99} & \textbf{88.06} & \underline{30.27} & \underline{93.08} & \underline{34.82} & \underline{92.45} \\
        \rowcolor{gray!20} $\approach(m) + \texttt{SCALE}$             & \underline{55.64} & \underline{83.65} & \textbf{29.66} & \textbf{93.17} & \textbf{33.49} & \textbf{92.61} \\
        \bottomrule
        \end{tabular}}
        \label{table: avg_imagenet_benchmark}
        \vspace{-3mm}
    \end{table}

    \textbf{ImageNet Benchmarks.} The ImageNet benchmark is substantially more challenging due to higher image resolution and a large label space of 1,000 categories. $\approach$ demonstrates a complementary effect when integrated with existing scoring functions achieving strong overall performance as shown in Table~\ref{table: detailed_imagenet_benchmark_swin_convnext}. We report the best combination of $\approach$ with primary baselines and provide other combinations in Appendix~\ref{appendix: combined methods}. 
    
    For the primary baselines we reproduced (recall section~\ref{subsec: experimental setup}), we extend our evaluation on EfficientNet-B0, DenseNet-121, and MobileNet-v2 in Table~\ref{table: avg_imagenet_benchmark}. We observe that modern architectures such as Swin-B and ConvNeXt-B perform best when combined with kNN distance-based scoring, whereas more traditional CNN backbones including ResNet-50, DenseNet-121, EfficientNet-B0, and MobileNet-V2 perform better when paired with energy-based techniques like SCALE.

    As reported in Table~\ref{table: detailed_imagenet_benchmark_swin_convnext}, $\approach(\mu, \sigma) + \texttt{KNN}$ with Swin-B reduces FPR95 by 25.09\% compared to GEN~\cite{GEN}, and by 16.05\% with ConvNeXt-B compared to CARef~\cite{CADRef}. Furthermore, $\approach(\mu, \sigma) + \texttt{SCALE}$ improves FPR95 by 7.92\% with ResNet-50 relative to SCALE.

    In Table~\ref{table: avg_imagenet_benchmark}, $\approach(\mu, \sigma) + \texttt{SCALE}$ achieves FPR95 reductions of 20.05\%, 6.37\%, and 5.69\% on EfficientNet-B0, DenseNet-121, and MobileNet-V2, respectively, compared to the best-performing existing methods. These results validate that the principles of $\approach$ generalize effectively to large-scale settings and diverse architectural families. Detailed results %
    are provided in Appendix~\ref{appendix: detailed_imagenet_benchmark}.

    \begin{table}[!ht]
        \centering
        \caption{OOD detection performance on six benchmark datasets using DenseNet-101 pre-trained on CIFAR. $\boldsymbol{\downarrow}$/$\boldsymbol{\uparrow}$ denotes lower/higher is better. All values are percentages, with the best and second-best results being \textbf{highlighted} and \underline{underlined}, respectively. Methods marked with $^*$ denote results taken from CADRef~\cite{CADRef}}
        \resizebox{\textwidth}{!}{
        \begin{tabular}{l cc cc cc cc cc cc cc}
        \toprule
         \multirow{2}{*}{\textbf{Method}} & \multicolumn{2}{c}{\textbf{SVHN}} & \multicolumn{2}{c}{\textbf{Places}} & \multicolumn{2}{c}{\textbf{iSUN}} & \multicolumn{2}{c}{\textbf{Textures}} & \multicolumn{2}{c}{\textbf{LSUN-c}} & \multicolumn{2}{c}{\textbf{LSUN-r}} & \multicolumn{2}{c}{\textbf{Average}}\\
        \cmidrule(lr){2-3} \cmidrule(lr){4-5} \cmidrule(lr){6-7} \cmidrule(lr){8-9} \cmidrule(lr){10-11} \cmidrule(lr){12-13} \cmidrule(lr){14-15}
        & \textbf{FPR95} $\downarrow$ & \textbf{AUROC} $\uparrow$ & \textbf{FPR95} $\downarrow$ & \textbf{AUROC} $\uparrow$ & \textbf{FPR95} $\downarrow$ & \textbf{AUROC} $\uparrow$ & \textbf{FPR95} $\downarrow$ & \textbf{AUROC} $\uparrow$ & \textbf{FPR95} $\downarrow$ & \textbf{AUROC} $\uparrow$ & \textbf{FPR95} $\downarrow$ & \textbf{AUROC} $\uparrow$ & \textbf{FPR95} $\downarrow$ & \textbf{AUROC} $\uparrow$ \\
        \midrule
        \multicolumn{15}{c}{\textbf{CIFAR-10}} \\
        \midrule
        MSP      & 64.76 & 88.33 & 60.19 & 88.56 & 33.34 & 95.41 & 56.60 & 90.17 & 23.41 & 96.75 & 33.88 & 95.39 & 45.36 & 92.43 \\
    MaxLogit$^*$ & 37.79 & 94.32 & 34.82 & 93.61 & 10.08 & 98.05 & 56.57 & 86.65 & 16.31 & 97.22 &  9.41 & 98.12 & 27.50 & 94.66 \\
        ODIN     & 33.09 & 94.41 & 36.68 & 92.34 &  3.22 & 99.20 & 38.49 & 91.61 &  1.84 & 99.53 &  2.89 & 99.28 & 19.37 & 96.06 \\
        Energy   & 37.91 & 93.59 & 36.38 & 92.39 &  7.83 & 98.23 & 43.85 & 90.49 &  1.95 & 99.47 &  7.34 & 98.34 & 22.54 & 95.42 \\
    GEN$^*$      & 30.75 & 95.19 & 36.30 & 93.34 & 11.93 & 97.87 & 54.00 & 88.87 & 18.29 & 96.99 & 11.29 & 97.93 & 27.09 & 95.03 \\
        ReAct    & 23.18 & 96.28 & 33.97 & 92.98 &  5.95 & 98.45 & 32.25 & 93.98 &  2.47 & 99.33 &  5.44 & 98.55 & 17.21 & 96.59 \\
        DICE     & 16.68 & 96.96 & 37.46 & 92.06 &  2.25 & 99.41 & 28.05 & 92.70 &  \textbf{0.16} & \textbf{99.94} &  2.44 & 99.35 & 14.51 & 96.74 \\
        ASH-S    & 16.20 & 97.21 & 37.79 & 92.02 &  3.91 & 98.94 & 26.40 & 94.61 &  0.84 & 99.69 &  4.15 & 98.92 & 14.88 & 96.90 \\
        SCALE    & 23.06 & 96.13 & 36.53 & 92.24 &  4.54 & 98.76 & 30.53 & 93.62 &  1.23 & 99.61 &  4.59 & 98.75 & 16.75 & 96.52 \\
        GradNorm & 22.02 & 96.19 & 47.68 & 88.65 &  4.72 & 99.03 & 27.29 & 92.21 &  \underline{0.21} & \underline{99.90} &  4.99 & 98.94 & 17.82 & 95.82 \\
    OptFS$^*$    & 24.35 & 96.01 & 40.15 & 92.33 & 10.25 & 98.00 & 32.96 & 94.29 & 18.09 & 96.99 &  9.31 & 98.11 & 22.52 & 95.95 \\
    ViM$^*$      & 8.65  & 98.45 & 54.48 & 89.80 & 4.50  & 99.12 & 20.33 & 96.18 & 15.39 & 97.36 &  3.17 & 99.28 & 17.75 & 96.70 \\
    CARef$^*$    & 4.66  & 99.11 & 41.19 & 91.59 & 6.11  & 98.77 & 16.29 & 96.79 & 9.51  & 98.22 &  5.20 & 98.93 & 13.83 & 97.23 \\
    CADRef$^*$   & \underline{4.16}  & \underline{99.17} & 32.74 & 93.72 & 4.34  & 99.13 & 17.52 & 96.71 & 7.02  & 98.66 &  3.61 & 99.23 & 11.56 & 97.77 \\
        KNN      & \textbf{1.50}  & \textbf{99.67} & 42.45 & 90.49 & 7.04  & 98.72 & 14.38 & 97.54 & 5.79  & 98.94 &  8.73 & 98.45 & 13.31 & 97.30 \\
        \cmidrule(lr){1-15}
        \rowcolor[gray]{0.9}  $\approach(\mu,\sigma) + \texttt{DICE}$	&  6.84 & 98.77 & \underline{29.76} & \underline{93.86} & \textbf{1.58} & \textbf{99.59} &  \underline{9.57} & \underline{98.25} & 0.55 & 99.86 & \textbf{1.60} & \textbf{99.57} &  \textbf{8.32} & \textbf{98.32} \\
        \rowcolor[gray]{0.9}   $\approach(m) + \texttt{DICE}$	        &  8.30 & 98.37 & \textbf{29.47} & \textbf{93.92} & \underline{1.85} & \underline{99.56} &  \textbf{7.16} & \textbf{98.69} & 1.26 & 99.72 & \underline{1.92} & \underline{99.55} &  \underline{8.33} & \underline{98.30} \\
        
        \midrule
        \multicolumn{15}{c}{\textbf{CIFAR-100}} \\
        \midrule
        MSP      & 81.38 & 75.71 & 82.62 & 74.04 & 84.12 & 68.22 & 86.95 & 68.37 & 51.82 & 87.93 & 81.34 & 69.51 & 78.04 & 73.96 \\
    MaxLogit$^*$ & 86.17 & 81.42 & 79.82 & 76.18 & 79.13 & 76.54 & 84.45 & 71.14 & 58.91 & 87.90 & 76.05 & 77.41 & 77.42 & 78.43 \\
        ODIN     & 85.94 & 80.35 & \textbf{75.59} & 77.62 & 48.03 & 89.12 & 83.37 & 67.83 & 12.78 & 97.70 & 40.28 & 91.35 & 57.67 & 84.00 \\
        Energy   & 70.99 & 86.66 & \underline{77.12} & \underline{76.94} & 64.28 & 83.92 & 83.60 & 67.47 & 11.45 & 97.89 & 56.08 & 86.84 & 60.59 & 83.29 \\
    GEN$^*$      & 78.89 & 80.97 & 83.36 & 73.88 & 85.15 & 72.00 & 83.68 & 74.26 & 70.82 & 83.72 & 84.11 & 71.51 & 81.00 & 76.06 \\
        ReAct    & 67.12 & 87.20 & 77.75 & 76.18 & 56.39 & 89.46 & 75.98 & 79.16 & 13.26 & 97.53 & 49.92 & 90.94 & 56.74 & 86.74 \\
        DICE     & 33.87 & 93.97 & 79.95 & 76.75 & 47.76 & 89.61 & 63.42 & 73.33 &  \textbf{0.79} & \textbf{99.76} & 43.65 & 91.00 & 44.91 & 87.40 \\
        ASH-S    & \textbf{10.32} & \textbf{97.99} & 85.93 & 71.95 & 39.69 & 92.04 & 35.67 & 91.76 &  5.43 & 98.98 & 42.89 & 91.30 & \underline{36.66} & 90.67 \\
        SCALE    & 16.26 & 97.05 & 78.54 & 76.97 & 43.56 & 91.21 & 45.60 & 87.23 &  3.23 & 99.30 & 42.69 & 91.02 & 38.31 & 90.46 \\
        GradNorm & 35.49 & 93.07 & 87.03 & 70.09 & 71.36 & 82.74 & 61.83 & 75.72 &  \underline{0.94} & \underline{99.75} & 68.56 & 83.65 & 54.20 & 84.17 \\
    OptFS$^*$    & 73.61 & 84.96 & 80.96 & 74.37 & 70.56 & 84.39 & 61.64 & 85.63 & 47.98 & 90.01 & 69.52 & 83.61 & 67.38 & 83.83 \\
    ViM$^*$      & 35.05 & 93.57 & 83.89 & 75.61 & \textbf{23.22} & \textbf{95.63} & \textbf{19.75} & \textbf{95.89} & 40.06 & 92.76 & \textbf{24.65} & \textbf{95.50} & 37.77 & 91.49 \\
    CARef$^*$    & 17.41 & 96.83 & 88.30 & 67.92 & 45.32 & 91.08 & \underline{25.48} & \underline{93.99} & 40.74 & 90.96 & 52.69 & 89.25 & 44.99 & 88.34 \\
    CADRef$^*$   & 18.28 & 96.69 & 78.30 & 75.91 & 42.10 & 91.59 & 28.72 & 94.13 & 27.22 & 94.70 & 47.45 & 90.26 & 40.34 & 90.55 \\
        KNN      & \underline{15.86} & \underline{96.88} & 88.36 & 66.14 & 42.98 & 89.45 & 27.11 & 94.21 & 35.82 & 89.74 & 42.90 & 89.28 & 42.17 & 87.62 \\
        \cmidrule(lr){1-15}
        \rowcolor[gray]{0.9} $\approach(\mu,\sigma) + \texttt{DICE}$ & 20.30 & 96.20 & 79.52 & \textbf{78.28} & \underline{29.69} & \underline{94.63} & 34.10 & 91.24 &  2.63 & 99.37 & \underline{32.01} & \underline{94.20} & \textbf{33.04} & \textbf{92.32} \\
        \rowcolor[gray]{0.9} $\approach(m) + \texttt{DICE}$	         & 27.97 & 94.91 & 86.13 & 76.61 & 36.01 & 93.97 & 30.32 & 93.28 &  7.58 & 98.56 & 41.90 & 93.13 & 38.32 & \underline{91.74} \\
        
        \bottomrule
        \end{tabular}}
        \label{table: detailed_cifar_densenet-101}
    \end{table}

    \begin{table}[!ht]
        \centering
        \caption{OOD detection performance on six benchmark datasets using ResNet-18, ResNet-34, Wide-ResNet, and MobileNet-v2 pre-trained on CIFAR. $\boldsymbol{\downarrow}$/$\boldsymbol{\uparrow}$ denotes lower/higher is better. All values are percentages, with the best and second-best results being \textbf{highlighted} and \underline{underlined}, respectively. \textbf{$^*$}\textit{In MobileNet-v2 on CIFAR-100, ASH is used instead of DICE as the combined method}. Detailed performance is  in Appendix~\ref{appendix: detailed_results_cifar}.}
        \resizebox{0.80\textwidth}{!}{
        \begin{tabular}{c l cc cc  cc  cc}
        \toprule
        \multirow{2}{*}{\textbf{Dataset}} & \multirow{2}{*}{\textbf{Method}} & \multicolumn{2}{c}{\textbf{ResNet-18}} & \multicolumn{2}{c}{\textbf{ResNet-34}} & \multicolumn{2}{c}{\textbf{Wide-ResNet}} & \multicolumn{2}{c}{\textbf{MobileNet-v2}}\\
         \cmidrule(lr){3-4} \cmidrule(lr){5-6} \cmidrule(lr){7-8} \cmidrule(lr){9-10}
        & & \textbf{FPR95} $\downarrow$ & \textbf{AUROC} $\uparrow$ & \textbf{FPR95} $\downarrow$ & \textbf{AUROC} $\uparrow$ & \textbf{FPR95} $\downarrow$ & \textbf{AUROC} $\uparrow$ & \textbf{FPR95} $\downarrow$ & \textbf{AUROC} $\uparrow$ \\
        \midrule
        \multirow{11}{*}{\rotatebox[origin=c]{90}{CIFAR-10}}  & MSP          & 58.43 & 91.23 & 54.86 & 91.96 & 46.41 & 92.78 & 66.16 & 88.76 \\
                                                              & ODIN         & 28.98 & 95.16 & 23.06 & 95.53 & 21.77 & 95.59 & 35.25 & 93.37 \\
                                                              & Energy       & 35.61 & 94.14 & 26.04 & 95.29 & 22.44 & 95.50 & 39.75 & 92.65 \\
                                                              & ReAct        & 30.14 & 95.15 & 26.36 & 95.37 & 28.06 & 94.43 & 38.18 & 92.71 \\
                                                              & DICE         & 30.92 & 94.69 & 23.00 & 95.84 & 22.91 & 95.27 & 36.86 & 92.88 \\
                                                              & ASH-S        & 21.83 & 96.02 & 19.57 & 96.34 & 20.04 & 96.07 & 40.26 & 92.15 \\
                                                              & SCALE        & 21.74 & 96.13 & 19.50 & 96.28 & 19.35 & 96.08 & 39.56 & 92.34 \\
                                                              & GradNorm     & 32.98 & 93.49 & 30.87 & 93.46 & 26.80 & 93.37 & 45.26 & 90.86 \\
                                                              & KNN          & 30.43 & 94.22 & 31.18 & 94.29 & 16.21 & 96.85 & 49.09 & 89.45 \\
        \cmidrule(lr){2-10}
        \rowcolor{gray!20}          & $\approach(m) + \texttt{DICE}$             & \textbf{10.46} & \textbf{97.94} & \textbf{10.67} & \textbf{97.97} &  \textbf{9.28} & \textbf{98.13} & \underline{24.78} & \underline{95.28} \\
        \rowcolor{gray!20}          & $\approach({\mu,\sigma}) + \texttt{DICE} $ & \underline{13.49} & \underline{97.54} & \underline{12.09} & \underline{97.75} &  \underline{11.55} & \underline{97.76} & \textbf{24.38} & \textbf{95.31} \\
        \midrule
        \multirow{11}{*}{\rotatebox[origin=c]{90}{CIFAR-100}} & MSP          & 80.40 & 76.16 & 79.68 & 78.08 & 80.64 & 71.32 & 83.83 & 72.78 \\
                                                              & ODIN         & 66.06 & 84.78 & 67.50 & 84.71 & 67.82 & 81.90 & 70.10 & 83.63 \\
                                                              & Energy       & 70.86 & 83.18 & 70.30 & 83.64 & 68.77 & 81.14 & 72.65 & 82.77 \\
                                                              & ReAct        & 59.43 & 87.52 & 57.87 & 86.64 & 47.38 & 90.16 & 53.57 & 87.90 \\
                                                              & DICE         & 56.90 & 85.39 & 55.17 & 86.28 & 55.02 & 84.22 & 64.78 & 82.93 \\
                                                              & ASH-S        & 54.50 & 87.47 & 54.81 & 87.88 & 44.47 & 89.19 & 51.65 & 86.37 \\
                                                              & SCALE        & 48.10 & 88.70 & 48.02 & 88.64 & 40.11 & 90.19 & 50.53 & 87.43 \\
                                                              & GradNorm     & 67.11 & 76.94 & 66.84 & 73.43 & 63.69 & 80.27 & 68.17 & 76.58 \\
                                                              & KNN          & 67.00 & 81.42 & 65.78 & 82.03 & 52.49 & 82.84 & 85.65 & 71.97 \\
        \cmidrule(lr){2-10}
        \rowcolor{gray!20}       & $\approach(m) + \texttt{DICE}^*$             & \textbf{33.38} & \textbf{92.51} & \textbf{33.91} & \textbf{92.33} & \underline{38.73}	& \underline{91.47} & \underline{46.35$^*$} & \underline{86.96$^*$} \\
        \rowcolor{gray!20}       & $\approach({\mu,\sigma}) + \texttt{DICE}^*$  & \underline{36.19} & \underline{91.54} & \underline{36.67} & \underline{91.81} & \textbf{37.95}	& \textbf{91.52} & \textbf{46.32}$^*$ & \textbf{87.23}$^*$ \\
        \bottomrule
        \end{tabular}}
        \vspace{-1.5mm}
        \label{table: avg_cifar_benchmark}
        \vspace{-2mm}
    \end{table}

    \noindent
    \textbf{CIFAR Benchmarks.} We further evaluate $\approach$ on CIFAR benchmarks. Since pre-trained DenseNet-101 is widely adopted in prior works~\cite{GradNorm, ReAct, ASH, DICE, SCALE}, we provide a comprehensive comparison on this backbone in Table~\ref{table: detailed_cifar_densenet-101}. A key observation is that no single baseline consistently outperforms others across all six OOD datasets. Nevertheless, $\approach$ combined with \texttt{DICE} achieves comparable or superior performance across datasets. On average, our method improves FPR95 by 28.03\% over CADRef~\cite{CADRef} on CIFAR-10 and by 9.87\% over ASH-S~\cite{ASH} on CIFAR-100. %

    We further extend the evaluation to ResNet-18, ResNet-34, Wide-ResNet-28-10, and MobileNet-V2, with average results reported in Table~\ref{table: avg_cifar_benchmark}. $\approach$ consistently outperforms strong primary baselines. On CIFAR-10, $\approach + \texttt{DICE}$ reduces FPR95 by 52.08\%, 45.12\%, 42.75\%, and 32.77\% using ResNet-18, ResNet-34, Wide-ResNet-28-10, and MobileNet-V2, respectively. On CIFAR-100, improvements of 30.60\%, 29.38\%, and 3.44\% are observed with ResNet-18, ResNet-34, and Wide-ResNet-28-10, while $\approach + \texttt{ASH}$ achieves a 10.25\% reduction with MobileNet-V2. Detailed results %
    are provided in Appendix~\ref{appendix: detailed_results_cifar}.

    \textbf{Discussion.} In this research study, we further investigate the performance of $\approach$ by conducting extensive experimental studies on the ImageNet and CIFAR benchmarks. The key observation is that no single method consistently outperforms all other methods across diverse datasets. However, it is noticeable that $\approach$ yields higher average performance across the evaluation benchmark and architectures used.

    We also   evaluated a unified variant using all three signals: Mean, Variance, Max. However, our empirical analysis found that this combination yielded negligible performance gains compared to the simpler $\approach(\mu, \sigma)$ or $\approach(m)$ variants. Moreover, from evaluation above, $\approach(\mu, \sigma)$ generalizes better than $\approach(m)$ across the evaluation benchmarks. Therefore, we advocate for the simpler and flexible variant $\approach(\mu, \sigma)$, which provides similar performance.

\subsection{Near-OOD Evaluation} 
\label{subsec: near ood}

    To further demonstrate $\approach$'s effectiveness, we evaluate it on the near-OOD detection task. The results using Swin-B, ConvNeXt-B, and ResNet-50 are reported in Table~\ref{table: detailed_imagenet_nearood}. $\approach + \texttt{KNN}$ reduces FPR95 by 25.52\% and 12.01\% compared to ReAct on Swin-B and ConvNeXt-B, respectively, while achieving a modest improvement of 2.56\% over SCALE on ResNet-50. These results highlight the robustness of $\approach$ in semantically challenging near-OOD settings.

    On CIFAR near-OOD evaluation, across all evaluated architectures—ResNet-18, DenseNet-101, and Wide-ResNet-28-10, our method consistently outperforms the primary baselines. For example, $\approach$ reduces FPR95 by 9.34\% with ResNet-18 and 8.40\% with Wide-ResNet-28-10, while achieving a modest improvement of 2.87\% with DenseNet-101. These results demonstrate robustness even when the semantic gap between ID and OOD data is small. Complete results are provided in Table~\ref{table: cifar_near_ood} in Appendix~\ref{appendix: near_ood_results}.

     \begin{table}[!ht]
        \centering
        \caption{Near-OOD detection performance on ImageNet-1K benchmark. $\boldsymbol{\downarrow}$ indicates lower is better, and $\boldsymbol{\uparrow}$ indicates higher is better. All values are percentages, with the best and second-best results \textbf{highlighted} and \underline{underlined}, respectively.}
        \resizebox{0.80\textwidth}{!}{
        \begin{tabular}{c l cc cc  cc  cc}
        \toprule
        \multirow{2}{*}{\textbf{Model}} & \multirow{2}{*}{\textbf{Method}} & \multicolumn{2}{c}{\textbf{SSB-Hard}} & \multicolumn{2}{c}{\textbf{NINCO}} & \multicolumn{2}{c}{\textbf{ImageNet-O}} & \multicolumn{2}{c}{\textbf{Average}}\\
         \cmidrule(lr){3-4} \cmidrule(lr){5-6} \cmidrule(lr){7-8} \cmidrule(lr){9-10}
        & & \textbf{FPR95} $\downarrow$ & \textbf{AUROC} $\uparrow$ & \textbf{FPR95} $\downarrow$ & \textbf{AUROC} $\uparrow$ & \textbf{FPR95} $\downarrow$ & \textbf{AUROC} $\uparrow$ & \textbf{FPR95} $\downarrow$ & \textbf{AUROC} $\uparrow$ \\
        \midrule
        \multirow{11}{*}{\rotatebox[origin=c]{90}{Swin-B}}    
        & MSP          & 87.66 & 57.31 & 81.12 & 64.05 & 86.35 & 55.51 & 85.04 & 58.96 \\
        & ODIN         & 91.49 & 50.28 & 87.63 & 50.53 & 93.25 & 47.17 & 90.79 & 49.33 \\
        & Energy       & 87.66 & 57.31 & 81.12 & 64.05 & 86.35 & 55.51 & 85.04 & 58.96 \\
        & ReAct        & \underline{84.38} & 68.68 & \underline{71.84} & 81.20 & 81.40 & 71.65 & 79.21 & 73.84 \\
        & DICE         & 91.20 & 49.94 & 90.35 & 37.75 & 86.65 & 56.73 & 89.40 & 48.14 \\
        & ASH-S        & 99.13 & 29.83 & 99.59 & 20.20 & 99.55 & 27.07 & 99.42 & 25.70 \\
        & SCALE        & 97.64 & 37.21 & 96.99 & 34.32 & 97.00 & 36.70 & 97.21 & 36.08 \\
        & GradNorm     & 98.17 & 34.00 & 98.20 & 25.38 & 98.75 & 28.96 & 98.37 & 29.45 \\
        & KNN          & 88.67 & \underline{70.68} & 77.46 & \underline{82.36} & 84.05 & 78.43 & 83.39 & \underline{77.16} \\        
        \cmidrule(lr){2-10}
        \rowcolor{gray!20}          & $\approach({\mu,\sigma}) + \texttt{KNN} $ & \textbf{75.42} & \textbf{73.51} & \textbf{50.56} & \textbf{87.69} & \textbf{51.00} & \textbf{85.49} & \textbf{58.99} & \textbf{82.23} \\
        \rowcolor{gray!20}          & $\approach(m) + \texttt{KNN}$             & 89.10 & 68.13 & 76.64 & 81.63 & \underline{67.60} & \underline{81.12} & \underline{77.78} & 76.96 \\
        \midrule
        \multirow{11}{*}{\rotatebox[origin=c]{90}{ConvNext-B}}& MSP          & 87.53 & 63.30 & 82.09 & 69.89 & 93.85 & 52.20 & 87.82 & 61.80 \\
                                                              & ODIN         & 85.87 & 61.10 & 81.32 & 61.46 & 92.85 & 49.51 & 86.68 & 57.36 \\
                                                              & Energy       & 83.21 & 71.73 & 67.11 & 82.72 & 92.20 & 60.82 & 80.84 & 71.75 \\
                                                              & ReAct        & 83.11 & \underline{72.02} & \underline{65.96} & \underline{83.80} & 91.60 & 62.14 & \underline{80.22} & 72.66 \\
                                                              & DICE         & 86.52 & 67.40 & 72.41 & 78.67 & 91.40 & 60.46 & 83.44 & 68.84 \\
                                                              & ASH-S        & 82.70 & 71.88 & 72.20 & 76.28 & 92.25 & 63.74 & 82.38 & 70.63 \\
                                                              & SCALE        & \underline{81.85} & \textbf{73.71} & 68.32 & 80.93 & 92.70 & 64.69 & 80.96 & 73.11 \\
                                                              & GradNorm     & 95.01 & 42.76 & 93.84 & 41.61 & 97.90 & 36.12 & 95.58 & 40.16 \\
                                                              & KNN          & 87.07 & 70.34 & 75.71 & 81.88 & \underline{89.50} & \underline{73.88} & 84.09 & \underline{75.37} \\
        \cmidrule(lr){2-10}
        \rowcolor{gray!20} & $\approach({\mu,\sigma}) + \texttt{KNN}$  & \textbf{77.33} & 70.81 & \textbf{60.28} & \textbf{84.56} & \textbf{74.15} & \textbf{76.16} & \textbf{70.59} & \textbf{77.18} \\
        \rowcolor{gray!20} & $\approach(m) + \texttt{KNN}$             & 93.83 & 65.26 & 88.58 & 78.31 & 94.00 & 71.09 & 92.14 & 71.55 \\
        \midrule
        \multirow{11}{*}{\rotatebox[origin=c]{90}{ResNet-50}}    
        & MSP          & 85.02 & 72.08 & 76.32 & 79.94 & 100.00 & 28.62 & 87.11 & 60.21 \\
        & ODIN         & 84.07 & 71.81 & 75.89 & 79.31 & 100.00 & 40.16 & 86.65 & 63.76 \\
        & Energy       & 84.42 & 72.08 & 77.58 & 79.69 & 100.00 & 41.79 & 87.33 & 64.52 \\
        & ReAct        & 78.94 & 72.81 & 71.47 & 80.01 &  97.90 & 52.41 & 82.77 & 68.41 \\
        & DICE         & 81.04 & 72.83 & 74.07 & 77.52 &  98.00 & 42.78 & 84.37 & 64.38 \\
        & ASH-S        & 80.66 & 74.36 & 64.09 & 83.22 &  89.10 & 67.45 & 77.95 & 75.01 \\
        & SCALE        & \underline{78.11} & \underline{77.36} & \underline{61.42} & \underline{85.39} &  95.10 & 59.87 & 78.21 & 74.21 \\
        & GradNorm     & 81.22 & 71.95 & 73.87 & 74.06 &  95.75 & 47.89 & 83.61 & 64.63 \\
        & KNN          & 92.77 & 59.51 & 78.48 & 76.70 &  \textbf{71.45} & \textbf{81.19} & 80.90 & 72.47 \\        
        \cmidrule(lr){2-10}
        \rowcolor{gray!20}          & $\approach({\mu,\sigma}) + \texttt{KNN} $ & \textbf{78.16} & \textbf{77.37} & \textbf{60.48} & \textbf{85.60} & 90.35 & 63.76 & \underline{76.33} & \underline{75.58} \\
        \rowcolor{gray!20}          & $\approach(m) + \texttt{KNN}$             & 80.18 & 76.58 & 63.58 & 84.56 & \underline{83.70} & \underline{67.77} & \textbf{75.82} & \textbf{76.30 }\\
        \bottomrule
        \end{tabular}}
        \vspace{-1.5mm}
        \label{table: detailed_imagenet_nearood}
        \vspace{-2mm}
    \end{table}

\subsection{Comparison with Other Baselines}
\label{subsec: comparision with other baslines}

    In the literature, several contemporary methods do not provide as extensive an evaluation as $\approach$. To position $\approach$ within overlapping experimental settings, we present a detailed comparison in Appendix~\ref{appendix: additional baselines} against NCI~\cite{NCI}, fDBD~\cite{fdbd}, and AdaScale~\cite{adascle}, strictly following their original restricted evaluation protocols. %

    Against AdaSCALE, $\approach$ consistently achieves lower FPR95 on CIFAR-10 and CIFAR-100 across multiple backbones. For example, on CIFAR-10 with Wide-ResNet-28-10, $\approach(m)+\texttt{DICE}$ reduces FPR95 from 39.12 to 15.58. On ImageNet with Swin-B, $\approach(\mu)+\texttt{KNN}$ improves FPR95 from 46.24 to 31.66, while AdaSCALE performs better on ResNet-50 by 4.52 points.

    In direct comparisons with NCI, $\approach$ yields substantial improvements, reducing FPR95 by 33.87\% on CIFAR-10 (ResNet-18) and by 22.58\% on ImageNet (ResNet-50). Similarly, compared to fDBD, $\approach$ consistently achieves large gains across overlapping settings. On CIFAR-10 (ResNet-18), $\approach(m)+\texttt{DICE}$ reduces FPR95 from 31.09\% to 13.72\%, and on ImageNet, $\approach(\mu,\sigma)+\texttt{SCALE}$ lowers FPR95 from 51.19\% to 20.17\%.

    Following prior work~\cite{energy, GradNorm, gradorth, ReAct, ASH, DICE, SCALE}, we exclude Mahalanobis distance due to its substantial computational overhead arising from covariance matrix inversion, coupled with relatively limited empirical gains. Our experiments confirm similar computational costs without consistent performance improvements. We also omit GradOrth~\cite{gradorth}, as we were unable to faithfully reproduce its results due to the lack of publicly available implementation details or an official codebase.

\subsection{Hyperparameter Selection}
\label{subsec: hyper-parameter selection}

    \looseness-1
    The hyperparameter $\boldsymbol{\gamma}$, which scales the standard deviation in Equation~\ref{eq: davis_mu_sigma}, plays a critical role in performance. Following established protocols~\cite{ReAct,DICE,SCALE,ssn}, we select $\gamma$ using a proxy OOD validation set constructed by adding pixel-wise Gaussian noise sampled from $\mathcal{N}(0, 0.2)$ to images from the ID validation set. Based on this procedure, we set $\gamma=3.0$ for all CIFAR models, $\gamma=0.5$ for traditional CNN-based ImageNet models (ResNet, DenseNet, MobileNet), and $\gamma=2.0$ for modern ImageNet architectures (Swin-B, ConvNeXt, EfficientNet-B0).

    Cross-dataset transfer of $\gamma$ (e.g., applying ImageNet-tuned values to CIFAR) still yields improvements over baseline methods, although dataset-specific tuning provides optimal performance. Complete hyperparameter settings for baseline re-evaluations are provided in Appendix~\ref{appendix: reproducibility}.

\vspace{-3mm}

\section{Discussion}
\label{sec: discussion}

    This section discusses the broader implications of $\approach$, analyzing its robustness across modern architectures, including Swin-B, ConvNeXt-B, and EfficientNet-B0, as well as its practical advantages. We further examine its computational overhead, impact on classification accuracy, and overall scope to provide a comprehensive discussion.

    \noindent
   \looseness-1
   \textbf{Recent Deep Learning Models.} In our evaluation, we consider modern architectures such as Swin-B, ConvNeXt-B, and EfficientNet-B0 due to their widespread adoption~\cite{ConvNext,ConvNext_v2,swin_transformer,EfficientNet} and strong performance relative to traditional CNN-based models (e.g., ResNet-50, DenseNet-121, and MobileNet-V2). A notable architectural difference is that modern models employ Layer Normalization and GeLU/SiLU activation functions, whereas traditional CNNs typically rely on Batch Normalization and ReLU activations. A key observation from our experiments is the performance degradation of primary baselines (e.g., kNN, ReAct, DICE, ASH, SCALE, CADRef) on these modern architectures, as shown in Tables~\ref{table: detailed_imagenet_benchmark_swin_convnext} and \ref{table: avg_imagenet_benchmark}.

    \begin{figure*}[!ht]
           \vspace{-3mm}
  \centering
        \includegraphics[width=0.9135\textwidth]{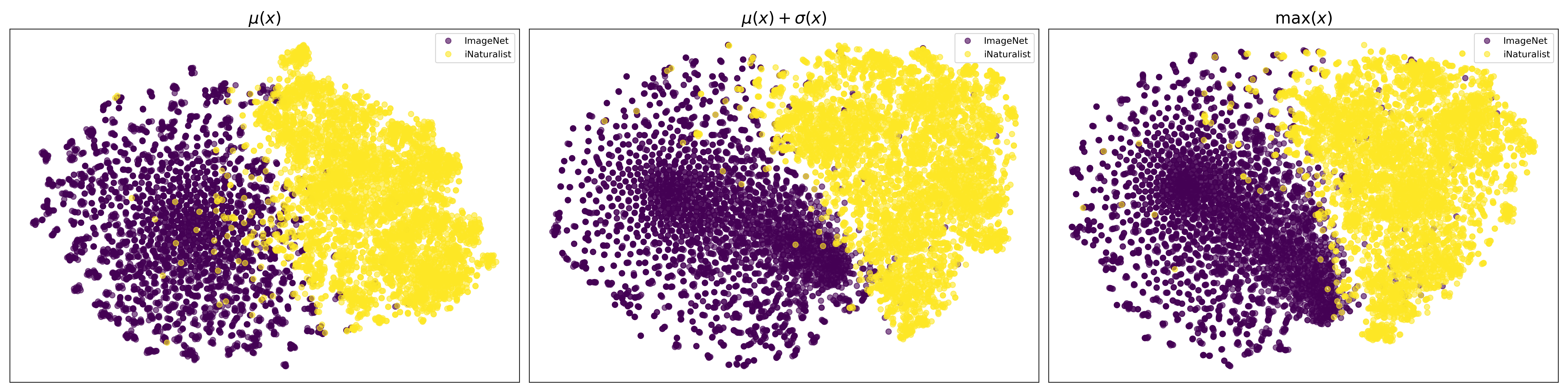}
        \vspace{-1mm}
        \caption{t-SNE distribution of statistics on Swin-B for ID (ImageNet) and OOD (iNaturalist) samples.  $\sigma(\mathbf{x}) + \mu(\mathbf{x}) $ exhibits better separation than $\mu(\mathbf{x})$ and $\max(\mathbf{x})$.}
        \label{fig: swinB_feature_plot}
        \vspace{-4mm}
    \end{figure*}
    
    We attribute this behavior to architectural differences, particularly the use of Layer Normalization  instead of Batch Normalization and alternative activation functions in Swin-B and ConvNeXt-B, as well as the design variations in EfficientNet-B0. As discussed in ReAct~\cite{ReAct}, normalization layers can distort the activation space for OOD samples due to running statistics estimated from the training distribution. Moreover, modern activation functions such as GeLU and SiLU exhibit higher overlap between ID and OOD feature distributions compared to ReLU, as illustrated in Figures~\ref{fig: swinB_feature_plot},~\ref{fig: efficientnet_b0_feature_plot}.

    In contrast, $\approach$ remains robust because it leverages more fundamental distributional statistics. Figure~\ref{fig: efficientnet_b0_feature_plot} shows that, while the mean activation $\mu(\mathbf{x})$ provides limited separability on EfficientNet-B0, both the standard deviation $\sigma(\mathbf{x})$ and maximum activation $m(\mathbf{x})$ exhibit clearer discrimination between ID and OOD samples. This robustness not only enables $\approach$ to perform reliably across modern architectures but consistently improve baselines such as kNN, ASH, and SCALE, demonstrating its general applicability. 

     \begin{figure*}[!ht]
        \centering
        \includegraphics[width=0.9135\textwidth]{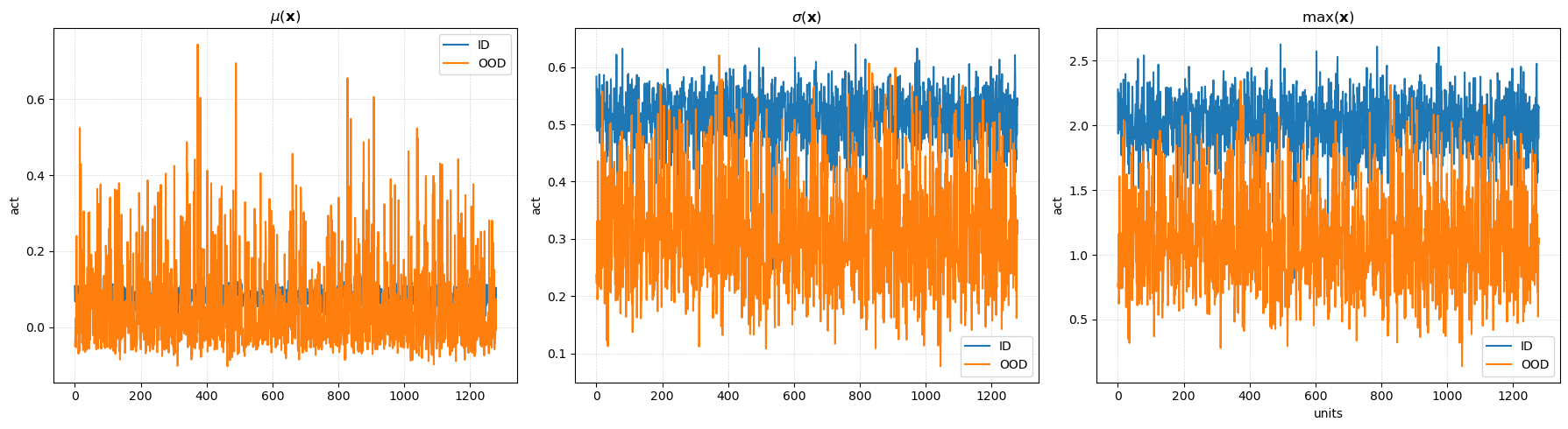}
                \vspace{-4mm}
        \caption{Feature statistics on EfficientNet-B0 for ID (ImageNet) and OOD (Texture) samples. While the mean $\mu(\mathbf{x})$ shows poor separation, the standard deviation $\sigma(\mathbf{x})$ and maximum $m(\mathbf{x})$ provide clear discrimination between ID and OOD activations.}
        \label{fig: efficientnet_b0_feature_plot}
        \vspace{-2mm}
    \end{figure*}

    \noindent
    \textbf{Activation Functions.}
    To further evaluate the robustness of $\approach$, we train ResNet-18 with alternative activation functions, including SiLU, GeLU, and Tanh, on CIFAR benchmarks. As shown in Table~\ref{table: avg_resnet18_act}, particularly on the larger CIFAR-100 dataset, existing primary baselines struggle to provide reliable OOD detection performance compared to their $\approach$-augmented counterparts. For example, on CIFAR-100, $\approach(\mu, \sigma) + \texttt{DICE}$ reduces FPR95 by 24.23\%, 26.01\%, and 35.44\% when using SiLU, GeLU, and Tanh activations, respectively. The detailed report is provided in Appendix~\ref{appendix: resnet activation results}

        \begin{table}[!ht]
        \centering
        \caption{OOD detection performance on six benchmark datasets using ResNet-18 with different activation functions: ReLu, SiLU, GeLU, Tanh pre-trained on CIFAR. $\boldsymbol{\downarrow}$/$\boldsymbol{\uparrow}$ denotes lower/higher is better. All values are percentages, with the best and second-best results being \textbf{highlighted} and \underline{underlined}, respectively.}
        \resizebox{0.90\textwidth}{!}{
        \begin{tabular}{c l cc cc  cc  cc}
        \toprule
        \multirow{2}{*}{\textbf{Dataset}} & \multirow{2}{*}{\textbf{Method}} & \multicolumn{2}{c}{\textbf{ReLU}} & \multicolumn{2}{c}{\textbf{SiLU}} & \multicolumn{2}{c}{\textbf{GeLU}} & \multicolumn{2}{c}{\textbf{TanH}}\\
        \cmidrule(lr){3-4} \cmidrule(lr){5-6} \cmidrule(lr){7-8} \cmidrule(lr){9-10}
        && \textbf{FPR95} $\downarrow$ & \textbf{AUROC} $\uparrow$ & \textbf{FPR95} $\downarrow$ & \textbf{AUROC} $\uparrow$ & \textbf{FPR95} $\downarrow$ & \textbf{AUROC} $\uparrow$ & \textbf{FPR95} $\downarrow$ & \textbf{AUROC} $\uparrow$ \\
        \midrule
        \multirow{11}{*}{\rotatebox{90}{CIFAR-10}} &MSP    & 58.43 & 91.23 & 47.27 & 93.10 & 49.18 & 92.71 & 48.57 & 92.86 \\
        &ODIN         & 28.98 & 95.16 & 19.34 & 96.40 & 18.17 & 96.61 & 19.30 & 96.42 \\
        &Energy       & 35.61 & 94.14 & 22.28 & 95.83 & 21.46 & 96.04 & 25.97 & 95.52 \\
        &ReAct        & 30.14 & 95.15 & 21.61 & 96.02 & 18.99 & 96.42 & 25.59 & 95.63 \\
        &DICE         & 30.92 & 94.69 & 19.39 & 96.36 & 16.49 & 96.76 & 19.53 & 96.18 \\
        &ASH-S        & 21.83 & 96.02 & 21.01 & 96.16 & 16.77 & 96.71 & 23.17 & 95.73 \\
        &SCALE        & 21.74 & 96.13 & 21.03 & 96.19 & 15.76 & 96.96 & 24.50 & 95.53 \\
        &GradNorm     & 32.98 & 93.49 & 27.83 & 94.43 & 26.43 & 94.79 & 32.55 & 93.98 \\
        &KNN          & 30.43 & 94.22 & 26.86 & 95.28 & 26.66 & 95.14 & 27.51 & 95.20 \\
        \cmidrule(lr){2-10}
        \rowcolor{gray!20} & $\approach(m) + \texttt{DICE}$             & \textbf{10.46} & \textbf{97.94} & \textbf{10.70}	& \textbf{97.94} & \textbf{10.57}	& \textbf{97.92} & \textbf{10.96} & \textbf{97.82} \\
        \rowcolor{gray!20} & $\approach({\mu,\sigma}) + \texttt{DICE}$  & \underline{13.49} & \underline{97.54} & \underline{11.71}	& \underline{97.81} & \underline{11.88}	& \underline{97.74} & \underline{12.51} & \underline{97.63} \\
        \midrule
        \multirow{11}{*}{\rotatebox{90}{CIFAR-100}} &MSP          & 80.40 & 76.16 & 81.32 & 73.79 & 79.90 & 76.84 & 81.57 &	73.77 \\
        &ODIN         & 66.06 & 84.78 & 67.90 & 82.52 & 65.08 & 84.32 & 66.21 &	83.24 \\
        &Energy       & 70.86 & 83.18 & 71.07 & 80.57 & 69.34 & 82.85 & 70.73 &	81.33 \\
        &ReAct        & 59.43 & 87.52 & 59.90 & 87.62 & 59.13 & 87.33 & 66.95 & 84.35 \\
        &DICE         & 56.90 & 85.39 & 55.41 & 82.70 & 58.32 & 85.46 & 59.74 &	83.86 \\
        &ASH-S        & 54.50 & 87.47 & 54.57 & 83.61 & 51.72 & 87.61 & 59.40 &	84.11 \\
        &SCALE        & 48.10 & 88.70 & 52.77 & 85.11 & 48.79 & 88.24 & 56.40 &	85.79 \\
        &GradNorm     & 67.11 & 76.94 & 72.73 & 71.34 & 72.37 & 73.11 & 73.57 &	70.16 \\
        &KNN          & 67.00 & 81.42 & 83.83 & 72.34 & 62.12 & 83.73 & 83.45 &	71.63 \\
        \cmidrule(lr){2-10}
        \rowcolor{gray!20} & $\approach(m) + \texttt{DICE}$             & \textbf{33.38} & \textbf{92.51} & \textbf{36.24} & \textbf{92.07} & \textbf{34.06} & \textbf{92.64} & \textbf{34.15} & \textbf{92.28} \\
        \rowcolor{gray!20} & $\approach({\mu,\sigma}) + \texttt{DICE}$  & \underline{36.19} & \underline{91.54} & \underline{39.98} & \underline{90.96} & \underline{36.10} & \underline{91.99} & \underline{36.41} & \underline{91.53} \\
        \bottomrule
        \end{tabular}}
        \vspace{-1.5mm}
        \label{table: avg_resnet18_act}
        \vspace{-2mm}
    \end{table}

    \noindent
    \looseness-1 
    \textbf{Overhead and Classification Accuracy.} As a post-hoc technique, $\approach$ is deployed in a two-branch pipeline: OOD detection is performed using our modified features, while the original, unmodified features are used for the final classification of any sample deemed ID. It ensures that our OOD detection improvements come at no cost to the ID accuracy. On the other hand, computational overhead is negligible; for instance, on a ResNet-50, our method increases the total GFLOPs by less than 0.1\%. A detailed ID classification accuracy using modified feature vector $h^{\approach}(\mathbf{x})$ is provided in Appendix~\ref{appendix: accuracy_analysis}.

    \noindent
    \textbf{Scope and Future Work.} In our empirical evaluation, we considered a diverse set of architectures spanning both modern and traditional deep learning models, including Swin Transformers, ConvNeXt, EfficientNet, ResNet, DenseNet, Wide-ResNet, and MobileNet. Although the principles of $\approach$ are general, its current formulation assumes a spatial aggregation operation such as global average pooling (GAP). Consequently, it is not directly applicable to early Vision Transformers (ViTs)~\cite{ViT}, which rely on a dedicated [CLS] token rather than spatial aggregation for classification. Extending $\approach$ to token-based transformer architectures is an important direction for future work.

    Our evaluation focuses on four representative scoring paradigms: logit-based, energy-based, gradient-based, and kNN distance-based methods. Energy-based scoring offers a strong performance–efficiency trade-off for post-hoc OOD detection, which motivates its inclusion as a primary baseline. While alternatives such as Mahalanobis distance~\cite{maha_distance} are studied, they incur substantial computational overhead. A promising direction for future research is to investigate whether combining $\approach$ with such computationally intensive methods yields gains that justify their additional cost.

\section{Ablation Studies}
\label{sec: ablation studies}

\vspace{-1mm}
    
    To provide a deeper understanding of $\approach$, this section summarizes three ablation studies. We present the main findings here and provide a detailed analysis in the appendix. These studies investigate our method's synergistic effect, justify our choice of statistics, and analyze its robustness to the hyperparameter $\gamma$.

    \textbf{Modular Integration with Primary Baselines.}
    To validate the complementary effect of $\approach$, we systematically evaluate baseline detectors both with and without $\approach$. The results show that $\approach$ consistently enhances strong baselines such as ASH and SCALE, across CIFAR and ImageNet benchmarks. These findings indicate that the enriched feature representation provides a stronger foundation for existing OOD detectors. A detailed breakdown is provided in Appendix~\ref{appendix: combined methods}.

    \textbf{Analysis of Alternate Statistics.} To justify our choice of statistics, we performed an ablation study using the median and Shannon entropy~\cite{shannon_entropy}. Both alternatives performed poorly, with FPR95 scores often exceeding 95\%, because they fail to produce a sufficiently distinctive and separable signal between ID and OOD samples. This analysis confirms that maximum and variance are superior choices as they produce a quantitatively stronger and more separable signal. The full results are presented in Appendix~\ref{appendix: alternate statistics}.

    \textbf{Sensitivity to Hyperparameter $\gamma$.} We analyzed the sensitivity of the $\approach(\mu, \sigma)$ method to its hyperparameter $\gamma$. Our findings indicate that the method is robust to the specific choice of $\gamma$. While performance generally improves as $\gamma$ increases from zero, the gains quickly saturate, showing that the model is not overly sensitive to precise tuning and performs well across a reasonable range of values. A detailed analysis is available in Appendix~\ref{appendix: hyper-parameter sensitiviy}.

\vspace{-3mm}

\section{Conclusion}
\label{sec: conclusion}

    \vspace{-1mm}

    In this work, we introduced $\approach$, a simple yet powerful technique that enhances OOD detection by leveraging statistical cues; specifically the channel-wise maximum and variance, that are typically discarded by GAP. Our extensive experiments demonstrate that $\approach$ is a versatile, complementary tool that significantly boosts the performance of existing techniques across diverse datasets and architectures. Notably, it shows remarkable robustness on modern models like Swin-B, ConvNext, and EfficientNet, where many conventional methods fails.

\bibliographystyle{splncs04}
\bibliography{main}

\newpage
\appendix
\section*{\centering \textbf{Appendix}}

\section{Description of Baseline Methods}
\label{appendix: baseline methods}

In resonance with existing work~\cite{energy,ReAct,DICE,ASH}, for the reader’s convenience, we summarize in detail a few common techniques for defining OOD scores that measure the degree of ID-ness on the given sample. All the methods derive the score post-hoc on neural networks trained with in-distribution data only. By convention, a higher score is indicative of being in-distribution, and vice versa.

\textbf{Softmax score } One of the earliest works on OOD detection considered using the maximum softmax
probability (MSP) to distinguish between $\mathcal{D}_{\text {in }}$ and $\mathcal{D}_{\text {out }}$~\cite{msp}. In detail, suppose the label space is \( \mathcal{Y} = \{1, 2, \cdots , C\}\). We assume the classifier $f$ is defined in terms of a feature extractor $f : \mathcal{X} \rightarrow \mathbb{R}^m$ and a linear multinomial regressor with weight matrix $W \in \mathbb{R}^{C \times m}$ and bias vector $\mathbf{b} \in \mathbb{R}^C$. The prediction probability for each class is given by :

\begin{small}
    \begin{equation}
        \mathbb{P}(y = c| \mathbf{x}) = \text{Softmax}(Wh(\mathbf{x}) + \mathbf{b})_{c}
    \end{equation}
\end{small}

The softmax score is defined as $S_{\text{MSP}}(\mathbf{x}; f) := \max_c\mathbb{P}(y = c| \mathbf{x})$.

\textbf{ODIN }~\cite{odin} This method introduced temperature scaling and input perturbation to improve the separation of MSP for ID and OOD data.  $\mathbf{\Tilde{x}}$ denotes perturbed input.

\begin{small}
    \begin{equation}
        \mathbb{P}(y = c| \mathbf{\Tilde{x}}) = \text{Softmax}[(Wh(\mathbf{\Tilde{x}}) + \mathbf{b})/T]_{c}
    \end{equation}
\end{small}

the ODIN score is defined as $S_{\text{ODIN}}(\mathbf{x}; f) := \max_c\mathbb{P}(y = c | \mathbf{\Tilde{x}})$.

\textbf{Energy score }  The energy function~\cite{energy} maps the output logit to a scalar $S_{\text{Energy}}(\mathbf{x}; f) \in \mathbb{R}$, which is relatively lower for ID data:

\begin{small}
    \begin{equation}
        S_{\text{Energy}}(\mathbf{x}; f) = -\text{Energy}(\mathbf{x}; f) = \log{ \left( \sum_{c=1}^{C} \exp( f_{c}(\mathbf{x}) ) \right) }
    \end{equation}
\end{small}

They used the \textit{negative energy score} for OOD detection, in order to align with the convention that $S(\mathbf{x}; f)$ is higher for ID data and vice versa.

\textbf{ReAct }  They perform post-hoc modification of penultimate layer of the neural network. It works by truncating the feature activations at a threshold $c$, i.e., replacing each activation with $\min(x,c)$. This limits the influence of abnormally large activations often caused by OOD inputs.The truncation threshold is set with the validation strategy in~\cite{ReAct}.Formally,
    \[
        h^{\text{ReAct}}(\mathbf{x}) = \text{ReAct}(h(\mathbf{x}); c) = \min(h(\mathbf{x}), c) \quad \text{(applied element-wise)}
    \]
The final model output becomes:
    \[
        f^{\text{ReAct}}(\mathbf{x}) = W^\top h^{\text{ReAct}}(\mathbf{x}) + \mathbf{b}
    \]
This method also uses energy score $S_{\text{Energy}}(\mathbf{x}; f^{\text{ReAct}}) \in \mathbb{R}$ for OOD detection.

\textbf{DICE}~\cite{DICE}   It is a post-hoc method to improve OOD detection by retaining only the most informative weights in the final layer of a pre-trained neural network. A \textit{contribution matrix} \( V \in \mathbb{R}^{m \times C} \) is computed, where each column is:
    \[
        \mathbf{v}_c = \mathbb{E}_{\mathbf{x} \in \mathcal{D}}[\mathbf{w}_c \odot h(\mathbf{x})]
    \]
with \( \odot \) denoting element-wise multiplication. Each entry in \( V \) quantifies the average contribution of a feature unit to class \( c \). A binary \textit{masking matrix} \( M \in \mathbb{R}^{m \times C} \) selects the top-$k$ highest-contributing weights, setting others to zero. The sparsified output is:
    \[
        f^{\text{DICE}}(\mathbf{x}; \theta) = (M \odot W)^\top h(\mathbf{x}) + \mathbf{b}
    \]
This method also uses energy score $S_{\text{Energy}}(\mathbf{x}; f^{\text{DICE}}) \in \mathbb{R}$ for OOD detection.

\textbf{ASH }~\cite{ASH}  It is also a post-hoc method that simplifies feature representations to improve OOD detection. They proposes three versions of ASH, we presented only the best performing version i.e, ASH-S.  Given an input activation vector \( h(\mathbf{x}) \) and a pruning percentile \( p \), ASH~\cite{ASH} proceeds as follows shaping the activation of penultimate layer $h(\mathbf{x})$ to get $h^{\text{ASH}}(\mathbf{x})$:

\begin{enumerate}
    \item Compute the \( p \)-th percentile threshold \( t \) of \( h(\mathbf{x}) \).
    \item Let \( s_1 = \sum h(\mathbf{x}) \), the sum of all activation values before pruning.
    \item Set all values in \( h(\mathbf{x}) \) less than \( t \) to zero.
    \item Let \( s_2 = \sum h(\mathbf{x}) \), the sum after pruning.
    \item Scale all non-zero values in \( h(\mathbf{x}) \) by \( \exp(s_1/s_2) \).
\end{enumerate}

The final model output becomes, which is then used to compute energy score $S_{\text{Energy}}(\mathbf{x}; f^{\text{ASH}}) \in \mathbb{R}$ for OOD detection :
    \[
        f^{\text{ASH}}(\mathbf{x}) = W^\top h^{\text{ASH}}(\mathbf{x}) + \mathbf{b}
    \]

\textbf{SCALE}~\cite{SCALE} It is a post-hoc method designed to enhance out-of-distribution (OOD) detection by adaptively scaling the activation of the penultimate layer $h(\mathbf{x})$ before computing the final classifier output. Given an input activation vector $h(\mathbf{x})$ and a pruning percentile $p$, SCALE~\cite{SCALE} proceeds as follows to obtain the scaled activation $h^{\text{SCALE}}(\mathbf{x})$:

\begin{enumerate}
    \item Compute the $p$-th percentile threshold $t$ of $h(\mathbf{x})$.
    \item Let $s_1 = \sum h(\mathbf{x})$, the sum of all activation values before pruning.
    \item Construct a binary mask $\mathbf{1}_{\{h(\mathbf{x}) \geq t\}}$ that keeps only the top-$p$ activations.
    \item Let $s_2 = \sum h(\mathbf{x}) \cdot \mathbf{1}_{\{h(\mathbf{x}) \geq t\}}$, the sum of the top-$p$ activations.
    \item Compute the scaling ratio $r = \tfrac{s_1}{s_2}$.
    \item Scale the original activations by $\exp(r)$:
    \[
        h^{\text{SCALE}}(\mathbf{x}) = \exp(r) \cdot h(\mathbf{x}).
    \]
\end{enumerate}

The final model output is then computed with the scaled activations, and the \emph{energy score} is used for OOD detection:
\[
    f^{\text{SCALE}}(\mathbf{x}) = W^\top h^{\text{SCALE}}(\mathbf{x}) + \mathbf{b}, 
    \quad
    S_{\text{Energy}}(\mathbf{x}; f^{\text{SCALE}}) \in \mathbb{R}.
\]

\textbf{KNN}~\cite{knn} This post-hoc, feature-space method identifies OOD samples based on their distance from ID training manifold. Let $\mathcal{H}_{\text{train}} = \{h(\mathbf{x}_i) \in \mathbb{R}^d\}_{i=1}^N$ be the set of $N$ penultimate-layer feature vectors stored from the ID training set. For a new test input $\mathbf{x}$ with feature $h(\mathbf{x})$, the kNN score is computed in three steps:

\begin{enumerate}
    \item Compute Distances: The set of Euclidean distances $\{d_i\}$ between $h(\mathbf{x})$ and all stored ID features in $\mathcal{H}_{\text{train}}$ is computed:$$d_i = \| h(\mathbf{x}) - h(\mathbf{x}_i) \|_2, \quad \forall h(\mathbf{x}_i) \in \mathcal{H}_{\text{train}}$$.

    \item Identify Neighbors: The $k$ smallest distances are identified and sorted, $d_{(1)} \leq d_{(2)} \leq \dots \leq d_{(k)}$.

    \item Calculate Score: The final kNN score is the average distance to these $k$ nearest neighbors:$$S_{\text{kNN}}(\mathbf{x}) = \frac{1}{k} \sum_{j=1}^k d_{(j)}$$
\end{enumerate}

A large score $S_{\text{kNN}}(\mathbf{x})$ indicates that the sample lies far from the ID training manifold and is therefore flagged as out-of-distribution.

\textbf{GradNorm}~\cite{GradNorm} This post-hoc, gradient-space method detects OOD samples using the magnitude of backpropagated gradients. Let $f(\mathbf{x};\theta)$ denote a pre-trained classifier and let $\mathbf{p}(\mathbf{x}) = \text{Softmax}(f(\mathbf{x}))$ be its predictive distribution over $C$ classes. GradNorm computes the gradient of the KL divergence between $\mathbf{p}(\mathbf{x})$ and the uniform distribution $\mathbf{u} = [1/C,\dots,1/C]$.

For a test input $\mathbf{x}$, the GradNorm score is computed in three steps:
\begin{enumerate}
    \item Compute KL Divergence: The KL divergence between the uniform distribution and the model prediction is computed:
        \[
            \mathcal{L}_{\text{KL}}(\mathbf{x})
            = D_{\text{KL}}\!\left(\mathbf{u} \,\|\, \mathbf{p}(\mathbf{x})\right)
            = -\frac{1}{C} \sum_{c=1}^{C} \log p_c(\mathbf{x}) + \text{const}.
        \]
    
    \item Compute Gradients: The gradient of $\mathcal{L}_{\text{KL}}(\mathbf{x})$ is backpropagated with respect to selected model parameters $\mathbf{w}$ (typically the last fully connected layer weights):
        \[
            \mathbf{g}(\mathbf{x}) 
            = \frac{\partial \mathcal{L}_{\text{KL}}(\mathbf{x})}{\partial \mathbf{w}}.
        \]
    
    \item Calculate Score: The OOD score is defined as the L1 norm of the gradient vector:
        \[
            S_{\text{GradNorm}}(\mathbf{x}) 
            = \|\mathbf{g}(\mathbf{x})\|_1.
        \]
\end{enumerate}

A larger score $S_{\text{GradNorm}}(\mathbf{x})$ indicates that the input induces stronger gradients and is therefore more likely to be in-distribution, while smaller scores suggest OOD samples.

\section{Statistical Analysis}
\label{appendix: analysis}

    In this section, we present a detailed statistical analysis of our method, $\approach$, demonstrating how it enhances the separation between in-distribution (ID) and out-of-distribution (OOD) samples. This increased separation leads to a sharper decision boundary between ID and OOD regions. Our analysis builds on key observations commonly made in prior work on OOD detection~\citep{energy, ReAct, DICE, ASH, SCALE}, which we adopt as foundational to our analysis. 

\subsection{Setup}
 
    We consider a trained neural network parameterized by \(\theta\), which encodes an input \(\mathbf{x} \in \mathbb{R}^d\) to \(n\) spatial activation maps, denoted by \( g(\mathbf{x}) \in \mathbb{R}^{n \times k \times k} \). These activation maps are then transformed into \(n\) dimensional feature vector \(h(\mathbf{x}) \in \mathbb{R}^n\) (i.e., penultimate layer) via global average pooling (GAP) as shown in Equation~\ref{eq:avg_pooling}, where \texttt{Avg} denotes the GAP operation applied independently to each of the  \(n\) activation maps in \(g(\mathbf{x})\).  

    \begin{equation}
                h(\mathbf{x}) = \texttt{Avg}\left( g(\mathbf{x}) \right)
            \label{eq:avg_pooling}
    \end{equation}

    A weight matrix \( \mathbf{W} \in \mathbb{R}^{n \times C} \) connects the feature vector \(h(\mathbf{x}) \in \mathbb{R}^n\) to the output logit \(f(\mathbf{x}) \in \mathbb{R}^C\) as shown in Equation~\ref{eq:logit} , where \( C \) is the total number of classes in \( \mathcal{Y} = \{1, 2, \cdots , C\}\). The function maps the output logit \( f(\mathbf{x}) \) to a scalar energy \( \mathbf{E}_{\theta}(\mathbf{x})\), which is relatively lower for ID data~\cite{energy} as shown in Equation~\ref{eq:score}.

    \begin{small}
        \begin{equation} 
                f(\mathbf{x}) = \mathbf{W}^\top h(\mathbf{x}) + \mathbf{b}
        \label{eq:logit}
        \end{equation}
    \end{small}
 
    \begin{small}
        \begin{equation} 
            S_{\theta}(\mathbf{x}) = -\mathbf{E}_{\theta}(\mathbf{x}) = \log{ \left( \sum_{c=1}^{C} \exp( f(\mathbf{x}; \theta) ) \right) }
        \label{eq:score}
        \end{equation}
    \end{small}

    The goal of OOD detection is to learn a decision boundary $G_{\lambda}(\mathbf{x} ; \theta)$  that classifies a test sample $\mathbf{x} \in \mathcal{X}$: 
    \begin{equation}
         G_{\lambda}(\mathbf{x} ; \theta)  = 
                \begin{cases}
                    \text{in} & \text { if } S_{\theta}(\mathbf{x}) \ge \lambda \\ 
                    \text{out} & \text { if } S_{\theta}(\mathbf{x}) < \lambda
                \end{cases}
    \end{equation}

    where a thresholding mechanism is employed to distinguish between ID and OOD samples. To align with the convention, samples with higher scores \( S_{\theta}(\mathbf{x}) \) are classified as ID while samples with lower scores are classified as OOD. By convention~\citep{energy}, the threshold \( \lambda \) is typically chosen such that a high fraction of ID, (\emph{e.g., 95\%}) is correctly classified in practice.

    As part of $\approach$,  we retrieve mean \(\mu(\mathbf{x}) \in \mathbb{R}^n \), variance \(\sigma^2(\mathbf{x}) \in \mathbb{R}^n \), and maximum \(m(\mathbf{x}) \in \mathbb{R}^n \) from the feature maps in \( g(\mathbf{x}) \). Up to this point, $\mu(\mathbf{x})$ and $h(\mathbf{x})$ are numerically equivalent. However, semantically, $\mu(\mathbf{x})$ denotes the extracted statistical features, while $h(\mathbf{x})$ refers to the penultimate layer representation. $\approach$ modifies the penultimate layer \( h(\mathbf{x}) \) of the model using \(\mu(\mathbf{x}) \), \(\sigma^2(\mathbf{x}) \), and \(m(\mathbf{x}) \) for enhanced OOD detection. In this work, we extensively explored two versions of $\approach$: 
    \begin{itemize}
        \item \( \approach(m) \)  replaces feature vector \( h(\mathbf{x}) \) by the maximum (dominant) \( m(\mathbf{x}) \) as shown in Equation~\ref{eq:DomAct_v1}.
        \begin{small}
            \begin{equation}
                    h^{\approach(m)}(\mathbf{x}) = m(\mathbf{x})
                \label{eq:DomAct_v1}
            \end{equation}
        \end{small}

        \item $\approach(\mu,\sigma)$  augments the mean activation with its corresponding channel-wise standard deviation $\sigma(\mathbf{x})$, scaled by a hyperparameter $\gamma$\ as shown in Equation~\ref{eq:DomAct_v2}.
        \begin{small}
            \begin{equation}
                     h^{\approach(\mu,\sigma)}(\mathbf{x}) = \mu(\mathbf{x}) + \gamma\sigma(\mathbf{x})
                \label{eq:DomAct_v2}
            \end{equation}
        \end{small}
    \end{itemize}

    In effect, both $\approach$ versions raise the activation level of feature vector \( h(\mathbf{x}) \). Our analysis focuses on understanding how modifying the penultimate feature representation $h(\mathbf{x})$ impacts the final OOD score. Recall that the logit vector $f(\mathbf{x})$ is a linear transformation of these features, i.e., $f(\mathbf{x}) = \mathbf{W}^\top h(\mathbf{x}) + \mathbf{b}$. Consequently, any logit-based scoring function, such as the energy score, is ultimately a function of $h(\mathbf{x})$.

\subsection{Analysis}

    In this section, we provide a statistical analysis of our method. Our analysis is grounded in a foundational observation regarding the behavior of features extracted from well-trained classifiers, consistent with prior work~\citep{energy,ReAct,ASH,SCALE}. We denote in-distribution and out-of-distribution samples as $\mathbf{x}_{\text{in}}$ and $\mathbf{x}_{\text{out}}$, respectively.

    \begin{observation}
    \label{obs:obs1}
        Given a well-trained model \( \theta \), the statistical features extracted from $\mathbf{x}_{\text{in}}$ samples consistently exhibit higher magnitudes than those from $\mathbf{x}_{\text{out}}$ samples. This holds true for the channel-wise mean \( \mu(\mathbf{x}) \), maximum \( m(\mathbf{x}) \), and standard deviation \( \sigma(\mathbf{x}) \). Formally, we state this as shown in Equations~\ref{eq:feature_stats}. More precisely, these inequalities are characteristic of the majority of individual feature dimensions as shown in Figure~\ref{fig:obs1}. 
        
        \begin{subequations}
            \begin{align}
                \mathbb{E}_{\mathbf{x} \sim \mathcal{D}_{\mathrm{in}}} [\mu(\mathbf{x})] \ge \mathbb{E}_{\mathbf{x} \sim \mathcal{D}_{\mathrm{out}}} [\mu(\mathbf{x})] \\
                \mathbb{E}_{\mathbf{x} \sim \mathcal{D}_{\mathrm{in}}} [m(\mathbf{x})] \ge \mathbb{E}_{\mathbf{x} \sim \mathcal{D}_{\mathrm{out}}} [m(\mathbf{x})] \\
                \mathbb{E}_{\mathbf{x} \sim \mathcal{D}_{\mathrm{in}}} [\sigma(\mathbf{x})] \ge \mathbb{E}_{\mathbf{x} \sim \mathcal{D}_{\mathrm{out}}} [\sigma(\mathbf{x})] 
            \end{align}
            \label{eq:feature_stats}
        \end{subequations}
    
        This fundamental property enables the network to perform both its primary classification task and OOD detection effectively. 
        
    \end{observation}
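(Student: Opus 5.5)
Observation~\ref{obs:obs1} is an empirical regularity rather than a theorem, so my plan is to support it in two complementary ways. First, I would give a short conditional derivation that explains why the three inequalities in Equations~\ref{eq:feature_stats} should hold together. Second, I would carry out a channel-wise empirical verification across the backbones and ID/OOD pairs used in Section~\ref{sec: experiments}; this is what Figure~\ref{fig:obs1} is meant to display.

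For the conditional part, I would fix a channel $j$ and write its activation map $g_j(\mathbf{x})\in\mathbb{R}^{k\times k}$ as a random field over the $k^2$ spatial positions. The modelling assumption I would commit to is a \emph{stochastic attenuation} hypothesis, in the spirit of the observations in~\cite{energy,ReAct}. It says that, in distribution, an OOD map behaves like an ID map shrunk toward zero: $g_j(\mathbf{x}_{\text{out}}) \overset{d}{=} A\, g_j(\mathbf{x}_{\text{in}})$, where $A\in[0,1]$ is a random factor independent of the map. Because mean, maximum and standard deviation are all positively homogeneous of degree one in the map, we get $\mu_j(\mathbf{x}_{\text{out}})\overset{d}{=}A\mu_j(\mathbf{x}_{\text{in}})$, $m_j(\mathbf{x}_{\text{out}})\overset{d}{=}A m_j(\mathbf{x}_{\text{in}})$ and $\sigma_j(\mathbf{x}_{\text{out}})\overset{d}{=}A\sigma_j(\mathbf{x}_{\text{in}})$. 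Taking expectations and using $\mathbb{E}[A]\le 1$, the three inequalities follow at once, provided the ID statistics are nonnegative. For $\mu$ and $m$ this nonnegativity holds automatically after a ReLU, and $\sigma\ge 0$ always.

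I would then weaken the hypothesis to something more realistic: a pointwise stochastic-dominance condition, namely that the ID map's empirical spatial distribution dominates the OOD one in the increasing convex order. For any statistic that is monotone under this order, the corresponding expectation inequality follows. This holds for the mean, and for the maximum of nonnegative entries with tails controlled by a standard Orlicz-type argument; for the standard deviation it requires additionally controlling the spread, which is why I treat it separately below.

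The main obstacle is that neither hypothesis is implied by anything proved earlier in the paper. They must themselves be checked, and they visibly fail for some channels, which is why the observation is stated only for ``the majority'' of feature dimensions. The hardest statistic is $\sigma$. An OOD map can have a lower mean yet a higher spread, for example a single strong response on an otherwise silent map, so the mean inequality does not imply the variance inequality. Here I would first try to bound $\sigma_j\le\sqrt{\mu_j(m_j-\mu_j)}$, which holds for values in $[0,m_j]$, and argue that the variance gap follows whenever both the $\mu$ and $m$ gaps are large relative to the within-class fluctuations. Failing that, I would fall back on direct measurement.

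The empirical step is then routine. For each model and each OOD dataset, I would compute per-channel averages of $\mu$, $m$ and $\sigma$ over ID and OOD samples. I would report the fraction of channels with a nonnegative gap, and use a paired sign test across channels to show that this fraction is significantly above one half. This is the quantitative content that Figure~\ref{fig:raw_delta_plot} and Figure~\ref{fig:obs1} illustrate.
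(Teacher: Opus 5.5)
Your plan matches the paper in substance. The paper gives no derivation of Observation~\ref{obs:obs1}. Its only support is the per-unit plot of test-set averages in Figure~\ref{fig:obs1} (ResNet-50, ImageNet vs.\ Texture), the assertion that the trend holds in other settings, and an appeal to prior work. Your channel-wise measurement over several backbones and OOD sets, with the fraction of non-negative gaps and a sign test, is the same evidence made quantitative. It is also the right reading of the ``majority of feature dimensions'' qualifier.

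The conditional layer is your own addition. It explains why the three inequalities co-occur, but the attenuation hypothesis is essentially the conclusion restated, and, as you note, it fails on some channels. The nonnegativity it needs is also not automatic for the non-ReLU backbones the paper stresses (Swin-B, ConvNeXt-B, EfficientNet-B0, and the SiLU/GeLU/Tanh ResNet-18s). For those, the claim rests on measurement alone.

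Two steps of that layer would not work as written.

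First, the Bhatia--Davis bound $\sigma\le\sqrt{\mu(m-\mu)}$ only bounds $\sigma$ from above. It can control the OOD side but never certify that $\sigma(\mathbf{x}_{\text{in}})$ is large. Moreover, no condition on the $\mu$ and $m$ gaps alone suffices. An ID map constant at $2a$ has $\mu=m=2a$ and $\sigma=0$. An OOD map with one entry $a$ among $k^2-1$ zeros has smaller $\mu$ and $m$ but $\sigma>0$. Both gaps grow with $a$ while within-class fluctuations are zero. What is missing is a lower bound on the ID spread in terms of its peakedness, e.g.\ Samuelson's inequality $m-\mu\le\sigma\sqrt{k^2-1}$ (population $\sigma$). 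Combined with your bound on the OOD side, this gives the sufficient condition $\mathbb{E}[m(\mathbf{x}_{\text{in}})-\mu(\mathbf{x}_{\text{in}})]/\sqrt{k^2-1}\ge\mathbb{E}\bigl[\sqrt{\mu(\mathbf{x}_{\text{out}})(m(\mathbf{x}_{\text{out}})-\mu(\mathbf{x}_{\text{out}}))}\bigr]$. That condition is demanding already for a $7\times 7$ map.

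Second, increasing convex dominance of the pooled spatial distributions transfers only linear statistics such as the mean. For the maximum you need the dominance per sample, under some coupling of ID and OOD inputs. Once you have that, no Orlicz argument is needed, because the upper endpoint of the support is monotone under this order: take $t=\max Y$ in $\mathbb{E}(X-t)_+\le\mathbb{E}(Y-t)_+$, where $X\le_{\mathrm{icx}}Y$.
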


    \begin{figure*}[ht]
      \centering
      \includegraphics[width=\textwidth]{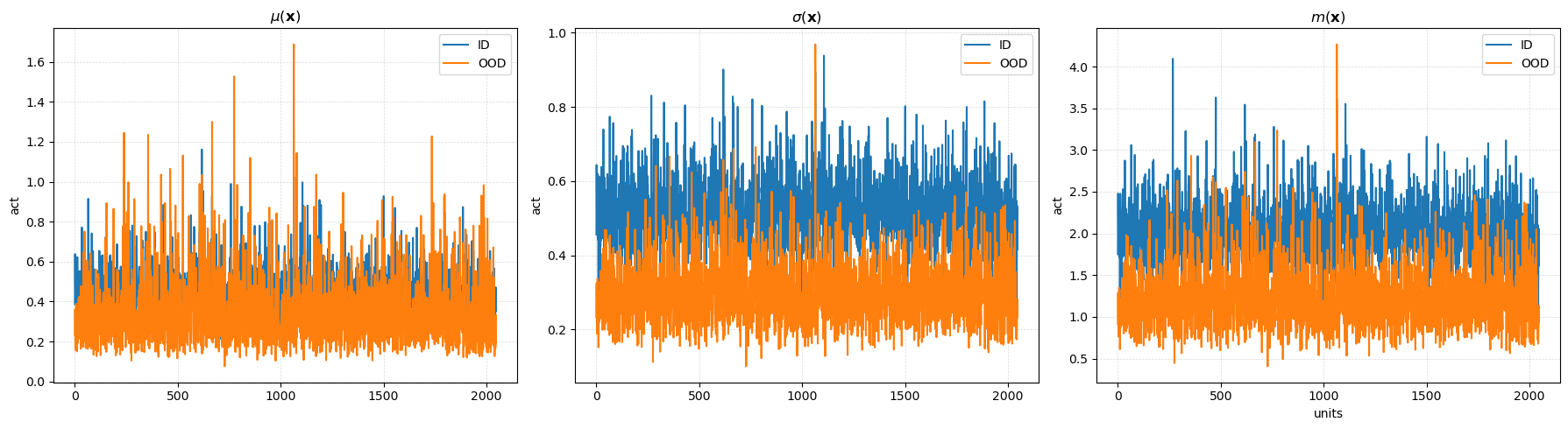}
      \caption{ \textit{Unit-wise comparison of statistical features for ID vs. OOD samples, with values averaged over the entire test set. Across a majority of feature dimensions, the mean (\( \mu(\mathbf{x}) \)), standard deviation (\( \sigma(\mathbf{x}) \)), and maximum (\( m(\mathbf{x}) \)) statistics all exhibit consistently higher values for ID samples (blue) than for OOD samples (orange). Results are shown for a ResNet-50 model with ImageNet-1K as the ID dataset and Texture as the OOD dataset. This trend holds consistently across other architectures and data combinations.}}
      \label{fig:obs1}
    \end{figure*}

    \begin{definition}
        To quantify the separation for a given feature vector $h(\mathbf{x})$, we define the separation gap $\Delta_h$, as the difference in its expected value across the ID and OOD distributions: 
            \[
                \Delta_h = \mathbb{E}_{\mathbf{x} \sim \mathcal{D}_{\mathrm{in}}}[h(\mathbf{x})] 
                           - \mathbb{E}_{\mathbf{x} \sim \mathcal{D}_{\mathrm{out}}}[h(\mathbf{x})] = \mathbb{E}[h(\mathbf{x}_{\mathrm{in}}) - h(\mathbf{x_{\mathrm{out}}})]
            \]
        Specifically, the separation gaps for the mean, maximum, and our combined mean-and-standard-deviation feature are:
        \begin{subequations}
            \begin{align}
            \Delta_\mu & := \mathbb{E}_{\mathbf{x} \sim \mathcal{D}_{\mathrm{in}}}[\mu(\mathbf{x})] 
                       - \mathbb{E}_{\mathbf{x} \sim \mathcal{D}_{\mathrm{out}}}[\mu(\mathbf{x})]  = \mathbb{E}[\mu(\mathbf{x}_{\mathrm{in}}) - \mu(\mathbf{x_{\mathrm{out}}})], \\
             \Delta_m &:= \mathbb{E}_{\mathbf{x} \sim \mathcal{D}_{\mathrm{in}}}[m(\mathbf{x})] 
                       - \mathbb{E}_{\mathbf{x} \sim \mathcal{D}_{\mathrm{out}}}[m(\mathbf{x})]  = \mathbb{E}[m(\mathbf{x}_{\mathrm{in}}) - m(\mathbf{x_{\mathrm{out}}})],  \\
             \Delta_{\mu,\sigma} &:= \mathbb{E}_{\mathbf{x} \sim \mathcal{D}_{\mathrm{in}}}[\mu(\mathbf{x}) + \sigma(\mathbf{x})] 
                                - \mathbb{E}_{\mathbf{x} \sim \mathcal{D}_{\mathrm{out}}}[\mu(\mathbf{x}) + \sigma(\mathbf{x})]  = \mathbb{E}[\mu(\mathbf{x}_{\mathrm{in}}) + \sigma(\mathbf{x}_{\mathrm{in}}) - \mu(\mathbf{x_{\mathrm{out}}}) - \sigma(\mathbf{x_{\mathrm{out}}})]
            \end{align}
        \end{subequations}
    \end{definition}

    \begin{lemma}
    \label{lemma:lemma1}
        Given Observation~\ref{obs:obs1}, the separation gap of the combined mean-and-standard-deviation feature is greater than or equal to that of the mean feature alone:
        \[
            \Delta_{\mu,\sigma} \ge \Delta_\mu
        \]
    \end{lemma}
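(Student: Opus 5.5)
The plan is a direct computation using linearity of expectation, with Observation~\ref{obs:obs1} supplying the single inequality needed. Starting from the definition of $\Delta_{\mu,\sigma}$, I will split the expectation of the sum into two pieces:
\[
\Delta_{\mu,\sigma} = \bigl(\mathbb{E}_{\mathcal{D}_{\mathrm{in}}}[\mu(\mathbf{x})] - \mathbb{E}_{\mathcal{D}_{\mathrm{out}}}[\mu(\mathbf{x})]\bigr) + \bigl(\mathbb{E}_{\mathcal{D}_{\mathrm{in}}}[\sigma(\mathbf{x})] - \mathbb{E}_{\mathcal{D}_{\mathrm{out}}}[\sigma(\mathbf{x})]\bigr) = \Delta_\mu + \Delta_\sigma .
\]
Here $\Delta_\sigma$ is the separation gap of the standard-deviation feature, defined in the same way as the other gaps. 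This step only requires the expectations to be finite. That holds because $g(\mathbf{x})$ has finitely many entries per channel and the network outputs are bounded on the data, or we can simply assume integrability.

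Next, I will apply the third inequality of Observation~\ref{obs:obs1}, $\mathbb{E}_{\mathcal{D}_{\mathrm{in}}}[\sigma(\mathbf{x})] \ge \mathbb{E}_{\mathcal{D}_{\mathrm{out}}}[\sigma(\mathbf{x})]$. This gives $\Delta_\sigma \ge 0$, and therefore $\Delta_{\mu,\sigma} = \Delta_\mu + \Delta_\sigma \ge \Delta_\mu$. All of these quantities are vectors in $\mathbb{R}^n$, so I will read the inequalities coordinatewise. That matches how Observation~\ref{obs:obs1} is stated, namely channel by channel for the majority of dimensions. The conclusion then holds exactly on the channels where the observation holds.

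I will add a remark covering the actual $\approach(\mu,\sigma)$ feature, $\mu + \gamma\sigma$ with $\gamma \ge 0$. The same computation gives a gap of $\Delta_\mu + \gamma\Delta_\sigma$, which is at least $\Delta_\mu$ and is nondecreasing in $\gamma$. This is consistent with the sensitivity study.

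The only real obstacle is interpretive rather than technical. Observation~\ref{obs:obs1} is empirical and holds only for most coordinates, so the lemma must be stated conditionally and coordinatewise. I will make that explicit instead of claiming the inequality for every channel.
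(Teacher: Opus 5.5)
Your proof is correct and matches the paper's argument. The paper likewise expands $\Delta_{\mu,\sigma}-\Delta_\mu$ by linearity of expectation, cancels the mean terms to obtain $\mathbb{E}_{\mathbf{x}\sim\mathcal{D}_{\mathrm{in}}}[\sigma(\mathbf{x})]-\mathbb{E}_{\mathbf{x}\sim\mathcal{D}_{\mathrm{out}}}[\sigma(\mathbf{x})]$, and concludes with the third inequality of Observation~\ref{obs:obs1}; your coordinatewise reading and the extension to $\mu+\gamma\sigma$ with $\gamma\ge 0$ are reasonable refinements that the paper leaves implicit.
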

    \begin{proof} By linearity of expectation, we can expand the definition of $\Delta_{\mu,\sigma}$ as follows:
        \begin{small}
            \[
                \begin{split}
                    &\Delta_{\mu,\sigma} - \Delta_\mu \\
                    &= \Big( \mathbb{E}_{\mathbf{x} \sim \mathcal{D}_{\mathrm{in}}}[\mu(\mathbf{x}) + \sigma(\mathbf{x})] 
                                    - \mathbb{E}_{\mathbf{x} \sim \mathcal{D}_{\mathrm{out}}}[\mu(\mathbf{x}) + \sigma(\mathbf{x})] \Big) - \Big(  \mathbb{E}_{\mathbf{x} \sim \mathcal{D}_{\mathrm{in}}}[\mu(\mathbf{x})] 
                           - \mathbb{E}_{\mathbf{x} \sim \mathcal{D}_{\mathrm{out}}}[\mu(\mathbf{x})] \Big) \\
                    &= \mathbb{E}_{\mathbf{x} \sim \mathcal{D}_{\mathrm{in}}} [\mu(\mathbf{x})] +  \mathbb{E}_{\mathbf{x} \sim \mathcal{D}_{\mathrm{in}}} [\sigma(\mathbf{x})] - \mathbb{E}_{\mathbf{x} \sim \mathcal{D}_{\mathrm{out}}} [\mu(\mathbf{x})]  - \mathbb{E}_{\mathbf{x} \sim \mathcal{D}_{\mathrm{out}}} [\sigma(\mathbf{x})] -\mathbb{E}_{\mathbf{x} \sim \mathcal{D}_{\mathrm{in}}}[\mu(\mathbf{x})] 
                           + \mathbb{E}_{\mathbf{x} \sim \mathcal{D}_{\mathrm{out}}}[\mu(\mathbf{x})] \\
                    &= \mathbb{E}_{\mathbf{x} \sim \mathcal{D}_{\mathrm{in}}} [\sigma(\mathbf{x})] - \mathbb{E}_{\mathbf{x} \sim \mathcal{D}_{\mathrm{out}}} [\sigma(\mathbf{x})] \\
                    &\ge 0 \qquad \qquad \qquad \qquad ( \text{Recall Equation}~\ref{eq:feature_stats} c \text{ of Observation}~\ref{obs:obs1})
                \end{split}
            \]
        \end{small}
        This result is empirically validated in leftmost plot of Figure~\ref{fig:obs2}, which shows that incorporating the standard deviation consistently increases the separation between ID and OOD samples.
    \end{proof}

    \begin{figure}[ht]
          \centering
          \includegraphics[width=\textwidth]{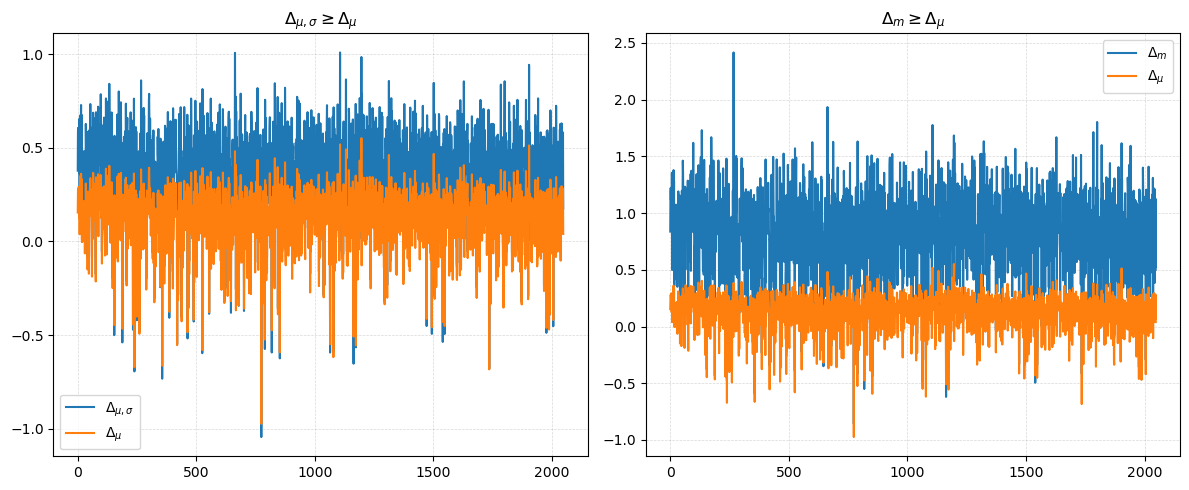}
          \caption{\textit{Comparison of the separation gap \( \Delta \) achieved by different statistical features, averaged over all test samples. Left: It demonstrate that incorporating the standard deviation \( \Delta_{\mu,\sigma} \) yields a larger separation gap than using the mean activation alone \( \Delta_\mu \). Right: It demonstrate that using the maximum activation \( \Delta_m \) yields a larger separation gap than using the mean activation \( \Delta_\mu \). Results are shown for a ResNet-50 model with ImageNet as the ID dataset and Texture as the OOD dataset. This finding holds consistently across other architectures and data combinations.}}
          \label{fig:obs2}
    \end{figure}
    
    \begin{observation}
        \label{obs:obs2} Our experiments consistently show that the maximum statistic provides a stronger separation signal than the mean statistic. We state this empirical finding as the following inequality:
        \[
            \Delta_m \ge \Delta_\mu
        \]
        This is consistent with the behavior of discriminative classifiers: ID samples are trained to elicit high-magnitude feature responses, while OOD samples tend to produce weaker, more uniform activations. This makes the maximum a more distinctive signal than the mean, which is empirically validated in right plot in Figure~\ref{fig:obs2}.
    \end{observation}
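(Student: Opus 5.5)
Observation~\ref{obs:obs2} is stated as an empirical finding. It does not follow from Observation~\ref{obs:obs1} alone, since Observation~\ref{obs:obs1} only compares each statistic across distributions and says nothing about how the statistics relate to each other. My plan is therefore to reduce the inequality to a single, interpretable condition on the \emph{peak excess}
\[
\delta(\mathbf{x}) := m(\mathbf{x}) - \mu(\mathbf{x}) \ge 0 .
\]
Each channel satisfies $\delta(\mathbf{x}) \ge 0$ pointwise, because the maximum of the $k^2$ spatial entries is at least their average. I would then justify that condition from the ``ID peaky, OOD uniform'' picture described after the observation.

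The first step mirrors the proof of Lemma~\ref{lemma:lemma1}. By linearity of expectation,
\[
\Delta_m - \Delta_\mu = \mathbb{E}_{\mathbf{x}\sim\mathcal{D}_{\mathrm{in}}}[\delta(\mathbf{x})] - \mathbb{E}_{\mathbf{x}\sim\mathcal{D}_{\mathrm{out}}}[\delta(\mathbf{x})],
\]
applied channel-wise. So the statement is exactly equivalent to the claim that ID samples have a larger expected peak excess than OOD samples.

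The second step links $\delta$ to the standard deviation, so that Observation~\ref{obs:obs1}(c) can be invoked. For the $N=k^2$ entries of a channel, the one-sided Samuelson-type inequalities give
\[
\frac{\sigma(\mathbf{x})}{\sqrt{N-1}} \le \delta(\mathbf{x}) \le \sqrt{N-1}\,\sigma(\mathbf{x}).
\]
These bounds are too loose to transfer $\mathbb{E}_{\mathrm{in}}[\sigma] \ge \mathbb{E}_{\mathrm{out}}[\sigma]$ to $\delta$ directly. I would therefore write $\delta(\mathbf{x}) = \kappa(\mathbf{x})\,\sigma(\mathbf{x})$, where $\kappa(\mathbf{x})$ is a normalized peakedness ratio.

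I would then adopt the modelling assumption, which is the paper's stated mechanism, that OOD activations are ``weaker, more uniform.'' Formally, the assumption is $\mathbb{E}_{\mathrm{in}}[\kappa\sigma] \ge \mathbb{E}_{\mathrm{out}}[\kappa\sigma]$. A sufficient condition is that $\kappa$ is not smaller on ID than on OOD and is nonnegatively correlated with $\sigma$. Given that assumption, Observation~\ref{obs:obs1}(c) closes the argument.

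The main obstacle is that this peakedness condition is genuinely extra. A single channel with a large spatially uniform OOD response has $\delta=0$, yet it can still satisfy Observation~\ref{obs:obs1}; conversely, a sharply peaked OOD channel would violate the claim. So no purely deductive proof from earlier results is available. I would first try to state the condition as an explicit assumption alongside Observation~\ref{obs:obs1}. I would then support it with the per-unit plots, namely Figure~\ref{fig:raw_delta_plot} and the right panel of Figure~\ref{fig:obs2}, and present the inequality as holding for the majority of feature dimensions rather than universally.
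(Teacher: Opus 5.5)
Your conclusion matches the paper's treatment. The paper gives no derivation of $\Delta_m\ge\Delta_\mu$; it offers only the heuristic that ID maps are peaky while OOD maps are weak and uniform, plus the per-unit plot in the right panel of Figure~\ref{fig:obs2}. Your reduction $\Delta_m-\Delta_\mu=\mathbb{E}_{\mathrm{in}}[m-\mu]-\mathbb{E}_{\mathrm{out}}[m-\mu]$ is correct and sharpens the claim in a way the paper does not: the observation says exactly that ID has the larger expected peak excess. Your bounds $\sigma/\sqrt{N-1}\le m-\mu\le\sqrt{N-1}\,\sigma$ are also right for the population standard deviation. The middle of your plan, however, does not do what you say it does.

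\textbf{The assumption is the conclusion.} Since $\kappa\sigma=\delta$, your assumption $\mathbb{E}_{\mathrm{in}}[\kappa\sigma]\ge\mathbb{E}_{\mathrm{out}}[\kappa\sigma]$ is literally the claim. Observation~\ref{obs:obs1}(c) does no work, so the $\kappa$ detour is circular.

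\textbf{Your sufficient condition is not sufficient.} Write $\mathbb{E}[\kappa\sigma]=\mathbb{E}[\kappa]\,\mathbb{E}[\sigma]+\mathrm{Cov}(\kappa,\sigma)$ under each distribution. What you need is $\mathrm{Cov}_{\mathrm{in}}(\kappa,\sigma)\ge\mathrm{Cov}_{\mathrm{out}}(\kappa,\sigma)$; nonnegative correlation is not enough. For a counterexample, take the following.
\begin{itemize}
\item On OOD, $(\kappa,\sigma)$ is $(\tfrac12,\tfrac12)$ or $(2,2)$, each with probability $\tfrac12$.
\item On ID, $\kappa$ and $\sigma$ are independent, each uniform on $\{\tfrac12,2\}$.
\end{itemize}
These values are admissible once $N\ge5$. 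Shifting the ID maps by a constant secures parts (a) and (b) of Observation~\ref{obs:obs1} without changing $\kappa$ or $\sigma$. The marginals coincide, so Observation~\ref{obs:obs1}(c) holds and $\kappa$ is not smaller on ID in any sense. Both correlations are nonnegative. Yet $\mathbb{E}_{\mathrm{in}}[\delta]=\tfrac{25}{16}<\tfrac{17}{8}=\mathbb{E}_{\mathrm{out}}[\delta]$.

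A condition that does work is $\mathbb{E}_{\mathrm{in}}[\kappa]\ge\mathbb{E}_{\mathrm{out}}[\kappa]$ together with the covariance comparison. Then $\mathbb{E}_{\mathrm{in}}[\kappa]\,\mathbb{E}_{\mathrm{in}}[\sigma]\ge\mathbb{E}_{\mathrm{out}}[\kappa]\,\mathbb{E}_{\mathrm{in}}[\sigma]\ge\mathbb{E}_{\mathrm{out}}[\kappa]\,\mathbb{E}_{\mathrm{out}}[\sigma]$, by nonnegativity and Observation~\ref{obs:obs1}(c).

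\textbf{Your independence example points the wrong way.} A spatially uniform OOD channel has $\delta_{\mathrm{out}}=0\le\delta_{\mathrm{in}}$, so it satisfies the claim rather than threatening it. A genuine witness, with $N=4$, is the ID map $(4,4,4,0)$ against the OOD map $(4,1,1,1)$. These give $\mu$: $3>\tfrac74$; $m$: $4=4$; $\sigma$: $\sqrt3>\tfrac34\sqrt3$. So all of Observation~\ref{obs:obs1} holds, yet $\Delta_m=0<\tfrac54=\Delta_\mu$.

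With these corrections, the honest version of your write-up is what the paper does. State the channel-wise inequality $\mathbb{E}_{\mathrm{in}}[m-\mu]\ge\mathbb{E}_{\mathrm{out}}[m-\mu]$ as an empirical assumption, and support it with the plots.
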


    \begin{assumption}
    \label{assumption: assumption_01}
        Our analysis adopts a key assumption from ReAct~\citep{ReAct}, a principle also leveraged by subsequent methods like SCALE~\citep{SCALE} and ASH~\citep{ASH}. To formally analyze the effect of feature modifications on the output logits $f(\mathbf{x}) = \mathbf{W}^\top h(\mathbf{x}) + \mathbf{b}$ we assume a sufficient (though not strictly necessary) condition on the classifier's final weight matrix, $\mathbf{W}$. Specifically, we assume that $\mathbf{W}^\top\mathbf{1} \ge 0$ element-wise. As noted in~\citep{ReAct}, this property is often observed empirically and can be achieved by adding a positive constant to $\mathbf{W}$ without changing the final classification decisions.
    \end{assumption}

\begin{theorem}
\label{thm:separation_transfer}
    Let $h^{\mu}(\mathbf{x})$ be the baseline feature vector (from GAP) and $h^\approach(\mathbf{x})$ be an enhanced feature vector from our method. Let $\Delta^{\mu}_{h} = \mathbb{E}[h^\mu(\mathbf{x}_{\mathrm{in}})] - \mathbb{E}[h^\mu(\mathbf{x}_{\mathrm{out}})]$  and $\Delta_h^\approach = \mathbb{E}[h^\approach(\mathbf{x}_{\mathrm{in}})] - \mathbb{E}[h^\approach(\mathbf{x}_{\mathrm{out}})]$ be the respective feature separation gap vectors. Then, under the Assumption~\ref{assumption: assumption_01}, the separation between the ID and OOD logits is also increased:
    \[
         \mathbb{E}[f^\approach(\mathbf{x}_{\mathrm{in}}) - f^\approach(\mathbf{x}_{\mathrm{out}})] \ge \mathbb{E}[f^\mu(\mathbf{x}_{\mathrm{in}}) - f^\mu(\mathbf{x}_{\mathrm{out}})]
    \]
\end{theorem}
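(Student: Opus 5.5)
The plan is to reduce the statement to a sign condition on $\mathbf{W}^\top$ applied to a nonnegative vector. The bias should cancel, which turns the logit gap into a linear image of the feature gap.

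\textbf{Step 1 (logit gap is linear in the feature gap).} Both models share the same $\mathbf{W}$ and $\mathbf{b}$ (Eq.~\ref{eq: logits}), so linearity of expectation gives
\[
\mathbb{E}[f^\approach(\mathbf{x}_{\mathrm{in}}) - f^\approach(\mathbf{x}_{\mathrm{out}})] = \mathbf{W}^\top \Delta_h^\approach ,
\qquad
\mathbb{E}[f^\mu(\mathbf{x}_{\mathrm{in}}) - f^\mu(\mathbf{x}_{\mathrm{out}})] = \mathbf{W}^\top \Delta_h^\mu .
\]
The claim is therefore equivalent to $\mathbf{W}^\top \delta \ge 0$ element-wise, where $\delta := \Delta_h^\approach - \Delta_h^\mu$.

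\textbf{Step 2 (the feature-gap increment is nonnegative).} I would show $\delta \ge 0$ coordinate-wise for each variant.
\begin{itemize}
\item For $\approach(\mu,\sigma)$, repeat the computation of Lemma~\ref{lemma:lemma1} with the scale $\gamma$ included. This gives $\delta = \gamma\big(\mathbb{E}_{\mathcal{D}_{\mathrm{in}}}[\sigma(\mathbf{x})] - \mathbb{E}_{\mathcal{D}_{\mathrm{out}}}[\sigma(\mathbf{x})]\big)$, which is nonnegative by Observation~\ref{obs:obs1}(c) since $\gamma \ge 0$.
\item For $\approach(m)$, $\delta = \Delta_m - \Delta_\mu$, which is nonnegative by Observation~\ref{obs:obs2}.
\end{itemize}
In both cases I read the inequalities per feature dimension, as the observations state.

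\textbf{Step 3 (transfer through $\mathbf{W}$): the main obstacle.} Assumption~\ref{assumption: assumption_01} as written only asserts $\mathbf{W}^\top\mathbf{1}\ge 0$. By itself this controls $\mathbf{W}^\top\delta$ only for a uniform shift $\delta = c\mathbf{1}$ with $c \ge 0$, because then $\mathbf{W}^\top\delta = c\,\mathbf{W}^\top\mathbf{1} \ge 0$. A general $\delta \ge 0$ is not covered. I would handle this in two parts.
\begin{itemize}
\item Decompose $\delta = \delta_{\min}\mathbf{1} + r$ with $r \ge 0$. The uniform part $\delta_{\min}\mathbf{1}$ is handled directly by the assumption.
\item For the residual $r$, use the remark accompanying the assumption. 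Adding a sufficiently large positive constant $c$ to every entry of $\mathbf{W}$ leaves classification decisions unchanged, since every logit shifts by the same amount $c\,\mathbf{1}^\top h(\mathbf{x})$. For $c$ large enough, all entries of $\mathbf{W}$ become nonnegative. A nonnegative matrix maps $\delta \ge 0$ to a nonnegative vector, which closes the argument. I would state this entrywise-nonnegativity version explicitly as the sufficient condition being used.
\end{itemize}
One caveat about this shift: it changes $f$, and hence the energy, by a data-dependent term $c\,\mathbf{1}^\top h(\mathbf{x})$. The resulting statement is therefore about the shifted logits. This is how ReAct uses the same assumption, and I would say so explicitly.

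Concluding, $\mathbf{W}^\top\Delta_h^\approach - \mathbf{W}^\top\Delta_h^\mu = \mathbf{W}^\top\delta \ge 0$, which is the claimed inequality.
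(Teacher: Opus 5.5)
Your Steps 1 and 2 match the paper's computation. The paper writes Case~1 with $\gamma=1$; carrying a general $\gamma\ge 0$ changes nothing. The real difference is Step~3, where you have found a genuine gap in the paper's own proof.

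The paper defines $\delta$ as the vector $\Delta_{\mu,\sigma}-\Delta_\mu$ (resp.\ $\Delta_m-\Delta_\mu$). It then writes $\mathbf{W}^\top(\Delta^\mu_h+\delta)=\mathbf{W}^\top\Delta^\mu_h+\delta\,\mathbf{W}^\top\mathbf{1}$, silently treating the increment as a uniform shift $\delta\mathbf{1}$ with scalar $\delta$. Only then does it invoke $\mathbf{W}^\top\mathbf{1}\ge 0$.

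Neither Lemma~\ref{lemma:lemma1} nor Observation~\ref{obs:obs2} says the increment is constant across channels. For a non-uniform $\delta\ge 0$ the column-sum condition really is insufficient. Take $n=2$, $C=1$, $\mathbf{W}=(1,-\tfrac12)^\top$ and $\delta=(0,1)^\top$: then $\mathbf{W}^\top\mathbf{1}>0$ but $\mathbf{W}^\top\delta<0$.

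Your route strengthens the hypothesis to entrywise $\mathbf{W}\ge 0$, obtained by the same constant shift the paper appeals to. That is what actually makes the argument valid. It costs a stronger sign condition. As you correctly note, the conclusion then concerns the shifted logits, with the energy moved by $c\,\mathbf{1}^\top h(\mathbf{x})$.

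Spell out how much work the shift does. Since $(\mathbf{W}+c\mathbf{1}\mathbf{1}^\top)^\top\delta=\mathbf{W}^\top\delta+c(\mathbf{1}^\top\delta)\mathbf{1}$, the shifted inequality can hold while $\mathbf{W}^\top\delta$ has negative entries. It is a statement about the shifted head, not the trained one.

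Two minor points. First, once $\mathbf{W}$ is entrywise nonnegative, the split $\delta=\delta_{\min}\mathbf{1}+r$ is superfluous, because $\mathbf{W}^\top\delta\ge 0$ follows immediately. Second, the coordinate-wise $\delta\ge 0$ you take from the observations is, by the paper's own wording, only claimed for a majority of channels. Both your proof and the paper's therefore rely on reading Observations~\ref{obs:obs1} and~\ref{obs:obs2} as holding in every coordinate.
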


\begin{proof} 

To derive the effect on the distribution of model output, consider output logits $f(\mathbf{x}) = \mathbf{W}^\top h(\mathbf{x}) + \mathbf{b}$ as shown in Equation~\ref{eq:logit} and assume without loss of generality that element-wise $\mathbf{W}^\top\mathbf{1} \ge 0$. This can be achieved by adding a positive constant to $\mathbf{W}$ without changing the output probabilities or classification decision (Assumption~\ref{assumption: assumption_01}).

\textit{Case 1:} For notational clarity in the following analysis, let us denote the standard feature vector (from GAP) as $h^{\mu}(\mathbf{x})$ and our enhanced feature vector as $h^{\mu + \sigma}(\mathbf{x})$. The corresponding logits are then computed as $f^{\mu}(\mathbf{x}) = \mathbf{W}^\top h^{\mu}(\mathbf{x}) + \mathbf{b}$ and  $f^{\mu+\sigma}(\mathbf{x}) = \mathbf{W}^\top h^{\mu +\sigma}(\mathbf{x}) + \mathbf{b}$, respectively. 

Let $\delta = \mathbb{E}\left[ h^{\mu +\sigma}(\mathbf{x}_{\mathrm{in}}) - h^{\mu +\sigma}(\mathbf{x}_{\mathrm{out}})  \right] - \mathbb{E}\left[ h^{\mu}(\mathbf{x}_{\mathrm{in}}) - h^{\mu}(\mathbf{x}_{\mathrm{out}})  \right] \geq 0 $ (recall Lemma~\ref{lemma:lemma1})

\[
\begin{split}
     & \mathbb{E}\left[f^{\mu+\sigma}(\mathbf{x}_{\mathrm{in}}) - f^{\mu + \sigma}(\mathbf{x}_{\mathrm{out}})\right] \\
    =& \quad \mathbb{E}\left[\mathbf{W}^\top ( h^{\mu+\sigma}(\mathbf{x}_{\mathrm{in}}) - h^{\mu+\sigma}(\mathbf{x}_{\mathrm{out}}) ) \right] \\
    =& \quad \mathbf{W}^\top \mathbb{E}\left[ h^{\mu+\sigma}(\mathbf{x}_{\mathrm{in}}) - h^{\mu+\sigma}(\mathbf{x}_{\mathrm{out}})  \right]\\
    =&  \quad \mathbf{W}^\top \Bigg( \mathbb{E}\left[ h^{\mu +\sigma}(\mathbf{x}_{\mathrm{in}}) - h^{\mu +\sigma}(\mathbf{x}_{\mathrm{out}})  \right] - \mathbb{E}\left[ h^{\mu}(\mathbf{x}_{\mathrm{in}}) - h^{\mu}(\mathbf{x}_{\mathrm{out}})  \right] + \mathbb{E}\left[ h^{\mu}(\mathbf{x}_{\mathrm{in}}) - h^{\mu}(\mathbf{x}_{\mathrm{out}})  \right]\Bigg) \\
    =& \quad \mathbf{W}^\top \Bigg( \mathbb{E}\left[ h^{\mu}(\mathbf{x}_{\mathrm{in}}) - h^{\mu}(\mathbf{x}_{\mathrm{out}})  \right] + \Delta_{\mu, \sigma} - \Delta_{\mu} \Bigg) \\
    =& \quad \mathbf{W}^\top \Bigg( \mathbb{E}\left[ h^{\mu}(\mathbf{x}_{\mathrm{in}}) - h^{\mu}(\mathbf{x}_{\mathrm{out}})  \right] + \delta\mathbf{1} \Bigg) \\
    =& \quad  \mathbb{E}\Bigg[ \mathbf{W}^\top \Big( h^{\mu}(\mathbf{x}_{\mathrm{in}}) - h^{\mu}(\mathbf{x}_{\mathrm{out}}) \Big) \Bigg] + \delta\mathbf{W}^\top\mathbf{1} \\
    \geq & \quad \mathbb{E}\left[f^{\mu}(\mathbf{x}_{\mathrm{in}}) - f^{\mu}(\mathbf{x}_{\mathrm{out}})\right]  \quad \quad \quad \quad (\because \mathbf{W}^\top\mathbf{1} \geq 0; \text{Assumption}~\ref{assumption: assumption_01})
\end{split} 
\]

\textit{Case 2:} Similar to above, for notational clarity, let us denote the standard feature vector as $h^{\mu}(\mathbf{x})$ and our dominant feature vector as $h^{m}(\mathbf{x})$. The corresponding logits are then computed as $f^{\mu}(\mathbf{x}) = \mathbf{W}^\top h^{\mu}(\mathbf{x}) + \mathbf{b}$ and  $f^{m}(\mathbf{x}) = \mathbf{W}^\top h^{m}(\mathbf{x}) + \mathbf{b}$, respectively.

Let $\delta = \mathbb{E}\left[ h^{m}(\mathbf{x}_{\mathrm{in}}) - h^{m}(\mathbf{x}_{\mathrm{out}})  \right] - \mathbb{E}\left[ h^{\mu}(\mathbf{x}_{\mathrm{in}}) - h^{\mu}(\mathbf{x}_{\mathrm{out}})  \right] \geq 0 $ (recall Observation~\ref{obs:obs2})

\[
\begin{split}
     & \mathbb{E}\left[f^{m}(\mathbf{x}_{\mathrm{in}}) - f^{m}(\mathbf{x}_{\mathrm{out}})\right] \\
    =& \quad \mathbb{E}\left[\mathbf{W}^\top ( h^{m}(\mathbf{x}_{\mathrm{in}}) - h^{m}(\mathbf{x}_{\mathrm{out}}) ) \right] \\
    =& \quad \mathbf{W}^\top \mathbb{E}\left[ h^{m}(\mathbf{x}_{\mathrm{in}}) - h^{m}(\mathbf{x}_{\mathrm{out}})  \right]\\
    =&  \quad \mathbf{W}^\top \Bigg( \mathbb{E}\left[ h^{m}(\mathbf{x}_{\mathrm{in}}) - h^{m}(\mathbf{x}_{\mathrm{out}})  \right] - \mathbb{E}\left[ h^{\mu}(\mathbf{x}_{\mathrm{in}}) - h^{\mu}(\mathbf{x}_{\mathrm{out}})  \right] + \mathbb{E}\left[ h^{\mu}(\mathbf{x}_{\mathrm{in}}) - h^{\mu}(\mathbf{x}_{\mathrm{out}})  \right]\Bigg) \\
    =& \quad \mathbf{W}^\top \Bigg( \mathbb{E}\left[ h^{\mu}(\mathbf{x}_{\mathrm{in}}) - h^{\mu}(\mathbf{x}_{\mathrm{out}})  \right] + \Delta_{m} - \Delta_{\mu} \Bigg) \\
    =& \quad \mathbf{W}^\top \Bigg( \mathbb{E}\left[ h^{\mu}(\mathbf{x}_{\mathrm{in}}) - h^{\mu}(\mathbf{x}_{\mathrm{out}})  \right] + \delta\mathbf{1} \Bigg) \\
    =& \quad  \mathbb{E}\Bigg[ \mathbf{W}^\top \Big( h^{\mu}(\mathbf{x}_{\mathrm{in}}) - h^{\mu}(\mathbf{x}_{\mathrm{out}}) \Big) \Bigg] + \delta\mathbf{W}^\top\mathbf{1} \\
    \geq & \quad \mathbb{E}\left[f^{\mu}(\mathbf{x}_{\mathrm{in}}) - f^{\mu}(\mathbf{x}_{\mathrm{out}})\right] \quad \quad \quad \quad (\because \mathbf{W}^\top\mathbf{1} \geq 0; \text{Assumption}~\ref{assumption: assumption_01})
\end{split} 
\]

Thus, our analysis demonstrates that the enhanced feature separation provided by both formulations of $\approach$ directly propagates to the logit space, resulting in a more discriminative output for OOD detection. Note that based on Assumption~\ref{assumption: assumption_01} inspired by ReAct~\citep{ReAct} the condition of $\mathbf{W}^\top\mathbf{1} \ge 0$ is sufficient but not necessary for this result to hold.

\textbf{Why $\approach$ improves the OOD scoring functions?} Our analysis demonstrates that $\approach$ improves OOD detection by amplifying the separation between the expected logit values of ID and OOD samples. For logit-based scoring functions, such as the energy score~\cite{energy}, this increased logit separation directly translates to a wider gap between the ID and OOD score distributions. This enhanced separability improves the ability to distinguish between ID and OOD, leading to better OOD detection performance. This mechanism is empirically validated in Figure~\ref{fig:density_delta_plot}, which illustrates the clearer separation in score densities after applying $\approach$.

\end{proof}

\section{Detailed OOD Detection Performance}
\label{appendix: detailed results}

\subsection{Near-OOD Evaluation}
\label{appendix: near_ood_results}

    In Table~\ref{table: cifar_near_ood}, we further evaluate $\approach$ on CIFAR-10, using CIFAR-100 and Tiny-ImageNet as near-OOD datasets. Across all evaluated architectures—ResNet-18, DenseNet-101, and Wide-ResNet-28-10, our method consistently outperforms the primary baselines. For example, $\approach$ reduces FPR95 by 9.34\% with ResNet-18 and 8.40\% with Wide-ResNet-28-10, while achieving a modest improvement of 2.87\% with DenseNet-101. These results demonstrate robustness even when the semantic gap between ID and OOD data is small. For brevity, we omit the evaluation using ResNet-34 and MobileNet-v2.

        \begin{table}[!ht]
        \centering
        \caption{Near-OOD detection CIFAR-10 evaluation using CIFAR-100 and Tiny-ImageNet is OOD dataset. The symbol $\boldsymbol{\downarrow}$ indicates lower values are better;  $\boldsymbol{\uparrow}$ indicates higher values are better.}
        \resizebox{\linewidth}{!}{
        \begin{tabular}{l l cc cc cc }
        \toprule
        \multirow{2}{*}{\textbf{Model}} & \multirow{2}{*}{\textbf{Method}} 
        & \multicolumn{2}{c}{\textbf{CIFAR-100}} & \multicolumn{2}{c}{\textbf{Tiny-ImageNet}} & \multicolumn{2}{c}{\textbf{Average}}   \\
        \cmidrule(lr){3-4} \cmidrule(lr){5-6} \cmidrule(lr){7-8} 
        && \textbf{FPR95} $\downarrow$ & \textbf{AUROC} $\uparrow$ & \textbf{FPR95} $\downarrow$ & \textbf{AUROC} $\uparrow$ & \textbf{FPR95} $\downarrow$ & \textbf{AUROC} $\uparrow$ \\
        \midrule
        \multirow{11}{*}{\rotatebox{90}{ResNet-18}} 
        & MSP          & 65.85 & 88.17 & 64.51 & 87.72 & 65.18 & 87.94   \\
        & ODIN         & 51.79 & 90.26 & 48.49 & 90.16 & 50.14 & 90.21   \\
        & Energy       & 52.32 & 90.14 & 47.74 & 90.38 & 50.03 & 90.26   \\
        & ReAct        & 52.04 & 90.42 & 47.65 & 90.48 & 49.84 & 90.45   \\
        & DICE         & 56.56 & 89.12 & 50.08 & 89.79 & 53.32 & 89.45   \\
        & ASH-S        & 51.97 & 90.17 & 48.01 & 90.34 & 49.99 & 90.26   \\
        & SCALE        & 51.74 & 90.20 & 47.89 & 90.39 & 49.81 & 90.29   \\
        & GradNorm     & 59.77 & 86.50 & 55.81 & 87.66 & 57.79 & 87.08   \\
        & KNN          & 53.84 & 89.30 & 53.14 & 88.74 & 53.49 & 89.02   \\
        \cmidrule(lr){1-8}
         \rowcolor{gray!20} & $\approach({\mu,\sigma}) + \texttt{SCALE}$   & 47.93 & 90.64 & 42.75 & 91.19 & 45.34 & 90.91   \\
        \rowcolor{gray!20}  & $\approach(m) + \texttt{SCALE}$              & 47.49 & 90.20 & 41.77 & 91.04 & 44.63 & 90.62   \\
        \midrule
        \multirow{11}{*}{\rotatebox{90}{DenseNet-101}}  
        & MSP           & 63.49 & 88.53 & 61.80 & 88.27 & 62.64 & 88.40   \\
        & ODIN          & 47.73 & 90.36 & 44.19 & 90.70 & 45.96 & 90.53   \\
        & Energy        & 48.86 & 90.29 & 43.01 & 91.19 & 45.93 & 90.74   \\
        & ReAct         & 47.60 & 90.72 & 42.95 & 90.95 & 45.27 & 90.83   \\
        & DICE          & 53.10 & 88.91 & 43.38 & 90.95 & 48.24 & 89.93   \\
        & ASH-S         & 48.64 & 90.24 & 43.45 & 91.09 & 46.05 & 90.66   \\
        & SCALE         & 48.70 & 90.24 & 43.48 & 91.09 & 46.09 & 90.67   \\
        & GradNorm      & 56.16 & 86.03 & 49.54 & 88.50 & 52.85 & 87.27   \\
        & KNN           & 49.52 & 89.96 & 45.22 & 90.45 & 47.37 & 90.20   \\
        \cmidrule(lr){1-8}
         \rowcolor{gray!20} & $\approach({\mu,\sigma}) + \texttt{SCALE}$   & 47.07 & 90.61 & 40.87 & 91.52 & 43.97 & 91.07   \\
        \rowcolor{gray!20}  & $\approach(m) + \texttt{SCALE}$              & 49.93 & 89.36 & 43.40 & 90.49 & 46.66 & 89.93   \\
        \midrule
        \multirow{11}{*}{\rotatebox{90}{Wide-ResNet}}  
        & MSP           & 63.09 & 89.16 & 61.89 & 88.50 & 62.49 & 88.83  \\
        & ODIN          & 48.23 & 90.41 & 45.89 & 90.17 & 47.06 & 90.29  \\
        & Energy        & 48.45 & 90.67 & 43.19 & 90.93 & 45.82 & 90.80  \\
        & ReAct         & 53.92 & 88.57 & 49.11 & 88.90 & 51.52 & 88.74  \\
        & DICE          & 57.78 & 88.46 & 52.18 & 89.13 & 54.98 & 88.79  \\
        & ASH-S         & 47.83 & 90.66 & 43.33 & 90.77 & 45.58 & 90.72  \\
        & SCALE         & 47.63 & 90.65 & 43.04 & 90.79 & 45.34 & 90.72  \\
        & GradNorm      & 60.67 & 84.00 & 56.90 & 85.03 & 58.78 & 84.52  \\
        & KNN           & 46.67 & 90.83 & 45.77 & 90.32 & 46.22 & 90.57  \\
        \cmidrule(lr){1-8}
         \rowcolor{gray!20} & $\approach({\mu,\sigma}) + \texttt{SCALE}$   & 44.44 & 91.32 & 38.63 & 91.70 & 41.53 & 91.51  \\
        \rowcolor{gray!20}  & $\approach(m) + \texttt{SCALE}$              & 48.92 & 90.10 & 42.12 & 90.88 & 45.52 & 90.49  \\
       
        \bottomrule
        \end{tabular}}
        \label{table: cifar_near_ood}
        \end{table}

\subsection{CIFAR Evaluation}
\label{appendix: detailed_results_cifar}

    Table~\ref{table: detailed_results_cifar10_a}, \ref{table: detailed_results_cifar10_b}, \ref{table: detailed_results_cifar100_a}, \ref{table: detailed_results_cifar10_b} report detailed OOD performance across six test datasets for ResNet-18, ResNet-34, Wide-ResNet, and MobileNet-v2 trained on CIFAR-10 and CIFAR-100. The primary baselines (MSP, ODIN, Energy, ReAct, DICE, ASH, SCALE, GradNorm, KNN) did not report results for ResNet-18, ResNet-34, Wide-ResNet, or MobileNet-v2 in their original papers. To ensure a fair comparison, we re-evaluated these methods following the hyperparameter guidelines from their respective publications.

    \begin{landscape}
    \begin{table}[ht]
    \centering
    \caption{\textit{Detailed results on six common OOD benchmark datasets: SVHN, Places365, iSUN, Textures, LSUN-crop, LSUN-resize; using pre-trained ResNet-18 and ResNet-34 on \textbf{CIFAR-10}. $\boldsymbol{\downarrow}$ indicates lower values are better and $\boldsymbol{\uparrow}$ indicates larger values are better.}}
    
    \resizebox{\linewidth}{!}{
    \begin{tabular}{ c l  cc cc cc cc cc cc cc }
        \toprule
         \multirow{2}{*}{\textbf{Model}} & \multirow{2}{*}{\textbf{Method}} & \multicolumn{2}{c}{\textbf{SVHN}} & \multicolumn{2}{c}{\textbf{Place365}} & \multicolumn{2}{c}{\textbf{iSUN}} & \multicolumn{2}{c}{\textbf{Textures}} & \multicolumn{2}{c}{\textbf{LSUN-c}}  & \multicolumn{2}{c}{\textbf{LSUN-r}} & \multicolumn{2}{c}{\textbf{Average}}  \\
        \cmidrule(lr){3-4} \cmidrule(lr){5-6} \cmidrule(lr){7-8} \cmidrule(lr){9-10} \cmidrule(lr){11-12} \cmidrule(lr){13-14} \cmidrule(lr){15-16}
        && \textbf{FPR95} $\downarrow$ & \textbf{AUROC} $\uparrow$ & \textbf{FPR95} $\downarrow$ & \textbf{AUROC} $\uparrow$ & \textbf{FPR95} $\downarrow$ & \textbf{AUROC} $\uparrow$ & \textbf{FPR95} $\downarrow$ & \textbf{AUROC} $\uparrow$ & \textbf{FPR95} $\downarrow$ & \textbf{AUROC} $\uparrow$ & \textbf{FPR95} $\downarrow$ & \textbf{AUROC} $\uparrow$ & \textbf{FPR95} $\downarrow$ & \textbf{AUROC} $\uparrow$ \\
        \midrule
        \multirow{11}{*}{\rotatebox{90}{ResNet-18}} & MSP	     & 60.39 & 92.40 & 63.49 & 88.38 & 56.59 & 91.18 & 62.71 & 90.10 & 51.87 & 93.64 & 55.53 & 91.69 & 58.43 & 91.23 \\
                                    & ODIN       & 35.96 & 94.70 & 41.11 & 92.06 & 23.36 & 96.56 & 46.74 & 91.97 &  6.66 & 98.71 & 20.04 & 96.93 & 28.98 & 95.16 \\
                                    & Energy	 & 44.32 & 94.04 & 41.31 & 91.73 & 35.46 & 94.64 & 50.39 & 91.12 &  9.77 & 98.19 & 32.41 & 95.16 & 35.61 & 94.14 \\
                                    & ReAct	     & 42.31 & 94.12 & 40.74 & 92.25 & 24.06 & 96.26 & 40.44 & 93.69 & 12.27 & 97.90 & 21.02 & 96.67 & 30.14 & 95.15 \\
                                    & DICE	     & 17.60 & 97.09 & 46.16 & 90.66 & 38.68 & 94.32 & 44.50 & 91.81 &  1.90 & 99.57 & 36.66 & 94.67 & 30.92 & 94.69 \\
                                    & ASH-S	     &  7.87 & 98.43 & 49.69 & 89.57 & 23.27 & 96.33 & 26.12 & 95.88 &  2.10 & 99.46 & 21.91 & 96.47 & 21.83 & 96.02 \\
                                    & SCALE      &  9.73 & 98.13 & 45.99 & 90.87 & 22.92 & 96.39 & 27.00 & 95.61 &  3.75 & 99.17 & 21.02 & 96.60 & 21.74 & 96.13 \\
                                    & GradNorm   & 17.98 & 96.53 & 57.08 & 86.66 & 37.88 & 93.90 & 43.79 & 90.96 &  4.33 & 99.00 & 36.83 & 93.89 & 32.98 & 93.49 \\
                                    & KNN        & 14.29 & 97.59 & 49.08 & 89.62 & 37.23 & 92.87 & 29.15 & 94.62 & 15.71 & 97.31 & 37.12 & 93.34 & 30.43 & 94.22 \\
        \cmidrule(lr){2-16}
        \rowcolor[gray]{0.9} &  $\approach(m) + \texttt{DICE}$	            &  7.95 & 98.50 & 30.55 & 93.97 &  6.70 & 98.59 &  9.66 & 98.28 & 1.24 & 99.73 &  6.63 & 98.57 & 10.46 & 97.94 \\
        \rowcolor[gray]{0.9} & $\approach(\mu,\sigma) + \texttt{DICE}$	    &  7.85 & 98.57 & 35.27 & 93.18 & 11.96 & 97.96 & 14.04 & 97.71 & 1.14 & 99.76 & 10.71 & 98.03 & 13.49 & 97.54 \\
        \midrule
        \multirow{11}{*}{\rotatebox{90}{ResNet-34}} & MSP	     & 62.20 & 91.10 & 62.76 & 88.95 & 52.52 & 92.81 & 57.93 & 90.75 & 42.06 & 95.17 & 51.72 & 92.98 & 54.86 & 91.96 \\
                                    & ODIN       & 40.33 & 92.49 & 37.63 & 92.05 & 10.29 & 98.08 & 39.06 & 92.94 &  2.58 & 99.31 &  8.44 & 98.32 & 23.06 & 95.53 \\
                                    & Energy	 & 35.44 & 93.76 & 38.15 & 92.27 & 19.90 & 96.87 & 42.52 & 92.54 &  3.38 & 99.10 & 16.86 & 97.20 & 26.04 & 95.29 \\
                                    & ReAct	     & 33.03 & 93.66 & 36.11 & 93.26 & 21.64 & 96.61 & 41.19 & 93.22 &  6.83 & 98.57 & 19.36 & 96.88 & 26.36 & 95.37 \\
                                    & DICE	     & 26.78 & 95.46 & 39.75 & 91.83 & 16.00 & 97.41 & 40.05 & 92.91 &  0.71 & 99.82 & 14.73 & 97.58 & 23.00 & 95.84 \\
                                    & ASH-S	     & 14.58 & 97.56 & 41.91 & 90.96 & 15.71 & 97.33 & 26.88 & 95.38 &  1.74 & 99.54 & 16.59 & 97.23 & 19.57 & 96.34 \\
                                    & SCALE      & 17.47 & 97.01 & 38.01 & 91.84 & 15.06 & 97.31 & 28.40 & 94.86 &  2.37 & 99.44 & 15.67 & 97.25 & 19.50 & 96.28 \\
                                    & GradNorm   & 29.13 & 94.65 & 50.13 & 88.31 & 28.06 & 95.08 & 46.56 & 88.17 &  2.72 & 99.37 & 28.64 & 95.19 & 30.87 & 93.46 \\
                                    & KNN        & 24.65 & 95.46 & 48.02 & 89.44 & 37.75 & 93.69 & 23.30 & 96.13 & 13.55 & 97.55 & 39.84 & 93.43 & 31.18 & 94.29 \\
        \cmidrule(lr){2-16}
        
        \rowcolor[gray]{0.9} &  $\approach(m) + + \texttt{DICE} $	        & 14.79 & 97.29 & 27.51 & 94.59 &  5.09 & 98.98 & 10.07 & 98.26 & 1.22 & 99.75 &  5.36 & 98.95 & 10.67 & 97.97 \\
        \rowcolor[gray]{0.9} & $\approach(\mu,\sigma) + + \texttt{DICE}$	& 15.89 & 97.25 & 30.60 & 93.97 &  6.31 & 98.78 & 12.55 & 97.88 & 0.83 & 99.82 &  6.37 & 98.78 & 12.09 & 97.75 \\
        \bottomrule
        \end{tabular}
        }
        \label{table: detailed_results_cifar10_a}
    \end{table}
    \end{landscape}

        \begin{landscape}
    \begin{table}[ht]
    \centering
    \caption{\textit{Detailed results on six common OOD benchmark datasets: SVHN, Places365, iSUN, Textures, LSUN-crop, LSUN-resize; using pre-trained ResNet-18 and ResNet-34 on \textbf{CIFAR-10}. $\boldsymbol{\downarrow}$ indicates lower values are better and $\boldsymbol{\uparrow}$ indicates larger values are better.}}
    
    \resizebox{\linewidth}{!}{
    \begin{tabular}{ c l  cc cc cc cc cc cc cc }
        \toprule
         \multirow{2}{*}{\textbf{Model}} & \multirow{2}{*}{\textbf{Method}} & \multicolumn{2}{c}{\textbf{SVHN}} & \multicolumn{2}{c}{\textbf{Place365}} & \multicolumn{2}{c}{\textbf{iSUN}} & \multicolumn{2}{c}{\textbf{Textures}} & \multicolumn{2}{c}{\textbf{LSUN-c}}  & \multicolumn{2}{c}{\textbf{LSUN-r}} & \multicolumn{2}{c}{\textbf{Average}}  \\
        \cmidrule(lr){3-4} \cmidrule(lr){5-6} \cmidrule(lr){7-8} \cmidrule(lr){9-10} \cmidrule(lr){11-12} \cmidrule(lr){13-14} \cmidrule(lr){15-16}
        && \textbf{FPR95} $\downarrow$ & \textbf{AUROC} $\uparrow$ & \textbf{FPR95} $\downarrow$ & \textbf{AUROC} $\uparrow$ & \textbf{FPR95} $\downarrow$ & \textbf{AUROC} $\uparrow$ & \textbf{FPR95} $\downarrow$ & \textbf{AUROC} $\uparrow$ & \textbf{FPR95} $\downarrow$ & \textbf{AUROC} $\uparrow$ & \textbf{FPR95} $\downarrow$ & \textbf{AUROC} $\uparrow$ & \textbf{FPR95} $\downarrow$ & \textbf{AUROC} $\uparrow$ \\
        \midrule
        \multirow{11}{*}{\rotatebox{90}{Wide-ResNet}}  & MSP	    & 36.86 & 95.14 & 60.08 & 89.09 & 47.52 & 93.16 & 59.24 & 89.41 & 29.69 & 96.25 & 45.09 & 93.62 & 46.41 & 92.78 \\
                                       & ODIN       & 24.53 & 95.83 & 38.66 & 92.10 &  9.76 & 98.25 & 47.15 & 89.61 &  3.42 & 99.24 &  7.10 & 98.54 & 21.77 & 95.59 \\
                                       & Energy	 & 22.77 & 96.24 & 37.57 & 92.06 & 12.71 & 97.75 & 48.32 & 89.67 &  3.39 & 99.19 &  9.86 & 98.07 & 22.44 & 95.50 \\
                                       & ReAct	     & 47.87 & 92.09 & 35.68 & 92.88 & 12.24 & 97.64 & 47.91 & 88.76 & 14.94 & 97.11 & 9.72 & 98.07 & 28.06 & 94.43 \\
                                       & DICE	     & 19.28 & 96.86 & 48.47 & 89.61 & 13.49 & 97.67 & 44.91 & 89.66 & 0.53 & 99.86 & 10.78 & 97.97 & 22.91 & 95.27 \\
                                       & ASH-S	     & 12.09 & 97.84 & 43.18 & 90.79 & 13.88 & 97.72 & 36.67 & 92.75 & 2.07 & 99.49 & 12.34 & 97.86 & 20.04 & 96.07 \\
                                       & SCALE      & 13.88 & 97.43 & 41.82 & 91.16 & 11.52 & 98.01 & 36.35 & 92.34 & 2.59 & 99.41 & 9.93 & 98.15 & 19.35 & 96.08 \\
                                       & GradNorm   & 18.96 & 95.92 & 56.16 & 84.99 & 20.73 & 96.26 & 44.79 & 86.73 &  0.91 & 99.70 & 19.24 & 96.63 & 26.80 & 93.37 \\
                                       & KNN        &  8.66 & 98.48 & 41.72 & 90.76 &  9.78 & 98.26 & 20.50 & 96.60 &  8.06 & 98.59 &  8.57 & 98.44 & 16.21 & 96.85 \\
        \cmidrule(lr){2-16}
        \rowcolor[gray]{0.9} & $\approach(m) + \texttt{DICE}$	        & 7.62 & 98.57 & 29.90 & 93.89 &  3.53 & 99.21 &  9.22 & 98.33 &  2.26 & 99.56 &  3.18 & 99.24 &  9.28 & 98.13 \\
        \rowcolor[gray]{0.9} & $\approach(\mu,\sigma) + \texttt{DICE}$	& 9.14 & 98.38 & 33.55 & 93.15 &  5.22 & 98.93 & 15.98 & 97.35 &  0.94 & 99.76 &  4.50 & 99.02 & 11.55 & 97.76 \\
        \midrule
        \multirow{11}{*}{\rotatebox{90}{MobileNet-v2}}  & MSP	     & 72.84 & 88.69 & 70.65 & 85.63 & 64.16 & 88.78 & 64.10 & 88.59 & 61.30 & 92.00 & 63.92 & 88.90 & 66.16 & 88.76 \\
                                        & ODIN       & 71.72 & 85.71 & 46.44 & 90.67 & 21.17 & 96.66 & 44.50 & 91.86 &  6.94 & 98.65 & 20.76 & 96.67 & 35.25 & 93.37 \\
                                        & Energy	 & 75.83 & 85.85 & 44.98 & 90.62 & 29.68 & 95.03 & 48.67 & 91.19 &  9.54 & 98.12 & 29.80 & 95.10 & 39.75 & 92.65 \\
                                        & ReAct	     & 73.06 & 85.65 & 45.30 & 90.21 & 28.00 & 95.12 & 44.08 & 92.15 & 11.31 & 97.84 & 27.31 & 95.31 & 38.18 & 92.71 \\
                                        & DICE	     & 62.13 & 87.07 & 50.33 & 89.30 & 27.68 & 95.70 & 49.57 & 90.26 &  2.18 & 99.48 & 29.27 & 95.45 & 36.86 & 92.88 \\
                                        & ASH-S	     & 46.23 & 91.48 & 61.75 & 84.40 & 41.46 & 93.36 & 43.87 & 91.50 &  7.28 & 98.68 & 40.97 & 93.46 & 40.26 & 92.15 \\
                                        & SCALE      & 56.48 & 89.83 & 59.43 & 85.26 & 34.81 & 94.33 & 42.36 & 91.91 &  9.82 & 98.23 & 34.46 & 94.50 & 39.56 & 92.34 \\
                                        & GradNorm   & 55.65 & 91.04 & 67.07 & 81.99 & 44.83 & 92.36 & 47.43 & 89.40 &  9.72 & 98.26 & 46.88 & 92.13 & 45.26 & 90.86 \\
                                        & KNN        & 58.14 & 87.70 & 53.25 & 88.14 & 46.40 & 88.96 & 55.04 & 87.97 & 39.46 & 93.62 & 42.23 & 90.27 & 49.09 & 89.45 \\
        \cmidrule(lr){2-16}
        \rowcolor[gray]{0.9} &  $\approach(m) + \texttt{DICE}$	        & 38.71 & 92.48 & 50.33 & 89.34 & 14.89 & 97.42 & 25.05 & 95.84 &  4.39 & 99.21 & 15.28 & 97.40 & 24.78 & 95.28 \\
        \rowcolor[gray]{0.9} & $\approach(\mu,\sigma) + \texttt{DICE}$	& 37.14 & 92.79 & 50.38 & 89.05 & 15.39 & 97.34 & 24.45 & 96.00 &  3.56 & 99.32 & 15.38 & 97.34 & 24.38 & 95.31 \\
        \bottomrule
        \end{tabular}
        }
        \label{table: detailed_results_cifar10_b}
    \end{table}
    \end{landscape}

    \begin{landscape}
    \begin{table}[ht]
    \centering
    \caption{\textit{Detailed results on six common OOD benchmark datasets: SVHN, Places365, iSUN, Textures, LSUN-crop, LSUN-resize. We used same model pre-trained on \textbf{CIFAR-100}. $\boldsymbol{\downarrow}$ indicates lower values are better and $\boldsymbol{\uparrow}$ indicates larger values are better.}}
    \resizebox{\linewidth}{!}{
    \begin{tabular}{ c l  cc cc cc cc cc cc cc }
        \toprule
         \multirow{2}{*}{\textbf{Model}} & \multirow{2}{*}{\textbf{Method}} & \multicolumn{2}{c}{\textbf{SVHN}} & \multicolumn{2}{c}{\textbf{Place365}} & \multicolumn{2}{c}{\textbf{iSUN}} & \multicolumn{2}{c}{\textbf{Textures}} & \multicolumn{2}{c}{\textbf{LSUN-c}}  & \multicolumn{2}{c}{\textbf{LSUN-r}} & \multicolumn{2}{c}{\textbf{Average}}  \\
        \cmidrule(lr){3-4} \cmidrule(lr){5-6} \cmidrule(lr){7-8} \cmidrule(lr){9-10} \cmidrule(lr){11-12} \cmidrule(lr){13-14} \cmidrule(lr){15-16}
        && \textbf{FPR95} $\downarrow$ & \textbf{AUROC} $\uparrow$ & \textbf{FPR95} $\downarrow$ & \textbf{AUROC} $\uparrow$ & \textbf{FPR95} $\downarrow$ & \textbf{AUROC} $\uparrow$ & \textbf{FPR95} $\downarrow$ & \textbf{AUROC} $\uparrow$ & \textbf{FPR95} $\downarrow$ & \textbf{AUROC} $\uparrow$ & \textbf{FPR95} $\downarrow$ & \textbf{AUROC} $\uparrow$ & \textbf{FPR95} $\downarrow$ & \textbf{AUROC} $\uparrow$ \\
        \midrule
        \multirow{11}{*}{\rotatebox{90}{ResNet-18}} & MSP	     & 74.26 & 83.20 & 82.49 & 75.32 & 85.58 & 70.20 & 84.89 & 74.02 & 70.79 & 82.78 & 84.36 & 71.45 & 80.40 & 76.16 \\
                                    & ODIN       & 70.30 & 88.06 & 80.14 & 77.02 & 60.26 & 86.98 & 81.56 & 76.56 & 47.73 & 91.84 & 56.35 & 88.23 & 66.06 & 84.78 \\
                                    & Energy	 & 66.64 & 89.53 & 81.23 & 76.84 & 73.67 & 82.01 & 85.30 & 75.68 & 48.01 & 91.63 & 70.30 & 83.38 & 70.86 & 83.18 \\
                                    & ReAct	     & 55.03 & 91.95 & 79.78 & 77.48 & 58.66 & 87.78 & 60.90 & 87.94 & 47.42 & 91.22 & 54.78 & 88.77 & 59.43 & 87.52 \\
                                    & DICE	     & 41.18 & 92.98 & 81.82 & 76.02 & 66.22 & 84.20 & 75.50 & 76.27 & 12.21 & 97.70 & 64.48 & 85.15 & 56.90 & 85.39 \\
                                    & ASH	     & 29.10 & 95.46 & 82.56 & 75.38 & 67.09 & 85.01 & 56.49 & 87.80 & 27.06 & 95.55 & 64.72 & 85.62 & 54.50 & 87.47 \\
                                    & SCALE      & 22.12 & 96.38 & 81.96 & 74.95 & 61.62 & 86.65 & 44.50 & 90.72 & 18.62 & 96.78 & 59.76 & 86.74 & 48.10 & 88.70 \\
                                    & GradNorm   & 57.22 & 85.93 & 88.47 & 62.15 & 81.65 & 74.94 & 79.84 & 65.96 & 15.44 & 96.94 & 80.07 & 75.73 & 67.11 & 76.94 \\
                                    & KNN        & 61.76 & 90.15 & 86.35 & 70.45 & 69.69 & 79.24 & 41.08 & 90.87 & 75.36 & 77.73 & 67.78 & 80.08 & 67.00 & 81.42 \\
        \cmidrule(lr){2-16}
        \rowcolor[gray]{0.9} &  $\approach(m) + \texttt{DICE}$	       & 10.77 & 97.91 & 80.06 & 77.59 & 33.36 & 94.13 & 31.52 & 93.52 &  7.31 & 98.53 & 37.27 & 93.39 & 33.38 & 92.51 \\
        \rowcolor[gray]{0.9} & $\approach(\mu,\sigma) + \texttt{DICE}$ & 12.40 & 97.59 & 80.46 & 76.21 & 38.77 & 92.69 & 36.74 & 92.09 &  7.78 & 98.42 & 40.98 & 92.27 & 36.19 & 91.54 \\
        \midrule
        \multirow{11}{*}{\rotatebox{90}{ResNet-34}} & MSP	     & 69.72 & 83.11 & 82.28 & 75.84 & 83.04 & 76.64 & 84.95 & 74.24 & 76.57 & 81.44 & 81.54 & 77.19 & 79.68 & 78.08 \\
                                    & ODIN       & 70.15 & 86.07 & 80.97 & 77.27 & 57.90 & 88.77 & 83.23 & 76.93 & 57.77 & 89.70 & 54.97 & 89.54 & 67.50 & 84.71 \\
                                    & Energy	 & 57.79 & 89.80 & 81.17 & 77.25 & 71.83 & 84.14 & 86.77 & 75.82 & 55.56 & 89.92 & 68.70 & 84.93 & 70.30 & 83.64 \\
                                    & ReAct	     & 30.77 & 94.47 & 77.91 & 78.11 & 64.45 & 85.01 & 62.13 & 86.27 & 47.86 & 90.72 & 64.13 & 85.24 & 57.87 & 86.64 \\
                                    & DICE	     & 25.88 & 95.11 & 80.75 & 77.18 & 65.76 & 85.44 & 74.73 & 78.11 & 18.31 & 96.55 & 65.59 & 85.30 & 55.17 & 86.28 \\
                                    & ASH	     & 23.64 & 95.92 & 82.37 & 75.92 & 60.77 & 87.53 & 59.08 & 87.60 & 41.57 & 93.06 & 61.41 & 87.24 & 54.81 & 87.88 \\
                                    & SCALE      & 13.68 & 97.51 & 80.72 & 75.87 & 57.87 & 87.14 & 46.35 & 90.27 & 28.17 & 95.07 & 61.31 & 86.01 & 48.02 & 88.64 \\
                                    & GradNorm   & 37.05 & 92.35 & 89.49 & 60.67 & 86.61 & 63.33 & 79.79 & 67.18 & 22.32 & 95.21 & 85.80 & 61.83 & 66.84 & 73.43 \\
                                    & KNN        & 38.19 & 92.57 & 87.00 & 70.69 & 66.90 & 83.87 & 53.09 & 87.74 & 82.46 & 72.83 & 67.02 & 84.50 & 65.78 & 82.03 \\
        \cmidrule(lr){2-16}
        \rowcolor[gray]{0.9} &  $\approach(m) + \texttt{DICE}$	        &  7.96 & 98.40 & 79.29 & 77.25 & 36.80 & 93.62 & 28.32 & 94.05 &  8.62 & 98.22 & 42.47 & 92.46 & 33.91 & 92.33 \\
        \rowcolor[gray]{0.9} & $\approach(\mu,\sigma) + \texttt{DICE}$	&  8.25 & 98.43 & 81.22 & 76.35 & 41.29 & 92.83 & 33.12 & 93.27 &  9.78 & 98.15 & 46.38 & 91.82 & 36.67 & 91.81 \\
        \bottomrule
        \end{tabular}}
        \label{table: detailed_results_cifar100_a}
    \end{table}
    \end{landscape}

    \begin{landscape}
    \begin{table}[ht]
    \centering
    \caption{\textit{Detailed results on six common OOD benchmark datasets: SVHN, Places365, iSUN, Textures, LSUN-crop, LSUN-resize. We used same model pre-trained on \textbf{CIFAR-100}. $\boldsymbol{\downarrow}$ indicates lower values are better and $\boldsymbol{\uparrow}$ indicates larger values are better.}}
    \resizebox{\linewidth}{!}{
    \begin{tabular}{ c l  cc cc cc cc cc cc cc }
        \toprule
         \multirow{2}{*}{\textbf{Model}} & \multirow{2}{*}{\textbf{Method}} & \multicolumn{2}{c}{\textbf{SVHN}} & \multicolumn{2}{c}{\textbf{Place365}} & \multicolumn{2}{c}{\textbf{iSUN}} & \multicolumn{2}{c}{\textbf{Textures}} & \multicolumn{2}{c}{\textbf{LSUN-c}}  & \multicolumn{2}{c}{\textbf{LSUN-r}} & \multicolumn{2}{c}{\textbf{Average}}  \\
        \cmidrule(lr){3-4} \cmidrule(lr){5-6} \cmidrule(lr){7-8} \cmidrule(lr){9-10} \cmidrule(lr){11-12} \cmidrule(lr){13-14} \cmidrule(lr){15-16}
        && \textbf{FPR95} $\downarrow$ & \textbf{AUROC} $\uparrow$ & \textbf{FPR95} $\downarrow$ & \textbf{AUROC} $\uparrow$ & \textbf{FPR95} $\downarrow$ & \textbf{AUROC} $\uparrow$ & \textbf{FPR95} $\downarrow$ & \textbf{AUROC} $\uparrow$ & \textbf{FPR95} $\downarrow$ & \textbf{AUROC} $\uparrow$ & \textbf{FPR95} $\downarrow$ & \textbf{AUROC} $\uparrow$ & \textbf{FPR95} $\downarrow$ & \textbf{AUROC} $\uparrow$ \\
        \midrule
        \multirow{11}{*}{\rotatebox{90}{Wide-ResNet}}  & MSP  & 82.69 & 75.55 & 80.17 & 75.95 & 89.65 & 58.38 & 83.83 & 73.40 & 56.50 & 88.00 & 90.98 & 56.65 & 80.64 & 71.32 \\
                                       & ODIN                 & 86.23 & 80.79 & 76.70 & 79.47 & 74.71 & 78.64 & 80.09 & 76.95 & 15.06 & 97.52 & 74.15 & 78.04 & 67.82 & 81.90 \\
                                       & Energy	           & 77.79 & 85.57 & 77.94 & 78.61 & 81.74 & 74.58 & 81.63 & 76.09 & 12.89 & 97.82 & 80.62 & 74.16 & 68.77 & 81.14 \\
                                       & ReAct	               & 67.33 & 89.16 & 75.01 & 78.90 & 38.57 & 92.94 & 46.26 & 90.82 & 16.46 & 97.01 & 40.66 & 92.16 & 47.38 & 90.16 \\
                                       & DICE	               & 41.66 & 92.91 & 80.45 & 76.92 & 74.05 & 77.45 & 57.07 & 82.28 &  1.07 & 99.75 & 75.81 & 76.00 & 55.02 & 84.22 \\
                                       & ASH-S	               & 26.08 & 95.93 & 79.99 & 76.58 & 57.50 & 87.11 & 38.90 & 91.28 &  3.97 & 99.24 & 60.40 & 85.03 & 44.47 & 89.19 \\
                                       & SCALE                & 20.24 & 96.82 & 80.17 & 75.70 & 50.48 & 89.12 & 30.46 & 93.47 &  3.30 & 99.37 & 56.03 & 86.65 & 40.11 & 90.19 \\
                                       & GradNorm             & 60.96 & 87.91 & 89.32 & 64.90 & 84.41 & 74.60 & 57.77 & 84.08 &  2.02 & 99.52 & 87.63 & 70.63 & 63.69 & 80.27 \\
                                       & KNN                  & 22.94 & 95.57 & 84.14 & 70.59 & 63.10 & 77.74 & 29.04 & 93.28 & 50.11 & 83.94 & 65.62 & 75.89 & 52.49 & 82.84 \\
        \cmidrule(lr){2-16}
        \rowcolor[gray]{0.9} &  $\approach(m) + \texttt{DICE}$	        & 21.64 & 95.92 & 85.37 & 75.36 & 42.22 & 92.84 & 23.30 & 95.00 & 8.00 & 98.31 & 51.85 & 91.40 & 38.73 & 91.47 \\
        \rowcolor[gray]{0.9} & $\approach(\mu,\sigma) + \texttt{DICE}$	& 19.77 & 96.48 & 80.97 & 77.23 & 45.51 & 91.71 & 24.52 & 94.35 & 3.26 & 99.29 & 53.68 & 90.06 & 37.95 & 91.52 \\
        \midrule
        \multirow{11}{*}{\rotatebox{90}{MobileNet-v2}} & MSP	     & 80.14 & 76.09 & 84.23 & 72.62 & 87.72 & 70.77 & 86.67 & 70.36 & 77.11 & 76.04 & 87.10 & 70.83 & 83.83 & 72.78 \\
                                        & ODIN       & 82.16 & 80.95 & 80.06 & 76.66 & 65.51 & 86.71 & 77.61 & 80.78 & 50.14 & 89.57 & 65.15 & 87.14 & 70.10 & 83.63 \\
                                        & Energy	 & 69.65 & 85.98 & 81.26 & 75.21 & 78.17 & 83.33 & 80.02 & 78.63 & 50.19 & 89.42 & 76.59 & 84.06 & 72.65 & 82.77 \\
                                        & ReAct	     & 28.41 & 95.15 & 79.46 & 74.05 & 62.45 & 87.65 & 47.70 & 90.19 & 41.25 & 92.31 & 62.16 & 88.06 & 53.57 & 87.90 \\
                                        & DICE	     & 55.62 & 87.54 & 83.25 & 74.25 & 81.23 & 79.75 & 65.02 & 81.93 & 18.17 & 96.24 & 85.40 & 77.90 & 64.78 & 82.93 \\
                                        & ASH	     & 21.90 & 96.46 & 85.12 & 69.51 & 70.46 & 82.84 & 34.80 & 92.65 & 24.14 & 95.56 & 73.46 & 81.22 & 51.65 & 86.37 \\
                                        & SCALE      & 22.36 & 96.25 & 81.46 & 73.30 & 68.52 & 84.26 & 37.62 & 91.84 & 21.43 & 96.07 & 71.77 & 82.83 & 50.53 & 87.43 \\
                                        & GradNorm   & 58.50 & 84.36 & 87.45 & 68.06 & 89.28 & 66.45 & 63.79 & 82.17 & 16.70 & 97.06 & 93.30 & 61.38 & 68.17 & 76.58 \\
                                        & KNN        & 81.82 & 77.97 & 90.65 & 63.74 & 86.88 & 76.51 & 84.57 & 72.51 & 87.82 & 59.94 & 82.16 & 81.19 & 85.65 & 71.97 \\
        \cmidrule(lr){2-16}
        \rowcolor[gray]{0.9} &  $\approach(m) + \texttt{ASH}$	       & 11.62 & 97.72 & 87.30 & 66.62 & 65.66 & 83.89 & 22.04 & 95.40 & 20.60 & 96.20 & 70.72 & 81.92 & 46.32 & 86.96 \\
        \rowcolor[gray]{0.9} & $\approach(\mu,\sigma) + \texttt{ASH}$  & 12.22 & 97.74 & 86.54 & 67.52 & 65.85 & 84.13 & 22.46 & 95.43 & 20.60 & 96.28 & 70.41 & 82.28 & 46.35 & 87.23 \\
        \bottomrule
        \end{tabular}}
        \label{table: detailed_results_cifar100_b}
    \end{table}
    \end{landscape}

\subsection{ImageNet Evaluation}
\label{appendix: detailed_imagenet_benchmark}

     Table~\ref{table: detailed_imagenet_benchmark_a} and Table~\ref{table: detailed_imagenet_benchmark_b} showcases detailed evaluation on ImageNet benchmark, using broad pre-trained model EfficientNet-B0, DenseNet-121, ResNet-50, and MobileNet-v2 for which we re-evaluated all baselines to ensure a fair comparison. Since results for DenseNet-121 and EfficientNet-b0 were not available in the original publications of primary baselines chosen (e.g., ODIN, Energy, GradNorm, KNN, ReAct, DICE, ASH, SCALE), we rigorously re-evaluated these methods ourselves. To ensure a fair and direct comparison, we carefully followed the hyperparameter selection protocols described in their respective papers.

    \begin{landscape}
    \begin{table}[ht]
    \centering
    \caption{OOD detection results on ImageNet benchmarks. All values are percentages, averaged over five OOD datasets (SUN, Places, Texture, iNaturalist, and OpenImage-O), using EfficientNet-B0 and DenseNet-121 pre-trained on ImageNet. $\boldsymbol{\downarrow}$ indicates lower is better, and $\boldsymbol{\uparrow}$ indicates higher is better.}
    \resizebox{\linewidth}{!}{
    \begin{tabular}{c l  cc cc cc cc cc cc}
        \toprule
        \multirow{2}{*}{\textbf{Model}} & \multirow{2}{*}{\textbf{Method}} & \multicolumn{2}{c}{SUN} & \multicolumn{2}{c}{Places} & \multicolumn{2}{c}{Texture} & \multicolumn{2}{c}{iNaturalist} & \multicolumn{2}{c}{OpenImage-o} & \multicolumn{2}{c}{Average}  \\
        \cmidrule(lr){3-4} \cmidrule(lr){5-6} \cmidrule(lr){7-8} \cmidrule(lr){9-10} \cmidrule(lr){11-12} \cmidrule(lr){13-14}
        && \textbf{FPR95} $\downarrow$ & \textbf{AUROC} $\uparrow$ & \textbf{FPR95} $\downarrow$ & \textbf{AUROC} $\uparrow$ & \textbf{FPR95} $\downarrow$ & \textbf{AUROC} $\uparrow$ & \textbf{FPR95} $\downarrow$ & \textbf{AUROC} $\uparrow$ & \textbf{FPR95} $\downarrow$ & \textbf{AUROC} $\uparrow$ & \textbf{FPR95} $\downarrow$ & \textbf{AUROC} $\uparrow$ \\
        \midrule
        \multirow{11}{*}{\rotatebox{90}{EfficientNet-b0}} 
                                      & MSP        & 72.56 & 80.06 & 74.07 & 79.17 & 66.83 & 81.19 & 57.42 & 87.28 & 64.79 & 84.69 & 67.13 & 82.48 \\
                                      & ODIN       & 72.57 & 75.74 & 77.87 & 73.20 & 64.96 & 79.15 & 59.01 & 83.99 & 66.51 & 79.99 & 68.18 & 78.41 \\
                                      & Energy     & 85.01 & 72.86 & 86.06 & 70.99 & 75.99 & 75.86 & 78.91 & 79.78 & 76.53 & 78.50 & 80.50 & 75.60 \\
                                      & ReAct      & 72.96 & 80.97 & 77.59 & 77.61 & 34.57 & 92.44 & 55.19 & 89.82 & 55.25 & 89.32 & 59.11 & 86.03 \\
                                      & DICE       & 98.15 & 44.65 & 99.17 & 39.44 & 93.83 & 59.29 & 99.66 & 39.80 & 98.74 & 39.52 & 97.91 & 44.54 \\
                                      & ASH-S      & 98.62 & 52.65 & 99.30 & 48.33 & 97.64 & 67.20 & 99.78 & 53.49 & 99.44 & 52.70 & 98.96 & 54.87 \\
                                      & SCALE      & 98.48 & 53.34 & 99.27 & 47.85 & 96.81 & 71.19 & 99.67 & 55.44 & 99.16 & 53.72 & 98.68 & 56.31 \\
                                      & GradNorm   & 90.24 & 58.43 & 93.65 & 52.27 & 89.56 & 54.08 & 90.12 & 57.24 & 89.42 & 52.04 & 90.60 & 54.81 \\
                                      & KNN        & 94.09 & 65.43 & 93.73 & 65.41 & 37.13 & 89.84 & 89.94 & 73.62 & 75.96 & 79.32 & 78.17 & 74.72 \\
    
        \cmidrule(lr){2-14}
        \rowcolor{gray!20}            & $\approach({\mu,\sigma}) + \texttt{SCALE}$ & 60.74 & 84.94 & 71.81 & 78.78 & 10.07 & 97.90 & 47.88 & 88.65 & 44.44 & 90.02 & 46.99 & 88.06 \\
        \rowcolor{gray!20}            & $\approach(m) + \texttt{SCALE}$            & 67.42 & 80.88 & 78.80 & 73.33 & 14.65 & 96.56 & 62.57 & 81.43 & 54.78 & 86.07 & 55.64 & 83.65 \\

        \midrule
        \multirow{11}{*}{\rotatebox{90}{DenseNet-121}}
                                      & MSP        & 67.49 & 81.41 & 69.53 & 80.95 & 67.23 & 79.18 & 49.58 & 89.05 & 68.94 & 83.04 & 64.56 & 82.72 \\
                                      & ODIN       & 54.13 & 86.33 & 60.39 & 84.14 & 50.82 & 85.81 & 32.47 & 93.66 & 58.37 & 87.02 & 51.23 & 87.39 \\
                                      & Energy     & 52.51 & 87.27 & 58.24 & 85.05 & 52.22 & 85.42 & 39.75 & 92.66 & 62.27 & 87.08 & 53.00 & 87.50 \\
                                      & ReAct      & 43.65 & 90.65 & 51.05 & 87.54 & 43.48 & 90.81 & 25.74 & 95.05 & 56.06 & 83.74 & 43.99 & 89.56 \\
                                      & DICE       & 38.75 & 89.91 & 49.29 & 86.24 & 40.85 & 88.09 & 25.78 & 94.37 & 56.05 & 83.57 & 42.14 & 88.44 \\
                                      & ASH-S      & 37.20 & 91.51 & 46.54 & 88.79 & 21.76 & 95.04 & 15.50 & 97.03 & 40.67 & 91.27 & 32.33 & 92.73 \\
                                      & SCALE      & 39.63 & 90.82 & 47.37 & 88.47 & 30.27 & 92.36 & 18.98 & 96.51 & 44.22 & 90.97 & 36.09 & 91.83 \\
                                      & GradNorm   & 41.27 & 88.67 & 52.39 & 83.98 & 43.56 & 87.63 & 26.77 & 93.40 & 60.29 & 78.01 & 44.86 & 86.34 \\
                                      & KNN        & 91.80 & 60.42 & 91.84 & 60.76 & 21.31 & 94.78 & 91.00 & 65.25 & 72.44 & 77.38 & 73.68 & 71.72 \\
    
        \cmidrule(lr){2-14}
        \rowcolor{gray!20}            & $\approach({\mu,\sigma}) + \texttt{SCALE}$ & 33.85 & 92.16 & 42.92 & 89.62 & 22.27 & 94.63 & 13.21 & 97.40 & 39.08 & 91.58 & 30.27 & 93.08 \\
        \rowcolor{gray!20}            & $\approach(m) + \texttt{SCALE}$            & 33.68 & 92.03 & 45.81 & 88.31 & 13.30 & 97.24 & 18.13 & 96.23 & 37.40 & 92.05 & 29.66 & 93.17 \\
        \bottomrule
        \end{tabular}}
        \label{table: detailed_imagenet_benchmark_a}
    \end{table}
    \end{landscape}

        \begin{landscape}
    \begin{table}[ht]
    \centering
    \caption{OOD detection results on ImageNet benchmarks. All values are percentages, averaged over five OOD datasets (SUN, Places, Texture, iNaturalist, and OpenImage-O), using ResNet-50 and MobileNet-v2 pre-trained on ImageNet. $\boldsymbol{\downarrow}$ indicates lower is better, and $\boldsymbol{\uparrow}$ indicates higher is better.}
    \resizebox{\linewidth}{!}{
    \begin{tabular}{c l  cc cc cc cc cc cc}
        \toprule
        \multirow{2}{*}{\textbf{Model}} & \multirow{2}{*}{\textbf{Method}} & \multicolumn{2}{c}{SUN} & \multicolumn{2}{c}{Places} & \multicolumn{2}{c}{Texture} & \multicolumn{2}{c}{iNaturalist} & \multicolumn{2}{c}{OpenImage-o} & \multicolumn{2}{c}{Average}  \\
        \cmidrule(lr){3-4} \cmidrule(lr){5-6} \cmidrule(lr){7-8} \cmidrule(lr){9-10} \cmidrule(lr){11-12} \cmidrule(lr){13-14}
        && \textbf{FPR95} $\downarrow$ & \textbf{AUROC} $\uparrow$ & \textbf{FPR95} $\downarrow$ & \textbf{AUROC} $\uparrow$ & \textbf{FPR95} $\downarrow$ & \textbf{AUROC} $\uparrow$ & \textbf{FPR95} $\downarrow$ & \textbf{AUROC} $\uparrow$ & \textbf{FPR95} $\downarrow$ & \textbf{AUROC} $\uparrow$ & \textbf{FPR95} $\downarrow$ & \textbf{AUROC} $\uparrow$ \\
        \midrule
        \multirow{11}{*}{\rotatebox{90}{ResNet-50}} 
                                      & MSP        & 69.11 & 81.64 & 72.06 & 80.54 & 66.26 & 80.43 & 52.83 & 88.39 & 66.97 & 83.89 & 65.45 & 82.98 \\
                                      & ODIN       & 57.11 & 86.77 & 64.69 & 84.12 & 47.30 & 87.82 & 41.82 & 92.25 & 59.15 & 87.54 & 54.01 & 87.70 \\
                                      & Energy     & 58.82 & 86.58 & 65.99 & 83.96 & 52.43 & 86.72 & 53.74 & 90.62 & 64.70 & 87.08 & 59.14 & 86.99 \\
                                      & ReAct      & 23.95 & 94.46 & 33.48 & 91.97 & 46.40 & 90.31 & 19.56 & 96.40 & 49.78 & 89.06 & 34.64 & 92.44 \\
                                      & DICE       & 36.49 & 90.92 & 47.93 & 87.65 & 32.59 & 90.45 & 26.61 & 94.51 & 54.67 & 85.67 & 39.66 & 89.84 \\
                                      & ASH-S      & 28.00 & 94.04 & 39.67 & 91.03 & 11.88 & 97.62 & 11.41 & 97.88 & 38.70 & 90.79 & 25.93 & 94.27 \\
                                      & SCALE      & 25.78 & 94.54 & 36.86 & 91.96 & 14.56 & 96.75 & 10.37 & 98.02 & 36.23 & 92.30 & 24.76 & 94.71 \\
                                      & GradNorm   & 37.42 & 90.10 & 48.88 & 86.08 & 32.84 & 90.64 & 26.78 & 93.90 & 57.76 & 80.44 & 40.74 & 88.23 \\
                                      & KNN        & 78.95 & 77.44 & 81.86 & 73.91 & 16.05 & 96.11 & 78.33 & 79.15 & 65.73 & 82.27 & 64.18 & 81.78 \\
    
        \cmidrule(lr){2-14}
        \rowcolor{gray!20}            & $\approach({\mu,\sigma}) + \texttt{SCALE}$ & 24.49 & 94.62 & 36.01 & 91.83 & 10.59 & 97.75 &  9.61 & 98.10 & 33.30 & 92.85 & 22.80 & 95.03 \\
        \rowcolor{gray!20}            & $\approach(m) + \texttt{SCALE}$            & 27.79 & 93.66 & 40.56 & 90.02 &  9.52 & 98.11 & 13.26 & 97.37 & 34.48 & 92.50 & 25.12 & 94.33 \\

        \midrule
        \multirow{11}{*}{\rotatebox{90}{MobileNet-v2}}
                                      & MSP        & 74.28 & 78.81 & 76.65 & 78.10 & 71.05 & 78.93 & 59.70 & 86.72 & 75.01 & 80.97 & 71.34 & 80.70 \\
                                      & ODIN       & 60.06 & 85.94 & 66.97 & 83.10 & 51.21 & 87.38 & 47.41 & 91.66 & 67.62 & 85.59 & 58.65 & 86.73 \\
                                      & Energy     & 59.60 & 86.16 & 66.36 & 83.15 & 54.82 & 86.57 & 55.33 & 90.37 & 71.71 & 85.10 & 61.56 & 86.27 \\
                                      & ReAct      & 52.68 & 87.21 & 59.81 & 84.04 & 40.32 & 90.96 & 42.94 & 92.75 & 59.88 & 88.25 & 51.13 & 88.64 \\
                                      & DICE       & 38.81 & 90.46 & 52.95 & 85.82 & 33.00 & 91.27 & 42.95 & 90.87 & 65.12 & 82.72 & 46.57 & 88.23 \\
                                      & ASH-S      & 43.86 & 89.98 & 58.92 & 84.72 & 13.21 & 97.10 & 39.13 & 91.96 & 51.63 & 88.00 & 41.35 & 90.35 \\
                                      & SCALE      & 38.74 & 91.64 & 53.49 & 87.34 & 14.79 & 96.65 & 30.09 & 94.46 & 47.51 & 90.11 & 36.92 & 92.04 \\
                                      & GradNorm   & 38.70 & 91.07 & 53.10 & 85.99 & 31.67 & 92.23 & 37.23 & 92.02 & 63.19 & 82.52 & 44.78 & 88.77 \\
                                      & KNN        & 93.82 & 59.41 & 94.10 & 57.72 & 19.84 & 95.34 & 93.34 & 64.82 & 73.10 & 77.07 & 74.84 & 70.87 \\
    
        \cmidrule(lr){2-14}
        \rowcolor{gray!20}            & $\approach({\mu,\sigma}) + \texttt{SCALE}$ & 37.89 & 91.82 & 52.92 & 87.24 & 11.37 & 97.60 & 28.54 & 94.76 & 43.38 & 90.83 & 34.82 & 92.45 \\
        \rowcolor{gray!20}            & $\approach(m) + \texttt{SCALE}$            & 37.47 & 91.91 & 52.72 & 86.90 &  8.85 & 98.20 & 26.76 & 95.08 & 41.63 & 90.98 & 33.49 & 92.61 \\
        \bottomrule
        \end{tabular}}
        \label{table: detailed_imagenet_benchmark_b}
    \end{table}
    \end{landscape}

\section{Comparison with Other Baselines}
\label{appendix: additional baselines}

    While in the main paper we restrict our comparison to primary baselines (i.e., MSP, ODIN, Energy, GradNorm, KNN, ReAct, DICE, ASH, SCALE), we provide a comparison of our method, $\approach$, with additional baselines AdaScale~\cite{adascle}, NCI~\cite{NCI}, and fDBD~\cite{fdbd} in this section. A comprehensive re-evaluation of NCI and fDBD across all architectures used in our study was determined to be beyond the scope of this work due to a fundamental difference in their design philosophy.

    Methods like ReAct, DICE, and our own $\approach$ are modular, post-hoc techniques that primarily modify the penultimate feature vector itself. In contrast, fDBD and NCI introduce entirely new scoring functions derived from the geometric relationship between features and the classifier's decision boundaries (fDBD) or class weight vectors (NCI). Integrating our feature-level modifications into these structurally different scoring frameworks would require significant, non-trivial engineering and could obscure a direct comparison. Therefore, for these two methods, we present a fair comparison limited to the overlapping architectures and datasets from their original publications.

\subsection{AdaSCALE: Adaptive Scaling OOD Detector}

    AdaSCALE is a post-hoc OOD detection method that replaces fixed activation-scaling strategies with an adaptive, sample-dependent mechanism. Existing approaches (ASH, SCALE, LTS) prune activations using a static percentile threshold, which cannot reliably distinguish ID from OOD data. AdaSCALE leverages the observation that OOD samples experience larger shifts in their top activated neurons under small pixel perturbations, while ID activations remain stable. It measures this activation shift (Q), adjusts it with a correction term (Co), and maps the resulting OODness score through a CDF to produce a dynamic pruning percentile. This causes ID samples to receive stronger scaling and OOD samples weaker scaling, yielding more separated energy scores and improved detection performance.

    We compare $\approach$ with AdaSCALE~\cite{adascle} in Tables~\ref{table: adascale_cifar_comparison} and \ref{table: adascale_imagenet_comparison}, strictly following the restricted dataset protocol of the original paper to ensure a fair comparison.

    On CIFAR-10 with Wide-ResNet-28-10, $\approach(m) + \texttt{DICE}$ achieves an FPR95 of 15.58 compared to 39.12 for AdaSCALE, while with DenseNet-101 it achieves 14.98 versus 40.03. A similar trend is observed on CIFAR-100, where $\approach$ consistently outperforms AdaSCALE.

    As shown in Table~\ref{table: adascale_imagenet_comparison}, this advantage also holds for Swin-B pre-trained on ImageNet, where $\approach(\mu) + \texttt{KNN}$ achieves an FPR95 of 31.66 compared to 46.24. However, for ResNet-50 on ImageNet, AdaSCALE performs better than $\approach$ by 4.52 FPR95 points.

\begin{table}[t]
        \centering
        \caption{A direct comparison of $\approach$ against the AdaSCALE baseline, using their originally reported results for \textbf{CIFAR} with a \textbf{Wide-ResNet-28-10} and \textbf{DenseNet-101} backbone. The evaluation is restricted to the SVHN, Texture, and Places365 OOD datasets to ensure a fair comparison that matches the protocol from the original AdaSCALE paper.}
        \resizebox{\linewidth}{!}{
        \begin{tabular}{l l  cc cc cc cc}
        \toprule
        \multirow{2}{*}{\textbf{Model}} & \multirow{2}{*}{\textbf{Method}} & \multicolumn{2}{c}{SVHN} & \multicolumn{2}{c}{Places365} & \multicolumn{2}{c}{Texture} & \multicolumn{2}{c}{Average}  \\
        \cmidrule(lr){3-4} \cmidrule(lr){5-6} \cmidrule(lr){7-8} \cmidrule(lr){9-10}
        && \textbf{FPR95} $\downarrow$ & \textbf{AUROC} $\uparrow$ & \textbf{FPR95} $\downarrow$ & \textbf{AUROC} $\uparrow$ & \textbf{FPR95} $\downarrow$ & \textbf{AUROC} $\uparrow$ & \textbf{FPR95} $\downarrow$ & \textbf{AUROC} $\uparrow$  \\
        \midrule
        \multicolumn{10}{c}{CIFAR-10} \\
        \midrule
        \multirow{4}{*}{\textbf{Wide-ResNet}} 
        & \texttt{AdaSCALE-A}                       & 17.84 & 95.14 & 34.57 & 92.31 & 64.96 & 88.31 & 39.12 & 91.92 \\
        & \texttt{AdaSCALE-L}                       & 18.41 & 95.10 & 37.59 & 91.97 & 62.87 & 88.67 & 39.62 & 91.91 \\
        & $\approach({\mu,\sigma}) + \texttt{DICE}$ &  9.14 & 98.38 & 33.55 & 93.15 & 15.98 & 97.35 & 19.56 & 96.29 \\
        & $\approach(m) + \texttt{DICE}$            &  \textbf{7.62} & \textbf{98.57} & \textbf{29.90} & \textbf{93.89} &  \textbf{9.22} & \textbf{98.33} & \textbf{15.58} & \textbf{96.93} \\
        \midrule
        \multirow{4}{*}{\textbf{DenseNet-101}} 
        & \texttt{AdaSCALE-A}                       & 25.04 & 94.05 & 36.77 & 91.20 & 58.28 & 87.35 & 40.03 & 90.87 \\
        & \texttt{AdaSCALE-L}                       & 26.43 & 93.87 & 37.03 & 91.25 & 58.59 & 87.19 & 40.68 & 90.77 \\
        & $\approach({\mu,\sigma}) + \texttt{DICE}$ &  \textbf{6.84} & \textbf{98.77} & 29.76 & 93.86 &  9.57 & 98.25 & 15.39 & 96.96 \\
        & $\approach(m) + \texttt{DICE}$            &  8.28 & 98.37 & \textbf{29.47} & \textbf{93.92} &  \textbf{7.18} & \textbf{98.69} & \textbf{14.98} & \textbf{96.99} \\
        \midrule
        \multicolumn{10}{c}{CIFAR-100} \\
        \midrule
        \multirow{4}{*}{\textbf{Wide-ResNet}} 
        & \texttt{AdaSCALE-A}                       & 36.79 & 89.20 & \textbf{56.48} & \textbf{81.55} & 55.93 & 81.93 & 49.73 & 84.23 \\
        & \texttt{AdaSCALE-L}                       & 32.44 & 91.02 & 57.83 & 81.51 & 50.87 & 84.14 & 47.05 & 85.56 \\
        & $\approach({\mu,\sigma}) + \texttt{DICE}$ & \textbf{19.77} & \textbf{96.48} & 80.97 & 77.23 & 24.52 & 94.35 & \textbf{41.75} & \textbf{89.35} \\
        & $\approach(m) + \texttt{DICE}$            & 21.64 & 95.92 & 85.37 & 75.36 & \textbf{23.30} & \textbf{95.00} & 43.44 & 88.76 \\
        \midrule
        \multirow{4}{*}{\textbf{DenseNet-101}} 
        & \texttt{AdaSCALE-A}                       & 46.29 & 84.31 & \textbf{61.70} & \textbf{78.86} & 71.40 & 76.59 & 59.80 & 79.92 \\
        & \texttt{AdaSCALE-L}                       & 43.97 & 85.30 & 61.97 & 78.69 & 69.31 & 77.71 & 58.42 & 80.57 \\
        & $\approach({\mu,\sigma}) + \texttt{DICE}$ & \textbf{20.30} & \textbf{96.20} & 79.52 & 78.28 & 34.10 & 91.24 & \textbf{44.64} & \textbf{88.57} \\ 
        & $\approach(m) + \texttt{DICE}$            & 27.97 & 94.91 & 86.13 & 76.61 & \textbf{30.32} & \textbf{93.28} & 48.14 & 88.27 \\
        \bottomrule
        \end{tabular}}
        \label{table: adascale_cifar_comparison}
    \end{table}

    \begin{table}[!ht]
        \centering
        \caption{A direct comparison between $\approach$ and AdaSCALE baseline using the originally reported ImageNet results with a \textbf{Swin-B} and \textbf{ResNet-50} backbone. The evaluation is restricted to iNaturalist, Places, Texture and OpenImage-O to ensure a fair comparison consistent with the protocol of the original AdaSCALE paper.}
        \resizebox{\linewidth}{!}{
        \begin{tabular}{l l  cc cc cc cc cc}
        \toprule
        \multirow{2}{*}{\textbf{Model}} & \multirow{2}{*}{\textbf{Method}} & \multicolumn{2}{c}{iNaturalist} & \multicolumn{2}{c}{Places} & \multicolumn{2}{c}{Texture} & \multicolumn{2}{c}{OpenImage-O} & \multicolumn{2}{c}{Average}  \\
        \cmidrule(lr){3-4} \cmidrule(lr){5-6} \cmidrule(lr){7-8} \cmidrule(lr){9-10} \cmidrule(lr){11-12}
        && \textbf{FPR95} $\downarrow$ & \textbf{AUROC} $\uparrow$ & \textbf{FPR95} $\downarrow$ & \textbf{AUROC} $\uparrow$ & \textbf{FPR95} $\downarrow$ & \textbf{AUROC} $\uparrow$ & \textbf{FPR95} $\downarrow$ & \textbf{AUROC} $\uparrow$ & \textbf{FPR95} $\downarrow$ & \textbf{AUROC} $\uparrow$ \\
        \midrule
        \multirow{4}{*}{\textbf{Swin-B}} 
            & \texttt{AdaSCALE-A}                        & 32.82 & 90.73 & \textbf{58.02} & 82.71 & 61.82 & 85.34 & 38.58 & 89.78 & 47.81 & 87.14 \\            
            & \texttt{AdaSCALE-L}                        & 30.95 & 91.69 & 56.32 & 83.82 & 60.17 & 86.30 & 37.52 & 90.08 & 46.24 & 87.97 \\
            & $\approach({\mu,\sigma}) + \texttt{KNN}$   & \textbf{16.01} & \textbf{96.42} & 66.20 & \textbf{85.08} & \textbf{17.71} & \textbf{94.42} & \textbf{26.71} & \textbf{94.69} & \textbf{31.66} & \textbf{92.65} \\
            & $\approach(m) + \texttt{KNN}$              & 61.66 & 90.27 & 87.87 & 78.12 & 36.45 & 90.71 & 62.22 & 87.75 & 62.05 & 86.71 \\
            \midrule
            \multirow{4}{*}{\textbf{ResNet-50}} 
            & \texttt{AdaSCALE-A}                        & \textbf{7.61} & \textbf{98.31} & \textbf{32.60} & \textbf{92.74} & 10.57 & 97.88 & \textbf{20.67} & \textbf{95.62} & \textbf{17.86} & \textbf{96.14} \\
            & \texttt{AdaSCALE-L}                        & 7.78 & 98.29 & 32.97 & 92.63 & 10.33 & 97.92 & 32.97 & 92.63 & 21.01 & 95.37 \\
            & $\approach({\mu,\sigma}) + \texttt{SCALE}$ &  9.61 & 98.10 & 36.01 & 91.83 & 10.59 & 97.75 & 33.30 & 92.85 & 22.38 & 95.13 \\
            & $\approach(m) + \texttt{SCALE}$            & 13.26 & 97.37 & 40.56 & 90.02 &  \textbf{9.52} & \textbf{98.11} & 34.48 & 92.50 & 24.46 & 94.50 \\
        \bottomrule
        \end{tabular}}
        \label{table: adascale_imagenet_comparison}
    \end{table}

\subsection{NCI: Neural Collapse Inspired OOD Detector}
 
    As shown in Table~\ref{table: NCI_cifar_comparison} for CIFAR-10 and Table~\ref{table: NCI_imagenet_comparison} for ImageNet, in a direct comparison against NCI's~\citep{NCI} reported results, our method $\approach$ demonstrates a clear and significant advantage. On CIFAR-10 with a ResNet-18 backbone, $\approach(\mu, \sigma) + \texttt{DICE}$ decisively outperforms NCI, reducing the average FPR95 by 33.87\%. This strong performance is maintained on the large-scale ImageNet benchmark, where our method reduces the FPR95 by 22.58\% on a ResNet-50.

    \begin{table}[t]
        \centering
        \caption{A direct comparison of $\approach$ against the NCI baseline, using their originally reported results for \textbf{CIFAR-10} with a \textbf{ResNet-18} backbone. The evaluation is restricted to the SVHN, Texture, and Places365 OOD datasets to ensure a fair comparison that matches the protocol from the original NCI paper.}
        \resizebox{\linewidth}{!}{
        \begin{tabular}{l  cc cc cc cc}
            \toprule
             \multirow{2}{*}{\textbf{Method}} & \multicolumn{2}{c}{SVHN} & \multicolumn{2}{c}{Places365} & \multicolumn{2}{c}{Texture} & \multicolumn{2}{c}{Average}  \\
            \cmidrule(lr){2-3} \cmidrule(lr){4-5} \cmidrule(lr){6-7} \cmidrule(lr){8-9}
            & \textbf{FPR95} $\downarrow$ & \textbf{AUROC} $\uparrow$ & \textbf{FPR95} $\downarrow$ & \textbf{AUROC} $\uparrow$ & \textbf{FPR95} $\downarrow$ & \textbf{AUROC} $\uparrow$ & \textbf{FPR95} $\downarrow$ & \textbf{AUROC} $\uparrow$ \\
            \midrule
            \texttt{NCI}                              & 28.92 & 90.81 & 34.01 & 90.74 & 26.53 & 92.18 & 29.82 & 91.24 \\
            $\approach({\mu,\sigma}) + \texttt{DICE}$ &  \textbf{7.85} & \textbf{98.57} & 35.27 & 93.18 & 14.04 & 97.71 & 19.72 & \textbf{96.49} \\ 
            $\approach(m) + \texttt{DICE}$            &  7.95 & 98.50 & \textbf{30.55} & \textbf{93.97} &  \textbf{9.66} & \textbf{98.28} & \textbf{16.72} & 96.25 \\
            \bottomrule
            \end{tabular}}
            \label{table: NCI_cifar_comparison}
    \end{table}

    \begin{table}[ht]
        \centering
        \caption{A direct comparison of $\approach$ against the NCI baseline, using their originally reported results for \textbf{ImageNet} with a \textbf{ResNet-50} backbone. The evaluation is restricted to the Texture, iNaturalist and OpenImage-O OOD datasets to ensure a fair comparison that matches the protocol from the original NCI paper.}
        \resizebox{\linewidth}{!}{
        \begin{tabular}{l cc cc cc cc}
            \toprule
             \multirow{2}{*}{\textbf{Method}} & \multicolumn{2}{c}{Texture} & \multicolumn{2}{c}{iNaturalist} & \multicolumn{2}{c}{OpenImage-O} & \multicolumn{2}{c}{Average}  \\
            \cmidrule(lr){2-3} \cmidrule(lr){4-5} \cmidrule(lr){6-7} \cmidrule(lr){8-9}
            & \textbf{FPR95} $\downarrow$ & \textbf{AUROC} $\uparrow$ & \textbf{FPR95} $\downarrow$ & \textbf{AUROC} $\uparrow$ & \textbf{FPR95} $\downarrow$ & \textbf{AUROC} $\uparrow$ & \textbf{FPR95} $\downarrow$ & \textbf{AUROC} $\uparrow$ \\
            \midrule
            \texttt{NCI}                                & 23.79 & 96.63 & 14.31 & 96.95 & \textbf{30.98} & \textbf{92.98} & 23.03 & 95.52 \\
            $\approach({\mu,\sigma}) + \texttt{SCALE}$  & 10.59 & 97.75 &  \textbf{9.61} & \textbf{98.10} & 33.30 & 92.85 & \textbf{17.83} & \textbf{96.23} \\    
            $\approach(m) + \texttt{SCALE}$             &  \textbf{9.52} & \textbf{98.11} & 13.26 & 97.37 & 34.48 & 92.50 & 19.09 & 95.99 \\
            \bottomrule
            \end{tabular}}
            \label{table: NCI_imagenet_comparison}
    \end{table}

\subsection{fDBD: Fast Decision Boundary OOD Detector}

    As shown in Tables \ref{table: fDBD_cifar_comparison} and \ref{table: fDBD_imagenet_comparison}, our method, $\approach$, demonstrates a decisive and substantial performance advantage over the fDBD's reported results in all comparable, overlapping settings. The strength of $\approach$ is most apparent when it is composed with existing techniques, creating a powerful synergistic effect that dramatically improves OOD detection. On CIFAR-10, this combination is particularly effective. Using a ResNet-18, $\approach(m) + \texttt{DICE}$ slashes the average FPR95 by 55.9\% relative to fDBD (from 31.09\% down to 13.72\%). The gains are even more pronounced on a DenseNet-101, where $\approach(\mu,\sigma) + \texttt{DICE}$ achieves an FPR95 reduction of 38.20\%.

        \begin{table}[ht]
            \centering
            \caption{Direct comparison of $\approach$ against the fDBD baseline on \textbf{CIFAR-10}. To ensure a fair comparison, the evaluation is restricted to the four OOD datasets reported in the original fDBD paper: SVHN, Places365, iSUN, and Texture.}
            \resizebox{\linewidth}{!}{
            \begin{tabular}{l l  cc cc cc cc cc}
                \toprule
                \multirow{2}{*}{\textbf{Model}} & \multirow{2}{*}{\textbf{Method}} & \multicolumn{2}{c}{SVHN} & \multicolumn{2}{c}{Places365} & \multicolumn{2}{c}{iSUN} & \multicolumn{2}{c}{Texture} & \multicolumn{2}{c}{Average}  \\
                \cmidrule(lr){3-4} \cmidrule(lr){5-6} \cmidrule(lr){7-8} \cmidrule(lr){9-10} \cmidrule(lr){11-12}
                && \textbf{FPR95} $\downarrow$ & \textbf{AUROC} $\uparrow$ & \textbf{FPR95} $\downarrow$ & \textbf{AUROC} $\uparrow$ & \textbf{FPR95} $\downarrow$ & \textbf{AUROC} $\uparrow$ & \textbf{FPR95} $\downarrow$ & \textbf{AUROC} $\uparrow$ & \textbf{FPR95} $\downarrow$ & \textbf{AUROC} $\uparrow$ \\
                \midrule
                \multirow{2}{*}{\textbf{ResNet-18}} & \texttt{fDBD}                         & 22.58 & 96.07 & 46.59 & 90.40 & 23.96 & 95.85 & 31.24 & 94.48 & 31.09 & 94.20 \\
                                                & $\approach({\mu,\sigma}) + \texttt{DICE}$ &  \textbf{7.85} & \textbf{98.57} & 35.27 & 93.18 & 11.96 & 97.96 & 14.04 & 97.71 & 17.78 & 96.86 \\
                                                & $\approach(m) + \texttt{DICE}$            &  7.95 & 98.50 & \textbf{30.55} & \textbf{93.97} &  \textbf{6.70} & \textbf{98.59} &  \textbf{9.66} & \textbf{98.28} & \textbf{13.72} & \textbf{97.34} \\
                \midrule
                \multirow{2}{*}{\textbf{DenseNet-101}} & \texttt{fDBD}                      &  5.89 & 98.67 & 39.52 & 91.53 & 5.90 & 98.75 & 22.75 & 95.81 & 18.52 & 96.19 \\ 
                                                & $\approach({\mu,\sigma}) + \texttt{DICE}$ &  \textbf{6.84} & \textbf{98.77} & 29.76 & 93.86 & \textbf{1.58} & \textbf{99.59} &  9.57 & 98.25 & \textbf{11.44} & 97.62 \\ 
                                                & $\approach(m) + \texttt{DICE}$            &  8.30 & 98.37 & \textbf{29.47} & \textbf{93.92} & 1.85 & 99.56 &  \textbf{7.16} & \textbf{98.69} & 11.70 & \textbf{97.64} \\
                \bottomrule
                \end{tabular}}
                \label{table: fDBD_cifar_comparison}
        \end{table}

    As demonstrated in Table~\ref{table: fDBD_imagenet_comparison}, this commanding performance extends to the large-scale ImageNet benchmark. While fDBD struggles with a high average FPR95 of 51.19\%, our $\approach(\mu,\sigma) + \texttt{SCALE}$ achieves an FPR95 of just 20.17\%,  a massive 56.40\% relative reduction. These results validate that by first enriching the feature representation with more discriminative statistics, $\approach$ enables subsequent methods to operate far more effectively, establishing a new state-of-the-art over the fDBD baseline.

        \begin{table}[ht]
            \centering
            \caption{This table presents a direct comparison of $\approach$ against the fDBD baseline on \textbf{ImageNet} using a \textbf{ResNet-50} backbone. To ensure a fair comparison, the evaluation is restricted to the iNaturalist and Texture OOD datasets, matching the protocol in the original fDBD paper.}
            \resizebox{\linewidth}{!}{
            \begin{tabular}{l l  cc cc cc cc cc}
                \toprule
                \multirow{2}{*}{\textbf{Method}} & \multicolumn{2}{c}{SUN} & \multicolumn{2}{c}{Places365} & \multicolumn{2}{c}{Texture} & \multicolumn{2}{c}{iNaturalist} & \multicolumn{2}{c}{Average}  \\
                \cmidrule(lr){2-3} \cmidrule(lr){4-5} \cmidrule(lr){6-7} \cmidrule(lr){8-9} \cmidrule(lr){10-11}
                & \textbf{FPR95} $\downarrow$ & \textbf{AUROC} $\uparrow$ & \textbf{FPR95} $\downarrow$ & \textbf{AUROC} $\uparrow$ & \textbf{FPR95} $\downarrow$ & \textbf{AUROC} $\uparrow$ & \textbf{FPR95} $\downarrow$ & \textbf{AUROC} $\uparrow$ & \textbf{FPR95} $\downarrow$ & \textbf{AUROC} $\uparrow$ \\
                \midrule
                 \texttt{fDBD}                             & 60.60 & 86.97 & 66.40 & 84.27 & 37.50 & 92.12 & 40.24 & 93.67 & 51.19 & 89.26 \\
                $\approach({\mu,\sigma}) + \texttt{SCALE}$ & \textbf{24.49} & \textbf{94.62} & \textbf{36.01} & \textbf{91.83} & \textbf{10.59} & \textbf{97.75} &  \textbf{9.61} & \textbf{98.10} & \textbf{20.17} & \textbf{95.58} \\  
                $\approach(m) + \texttt{SCALE}$            & 27.79 & 93.66 & 40.56 & 90.02 &  9.52 & 98.11 & 13.26 & 97.37 & 22.78 & 94.79 \\
                \bottomrule
                \end{tabular}}
                \label{table: fDBD_imagenet_comparison}
        \end{table}

\subsection{Additional Baselines}

    Additionally, we conduct a large-scale benchmark comparison on ImageNet-1k against plethora of existing literature using both ResNet-50 and MobileNet-v2. As shown in Table~\ref{table: other_baselines_imagenet_benchmark}, we compare our method against 19 existing baselines for ResNet-50~\cite{msp, odin, G-odin, maha_distance, knn, gradorth, GradNorm, NNGuide, ViM, fdbd, BATS, energy, ReAct, DICE, ASH, SCALE} and 15 baselines for MobileNet-v2~\cite{msp, odin, maha_distance, gradorth, GradNorm, NNGuide, ViM, BATS, energy, ReAct, DICE, ASH, SCALE}, with all competitors' results taken directly from their original publications (those were not reproduced). This comprehensive evaluation demonstrates that $\approach$ achieves competitive and consistent performance compared to all prior post-hoc methods on this challenging benchmark. Importantly, no one existing methods is universally superior, even in traditional CNN-based backbone. 

    \begin{table*}[!ht]
        \centering
        \caption{Detailed Comparison with existing OOD detection methods on the ImageNet-1k benchmark, using ResNet-50 and MobileNet-v2. Methods marked with $^*$ were reproduced by us; results for all other methods are taken from their original publications. The symbol $\boldsymbol{\downarrow}$ indicates lower values are better; $\boldsymbol{\uparrow}$ indicates larger values are better.}
        \resizebox{\textwidth}{!}{
        \begin{tabular}{l l  cc cc cc cc cc}
            \toprule
            \multirow{2}{*}{\textbf{Model}} & \multirow{2}{*}{\textbf{Method}} & \multicolumn{2}{c}{SUN} & \multicolumn{2}{c}{Places} & \multicolumn{2}{c}{Texture} & \multicolumn{2}{c}{iNaturalist} & \multicolumn{2}{c}{Average}  \\
            \cmidrule(lr){3-4} \cmidrule(lr){5-6} \cmidrule(lr){7-8} \cmidrule(lr){9-10} \cmidrule(lr){11-12}
            && \textbf{FPR95} $\downarrow$ & \textbf{AUROC} $\uparrow$ & \textbf{FPR95} $\downarrow$ & \textbf{AUROC} $\uparrow$ & \textbf{FPR95} $\downarrow$ & \textbf{AUROC} $\uparrow$ & \textbf{FPR95} $\downarrow$ & \textbf{AUROC} $\uparrow$ & \textbf{FPR95} $\downarrow$ & \textbf{AUROC} $\uparrow$ \\
            \midrule
            \multirow{19}{*}{\rotatebox{90}{ResNet-50}} & MSP$^*$~\cite{msp}                & 68.58 & 81.75 & 71.57 & 80.63 & 66.13 & 80.46 & 52.77 & 88.42 & 64.76 & 82.82 \\
                                        & ODIN$^*$~\cite{odin}              & 60.15 & 84.59 & 67.89 & 81.78 & 50.23 & 85.62 & 47.66 & 89.66 & 56.48 & 85.41 \\
                                        & GODIN~\cite{G-odin}               & 60.83 & 85.60 & 63.70 & 83.81 & 77.85 & 73.27 & 61.91 & 85.40 & 66.07 & 82.02 \\
                                    & Mahalanobis~\cite{maha_distance}      & 68.36 & 84.35 & 73.32 & 81.46 & 16.05 & 94.96 & 39.90 & 93.76 & 49.41 & 88.63 \\
                                    & KNN$^*$~\cite{knn}                    & 78.95 & 77.44 & 81.86 & 73.91 & 16.05 & 96.11 & 78.33 & 79.15 & 63.80 & 81.65 \\
                                        & GradOrth~\cite{gradorth}          & 19.61 & 95.76 & 33.67 & 91.78 & 11.19 & 98.06 & 11.04 & 98.00 & 18.57 & 96.31 \\
                                        & GradNorm$^*$~\cite{GradNorm}      & 37.42 & 90.10 & 48.88 & 86.08 & 32.84 & 90.64 & 26.78 & 93.9 & 36.48 & 90.18 \\
                                        & NN-Guide~\cite{NNGuide}           & 31.62 & 91.66 & 38.88 & 90.12 & 24.93 & 91.52 & 12.02 & 97.47 & 26.86 & 92.69 \\
                                        & ViM~\cite{ViM}                    & 43.10 & 89.39 & 52.86 & 86.61 & 17.18 & 93.58 & 20.34 & 96.24 & 33.37 & 91.45 \\
                                        & fDBD~\cite{fdbd}                  & 60.60 & 86.97 & 66.40 & 84.27 & 37.50 & 92.12 & 40.24 & 93.67 & 51.19 & 89.26 \\
                                        & BATS~\cite{BATS}                  & 22.62 & 95.33 & 34.34 & 91.83 & 38.90 & 92.27 & 12.57 & 97.67 & 27.11 & 94.20 \\
                                        & LAPS~\cite{laps}                  & 15.81 & 96.18 & 24.71 & 93.64 & 41.49 & 91.81 & 12.72 & 97.50 & 23.68 & 94.78 \\
                                        & Energy$^*$~\cite{energy}          & 58.28 & 86.73 & 65.40 & 84.13 & 52.29 & 86.73 & 53.95 & 90.59 & 57.48 & 87.05 \\
                                        & ReAct$^*$~\cite{ReAct}            & 23.68 & 94.44 & 33.33 & 91.96 & 46.33 & 90.30 & 19.73 & 96.37 & 30.77 & 93.27 \\
                                        & DICE$^*$~\cite{DICE}              & 36.11 & 91.01 & 47.62 & 87.76 & 32.38 & 90.48 & 26.48 & 94.53 & 35.65 & 90.94 \\
                                        & ASH-S$^*$~\cite{ASH}                & 28.01 & 94.02 & 39.84 & 90.98 & 11.95 & 97.60 & 11.52 & 97.87 & 22.83 & 95.12 \\
                                        & SCALE$^*$~\cite{SCALE}            & 25.78 & 94.54 & 36.86 & 91.96 & 14.56 & 96.75 & 10.37 & 98.02 & 21.89 & 95.32 \\
            \cmidrule(lr){2-12}
            \rowcolor[gray]{0.9}        & \textbf{$\approach(\mu, \sigma)+\texttt{SCALE}$}  & 24.49 & 94.62 & 36.01 & 91.83 & 10.59 & 97.75 & 9.61 & 98.10 & 20.18 & 95.58 \\
            \rowcolor[gray]{0.9}        & \textbf{$\approach(m)+\texttt{SCALE}$}            & 27.79 & 93.66 & 40.56 & 90.02 & 9.52 & 98.11 & 13.26 & 97.37 & 22.78 & 94.79 \\
            \midrule
            \multirow{16}{*}{\rotatebox{90}{MobileNet-v2}} & MSP$^*$~\cite{msp}                 & 74.20 & 78.88 & 76.89 & 78.14 & 70.99 & 78.95 & 59.86 & 86.72 & 70.49 & 80.67 \\
                                        & ODIN$^*$~\cite{odin}                  & 54.07 & 85.88 & 57.36 & 84.71 & 49.96 & 85.03 & 55.39 & 87.62 & 54.20 & 85.81 \\
                                       & Mahalanobis~\cite{maha_distance}       & 54.79 & 86.33 & 53.77 & 83.69 & 88.72 & 37.28 & 62.04 & 82.37 & 64.83 & 72.40 \\
                                       & KNN$^*$~\cite{knn}                     & 93.82 & 59.41 & 94.1 & 57.72 & 19.84 & 95.34 & 93.34 & 64.82 & 75.28 & 69.32 \\
                                        & GradOrth~\cite{gradorth}              & 30.82 & 93.18 & 40.27 & 89.12 & 12.69 & 97.52 & 26.81 & 93.17 & 27.65 & 93.25 \\
                                        & GradNorm$^*$~\cite{GradNorm}          & 38.7 & 91.07 & 53.1 & 85.99 & 31.67 & 92.23 & 37.23 & 92.02 & 40.18 & 90.33 \\
                                        & NN-Guide~\cite{NNGuide}               & 79.57 & 76.10 & 81.87 & 74.23 & 38.78 & 89.32 & 68.24 & 82.07 & 67.12 & 80.43 \\
                                        & ViM~\cite{ViM}                        & 88.67 & 66.37 & 92.16 & 62.43 & 40.71 & 89.59 & 86.86 & 69.57 & 77.10 & 71.99 \\
                                        & BATS~\cite{BATS}                      & 41.68 & 90.21 & 52.43 & 86.26 & 38.69 & 90.76 & 31.56 & 94.33 & 41.09 & 90.39 \\
                                        & LAPS~\cite{laps}                      & 30.07 & 92.98 & 39.70 & 90.10 & 51.37 & 88.29 & 18.82 & 96.76 & 34.99 & 92.03 \\
                                        & Energy$^*$~\cite{energy}              & 59.36 & 86.24 & 66.27 & 83.21 & 54.54 & 86.58 & 55.31 & 90.34 & 58.87 & 86.59 \\
                                        & ReAct$^*$~\cite{ReAct}                & 52.46 & 87.26 & 59.89 & 84.07 & 40.25 & 90.96 & 43.05 & 92.72 & 48.91 & 88.75 \\
                                        & DICE$^*$~\cite{DICE}                  & 37.84 & 90.81 & 52.35 & 86.17 & 32.57 & 91.46 & 41.53 & 91.30 & 41.07 & 89.94 \\
                                        & ASH-S$^*$~\cite{ASH}                    & 43.63 & 90.02 & 58.85 & 84.73 & 13.12 & 97.10 & 39.13 & 91.94 & 38.68 & 90.95 \\
                                        & SCALE$^*$~\cite{SCALE}                & 38.74 & 91.64 & 53.49 & 87.34 & 14.79 & 96.65 & 30.09 & 94.46 & 34.28 & 92.52 \\
            \cmidrule(lr){2-12}
            \rowcolor[gray]{0.9}        & \textbf{$\approach(\mu, \sigma)+\texttt{SCALE}$}      & 37.89 & 91.82 & 52.92 & 87.24 & 11.37 & 97.60 & 28.54 & 94.76 & 32.68 & 92.86 \\
            \rowcolor[gray]{0.9}        & \textbf{$\approach(m)+\texttt{SCALE}$}                & 37.47 & 91.91 & 52.72 & 86.90 &  8.85 & 98.20 & 26.76 & 95.08 & 31.45 & 93.02 \\
            \bottomrule
            \end{tabular}
            }
            \label{table: other_baselines_imagenet_benchmark}
        \end{table*}

\section{Analysis of Activation on OOD Detection}
\label{appendix: resnet activation results}

    This section details our investigation into the performance degradation of standard OOD baselines on alternate activation functions: SiLU, GeLU, and TanH. As eluded in main paper, we attribute this failure to the fundamental difference between the sparse feature maps produced by the ReLU activation function versus the dense maps produced by SiLU, GeLU, and TanH. For brevity, we perform analysis using SiLU activation that is used in EfficientNet-B0. 

    The ReLU function, $\text{ReLU}(\mathbf{x}) = \max(0,\mathbf{x})$, creates sparse activations by forcing all negative inputs to zero. This sparsity is an implicit assumption for pruning-based methods like ASH and SCALE. In contrast, the SiLU function, $\text{SiLU}(\mathbf{x}) = \mathbf{x} \cdot \sigma(\mathbf{x})$ where $\sigma(\mathbf{x})$ is the sigmoid, is non-sparsifying and produces dense feature maps, altering the statistical landscape on which these methods were designed to operate.

    \begin{figure}[!ht]
      \centering
      \includegraphics[width=\textwidth]{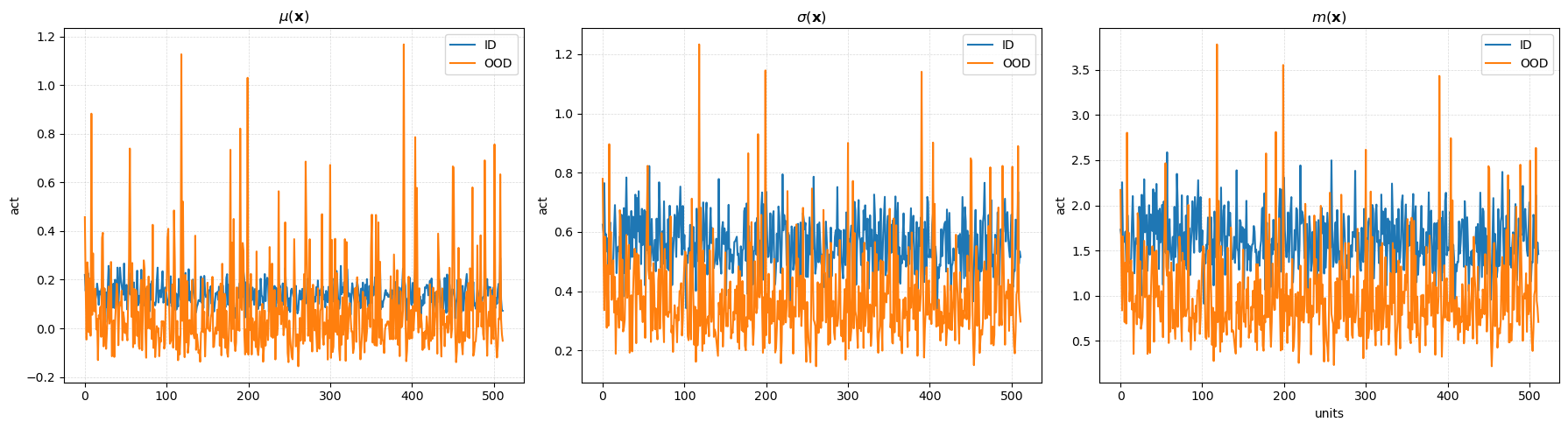}
      \caption{ \textit{Feature statistics for ID (CIFAR-100) vs. OOD (Texture) samples on an ResNet18-SiLU backbone. While mean $\mu(\mathbf{x})$ show poor separation, both the standard deviation $\sigma(\mathbf{x})$ and maximum $m(\mathbf{x})$ statistics maintain a clear separation between ID and OOD activations.}}
      \label{fig: resnet18_silu_features}
    \end{figure}

    To isolate this effect, we evaluated ResNet-18 with SiLU, GeLU, and TanH instead of ReLU on the CIFAR benchmarks. This experiment revealed two key findings as shown in Figure~\ref{fig: resnet18_silu_features}

    \begin{itemize}
        \item High Overlap of Mean Features: The standard mean features extracted using GAP from the ResNet18-SiLU model show a high degree of overlap between ID and OOD samples. This poor separability is the primary reason for the failure of baselines that rely solely on these features.
    
        \item Enhanced Separation with $\approach$: In contrast, the feature representations from our method, $\approach(m)$ and $\approach(\mu, \sigma)$, successfully establish a clear and discriminative boundary between the ID and OOD distributions. This demonstrates that the maximum and standard deviation statistics are robust cues even in a dense activation landscape.
        
    \end{itemize}

    These qualitative findings are confirmed by the quantitative results in Table~\ref{table: ResNet-18_SiLU_CIFAR-10_combination}, ~\ref{table: ResNet-18_SiLU_CIFAR-100_combination}, ~\ref{table: ResNet-18_GeLU_CIFAR-10_combination}, ~\ref{table: ResNet-18_GeLU_CIFAR-100_combination}, ~\ref{table: ResNet-18_TanH_CIFAR-10_combination}, ~\ref{table: ResNet-18_TanH_CIFAR-100_combination}. There, we show that standard baselines on alternate activation function perform poorly, specially on complex dataset CIFAR-100 compared to standard ResNet18 in Table~\ref{table: detailed_results_cifar10_a}, but are critically rescued when combined with $\approach$. This analysis strongly supports our hypothesis that the dense nature of alternate activation function challenges conventional OOD methods, and it highlights the robustness and generality of the statistical cues leveraged by our approach.

\begin{landscape}
    \begin{table}[!htbp]
    \centering
    \caption{Detailed results of post-hoc methods combined with $\approach$ using \textbf{ResNet-18} with the \textbf{SiLU} activation function instead of \textbf{ReLU}, trained on the CIFAR dataset. $\boldsymbol{\uparrow}$ indicates that higher values are better; $\boldsymbol{\downarrow}$ indicates that lower values are better. The symbols denote the statistics used: $\mu$ (mean), $\sigma$ (standard deviation), and $m$ (maximum).}
    \resizebox{\linewidth}{!}{
    \begin{tabular}{c l  cc cc cc cc cc cc cc }
    \toprule
    \multirow{2}{*}{\textbf{Dataset}} & \multirow{2}{*}{\textbf{Combined Method}} & \multicolumn{2}{c}{\textbf{SVHN}} & \multicolumn{2}{c}{\textbf{Place365}} & \multicolumn{2}{c}{\textbf{iSUN}} & \multicolumn{2}{c}{\textbf{Textures}} & \multicolumn{2}{c}{\textbf{LSUN-c}}  & \multicolumn{2}{c}{\textbf{LSUN-r}} & \multicolumn{2}{c}{\textbf{Average}}  \\
    \cmidrule(lr){3-4} \cmidrule(lr){5-6} \cmidrule(lr){7-8} \cmidrule(lr){9-10} \cmidrule(lr){11-12} \cmidrule(lr){13-14} \cmidrule(lr){15-16}
    && \textbf{FPR95} $\downarrow$ & \textbf{AUROC} $\uparrow$ & \textbf{FPR95} $\downarrow$ & \textbf{AUROC} $\uparrow$ & \textbf{FPR95} $\downarrow$ & \textbf{AUROC} $\uparrow$ & \textbf{FPR95} $\downarrow$ & \textbf{AUROC} $\uparrow$ & \textbf{FPR95} $\downarrow$ & \textbf{AUROC} $\uparrow$ & \textbf{FPR95} $\downarrow$ & \textbf{AUROC} $\uparrow$ & \textbf{FPR95} $\downarrow$ & \textbf{AUROC} $\uparrow$ \\
    \midrule

    \multirow{27}{*}{CIFAR-10}  
    & MSP                        & 56.15 & 92.93 & 62.45 & 88.72 & 41.70 & 94.49 & 58.03 & 91.02 & 26.15 & 96.60 & 39.12 & 94.79 & 47.27 & 93.10 \\
    & + $\approach(m)$           & 54.29 & 85.57 & 65.92 & 79.62 & 49.12 & 89.53 & 52.64 & 86.94 & 31.24 & 93.75 & 47.55 & 89.76 & 50.13 & 87.53 \\
    & + $\approach(\mu, \sigma)$ & 53.47 & 85.33 & 66.64 & 77.71 & 48.62 & 88.38 & 52.71 & 85.59 & 29.52 & 93.75 & 47.41 & 88.87 & 49.73 & 86.61 \\
    \cmidrule(lr){2-16}

    & ODIN                       & 26.52 & 95.89 & 39.60 & 91.87 & 6.64 & 98.59 & 35.55 & 93.89 & 1.41 & 99.53 & 6.33 & 98.65 & 19.34 & 96.40 \\
    & + $\approach(m)$           & 25.47 & 95.83 & 40.46 & 91.74 & 7.76 & 98.25 & 35.60 & 93.83 & 1.80 & 99.31 & 7.01 & 98.34 & 19.68 & 96.22 \\
    & + $\approach(\mu, \sigma)$ & 25.56 & 95.91 & 41.01 & 91.72 & 7.50 & 98.35 & 35.85 & 93.84 & 1.74 & 99.39 & 6.82 & 98.43 & 19.75 & 96.27 \\
    \cmidrule(lr){2-16}

    & Energy                     & 32.13 & 95.20 & 40.20 & 91.53 & 10.39 & 97.91 & 39.57 & 92.97 & 1.93 & 99.34 & 9.45 & 98.03 & 22.28 & 95.83 \\
    & + $\approach(m)$           & 23.88 & 96.03 & 35.55 & 93.30 & 7.20 & 98.50 & 14.79 & 97.50 & 3.15 & 99.27 & 7.88 & 98.43 & 15.41 & 97.17 \\
    & + $\approach(\mu, \sigma)$ & 23.90 & 96.14 & 38.24 & 92.82 & 8.25 & 98.39 & 16.76 & 97.23 & 2.49 & 99.39 & 8.57 & 98.36 & 16.37 & 97.05 \\
    \cmidrule(lr){2-16}

    & GradNorm                   & 39.84 & 93.96 & 45.39 & 90.54 & 16.06 & 97.28 & 49.49 & 88.06 & 2.15 & 99.26 & 14.03 & 97.48 & 27.83 & 94.43 \\
    & + $\approach(m)$           & 13.48 & 97.51 & 40.32 & 90.89 & 9.21 & 98.28 & 12.23 & 97.64 & 0.45 & 99.90 & 10.53 & 98.01 & 14.37 & 97.04 \\
    & + $\approach(\mu, \sigma)$ & 13.40 & 97.52 & 41.66 & 90.81 & 10.04 & 98.15 & 13.35 & 97.44 & 0.14 & 99.93 & 11.19 & 97.90 & 14.96 & 96.96 \\
    \cmidrule(lr){2-16}

    & KNN                        & 31.04 & 95.32 & 48.66 & 89.66 & 22.41 & 96.43 & 33.35 & 94.48 & 4.64 & 99.11 & 21.08 & 96.69 & 26.86 & 95.28 \\
    & + $\approach(m)$           & 44.92 & 92.88 & 38.51 & 92.38 & 21.49 & 96.48 & 27.16 & 95.43 & 6.59 & 98.86 & 18.68 & 96.86 & 26.23 & 95.48 \\
    & + $\approach(\mu, \sigma)$ & 32.52 & 94.83 & 39.33 & 92.05 & 17.89 & 96.97 & 22.43 & 96.14 & 5.19 & 99.10 & 16.00 & 97.25 & 22.23 & 96.06 \\
    \cmidrule(lr){2-16}

    & ReAct                      & 31.97 & 95.25 & 39.12 & 91.96 & 10.03 & 97.95 & 37.18 & 93.68 & 1.71 & 99.36 & 9.08 & 98.08 & 21.51 & 96.05 \\
    & + $\approach(m)$           & 24.58 & 95.93 & 36.02 & 93.13 & 7.37 & 98.48 & 15.32 & 97.41 & 3.38 & 99.25 & 8.18 & 98.41 & 15.81 & 97.10 \\
    & + $\approach(\mu, \sigma)$ & 23.66 & 96.09 & 38.01 & 92.72 & 8.09 & 98.38 & 17.00 & 97.16 & 2.59 & 99.38 & 8.51 & 98.36 & 16.31 & 97.01 \\
    \cmidrule(lr){2-16}

    & DICE                       & 23.87 & 96.29 & 43.98 & 90.89 & 9.19 & 98.22 & 29.06 & 94.70 & 0.20 & 99.92 & 10.04 & 98.12 & 19.39 & 96.36 \\
    & + $\approach(m)$           & 14.99 & 97.42 & 30.73 & 93.87 & 4.48 & 99.05 & 8.03 & 98.53 & 0.54 & 99.84 & 5.40 & 98.91 & 10.70 & 97.94 \\
    & + $\approach(\mu, \sigma)$ & 15.10 & 97.46 & 33.85 & 93.40 & 5.30 & 98.94 & 9.86 & 98.35 & 0.35 & 99.89 & 5.80 & 98.82 & 11.71 & 97.81 \\
    \cmidrule(lr){2-16}

    & ASH-S                      & 24.10 & 96.08 & 46.30 & 90.63 & 12.24 & 97.76 & 29.88 & 94.88 & 0.13 & 99.94 & 13.39 & 97.66 & 21.01 & 96.16 \\
    & + $\approach(m)$           & 19.73 & 96.50 & 33.94 & 93.39 & 7.64 & 98.45 & 11.10 & 98.00 & 1.26 & 99.63 & 8.73 & 98.32 & 13.73 & 97.38 \\
    & + $\approach(\mu, \sigma)$ & 19.15 & 96.68 & 36.41 & 92.97 & 8.92 & 98.32 & 12.22 & 97.88 & 0.83 & 99.71 & 9.42 & 98.22 & 14.49 & 97.30 \\
    \cmidrule(lr){2-16}

    & SCALE                       & 21.51 & 96.45 & 44.14 & 91.24 & 13.64 & 97.59 & 32.93 & 94.35 & 0.08 & 99.94 & 13.88 & 97.56 & 21.03 & 96.19  \\
    & + $\approach(m) $           & 19.04 & 96.60 & 33.45 & 93.45 & 7.20 & 98.49 & 11.72 & 97.90 & 1.01 & 99.66 & 8.04 & 98.38 & 13.41 & 97.41 \\
    & + $\approach({\mu,\sigma})$ & 19.04 & 96.66 & 36.78 & 92.99 & 8.80 & 98.31 & 13.48 & 97.74 & 0.70 & 99.72 & 9.28 & 98.21 & 14.68 & 97.27 \\
    \bottomrule
    \end{tabular}}
    \label{table: ResNet-18_SiLU_CIFAR-10_combination}
    \end{table}
\end{landscape}

\begin{landscape}
    \begin{table}[!htbp]
        \centering
        \caption{Detailed results of post-hoc methods combined with $\approach$ using \textbf{ResNet-18} with the \textbf{SiLU} activation function instead of \textbf{ReLU}, trained on the CIFAR dataset. $\boldsymbol{\uparrow}$ indicates that higher values are better; $\boldsymbol{\downarrow}$ indicates that lower values are better. The symbols denote the statistics used: $\mu$ (mean), $\sigma$ (standard deviation), and $m$ (maximum).}
        \resizebox{\linewidth}{!}{
        \begin{tabular}{c l  cc cc cc cc cc cc cc }
        \toprule
        \multirow{2}{*}{\textbf{Dataset}} & \multirow{2}{*}{\textbf{Combined Method}} & \multicolumn{2}{c}{\textbf{SVHN}} & \multicolumn{2}{c}{\textbf{Place365}} & \multicolumn{2}{c}{\textbf{iSUN}} & \multicolumn{2}{c}{\textbf{Textures}} & \multicolumn{2}{c}{\textbf{LSUN-c}}  & \multicolumn{2}{c}{\textbf{LSUN-r}} & \multicolumn{2}{c}{\textbf{Average}}  \\
        \cmidrule(lr){3-4} \cmidrule(lr){5-6} \cmidrule(lr){7-8} \cmidrule(lr){9-10} \cmidrule(lr){11-12} \cmidrule(lr){13-14} \cmidrule(lr){15-16}
        && \textbf{FPR95} $\downarrow$ & \textbf{AUROC} $\uparrow$ & \textbf{FPR95} $\downarrow$ & \textbf{AUROC} $\uparrow$ & \textbf{FPR95} $\downarrow$ & \textbf{AUROC} $\uparrow$ & \textbf{FPR95} $\downarrow$ & \textbf{AUROC} $\uparrow$ & \textbf{FPR95} $\downarrow$ & \textbf{AUROC} $\uparrow$ & \textbf{FPR95} $\downarrow$ & \textbf{AUROC} $\uparrow$ & \textbf{FPR95} $\downarrow$ & \textbf{AUROC} $\uparrow$ \\
        \midrule
            \multirow{24}{*}{CIFAR-100}
            & MSP                        & 77.86 & 79.19 & 83.80 & 74.05 & 91.19 & 63.67 & 85.76 & 73.19 & 59.92 & 87.22 & 89.41 & 65.42 & 81.32 & 73.79 \\
            & + $\approach(m)$           & 80.46 & 82.22 & 88.45 & 70.49 & 90.27 & 66.73 & 83.95 & 76.98 & 76.11 & 83.41 & 90.84 & 67.03 & 85.01 & 74.48 \\
            & + $\approach(\mu, \sigma)$ & 79.12 & 82.26 & 87.15 & 69.95 & 91.24 & 63.77 & 83.92 & 75.93 & 74.55 & 83.71 & 90.77 & 64.41 & 84.46 & 73.34 \\
            \cmidrule(lr){2-16}
            
            & ODIN                       & 68.44 & 88.58 & 81.12 & 75.81 & 80.93 & 76.74 & 83.46 & 76.82 & 17.55 & 97.00 & 75.89 & 80.15 & 67.90 & 82.52 \\
            & + $\approach(m)$           & 63.88 & 89.67 & 80.31 & 76.13 & 83.00 & 76.10 & 81.31 & 77.75 & 20.49 & 96.51 & 78.92 & 79.34 & 67.99 & 82.58 \\
            & + $\approach(\mu, \sigma)$ & 63.93 & 89.69 & 80.60 & 76.02 & 82.22 & 76.39 & 81.65 & 77.62 & 19.72 & 96.70 & 78.11 & 79.63 & 67.70 & 82.68 \\
            \cmidrule(lr){2-16}
            
            & Energy                     & 70.16 & 88.22 & 81.62 & 75.39 & 87.89 & 71.90 & 83.94 & 75.59 & 18.58 & 96.74 & 84.22 & 75.60 & 71.07 & 80.57 \\
            & + $\approach(m)$           & 25.26 & 95.92 & 77.90 & 78.86 & 69.87 & 86.94 & 38.67 & 92.50 & 11.95 & 97.97 & 69.65 & 87.19 & 48.88 & 89.90 \\
            & + $\approach(\mu, \sigma)$ & 25.38 & 95.97 & 78.64 & 77.80 & 76.25 & 83.98 & 45.50 & 91.10 & 10.54 & 98.19 & 75.38 & 84.73 & 51.95 & 88.63 \\
            \cmidrule(lr){2-16}
            
            & GradNorm                   & 74.71 & 83.17 & 84.84 & 67.20 & 92.67 & 56.34 & 88.81 & 60.44 & 5.43 & 98.77 & 89.90 & 62.14 & 72.73 & 71.34 \\
            & + $\approach(m)$           & 21.87 & 95.65 & 93.01 & 62.14 & 74.70 & 85.27 & 33.67 & 91.64 & 9.30 & 98.29 & 79.93 & 84.14 & 52.08 & 86.19 \\
            & + $\approach(\mu, \sigma)$ & 20.94 & 95.89 & 93.45 & 59.75 & 80.80 & 82.56 & 36.84 & 90.80 & 6.79 & 98.65 & 83.55 & 81.57 & 53.73 & 84.87 \\
            \cmidrule(lr){2-16}
            
            & KNN                        & 80.61 & 84.30 & 85.04 & 69.47 & 93.00 & 63.10 & 82.89 & 76.37 & 70.35 & 74.81 & 91.08 & 65.98 & 83.83 & 72.34 \\
            & + $\approach(m)$           & 11.10 & 97.82 & 81.97 & 74.56 & 59.07 & 87.79 & 24.15 & 94.82 & 23.22 & 93.51 & 62.79 & 87.50 & 43.72 & 89.33 \\
            & + $\approach(\mu, \sigma)$ & 11.78 & 97.76 & 82.63 & 73.36 & 66.16 & 83.77 & 27.68 & 94.17 & 31.88 & 90.62 & 69.33 & 83.70 & 48.24 & 87.23 \\
            \cmidrule(lr){2-16}
            
            & ReAct                      & 54.12 & 90.84 & 80.64 & 76.61 & 66.98 & 88.00 & 66.72 & 86.45 & 27.90 & 94.96 & 63.06 & 88.84 & 59.90 & 87.62 \\
            & + $\approach(m)$           & 23.81 & 96.03 & 77.76 & 78.79 & 68.00 & 87.57 & 38.30 & 92.61 & 12.51 & 97.86 & 67.60 & 87.52 & 48.00 & 90.06 \\
            & + $\approach(\mu, \sigma)$ & 22.80 & 96.29 & 79.00 & 77.97 & 72.74 & 86.03 & 43.69 & 91.76 & 11.38 & 98.08 & 72.16 & 86.20 & 50.29 & 89.39 \\
            \cmidrule(lr){2-16}
            
            & DICE                       & 19.78 & 96.53 & 84.56 & 72.27 & 84.66 & 71.12 & 59.17 & 82.42 & 1.98 & 99.58 & 82.31 & 74.30 & 55.41 & 82.70 \\
            & + $\approach(m)$           & 12.52 & 97.44 & 82.47 & 76.44 & 50.05 & 91.72 & 23.28 & 94.97 & 3.46 & 99.16 & 52.83 & 91.29 & 37.44 & 91.84 \\
            & + $\approach(\mu, \sigma)$ & 10.68 & 97.87 & 81.90 & 75.76 & 59.23 & 89.12 & 25.99 & 94.39 & 2.21 & 99.45 & 59.88 & 89.16 & 39.98 & 90.96 \\
            \cmidrule(lr){2-16}
            
            & ASH-S                      & 13.60 & 97.39 & 85.86 & 70.56 & 86.41 & 73.43 & 53.72 & 86.43 & 1.96 & 99.56 & 85.89 & 74.29 & 54.57 & 83.61 \\
            & + $\approach(m)$           & 8.18 & 98.41 & 81.23 & 75.79 & 49.45 & 91.44 & 20.20 & 95.87 & 5.58 & 98.90 & 54.73 & 90.31 & 36.56 & 91.79 \\
            & + $\approach(\mu, \sigma)$ & 7.14 & 98.64 & 81.11 & 75.19 & 54.30 & 90.30 & 21.05 & 95.80 & 5.02 & 99.02 & 58.78 & 89.22 & 37.90 & 91.36 \\
            \cmidrule(lr){2-16}
            
            & SCALE                       & 14.28 & 97.36 & 85.50 & 71.18 & 83.92 & 76.62 & 52.38 & 87.33 & 1.86 & 99.57 & 82.56 & 77.55 & 53.42 & 84.94 \\
            & + $\approach(m)$            & 9.47 & 98.16 & 83.17 & 75.36 & 56.46 & 90.05 & 22.59 & 95.39 & 6.14 & 98.78 & 63.04 & 88.62 & 40.14 & 91.06 \\
            & + $\approach({\mu,\sigma})$ & 7.94 & 98.44 & 82.81 & 75.10 & 60.86 & 88.86 & 22.54 & 95.43 & 5.36 & 98.96 & 65.95 & 87.44 & 40.91 & 90.70 \\
        \bottomrule
        \end{tabular}}
        \label{table: ResNet-18_SiLU_CIFAR-100_combination}
    \end{table}
\end{landscape}

\begin{landscape}
    \begin{table}[!htbp]
        \centering
        \caption{Detailed results of post-hoc methods combined with $\approach$ using \textbf{ResNet-18} with the \textbf{GeLU} activation function instead of \textbf{ReLU}, trained on the CIFAR dataset. $\boldsymbol{\uparrow}$ indicates that higher values are better; $\boldsymbol{\downarrow}$ indicates that lower values are better. The symbols denote the statistics used: $\mu$ (mean), $\sigma$ (standard deviation), and $m$ (maximum).}
        \resizebox{\linewidth}{!}{
        \begin{tabular}{c l  cc cc cc cc cc cc cc }
        \toprule
        \multirow{2}{*}{\textbf{Dataset}} & \multirow{2}{*}{\textbf{Combined Method}} & \multicolumn{2}{c}{\textbf{SVHN}} & \multicolumn{2}{c}{\textbf{Place365}} & \multicolumn{2}{c}{\textbf{iSUN}} & \multicolumn{2}{c}{\textbf{Textures}} & \multicolumn{2}{c}{\textbf{LSUN-c}}  & \multicolumn{2}{c}{\textbf{LSUN-r}} & \multicolumn{2}{c}{\textbf{Average}}  \\
        \cmidrule(lr){3-4} \cmidrule(lr){5-6} \cmidrule(lr){7-8} \cmidrule(lr){9-10} \cmidrule(lr){11-12} \cmidrule(lr){13-14} \cmidrule(lr){15-16}
        && \textbf{FPR95} $\downarrow$ & \textbf{AUROC} $\uparrow$ & \textbf{FPR95} $\downarrow$ & \textbf{AUROC} $\uparrow$ & \textbf{FPR95} $\downarrow$ & \textbf{AUROC} $\uparrow$ & \textbf{FPR95} $\downarrow$ & \textbf{AUROC} $\uparrow$ & \textbf{FPR95} $\downarrow$ & \textbf{AUROC} $\uparrow$ & \textbf{FPR95} $\downarrow$ & \textbf{AUROC} $\uparrow$ & \textbf{FPR95} $\downarrow$ & \textbf{AUROC} $\uparrow$ \\
        \midrule
            \multirow{24}{*}{CIFAR-10}
            & MSP                        & 50.39 & 93.14 & 61.96 & 88.84 & 50.92 & 93.16 & 55.20 & 91.42 & 26.45 & 96.47 & 50.17 & 93.24 & 49.18 & 92.71 \\
            & + $\approach(m)$           & 52.78 & 88.33 & 67.64 & 79.64 & 56.22 & 88.30 & 53.99 & 87.43 & 34.41 & 93.66 & 55.69 & 88.65 & 53.46 & 87.67 \\
            & + $\approach(\mu, \sigma)$ & 51.87 & 92.81 & 67.41 & 86.63 & 57.01 & 92.50 & 55.55 & 91.51 & 31.12 & 95.94 & 56.23 & 92.42 & 53.20 & 91.97 \\
            \cmidrule(lr){2-16}
            
            & ODIN                       & 23.13 & 96.34 & 36.43 & 92.41 &  8.32 & 98.32 & 31.37 & 94.67 & 1.89 & 99.50 & 7.86 & 98.39 & 18.17 & 96.61 \\
            & + $\approach(m)$           & 23.55 & 96.25 & 38.39 & 92.21 & 10.44 & 97.97 & 32.80 & 94.51 & 2.40 & 99.33 & 9.30 & 98.06 & 19.48 & 96.39 \\
            & + $\approach(\mu, \sigma)$ & 23.54 & 96.28 & 38.32 & 92.23 &  9.78 & 98.08 & 32.29 & 94.55 & 2.26 & 99.38 & 8.87 & 98.16 & 19.18 & 96.45 \\
            \cmidrule(lr){2-16}
            
            & Energy                     & 25.86 & 95.92 & 37.20 & 92.06 & 14.97 & 97.38 & 35.00 & 93.97 & 2.42 & 99.35 & 13.29 & 97.53 & 21.46 & 96.04 \\
            & + $\approach(m)$           & 17.87 & 96.89 & 38.30 & 92.75 & 10.24 & 98.14 & 14.56 & 97.55 & 3.54 & 99.22 & 10.11 & 98.13 & 15.77 & 97.11 \\
            & + $\approach(\mu, \sigma)$ & 16.92 & 97.07 & 39.17 & 92.27 & 11.05 & 97.99 & 16.70 & 97.27 & 2.59 & 99.36 & 10.98 & 97.97 & 16.23 & 96.99 \\
            \cmidrule(lr){2-16}
            
            & GradNorm                   & 31.62 & 93.93 & 43.38 & 90.62 & 19.17 & 96.93 & 43.01 & 91.10 & 3.25 & 99.14 & 18.16 & 97.00 & 26.43 & 94.79 \\
            & + $\approach(m)$           &  7.54 & 98.53 & 46.45 & 89.94 &  6.30 & 98.72 &  9.04 & 98.24 & 0.60 & 99.83 &  7.45 & 98.56 & 12.90 & 97.31 \\
            & + $\approach(\mu, \sigma)$ &  7.88 & 98.50 & 48.54 & 89.59 &  7.15 & 98.61 & 10.41 & 98.04 & 0.33 & 99.89 &  8.73 & 98.38 & 13.84 & 97.17 \\
            \cmidrule(lr){2-16}
            
            & KNN                        & 12.71 & 97.58 & 43.21 & 90.47 & 35.81 & 94.35 & 31.79 & 94.4 & 2.89 & 99.34 & 33.56 & 94.73 & 26.66 & 95.14 \\
            & + $\approach(m)$           & 16.23 & 97.25 & 38.8 & 92.25 & 22.03 & 96.38 & 22.43 & 95.85 & 4.62 & 99.17 & 20.57 & 96.58 & 20.78 & 96.25 \\
            & + $\approach(\mu, \sigma)$ & 13.49 & 97.59 & 38.06 & 92.06 & 23.29 & 96.13 & 23.16 & 95.74 & 4.54 & 99.16 & 21.06 & 96.37 & 20.6 & 96.17 \\
            \cmidrule(lr){2-16}
            
            & ReAct                      & 23.34 & 96.30 & 35.31 & 92.63 & 12.39 & 97.61 & 29.27 & 94.89 & 2.80 & 99.28 & 10.85 & 97.83 & 18.99 & 96.42 \\
            & + $\approach(m)$           & 19.13 & 96.70 & 39.67 & 92.23 & 11.52 & 97.98 & 16.19 & 97.36 & 3.96 & 99.15 & 11.23 & 97.98 & 16.95 & 96.90 \\
            & + $\approach(\mu, \sigma)$ & 17.52 & 96.97 & 40.27 & 91.89 & 12.04 & 97.87 & 17.39 & 97.12 & 2.91 & 99.32 & 11.69 & 97.87 & 16.97 & 96.84 \\
            \cmidrule(lr){2-16}
            
            & DICE                       & 14.95 & 97.56 & 41.83 & 90.85 & 8.62 & 98.36 & 23.23 & 95.78 & 0.31 & 99.91 & 10.00 & 98.13 & 16.49 & 96.76\\
            & + $\approach(m)$           &  8.81 & 98.31 & 35.53 & 92.91 & 4.95 & 98.99 &  7.27 & 98.67 & 0.99 & 99.77 &  5.87 & 98.86 & 10.57 & 97.92\\
            & + $\approach(\mu, \sigma)$ &  8.80 & 98.39 & 38.67 & 92.32 & 6.39 & 98.82 &  9.24 & 98.43 & 0.76 & 99.84 &  7.44 & 98.66 & 11.88 & 97.74\\
            \cmidrule(lr){2-16}
            
            & ASH-S                      & 11.31 & 98.05 & 47.16 & 89.65 & 9.13 & 98.29 & 21.77 & 96.29 & 0.27 & 99.92 & 10.99 & 98.04 & 16.77 & 96.71 \\
            & + $\approach(m)$           &  8.16 & 98.44 & 40.97 & 91.67 & 8.21 & 98.46 &  9.06 & 98.34 & 1.28 & 99.69 &  8.76 & 98.36 & 12.74 & 97.49 \\
            & + $\approach(\mu, \sigma)$ &  7.17 & 98.64 & 42.48 & 91.10 & 8.86 & 98.40 &  9.86 & 98.24 & 0.98 & 99.77 &  9.18 & 98.30 & 13.09 & 97.41 \\
            \cmidrule(lr){2-16}
            
            & SCALE                      & 9.53 & 98.32 & 45.92 & 90.26 & 8.61 & 98.40 & 20.21 & 96.63 & 0.28 & 99.93 & 10.00 & 98.21 & 15.76 & 96.96 \\
            & + $\approach(m)$           & 7.85 & 98.51 & 42.30 & 91.35 & 7.16 & 98.61 &  9.52 & 98.30 & 1.06 & 99.73 &  8.26 & 98.50 & 12.69 & 97.50 \\
            & + $\approach(\mu, \sigma)$ & 7.05 & 98.62 & 44.12 & 90.81 & 8.12 & 98.49 & 10.21 & 98.13 & 0.84 & 99.78 &  8.80 & 98.38 & 13.19 & 97.37 \\
        \bottomrule
        \end{tabular}}
        \label{table: ResNet-18_GeLU_CIFAR-10_combination}
    \end{table}
\end{landscape}

\begin{landscape}
    \begin{table}[!htbp]
        \centering
        \caption{Detailed results of post-hoc methods combined with $\approach$ using \textbf{ResNet-18} with the \textbf{GeLU} activation function instead of \textbf{ReLU}, trained on the CIFAR dataset. $\boldsymbol{\uparrow}$ indicates that higher values are better; $\boldsymbol{\downarrow}$ indicates that lower values are better. The symbols denote the statistics used: $\mu$ (mean), $\sigma$ (standard deviation), and $m$ (maximum).}
        \resizebox{\linewidth}{!}{
        \begin{tabular}{c l  cc cc cc cc cc cc cc }
        \toprule
        \multirow{2}{*}{\textbf{Dataset}} & \multirow{2}{*}{\textbf{Combined Method}} & \multicolumn{2}{c}{\textbf{SVHN}} & \multicolumn{2}{c}{\textbf{Place365}} & \multicolumn{2}{c}{\textbf{iSUN}} & \multicolumn{2}{c}{\textbf{Textures}} & \multicolumn{2}{c}{\textbf{LSUN-c}}  & \multicolumn{2}{c}{\textbf{LSUN-r}} & \multicolumn{2}{c}{\textbf{Average}}  \\
        \cmidrule(lr){3-4} \cmidrule(lr){5-6} \cmidrule(lr){7-8} \cmidrule(lr){9-10} \cmidrule(lr){11-12} \cmidrule(lr){13-14} \cmidrule(lr){15-16}
        && \textbf{FPR95} $\downarrow$ & \textbf{AUROC} $\uparrow$ & \textbf{FPR95} $\downarrow$ & \textbf{AUROC} $\uparrow$ & \textbf{FPR95} $\downarrow$ & \textbf{AUROC} $\uparrow$ & \textbf{FPR95} $\downarrow$ & \textbf{AUROC} $\uparrow$ & \textbf{FPR95} $\downarrow$ & \textbf{AUROC} $\uparrow$ & \textbf{FPR95} $\downarrow$ & \textbf{AUROC} $\uparrow$ & \textbf{FPR95} $\downarrow$ & \textbf{AUROC} $\uparrow$ \\
        \midrule
            \multirow{24}{*}{CIFAR-100}
            & MSP                        & 78.62 & 79.06 & 81.69 & 75.92 & 84.16 & 73.23 & 84.38 & 74.94 & 66.96 & 84.56 & 83.59 & 73.34 & 79.9 & 76.84 \\
            & + $\approach(m)$           & 81.77 & 81.91 & 87.51 & 73.27 & 86.85 & 73.49 & 85.12 & 77.90 & 80.28 & 81.81 & 87.97 & 72.51 & 84.92 & 76.82 \\
            & + $\approach(\mu, \sigma)$ & 81.07 & 81.10 & 85.23 & 74.62 & 86.30 & 73.61 & 84.98 & 77.09 & 76.13 & 83.08 & 86.87 & 73.10 & 83.43 & 77.10 \\
            \cmidrule(lr){2-16}
            
            & ODIN                       & 76.63 & 84.20 & 75.95 & 79.12 & 63.44 & 85.05 & 83.17 & 77.11 & 30.34 & 94.69 & 60.94 & 85.72 & 65.08 & 84.32 \\
            & + $\approach(m)$           & 73.18 & 84.88 & 75.75 & 79.18 & 65.55 & 84.51 & 82.52 & 77.58 & 33.33 & 94.21 & 63.40 & 85.15 & 65.62 & 84.25 \\
            & + $\approach(\mu, \sigma)$ & 73.16 & 84.79 & 75.60 & 79.19 & 64.26 & 84.83 & 82.02 & 77.59 & 31.66 & 94.47 & 62.03 & 85.47 & 64.79 & 84.39 \\
            \cmidrule(lr){2-16}
            
            & Energy                     & 74.79 & 85.11 & 77.58 & 78.51 & 73.87 & 81.18 & 85.83 & 76.06 & 32.66 & 94.19 & 71.33 & 82.03 & 69.34 & 82.85 \\
            & + $\approach(m)$           & 39.80 & 93.86 & 74.48 & 79.88 & 56.84 & 88.89 & 49.41 & 90.85 & 26.70 & 95.47 & 59.41 & 88.07 & 51.11 & 89.50 \\
            & + $\approach(\mu, \sigma)$ & 44.76 & 93.16 & 76.21 & 78.94 & 61.28 & 87.57 & 55.71 & 89.46 & 26.89 & 95.40 & 62.89 & 86.94 & 54.62 & 88.58 \\
            \cmidrule(lr){2-16}
            
            & GradNorm                   & 83.48 & 70.22 & 84.86 & 67.56 & 84.71 & 71.84 & 90.46 & 59.33 &  6.19 & 98.76 & 84.54 & 70.94 & 72.37 & 73.11 \\
            & + $\approach(m)$           & 51.13 & 90.98 & 95.22 & 58.24 & 78.12 & 81.94 & 44.52 & 87.52 & 18.82 & 96.87 & 85.28 & 78.77 & 62.18 & 82.39 \\
            & + $\approach(\mu, \sigma)$ & 61.23 & 88.79 & 95.70 & 56.37 & 81.71 & 79.26 & 51.88 & 84.95 & 16.53 & 97.26 & 87.67 & 75.99 & 65.79 & 80.44 \\
            \cmidrule(lr){2-16}
            
            & KNN                        & 58.46 & 89.78 & 78.34 & 75.19 & 59.98 & 83.86 & 57.39 & 87.28 & 61.03 & 81.10 & 57.49 & 85.18 & 62.12 & 83.73 \\
            & + $\approach(m)$           & 27.92 & 95.10 & 77.89 & 77.51 & 42.02 & 90.91 & 28.62 & 94.18 & 44.33 & 88.11 & 42.50 & 91.18 & 43.88 & 89.50 \\
            & + $\approach(\mu, \sigma)$ & 29.13 & 94.91 & 77.52 & 76.88 & 43.72 & 89.98 & 29.57 & 93.94 & 50.41 & 86.15 & 43.96 & 90.42 & 45.72 & 88.71 \\
            \cmidrule(lr){2-16}
            
            & ReAct                      & 53.99 & 91.05 & 75.44 & 79.48 & 61.92 & 86.70 & 62.89 & 87.43 & 40.12 & 92.50 & 60.44 & 86.84 & 59.13 & 87.33 \\
            & + $\approach(m)$           & 37.69 & 94.01 & 75.32 & 79.83 & 58.20 & 88.29 & 48.71 & 90.72 & 30.72 & 94.51 & 60.67 & 87.41 & 51.88 & 89.13 \\
            & + $\approach(\mu, \sigma)$ & 39.50 & 93.76 & 76.06 & 79.51 & 59.92 & 87.77 & 51.65 & 90.21 & 30.35 & 94.64 & 62.55 & 87.00 & 53.34 & 88.81 \\
            \cmidrule(lr){2-16}
            
            & DICE                       & 60.69 & 87.90 & 77.47 & 78.15 & 65.79 & 84.57 & 76.76 & 78.21 & 3.72 & 99.23 & 65.52 & 84.68 & 58.32 & 85.46 \\
            & + $\approach(m)$           & 17.29 & 96.79 & 77.85 & 79.15 & 36.28 & 93.76 & 25.73 & 94.80 & 4.70 & 99.04 & 42.54 & 92.30 & 34.06 & 92.64 \\
            & + $\approach(\mu, \sigma)$ & 19.65 & 96.36 & 76.64 & 78.57 & 40.40 & 92.61 & 30.48 & 93.86 & 3.76 & 99.21 & 45.66 & 91.36 & 36.10 & 91.99 \\
            \cmidrule(lr){2-16}
            
            & ASH-S                      & 41.09 & 93.22 & 78.03 & 77.23 & 62.52 & 85.13 & 60.82 & 86.32 & 4.50 & 99.11 & 63.38 & 84.62 & 51.72 & 87.61 \\
            & + $\approach(m)$           & 10.47 & 97.97 & 77.00 & 77.86 & 42.64 & 91.57 & 23.17 & 95.47 & 8.56 & 98.42 & 47.85 & 90.26 & 34.95 & 91.93 \\
            & + $\approach(\mu, \sigma)$ & 11.67 & 97.86 & 77.15 & 77.59 & 45.14 & 90.84 & 25.05 & 95.17 & 8.91 & 98.42 & 49.04 & 89.83 & 36.16 & 91.62 \\
            \cmidrule(lr){2-16}
            
            & SCALE                      & 34.69 & 94.34 & 78.92 & 76.39 & 60.09 & 85.95 & 51.83 & 88.85 & 4.40 & 99.14 & 62.80 & 84.79 & 48.79 & 88.24 \\
            & + $\approach(m)$           & 11.35 & 97.82 & 82.25 & 75.24 & 48.31 & 89.84 & 24.34 & 94.69 & 9.45 & 98.24 & 56.41 & 87.72 & 38.69 & 90.59 \\
            & + $\approach(\mu, \sigma)$ & 11.44 & 97.83 & 81.38 & 75.53 & 48.81 & 89.58 & 24.34 & 94.69 & 9.08 & 98.31 & 55.45 & 87.65 & 38.42 & 90.60 \\
        \bottomrule
        \end{tabular}}
        \label{table: ResNet-18_GeLU_CIFAR-100_combination}
    \end{table}
\end{landscape}

\begin{landscape}
    \begin{table}[!htbp]
        \centering
        \caption{Detailed results of post-hoc methods combined with $\approach$ using \textbf{ResNet-18} with the \textbf{TanH} activation function instead of \textbf{ReLU}, trained on the CIFAR dataset. $\boldsymbol{\uparrow}$ indicates that higher values are better; $\boldsymbol{\downarrow}$ indicates that lower values are better. The symbols denote the statistics used: $\mu$ (mean), $\sigma$ (standard deviation), and $m$ (maximum).}
        \resizebox{\linewidth}{!}{
        \begin{tabular}{c l  cc cc cc cc cc cc cc }
        \toprule
        \multirow{2}{*}{\textbf{Dataset}} & \multirow{2}{*}{\textbf{Combined Method}} & \multicolumn{2}{c}{\textbf{SVHN}} & \multicolumn{2}{c}{\textbf{Place365}} & \multicolumn{2}{c}{\textbf{iSUN}} & \multicolumn{2}{c}{\textbf{Textures}} & \multicolumn{2}{c}{\textbf{LSUN-c}}  & \multicolumn{2}{c}{\textbf{LSUN-r}} & \multicolumn{2}{c}{\textbf{Average}}  \\
        \cmidrule(lr){3-4} \cmidrule(lr){5-6} \cmidrule(lr){7-8} \cmidrule(lr){9-10} \cmidrule(lr){11-12} \cmidrule(lr){13-14} \cmidrule(lr){15-16}
        && \textbf{FPR95} $\downarrow$ & \textbf{AUROC} $\uparrow$ & \textbf{FPR95} $\downarrow$ & \textbf{AUROC} $\uparrow$ & \textbf{FPR95} $\downarrow$ & \textbf{AUROC} $\uparrow$ & \textbf{FPR95} $\downarrow$ & \textbf{AUROC} $\uparrow$ & \textbf{FPR95} $\downarrow$ & \textbf{AUROC} $\uparrow$ & \textbf{FPR95} $\downarrow$ & \textbf{AUROC} $\uparrow$ & \textbf{FPR95} $\downarrow$ & \textbf{AUROC} $\uparrow$ \\
        \midrule
            \multirow{24}{*}{CIFAR-10}
            & MSP                        & 53.49 & 93.72 & 61.64 & 88.73 & 49.78 & 93.26 & 58.19 & 90.73 & 19.17 & 97.33 & 49.12 & 93.40 & 48.57 & 92.86 \\
            & + $\approach(m)$           & 43.20 & 92.18 & 64.55 & 83.09 & 56.37 & 89.17 & 53.85 & 88.57 & 28.87 & 94.98 & 54.52 & 89.81 & 50.23 & 89.64 \\
            & + $\approach(\mu, \sigma)$ & 41.65 & 94.40 & 64.23 & 87.51 & 53.60 & 92.48 & 53.83 & 91.45 & 23.07 & 96.69 & 52.59 & 92.63 & 48.16 & 92.53 \\
            \cmidrule(lr){2-16}
            
            & ODIN                       & 19.57 & 96.86 & 36.54 & 92.47 & 12.83 & 97.80 & 35.83 & 93.70 & 0.90 & 99.68 & 10.13 & 98.04 & 19.30 & 96.42 \\
            & + $\approach(m)$           & 20.67 & 96.69 & 38.16 & 92.32 & 15.31 & 97.43 & 36.10 & 93.67 & 1.26 & 99.55 & 12.69 & 97.68 & 20.70 & 96.22 \\
            & + $\approach(\mu, \sigma)$ & 20.02 & 96.77 & 37.78 & 92.34 & 14.39 & 97.56 & 35.80 & 93.70 & 1.09 & 99.60 & 11.67 & 97.80 & 20.12 & 96.30 \\
            \cmidrule(lr){2-16}
            
            & Energy                     & 29.09 & 95.84 & 37.49 & 92.25 & 24.64 & 96.32 & 42.41 & 92.42 & 1.38 & 99.50 & 20.81 & 96.77 & 25.97 & 95.52 \\
            & + $\approach(m)$           & 16.55 & 97.12 & 37.24 & 93.03 & 12.73 & 97.81 & 19.47 & 96.94 & 4.10 & 99.20 & 11.89 & 97.89 & 17.00 & 97.00 \\
            & + $\approach(\mu, \sigma)$ & 14.29 & 97.43 & 37.45 & 92.57 & 12.63 & 97.71 & 19.47 & 96.74 & 2.61 & 99.41 & 11.77 & 97.83 & 16.37 & 96.95 \\
            \cmidrule(lr){2-16}
            
            & GradNorm                   & 34.70 & 94.91 & 43.81 & 91.25 & 33.60 & 94.86 & 50.94 & 88.27 & 2.71 & 99.12 & 29.51 & 95.48 & 32.55 & 93.98 \\
            & + $\approach(m)$           &  7.29 & 98.60 & 42.53 & 90.86 & 11.19 & 97.94 & 11.33 & 97.82 & 0.49 & 99.84 & 14.37 & 97.42 & 14.53 & 97.08 \\
            & + $\approach(\mu, \sigma)$ &  6.14 & 98.78 & 46.15 & 90.23 & 13.65 & 97.51 & 13.97 & 97.37 & 0.21 & 99.90 & 16.90 & 97.04 & 16.17 & 96.80 \\
            \cmidrule(lr){2-16}
            
            & KNN                        & 16.20 & 97.10 & 41.11 & 91.36 & 36.67 & 94.47 & 37.11 & 93.69 & 1.83 & 99.55 & 32.14 & 95.05 & 27.51 & 95.2 \\
            & + $\approach(m)$           & 32.90 & 94.79 & 40.34 & 92.10 & 28.68 & 95.21 & 39.89 & 92.95 & 7.11 & 98.72 & 22.87 & 96.09 & 28.63 & 94.98 \\
            & + $\approach(\mu, \sigma)$ & 19.48 & 96.68 & 38.55 & 92.25 & 23.93 & 95.99 & 31.81 & 94.60 & 4.41 & 99.25 & 19.64 & 96.57 & 22.97 & 95.89 \\
            \cmidrule(lr){2-16}
            
            & ReAct                      & 29.34 & 95.83 & 37.00 & 92.41 & 23.99 & 96.39 & 41.70 & 92.86 & 1.44 & 99.49 & 20.07 & 96.82 & 25.59 & 95.63 \\
            & + $\approach(m)$           & 16.44 & 97.06 & 37.23 & 92.75 & 13.10 & 97.70 & 19.66 & 96.79 & 4.19 & 99.16 & 12.15 & 97.79 & 17.13 & 96.88\\
            & + $\approach(\mu, \sigma)$ & 14.39 & 97.41 & 38.21 & 92.32 & 13.36 & 97.62 & 20.27 & 96.62 & 2.69 & 99.39 & 12.33 & 97.74 & 16.87 & 96.85 \\
            \cmidrule(lr){2-16}
            
            & DICE                       & 13.35 & 97.73 & 43.25 & 90.76 & 14.55 & 97.29 & 31.61 & 94.02 & 0.13 & 99.96 & 14.28 & 97.31 & 19.53 & 96.18 \\
            & + $\approach(m)$           &  7.22 & 98.62 & 34.82 & 92.92 &  6.48 & 98.68 &  8.78 & 98.39 & 0.77 & 99.81 &  7.68 & 98.48 & 10.96 & 97.82 \\
            & + $\approach(\mu, \sigma)$ &  6.37 & 98.80 & 38.80 & 92.27 &  8.37 & 98.46 & 11.97 & 98.06 & 0.41 & 99.88 &  9.15 & 98.31 & 12.51 & 97.63 \\
            \cmidrule(lr){2-16}
            
            & ASH-S                      & 14.25 & 97.57 & 49.08 & 89.60 & 23.55 & 96.22 & 34.04 & 93.92 & 0.08 & 99.96 & 23.28 & 96.19 & 24.05 & 95.58 \\
            & + $\approach(m)$           & 13.57 & 97.61 & 37.91 & 92.90 & 13.14 & 97.77 & 16.65 & 97.35 & 3.07 & 99.37 & 12.86 & 97.80 & 16.20 & 97.14 \\
            & + $\approach(\mu, \sigma)$ & 11.48 & 97.88 & 38.56 & 92.44 & 13.59 & 97.63 & 17.46 & 97.12 & 2.02 & 99.52 & 12.87 & 97.70 & 16.00 & 97.05 \\
            \cmidrule(lr){2-16}
            
            & SCALE                      & 15.22 & 97.37 & 46.92 & 90.28 & 24.95 & 96.00 & 35.98 & 93.51 & 0.09 & 99.96 & 23.84 & 96.06 & 24.50 & 95.53 \\
            & + $\approach(m)$           & 11.05 & 98.01 & 37.99 & 92.68 & 13.69 & 97.68 & 14.88 & 97.51 & 2.09 & 99.52 & 13.26 & 97.67 & 15.49 & 97.18 \\
            & + $\approach(\mu, \sigma)$ &  9.76 & 98.19 & 39.75 & 92.12 & 15.09 & 97.49 & 15.78 & 97.35 & 1.31 & 99.63 & 14.55 & 97.50 & 16.04 & 97.05 \\
        \bottomrule
        \end{tabular}}
        \label{table: ResNet-18_TanH_CIFAR-10_combination}
    \end{table}
\end{landscape}

\begin{landscape}
    \begin{table}[!htbp]
        \centering
        \caption{Detailed results of post-hoc methods combined with $\approach$ using \textbf{ResNet-18} with the \textbf{TanH} activation function instead of \textbf{ReLU}, trained on the CIFAR dataset. $\boldsymbol{\uparrow}$ indicates that higher values are better; $\boldsymbol{\downarrow}$ indicates that lower values are better. The symbols denote the statistics used: $\mu$ (mean), $\sigma$ (standard deviation), and $m$ (maximum).}
        \resizebox{\linewidth}{!}{
        \begin{tabular}{c l  cc cc cc cc cc cc cc }
        \toprule
        \multirow{2}{*}{\textbf{Dataset}} & \multirow{2}{*}{\textbf{Combined Method}} & \multicolumn{2}{c}{\textbf{SVHN}} & \multicolumn{2}{c}{\textbf{Place365}} & \multicolumn{2}{c}{\textbf{iSUN}} & \multicolumn{2}{c}{\textbf{Textures}} & \multicolumn{2}{c}{\textbf{LSUN-c}}  & \multicolumn{2}{c}{\textbf{LSUN-r}} & \multicolumn{2}{c}{\textbf{Average}}  \\
        \cmidrule(lr){3-4} \cmidrule(lr){5-6} \cmidrule(lr){7-8} \cmidrule(lr){9-10} \cmidrule(lr){11-12} \cmidrule(lr){13-14} \cmidrule(lr){15-16}
        && \textbf{FPR95} $\downarrow$ & \textbf{AUROC} $\uparrow$ & \textbf{FPR95} $\downarrow$ & \textbf{AUROC} $\uparrow$ & \textbf{FPR95} $\downarrow$ & \textbf{AUROC} $\uparrow$ & \textbf{FPR95} $\downarrow$ & \textbf{AUROC} $\uparrow$ & \textbf{FPR95} $\downarrow$ & \textbf{AUROC} $\uparrow$ & \textbf{FPR95} $\downarrow$ & \textbf{AUROC} $\uparrow$ & \textbf{FPR95} $\downarrow$ & \textbf{AUROC} $\uparrow$ \\
        \midrule
            \multirow{24}{*}{CIFAR-100}
            & MSP                        & 87.05 & 69.51 & 82.31 & 74.45 & 88.00 & 69.17 & 84.47 & 73.04 & 59.67 & 86.64 & 87.92 & 69.80 & 81.57 & 73.77\\
            & + $\approach(m)$           & 85.05 & 78.24 & 88.26 & 71.15 & 88.24 & 72.21 & 84.82 & 75.83 & 77.24 & 82.00 & 88.32 & 72.33 & 85.32 & 75.29\\
            & + $\approach(\mu, \sigma)$ & 84.97 & 76.85 & 86.16 & 72.88 & 88.50 & 71.48 & 83.53 & 75.29 & 70.09 & 84.12 & 87.96 & 71.91 & 83.54 & 75.42\\
            \cmidrule(lr){2-16}
            
            & ODIN                       & 85.98 & 77.18 & 77.28 & 78.75 & 68.37 & 84.02 & 82.66 & 77.37 & 17.27 & 97.19 & 65.69 & 84.94 & 66.21 & 83.24\\
            & + $\approach(m)$           & 83.22 & 79.16 & 78.11 & 78.74 & 72.36 & 83.08 & 82.38 & 77.62 & 22.03 & 96.50 & 70.27 & 83.93 & 68.06 & 83.17\\
            & + $\approach(\mu, \sigma)$ & 83.36 & 78.97 & 78.27 & 78.72 & 71.17 & 83.52 & 81.84 & 77.70 & 20.87 & 96.75 & 69.08 & 84.36 & 67.43 & 83.34\\
            \cmidrule(lr){2-16}
            
            & Energy                     & 82.96 & 78.79 & 79.05 & 77.46 & 79.25 & 79.29 & 86.19 & 75.27 & 19.44 & 96.73 & 77.50 & 80.44 & 70.73 & 81.33 \\
            & + $\approach(m)$           & 48.04 & 92.36 & 77.03 & 79.10 & 51.16 & 90.82 & 41.29 & 91.10 & 15.32 & 97.42 & 52.94 & 90.28 & 47.63 & 90.18 \\
            & + $\approach(\mu, \sigma)$ & 52.05 & 91.70 & 77.59 & 78.19 & 57.19 & 89.15 & 46.95 & 89.61 & 13.58 & 97.73 & 58.71 & 88.77 & 51.01 & 89.19 \\
            \cmidrule(lr){2-16}
            
            & GradNorm                   & 87.31 & 67.27 & 82.41 & 69.68 & 88.90 & 63.84 & 90.98 & 55.90 & 3.23 & 99.17 & 88.58 & 65.11 & 73.57 & 70.16\\
            & + $\approach(m)$           & 28.99 & 94.78 & 92.88 & 61.29 & 62.14 & 87.77 & 33.72 & 91.86 & 6.04 & 98.81 & 70.03 & 85.68 & 48.97 & 86.70\\
            & + $\approach(\mu, \sigma)$ & 34.47 & 93.98 & 93.66 & 58.40 & 69.47 & 84.67 & 39.31 & 90.18 & 4.14 & 99.14 & 76.60 & 82.20 & 52.94 & 84.76\\
            \cmidrule(lr){2-16}
            
            & KNN                        & 90.20 & 71.76 & 83.04 & 72.52 & 87.26 & 68.00 & 83.46 & 76.37 & 69.18 & 72.71 & 87.58 & 68.44 & 83.45 & 71.63\\
            & + $\approach(m)$           & 28.92 & 94.90 & 84.66 & 72.70 & 60.27 & 84.74 & 26.33 & 94.53 & 20.60 & 94.10 & 66.13 & 83.65 & 47.82 & 87.44\\
            & + $\approach(\mu, \sigma)$ & 38.59 & 92.76 & 82.74 & 72.80 & 63.08 & 81.85 & 29.29 & 93.74 & 30.99 & 90.29 & 67.75 & 81.03 & 52.07 & 85.41\\
            \cmidrule(lr){2-16}
            
            & ReAct                      & 79.97 & 82.33 & 78.65 & 77.59 & 73.64 & 83.33 & 78.44 & 82.16 & 20.19 & 96.52 & 70.80 & 84.20 & 66.95 & 84.35\\
            & + $\approach(m)$           & 48.28 & 92.11 & 76.71 & 78.98 & 51.37 & 90.42 & 41.19 & 91.08 & 16.19 & 97.22 & 53.47 & 89.93 & 47.87 & 89.96\\
            & + $\approach(\mu, \sigma)$ & 52.28 & 91.65 & 77.77 & 78.28 & 57.65 & 89.06 & 46.56 & 89.95 & 14.70 & 97.56 & 58.84 & 88.75 & 51.30 & 89.21\\
            \cmidrule(lr){2-16}
            
            & DICE                       & 55.30 & 87.99 & 83.98 & 73.83 & 74.80 & 81.24 & 66.67 & 79.64 & 1.01 & 99.72 & 76.67 & 80.71 & 59.74 & 83.86\\
            & + $\approach(m)$           & 14.92 & 97.15 & 82.58 & 76.28 & 36.68 & 93.69 & 24.82 & 94.69 & 2.53 & 99.35 & 43.35 & 92.54 & 34.15 & 92.28\\
            & + $\approach(\mu, \sigma)$ & 15.70 & 97.04 & 82.08 & 75.64 & 42.48 & 92.24 & 28.10 & 93.66 & 1.47 & 99.59 & 48.63 & 91.00 & 36.41 & 91.53\\
            \cmidrule(lr){2-16}
            
            & ASH-S                      & 52.40 & 89.78 & 83.15 & 73.83 & 75.43 & 80.23 & 66.01 & 81.85 & 1.29 & 99.70 & 78.15 & 79.26 & 59.40 & 84.11 \\
            & + $\approach(m)$           & 20.50 & 96.47 & 78.10 & 77.64 & 40.16 & 92.70 & 24.96 & 94.85 & 6.94 & 98.72 & 44.92 & 91.68 & 35.93 & 92.01 \\
            & + $\approach(\mu, \sigma)$ & 24.53 & 96.06 & 79.26 & 76.96 & 46.52 & 91.46 & 29.34 & 93.98 & 6.69 & 98.81 & 49.97 & 90.55 & 39.39 & 91.30 \\
            \cmidrule(lr){2-16}
            
            & SCALE                      & 47.73 & 91.52 & 86.23 & 71.03 & 72.32 & 83.64 & 55.98 & 86.49 & 1.05 & 99.74 & 75.08 & 82.30 & 56.40 & 85.79 \\
            & + $\approach(m)$           & 12.76 & 97.62 & 85.50 & 72.25 & 39.48 & 92.88 & 22.64 & 95.25 & 4.89 & 99.02 & 47.47 & 91.51 & 35.46 & 91.42 \\
            & + $\approach(\mu, \sigma)$ & 12.30 & 97.71 & 85.11 & 72.28 & 39.62 & 92.84 & 22.87 & 95.29 & 4.63 & 99.10 & 46.39 & 91.55 & 35.15 & 91.46 \\
        \bottomrule
        \end{tabular}}
        \label{table: ResNet-18_TanH_CIFAR-100_combination}
    \end{table}
\end{landscape}

\section{ Accuracy and Computational Overhead}
\label{appendix: accuracy_analysis}

    \textbf{Accuracy.}
    Directly using the enriched features $h^{\approach}(\mathbf{x})$ for classification may slightly degrade in-distribution (ID) accuracy. As shown in Table~\ref{table: cifar_accuracy} for CIFAR-trained models, Table~\ref{table: imagenet_accuracy} for traditional CNN-based ImageNet models, and Table~\ref{table: imagenet_swin_convnext_accuracy} for Swin-B and ConvNeXt-B, the accuracy drop is marginal.  and not statistically significant.
    
    As a post-hoc method, $\approach$ is deployed in a standard two-branch pipeline, where the original, unmodified features are always used for final classification of samples identified as ID, thereby preserving the base model’s accuracy without incurring additional cost.

     \noindent
     \textbf{Computational Overhead.}
     Computing OOD scores with $\approach$ modifies the original \(h(\mathbf{x}) \) (as detailed in Section~\ref{sec: methods}) and introduces negligible overhead -- only a single matrix multiplication for the OOD head, which is trivial compared to the GFLOPs of the main CNN backbone. For instance, in ResNet-50 on ImageNet, the backbone requires ~5.42 GFLOPs, while the OOD head adds only ~0.004 GFLOPs, i.e., less than 0.1\% of the total cost.

     \begin{table}[!ht]
        \centering
        \caption{ID classification accuracy (\%) under $\approach$ for CIFAR datasets using ResNet-18, ResNet-34, DenseNet-101, Wide-ResNet-28-10, and MobileNet-v2 architectures.}
        \resizebox{\linewidth}{!}{
        \begin{tabular}{ l l c| c| c| c| c}
    
        \toprule
        \textbf{Dataset} & \textbf{Method} & \textbf{ResNet-18} & \textbf{ResNet-34} & \textbf{DenseNet-101}  & \textbf{Wide-ResNet} & \textbf{MobileNet-v2} \\
        \midrule
        \multirow{3}{*}{CIFAR-10} 
        & \texttt{Standard}                          & 93.89 & 93.96 & 93.61 & 95.13 & 92.52  \\
        & $\approach(m) + \texttt{DICE}$             & 93.23 & 93.39 & 92.53 & 94.46 & 89.21  \\
        & $\approach({\mu,\sigma}) + \texttt{DICE} $ & 93.37 & 93.68 & 93.36 & 94.21 & 88.65  \\
        \midrule
        \multirow{3}{*}{CIFAR-100} 
        & \texttt{Standard}                          & 75.20 & 75.70 & 74.47 & 77.96 & 72.69 \\
        & $\approach(m) + \texttt{DICE}$             & 71.50 & 73.15 & 62.23 & 75.23 & 69.54 \\
        & $\approach({\mu,\sigma}) + \texttt{DICE} $ & 72.50 & 73.81 & 68.47 & 74.31 & 70.50 \\
        \bottomrule
        \end{tabular}}
        \label{table: cifar_accuracy}
    \end{table}

    \begin{table}[!ht]
        \centering
         \caption{ID classification accuracy (\%) under $\approach$ for ImageNet datasets using Swin-B and ConvNeXt-B architectures.}
        \begin{tabular}{l c|c }
        \toprule
        \textbf{Method} & \textbf{Swin-B} & \textbf{ConvNeXt-B}      \\
        \midrule
        \texttt{Standard}                          & 83.07 & 83.48   \\
        $\approach(m) + \texttt{KNN}$              & 82.52 & 83.21   \\
        $\approach({\mu,\sigma}) + \texttt{KNN}$   & 79.38 & 74.60   \\
        \bottomrule
        \end{tabular}
        \label{table: imagenet_swin_convnext_accuracy}
    \end{table}

     \begin{table}[!ht]
        \centering
         \caption{ID classification accuracy (\%) under $\approach$ for ImageNet datasets using DenseNet-121, ResNet-50, MobileNet-v2, and EfficientNet-b0 architectures.}
        \begin{tabular}{l c|c|c|c}
        \toprule
        \textbf{Method}  & \textbf{DenseNet-121} & \textbf{ResNet-50} & \textbf{MobileNet-v2}  & \textbf{EfficientNet-b0}\\
        \midrule
        \texttt{Standard}                           & 74.44 & 76.15 & 71.87 & 77.67  \\
        $\approach(m) + \texttt{SCALE}$             & 74.34 & 76.15 & 71.75 & 77.50  \\
        $\approach({\mu,\sigma}) + \texttt{SCALE}$  & 71.94 & 73.48 & 69.18 & 74.87  \\
        \bottomrule
        \end{tabular}
        \label{table: imagenet_accuracy}
    \end{table}

\section{Reproducibility Statement}
\label{appendix: reproducibility}

We are committed to ensuring the reproducibility of our research. To this end, we provide detailed information regarding our code, experimental setup, hyperparameter selection, and computational environment.

\noindent
\textbf{Code and Data Availability.} 
The complete source code for our method, \approach, along with the scripts used to run all experiments and generate figures, will be made publicly available on GitHub\footnote{https://github.com/epsilon-2007/DAVIS}. We will also provide the model weights for our trained CIFAR models. All datasets used in this work (CIFAR-10, CIFAR-100, ImageNet-1k, and all OOD benchmarks) are publicly available and were used without modification, following the standard preprocessing steps described in their original publications and common benchmarks.

\noindent
\textbf{Experimental Setup.}
\begin{itemize}
    \item \textbf{CIFAR Benchmarks:} For fair comparison, our primary models include DenseNet-101, ResNet-18, ResNet-34, Wide-ResNet and MobileNet-v2.
    Following established protocols~\cite{GradNorm,ReAct,DICE,ASH,SCALE}, all models were trained from scratch for 100 epochs using SGD with a momentum of 0.9, a weight decay of 0.0001, and a batch size of 64. The learning rate was initialized at 0.1 and decayed by a factor of 10 at epochs 50, 75, and 90.
    
    \item \textbf{ImageNet Benchmark:} For our large-scale experiments, we used the official pre-trained models provided by PyTorch for Swin-B, ConvNeXt-B, EfficientNet-b0, DenseNet-121, ResNet-50, and MobileNet-v2. No fine-tuning was performed.
\end{itemize}

\noindent
\textbf{Hyperparameter Details.}
The hyperparameter $\boldsymbol{\gamma}$, which scales the standard deviation in Equation~\ref{eq: davis_mu_sigma}, plays a critical role in performance. Following established protocols~\cite{ReAct,DICE,SCALE,ssn}, we select $\gamma$ using a proxy OOD validation set constructed by adding pixel-wise Gaussian noise sampled from $\mathcal{N}(0, 0.2)$ to images from the ID validation set. Based on this procedure, we set $\gamma=3.0$ for all CIFAR models, $\gamma=0.5$ for traditional CNN-based ImageNet models (ResNet, DenseNet, MobileNet), and $\gamma=2.0$ for modern ImageNet architectures (Swin-B, ConvNeXt, EfficientNet-B0).

For all baseline methods (KNN, ODIN, ReAct, DICE, ASH, SCALE), we strictly followed the hyperparameter selection protocols described in their respective papers. When re-evaluating these baselines on new architectures not present in their original work, we performed a hyperparameter search using the same validation procedure they described. Key hyperparameters for these methods are summarized below:

\begin{itemize}
    \item \textbf{ODIN:} We adopted the optimal hyperparameter values reported in the original publication. Accordingly, we set the temperature to $T = 1000$, with a noise magnitude $\epsilon$ of 0.004 for CIFAR and 0.0015 for ImageNet.
    
    \item \textbf{ReAct:} The clipping percentile $p$ was selected from $\{85, 90, 95\}$. While we found $p=90$ to be optimal for the standalone ReAct baseline, consistent with the original paper, the optimal value shifted to $p=95$ when ReAct was combined with our $\approach$.
    
    \item \textbf{DICE:} We selected the sparsity ratio $p$ from $\{70, 75, 80, 85, 90, 95\}$. Our validation process consistently identified $p=70\%$ as the optimal value.
    
    \item \textbf{ASH:} The pruning percentile $p$ was selected from $\{80, 85, 90\}$. The optimal value was found to be dependent on the dataset and architecture. We report the specific optimal value for each major setting to ensure the strongest and fairest possible comparison.
    
    \begin{itemize}
        \item For \textbf{ImageNet}, the optimal value was consistently $p=90$ for most architectures, with the exception of EfficientNet-b0, which required a less aggressive pruning of $p=50$.
        
       \item For \textbf{CIFAR-10}, the optimal values were $p=90$ for DenseNet, $p=80$ for ResNet and Wide-ResNet models, and $p=70$ for MobileNet-v2. These values held for both the standalone baseline and when combined with $\approach$.
        
        \item For \textbf{CIFAR-100}, the optimal value for the ResNet and Wide-ResNet models was consistently $p=80$. For other architectures, we observed an interaction effect: the optimal percentile for DenseNet shifted from $p=90$ (baseline) to $p=80$ (with $\approach$), and for MobileNet-v2, it shifted from $p=90$ to $p=85$.
    \end{itemize}

    \item \textbf{SCALE:} For the SCALE baseline, the pruning percentile p was set to a fixed value of $p=85$ across all experiments. We adopted this value directly from the original SCALE paper~\cite{SCALE} to ensure our re-implementation was consistent with the authors' reported optimal setting, providing a fair comparison.

    \item \textbf{KNN:} Following the original recommendation in KNN~\cite{knn} we used $k=50$ for all of our experiments for fair comparisons. 
    
\end{itemize}

\noindent
\textbf{Computational Environment.} 
All CIFAR model training and OOD detection experiments were conducted on an Apple M2 Max system with 96 GB of RAM. The experiments were implemented in Python using PyTorch (v2.1) and the Torchvision library.

\section{Modular Integration with Primary Baselines}
\label{appendix: combined methods}

    Our method $\approach$ is designed not as a replacement for existing techniques, but as a complementary module that potantially enhances them. It acts as a feature pre-processing step, enriching the standard penultimate layer representation with more discriminative statistics (e.g., maximum and variance) that subsequent methods like \texttt{MSP}~\cite{msp}, \texttt{ODIN}~\cite{odin}, \texttt{Energy}~\cite{energy}, \texttt{GradNorm}~\cite{GradNorm}, \texttt{KNN}~\cite{knn}, \texttt{ReAct}~\cite{ReAct}, \texttt{DICE}~\cite{DICE}, \texttt{ASH}~\cite{ASH}, and \texttt{SCALE}~\cite{SCALE} can then leverage more effectively. So, instead of replacing existing techniques, $\approach$ introduces an additional degree of freedom to modify the penultimate layer in a way that works in tandem with them. 

    \textbf{Note on Compatibility with Scoring Functions.} $\approach$ is compatible with any downstream scoring functions: we primarily focused on MSP, Energy, GradNorm and KNN. In traditional CNN based architectures (ResNet, DenseNet, MobileNet, Wide-ResNet): $\approach$ attains best performance when applied on top of energy based OOD detection techniques like DICE, ASH and SCALE, while on modern deep learning models (Swin-B, ConvNeXt-B) $\approach$ attains best performance when applied on top of KNN OOD detection method.

    Our method, $\approach$, transforms the feature space to create a richer, more separable distribution of logits that is highly beneficial for the holistic energy score. A side effect of this transformation, however, is that the magnitude of the single dominant logit for ID samples can be suppressed. This suppression is the primary reason for the  largely unchanged performance observed when combining $\approach$ with MSP and ODIN as shown in Table~\ref{table: imagenet_combination_swin}, ~\ref{table: imagenet_combination_convnext}, and ~\ref{table: imagenet_combination_resnet}.

    This finding is consistent with the ID accuracy drop analyzed in Appendix~\ref{appendix: accuracy_analysis}, which is also governed by the dominant logit. Therefore, the full potential of our enriched feature representation is best realized by holistic scoring functions like the energy score, rather than those dependent on a single "winner-takes-all" logit.

    \textbf{CIFAR.} To validate complementary effect, we conduct a systematic evaluation, comparing the performance of each primary baseline with and without the application of our $\approach$ module. The results, presented for CIFAR-10 (Tables~\ref{table: ResNet-18_CIFAR-10_combination} and \ref{table: DenseNet-101_CIFAR-10_combination}) and CIFAR-100 (Tables~\ref{table: ResNet-18_CIFAR-100_combination} and \ref{table: DenseNet-101_CIFAR-100_combination}), demonstrate consistent and significant performance gains across all architectures. Notably, integrating $\approach$ boosts the performance of strong baselines, confirming that our enriched feature representation provides a more robust foundation for OOD detection. For brevity, we present only modular integration with DenseNet-101 and ResNet-18.

    \textbf{ImageNet.} For brevity, we present only modular integration using Swin-B, ConvNeXt-B and ResNet-50 in Table~\ref{table: imagenet_combination_swin}, ~\ref{table: imagenet_combination_convnext}, and ~\ref{table: imagenet_combination_resnet} respectively. In large-scale ImageNet-1K dataset , $\approach$ struggle to consistently improve all the primary baseline considered. However, $\approach$ does improve the best performing primary baselines: KNN for Swin-B and ConvNeXt, SCALE for ResNet-50.

    \begin{landscape}
        \begin{table}[!htbp]
        \centering
        \caption{Detailed results of post-hoc methods combined with $\approach$ using \textbf{ResNet-18}  pre-trained on \textbf{CIFAR-10}. $\boldsymbol{\uparrow}$ indicates higher is better; $\boldsymbol{\downarrow}$ indicates lower is better. The symbols denote the statistic used: $\mu$ (mean), $\sigma$ (std. deviation), $m$ (maximum)}        \resizebox{\linewidth}{!}{
        \begin{tabular}{c l  cc cc cc cc cc cc cc }
            \toprule
             \multirow{2}{*}{\textbf{Model}} & \multirow{2}{*}{\textbf{Combined Method}} & \multicolumn{2}{c}{\textbf{SVHN}} & \multicolumn{2}{c}{\textbf{Place365}} & \multicolumn{2}{c}{\textbf{iSUN}} & \multicolumn{2}{c}{\textbf{Textures}} & \multicolumn{2}{c}{\textbf{LSUN-c}}  & \multicolumn{2}{c}{\textbf{LSUN-r}} & \multicolumn{2}{c}{\textbf{Average}}  \\
            \cmidrule(lr){3-4} \cmidrule(lr){5-6} \cmidrule(lr){7-8} \cmidrule(lr){9-10} \cmidrule(lr){11-12} \cmidrule(lr){13-14} \cmidrule(lr){15-16}
            && \textbf{FPR95} $\downarrow$ & \textbf{AUROC} $\uparrow$ & \textbf{FPR95} $\downarrow$ & \textbf{AUROC} $\uparrow$ & \textbf{FPR95} $\downarrow$ & \textbf{AUROC} $\uparrow$ & \textbf{FPR95} $\downarrow$ & \textbf{AUROC} $\uparrow$ & \textbf{FPR95} $\downarrow$ & \textbf{AUROC} $\uparrow$ & \textbf{FPR95} $\downarrow$ & \textbf{AUROC} $\uparrow$ & \textbf{FPR95} $\downarrow$ & \textbf{AUROC} $\uparrow$ \\
            \midrule
            \multirow{27}{*}{ResNet-18} & MSP	                      & 60.39 & 92.40 & 63.49 & 88.38 & 56.59 & 91.18 & 62.71 & 90.10 & 51.87 & 93.64 & 55.53 & 91.69 & 58.43 & 91.23 \\
                                        & + $\approach(m)$            & 51.03 & 89.90 & 64.30 & 81.73 & 56.30 & 87.06 & 55.87 & 87.04 & 43.94 & 92.10 & 54.37 & 88.06 & 54.30 & 87.65 \\
                                        & + $\approach(\mu, \sigma)$  & 51.90 & 89.00 & 64.90 & 78.91 & 57.94 & 83.35 & 57.61 & 83.90 & 46.20 & 90.69 & 56.29 & 84.64 & 55.81 & 85.08 \\
            \cmidrule(lr){2-16}
                                        & ODIN	                      & 35.96 & 94.70 & 41.11 & 92.06 & 23.36 & 96.56 & 46.74 & 91.97 &  6.66 & 98.71 & 20.04 & 96.93 & 28.98 & 95.16 \\
                                        & + $\approach(m)$            & 35.44 & 94.69 & 40.75 & 92.20 & 24.90 & 96.31 & 46.61 & 92.02 &  7.03 & 98.60 & 21.87 & 96.67 & 29.43 & 95.08 \\
                                        & + $\approach(\mu, \sigma)$  & 35.80 & 94.68 & 40.88 & 92.22 & 24.36 & 96.42 & 46.51 & 92.04 &  6.75 & 98.68 & 21.40 & 96.77 & 29.28 & 95.13 \\
            \cmidrule(lr){2-16}
                                        & Energy                      & 44.32 & 94.04 & 41.31 & 91.73 & 35.46 & 94.64 & 50.39 & 91.12 & 9.77 & 98.19 & 32.41 & 95.16 & 35.61 & 94.14 \\
                                        & + $\approach(m)$            & 19.81 & 96.29 & 32.32 & 93.73 & 15.79 & 97.37 & 21.90 & 96.44 & 5.83 & 98.73 & 13.63 & 97.61 & 18.21 & 96.69 \\
                                        & + $\approach(\mu, \sigma)$  & 19.83 & 96.47 & 36.11 & 93.16 & 20.48 & 96.90 & 26.77 & 95.89 & 5.62 & 98.79 & 17.75 & 97.21 & 21.09 & 96.00 \\ 
            \cmidrule(lr){2-16}
                                        & GradNorm                    & 17.98 & 96.53 & 57.08 & 86.66 & 37.88 & 93.90 & 43.79 & 90.96 &  4.33 & 99.00 & 36.83 & 93.89 & 32.98 & 93.49 \\
                                        & + $\approach(m)$            &  5.14 & 99.00 & 48.27 & 89.98 & 11.84 & 97.73 &  9.91 & 98.01 &  0.64 & 99.81 & 13.79 & 97.37 & 14.93 & 96.98 \\
                                        & + $\approach(\mu, \sigma)$  &  5.92 & 98.90 & 53.09 & 88.92 & 16.47 & 97.04 & 13.21 & 97.52 &  0.41 & 99.84 & 19.25 & 96.64 & 18.06 & 96.48 \\
            \cmidrule(lr){2-16}
                                        & KNN 	                      & 14.29 & 97.59 & 49.08 & 89.62 & 37.23 & 92.87 & 29.15 & 94.62 & 15.71 & 97.31 & 37.12 & 93.34 & 30.43 & 94.22 \\
                                        & + $\approach(m)$            & 17.73 & 96.94 & 46.18 & 90.93 & 35.01 & 93.4 & 32.52 & 94.18 & 11.16 & 98.13 & 31.68 & 94.45 & 29.05 & 94.67 \\
                                        & + $\approach(\mu, \sigma)$  & 14.36 & 97.48 & 45.79 & 90.62 & 33.46 & 93.61 & 29.33 & 94.71 & 11.55 & 97.91 & 30.83 & 94.43 & 27.55 & 94.79 \\
            \cmidrule(lr){2-16}
                                        & ReAct	                      & 42.31 & 94.12 & 40.74 & 92.25 & 24.06 & 96.26 & 40.44 & 93.69 & 12.27 & 97.90 & 21.02 & 96.67 & 30.14 & 95.15 \\
                                        & + $\approach(m)$            & 21.54 & 95.93 & 32.84 & 93.65 & 16.26 & 97.27 & 23.07 & 96.21 &  6.67 & 98.61 & 14.07 & 97.54 & 19.07 & 96.53 \\
                                        & + $\approach(\mu, \sigma)$  & 21.70 & 96.10 & 36.12 & 93.18 & 20.04 & 96.90 & 26.93 & 95.74 &  6.43 & 98.67 & 17.24 & 97.25 & 21.40 & 96.31 \\
            \cmidrule(lr){2-16}
                                        & DICE	                      & 17.60 & 97.09 & 46.16 & 90.66 & 38.68 & 94.32 & 44.50 & 91.81 & 1.90 & 99.57 & 36.66 & 94.67 & 30.92 & 94.69 \\
                                        & + $\approach(m)$            &  7.95 & 98.50 & 30.55 & 93.97 &  6.70 & 98.59 &  9.66 & 98.28 & 1.24 & 99.73 &  6.63 & 98.57 & 10.46 & 97.94 \\
                                        & + $\approach(\mu, \sigma)$  &  7.85 & 98.57 & 35.27 & 93.18 & 11.96 & 97.96 & 14.04 & 97.71 & 1.14 & 99.76 & 10.71 & 98.03 & 13.49 & 97.54 \\
            \cmidrule(lr){2-16}
                                        & ReAct+DICE                  & 12.25 & 97.85 & 45.77 & 91.27 & 18.12 & 96.92 & 26.90 & 95.60 & 1.35 & 99.65 & 16.93 & 97.09 & 20.22 & 96.40 \\
                                        & + $\approach(m)$            &  8.23 & 98.47 & 30.40 & 94.06 &  6.49 & 98.63 &  9.52 & 98.29 & 1.33 & 99.73 &  6.34 & 98.62 & 10.38 & 97.97 \\
                                        & + $\approach(\mu, \sigma)$  &  8.31 & 98.49 & 34.88 & 93.34 & 10.29 & 98.14 & 13.01 & 97.82 & 1.30 & 99.75 &  9.49 & 98.22 & 12.88 & 97.62 \\
            \cmidrule(lr){2-16}
                                        & ASH-S	                      & 7.87 & 98.43 & 49.69 & 89.57 & 23.27 & 96.33 & 26.12 & 95.88 & 2.10 & 99.46 & 21.91 & 96.47 & 21.83 & 96.02 \\
                                        & + $\approach(m)$            & 9.25 & 98.37 & 41.26 & 91.73 & 10.87 & 98.01 & 11.60 & 97.95 & 1.94 & 99.54 & 10.46 & 98.02 & 14.23 & 97.27 \\
                                        & + $\approach(\mu, \sigma)$  & 7.31 & 98.68 & 43.72 & 91.00 & 12.58 & 97.81 & 12.16 & 97.86 & 1.70 & 99.60 & 12.23 & 97.85 & 14.95 & 97.13 \\
            \cmidrule(lr){2-16}
                                        & SCALE                       &  9.73 & 98.13 & 45.99 & 90.87 & 22.92 & 96.39 & 27.00 & 95.61 & 3.75 & 99.17 & 21.02 & 96.60 & 21.74 & 96.13  \\
                                        & + $\approach(m) $           & 10.21 & 98.17 & 37.09 & 92.92 &  9.74 & 98.17 & 12.77 & 97.83 & 2.86 & 99.39 &  9.71 & 98.19 & 13.73 & 97.45 \\
                                        & + $\approach({\mu,\sigma})$ &  8.88 & 98.43 & 39.84 & 92.38 & 12.36 & 97.93 & 14.11 & 97.65 & 2.71 & 99.43 & 11.62 & 97.99 & 14.92 & 97.30 \\
            \bottomrule
            \end{tabular}
            }
            \label{table: ResNet-18_CIFAR-10_combination}
        \end{table}
    \end{landscape}

    \begin{landscape}
        \begin{table}[!htbp]
        \centering
        \caption{\textit{Detailed results of post-hoc methods combined with $\approach$ using \textbf{DenseNet-101}  pre-trained on \textbf{CIFAR-10}. $\boldsymbol{\uparrow}$ indicates higher is better; $\boldsymbol{\downarrow}$ indicates lower is better. The symbols denote the statistic used: $\mu$ (mean), $\sigma$ (std. deviation), $m$ (maximum)}}
        \resizebox{\linewidth}{!}{
        \begin{tabular}{c l  cc cc cc cc cc cc cc }
            \toprule
             \multirow{2}{*}{\textbf{Model}} & \multirow{2}{*}{\textbf{Combined Method}} & \multicolumn{2}{c}{\textbf{SVHN}} & \multicolumn{2}{c}{\textbf{Place365}} & \multicolumn{2}{c}{\textbf{iSUN}} & \multicolumn{2}{c}{\textbf{Textures}} & \multicolumn{2}{c}{\textbf{LSUN-c}}  & \multicolumn{2}{c}{\textbf{LSUN-r}} & \multicolumn{2}{c}{\textbf{Average}}  \\
            \cmidrule(lr){3-4} \cmidrule(lr){5-6} \cmidrule(lr){7-8} \cmidrule(lr){9-10} \cmidrule(lr){11-12} \cmidrule(lr){13-14} \cmidrule(lr){15-16}
            && \textbf{FPR95} $\downarrow$ & \textbf{AUROC} $\uparrow$ & \textbf{FPR95} $\downarrow$ & \textbf{AUROC} $\uparrow$ & \textbf{FPR95} $\downarrow$ & \textbf{AUROC} $\uparrow$ & \textbf{FPR95} $\downarrow$ & \textbf{AUROC} $\uparrow$ & \textbf{FPR95} $\downarrow$ & \textbf{AUROC} $\uparrow$ & \textbf{FPR95} $\downarrow$ & \textbf{AUROC} $\uparrow$ & \textbf{FPR95} $\downarrow$ & \textbf{AUROC} $\uparrow$ \\
            \midrule
            \multirow{27}{*}{DenseNet-101} & MSP	                  & 64.76 & 88.33 & 60.19 & 88.56 & 33.34 & 95.41 & 56.60 & 90.17 & 23.41 & 96.75 & 33.88 & 95.39 & 45.36 & 92.43 \\
                                        & + $\approach(m)$            & 65.33 & 85.68 & 67.63 & 83.28 & 51.76 & 91.67 & 61.38 & 87.95 & 50.8 & 91.17 & 53.27 & 91.43 & 58.36 & 88.53 \\
                                        & + $\approach(\mu, \sigma)$  & 63.11 & 85.23 & 65.59 & 82.43 & 45.08 & 92.18 & 58.1 & 87.51 & 42.91 & 92.61 & 46.61 & 91.93 & 53.57 & 88.65 \\
            \cmidrule(lr){2-16}
                                        & ODIN	                      & 33.09 & 94.41 & 36.68 & 92.34 & 3.22 & 99.20 & 38.49 & 91.61 & 1.84 & 99.53 & 2.89 & 99.28 & 19.37 & 96.06 \\
                                        & + $\approach(m)$            & 32.06 & 94.64 & 36.37 & 92.67 & 4.01 & 98.83 & 37.96 & 91.88 & 2.18 & 99.33 & 3.66 & 98.91 & 19.37 & 96.05 \\
                                        & + $\approach(\mu, \sigma)$  & 34.26 & 94.34 & 37.20 & 92.51 & 3.52 & 99.05 & 38.03 & 91.80 & 1.99 & 99.45 & 3.14 & 99.13 & 19.69 & 96.04 \\
            \cmidrule(lr){2-16}
                                        & Energy                      & 37.91 & 93.59 & 36.38 & 92.39 & 7.83 & 98.23 & 43.85 & 90.49 & 1.95 & 99.47 & 7.34 & 98.34 & 22.54 & 95.42 \\
                                        & + $\approach(m)$            & 30.50 & 94.50 & 33.04 & 93.28 & 7.81 & 98.29 & 25.16 & 95.85 & 9.95 & 98.19 & 7.32 & 98.37 & 18.96 & 96.41 \\
                                        & + $\approach(\mu, \sigma)$  & 24.75 & 95.93 & 32.08 & 93.43 & 5.77 & 98.64 & 25.23 & 95.84 & 5.13 & 98.94 & 5.63 & 98.70 & 16.43 & 96.91 \\ 
            \cmidrule(lr){2-16}
                                        & GradNorm                    & 22.02 & 96.19 & 47.68 & 88.65 &  4.72 & 99.03 & 27.29 & 92.21 & 0.21 & 99.90 & 4.99 & 98.94 & 17.82 & 95.82 \\
                                        & + $\approach(m)$            &  6.43 & 98.63 & 53.90 & 87.67 &  6.31 & 98.75 & 10.02 & 97.80 & 1.12 & 99.70 & 6.99 & 98.63 & 14.13 & 96.86 \\
                                        & + $\approach(\mu, \sigma)$  &  6.75 & 98.65 & 54.39 & 87.91 &  5.43 & 98.95 & 13.53 & 97.15 & 0.28 & 99.86 & 5.78 & 98.83 & 14.36 & 96.89 \\
            \cmidrule(lr){2-16}
                                        & KNN 	                      & 1.50 & 99.67 & 42.45 & 90.49 & 7.04 & 98.72 & 14.38 & 97.54 & 5.79 & 98.94 & 8.73 & 98.45 & 13.31 & 97.30 \\
                                        & + $\approach(m)$            & 3.73 & 99.22 & 45.27 & 90.91 & 5.18 & 98.98 & 12.41 & 97.76 & 5.61 & 99.00 & 5.60 & 98.97 & 12.97 & 97.47 \\
                                        & + $\approach(\mu, \sigma)$  & 2.49 & 99.52 & 39.90 & 91.60 & 4.55 & 99.15 & 11.88 & 97.96 & 5.04 & 99.09 & 4.57 & 99.12 & 11.40 & 97.74 \\                            
            \cmidrule(lr){2-16}
                                        & ReAct	                      & 23.18 & 96.28 & 33.97 & 92.98 & 5.95 & 98.45 & 32.25 & 93.98 &  2.47 & 99.33 & 5.44 & 98.55 & 17.21 & 96.59 \\
                                        & + $\approach(m)$            & 32.67 & 93.43 & 35.41 & 92.58 & 9.48 & 98.00 & 27.84 & 95.18 & 12.60 & 97.66 & 9.34 & 98.07 & 21.22 & 95.82 \\
                                        & + $\approach(\mu, \sigma)$  & 23.78 & 95.59 & 34.68 & 92.90 & 7.37 & 98.37 & 26.12 & 95.46 &  7.26 & 98.60 & 7.30 & 98.43 & 17.75 & 96.56 \\
            \cmidrule(lr){2-16}
                                        & DICE	                      & 16.68 & 96.96 & 37.46 & 92.06 & 2.25 & 99.41 & 28.05 & 92.70 & 0.16 & 99.94 & 2.44 & 99.35 & 14.51 & 96.74 \\
                                        & + $\approach(m)$            &  8.30 & 98.37 & 29.47 & 93.92 & 1.85 & 99.56 &  7.16 & 98.69 & 1.26 & 99.72 & 1.92 & 99.55 &  8.33 & 98.30 \\
                                        & + $\approach(\mu, \sigma)$  &  6.84 & 98.77 & 29.76 & 93.86 & 1.58 & 99.59 &  9.57 & 98.25 & 0.55 & 99.86 & 1.60 & 99.57 &  8.32 & 98.32 \\
            \cmidrule(lr){2-16}
                                        & ReAct+DICE                  & 4.63 & 99.02 & 36.10 & 92.93 & 1.80 & 99.51 & 17.32 & 96.76 & 0.12 & 99.95 & 1.93 & 99.47 & 10.32 & 97.94 \\
                                        & + $\approach(m)$            & 7.86 & 98.39 & 29.27 & 93.87 & 1.73 & 99.58 &  5.90 & 98.85 & 1.19 & 99.73 & 1.84 & 99.56 &  7.96 & 98.33 \\
                                        & + $\approach(\mu, \sigma)$  & 5.90 & 98.87 & 29.45 & 93.92 & 1.52 & 99.59 &  7.41 & 98.62 & 0.55 & 99.86 & 1.85 & 99.57 &  7.78 & 98.41 \\
            \cmidrule(lr){2-16}
                                        & ASH	                      & 16.20 & 97.21 & 37.79 & 92.02 & 3.91 & 98.94 & 26.40 & 94.61 & 0.84 & 99.69 & 4.15 & 98.92 & 14.88 & 96.90 \\
                                        & + $\approach(m)$            & 12.48 & 97.63 & 34.90 & 93.03 & 4.63 & 98.90 & 11.76 & 97.94 & 4.59 & 99.08 & 5.07 & 98.84 & 12.24 & 97.57 \\
                                        & + $\approach(\mu, \sigma)$  &  9.78 & 98.20 & 34.54 & 93.10 & 3.59 & 99.10 & 12.45 & 97.87 & 2.25 & 99.44 & 4.06 & 99.08 & 11.11 & 97.80 \\
            \cmidrule(lr){2-16}
                                        & Scale                       & 23.06 & 96.13 & 36.53 & 92.24 & 4.54 & 98.76 & 30.53 & 93.62 & 1.23 & 99.61 & 4.59 & 98.75 & 16.75 & 96.52 \\
                                        &  + $\approach(m)$           & 20.38 & 96.25 & 33.11 & 93.27 & 5.86 & 98.63 & 15.80 & 97.27 & 6.57 & 98.72 & 6.07 & 98.61 & 14.63 & 97.13 \\
                                        & + $\approach({\mu,\sigma})$ & 14.93 & 97.30 & 31.94 & 93.46 & 4.04 & 98.95 & 15.74 & 97.31 & 3.28 & 99.26 & 4.28 & 98.95 & 12.37 & 97.54 \\             
            \bottomrule
            \end{tabular}}
            \label{table: DenseNet-101_CIFAR-10_combination}
        \end{table}
    \end{landscape}

        \begin{landscape}
        \begin{table}[!htbp]
        \centering
        \caption{Detailed results of post-hoc methods combined with $\approach$ using \textbf{ResNet-18} pre-trained on \textbf{CIFAR-100}. $\boldsymbol{\uparrow}$ indicates higher is better; $\boldsymbol{\downarrow}$ indicates lower is better. The symbols denote the statistic used: $\mu$ (mean), $\sigma$ (std. deviation), $m$ (maximum)}
        \resizebox{\linewidth}{!}{
        \begin{tabular}{c l  cc cc cc cc cc cc cc }
            \toprule
             \multirow{2}{*}{\textbf{Model}} & \multirow{2}{*}{\textbf{Combined Method}} & \multicolumn{2}{c}{\textbf{SVHN}} & \multicolumn{2}{c}{\textbf{Place365}} & \multicolumn{2}{c}{\textbf{iSUN}} & \multicolumn{2}{c}{\textbf{Textures}} & \multicolumn{2}{c}{\textbf{LSUN-c}}  & \multicolumn{2}{c}{\textbf{LSUN-r}} & \multicolumn{2}{c}{\textbf{Average}}  \\
            \cmidrule(lr){3-4} \cmidrule(lr){5-6} \cmidrule(lr){7-8} \cmidrule(lr){9-10} \cmidrule(lr){11-12} \cmidrule(lr){13-14} \cmidrule(lr){15-16}
            && \textbf{FPR95} $\downarrow$ & \textbf{AUROC} $\uparrow$ & \textbf{FPR95} $\downarrow$ & \textbf{AUROC} $\uparrow$ & \textbf{FPR95} $\downarrow$ & \textbf{AUROC} $\uparrow$ & \textbf{FPR95} $\downarrow$ & \textbf{AUROC} $\uparrow$ & \textbf{FPR95} $\downarrow$ & \textbf{AUROC} $\uparrow$ & \textbf{FPR95} $\downarrow$ & \textbf{AUROC} $\uparrow$ & \textbf{FPR95} $\downarrow$ & \textbf{AUROC} $\uparrow$ \\
            \midrule
            \multirow{27}{*}{ResNet-18} & MSP	                      & 74.26 & 83.20 & 82.49 & 75.32 & 85.58 & 70.20 & 84.89 & 74.02 & 70.79 & 82.78 & 84.36 & 71.45 & 80.40 & 76.16 \\
                                        & + $\approach(m)$            & 79.01 & 83.56 & 86.57 & 73.79 & 87.79 & 72.86 & 83.39 & 77.95 & 79.77 & 81.23 & 87.53 & 73.03 & 84.01 & 77.07 \\
                                        & + $\approach(\mu, \sigma)$  & 78.90 & 82.88 & 86.06 & 72.47 & 87.68 & 70.35 & 83.07 & 76.64 & 78.68 & 80.12 & 87.38 & 70.78 & 83.63 & 75.54 \\
            \cmidrule(lr){2-16}
                                        & ODIN	                      & 70.30 & 88.06 & 80.14 & 77.02 & 60.26 & 86.98 & 81.56 & 76.56 & 47.73 & 91.84 & 56.35 & 88.23 & 66.06 & 84.78 \\
                                        & + $\approach(m)$            & 70.33 & 88.25 & 80.32 & 77.37 & 66.08 & 85.56 & 81.70 & 77.02 & 50.98 & 91.37 & 62.87 & 86.65 & 68.71 & 84.37 \\
                                        & + $\approach(\mu, \sigma)$  & 69.39 & 88.41 & 80.54 & 77.33 & 64.80 & 85.84 & 81.37 & 77.04 & 49.73 & 91.60 & 61.27 & 86.94 & 67.85 & 84.53 \\                            
            \cmidrule(lr){2-16}
                                        & Energy                      & 66.64 & 89.53 & 81.23 & 76.84 & 73.67 & 82.01 & 85.30 & 75.68 & 48.01 & 91.63 & 70.30 & 83.38 & 70.86 & 83.18 \\
                                        & + $\approach(m)$            & 38.40 & 94.25 & 77.62 & 78.98 & 56.43 & 89.99 & 55.73 & 89.14 & 35.90 & 94.17 & 55.87 & 89.95 & 53.32 & 89.41 \\
                                        & + $\approach(\mu, \sigma)$  & 42.01 & 93.75 & 78.99 & 77.58 & 61.13 & 88.60 & 59.89 & 87.89 & 38.56 & 93.48 & 59.76 & 88.86 & 56.72 & 88.36 \\
            \cmidrule(lr){2-16}
                                        & GradNorm                    & 57.22 & 85.93 & 88.47 & 62.15 & 81.65 & 74.94 & 79.84 & 65.96 & 15.44 & 96.94 & 80.07 & 75.73 & 67.11 & 76.94 \\
                                        & + $\approach(m)$            & 25.36 & 95.05 & 94.20 & 57.46 & 68.18 & 85.25 & 42.59 & 88.22 & 16.94 & 96.94 & 74.09 & 83.83 & 53.56 & 84.46 \\
                                        & + $\approach(\mu, \sigma)$  & 33.50 & 93.52 & 95.02 & 54.35 & 72.93 & 82.55 & 50.80 & 85.21 & 18.58 & 96.62 & 77.47 & 81.59 & 58.05 & 82.31 \\
            \cmidrule(lr){2-16}
                                        & KNN 	                      & 61.76 & 90.15 & 86.35 & 70.45 & 69.69 & 79.24 & 41.08 & 90.87 & 75.36 & 77.73 & 67.78 & 80.08 & 67.00 & 81.42 \\
                                        & + $\approach(m)$            & 45.01 & 92.43 & 86.42 & 71.69 & 63.06 & 84.56 & 34.02 & 92.75 & 56.70 & 83.88 & 63.25 & 84.34 & 58.08 & 84.94 \\
                                        & + $\approach(\mu, \sigma)$  & 46.92 & 92.26 & 85.36 & 71.79 & 63.90 & 83.46 & 34.02 & 92.75 & 59.00 & 83.13 & 64.42 & 83.39 & 58.94 & 84.46 \\                               
            \cmidrule(lr){2-16}
                                        & ReAct	                      & 55.03 & 91.95 & 79.78 & 77.48 & 58.66 & 87.78 & 60.90 & 87.94 & 47.42 & 91.22 & 54.78 & 88.77 & 59.43 & 87.52 \\
                                        & + $\approach(m)$            & 42.98 & 93.55 & 77.00 & 78.94 & 55.73 & 89.57 & 54.18 & 89.53 & 40.68 & 92.67 & 54.12 & 89.84 & 54.12 & 89.02 \\
                                        & + $\approach(\mu, \sigma)$  & 44.43 & 93.37 & 78.24 & 78.18 & 57.83 & 88.94 & 56.26 & 89.06 & 42.53 & 92.19 & 55.52 & 89.5 & 55.8 & 88.54 \\
            \cmidrule(lr){2-16}
                                        & DICE	                      & 41.18 & 92.98 & 81.82 & 76.02 & 66.22 & 84.20 & 75.50 & 76.27 & 12.21 & 97.70 & 64.48 & 85.15 & 56.90 & 85.39 \\
                                        & + $\approach(m)$            & 10.77 & 97.91 & 80.06 & 77.59 & 33.36 & 94.13 & 31.52 & 93.52 &  7.31 & 98.53 & 37.27 & 93.39 & 33.38 & 92.51 \\
                                        & + $\approach(\mu, \sigma)$  & 12.40 & 97.59 & 80.46 & 76.21 & 38.77 & 92.69 & 36.74 & 92.09 &  7.78 & 98.42 & 40.98 & 92.27 & 36.19 & 91.54 \\
            \cmidrule(lr){2-16}
                                        & ReAct+DICE                  & 36.18 & 93.65 & 86.43 & 71.92 & 59.57 & 88.43 & 48.46 & 87.59 & 11.78 & 97.59 & 60.22 & 88.45 & 50.44 & 87.94 \\
                                        & + $\approach(m)$            & 11.37 & 97.76 & 82.96 & 75.31 & 35.09 & 93.96 & 29.26 & 93.39 & 8.53 & 98.24 & 38.86 & 93.25 & 34.34 & 91.99 \\
                                        & + $\approach(\mu, \sigma)$  & 13.03 & 97.49 & 82.67 & 74.91 & 38.04 & 93.08 & 31.99 & 92.61 & 8.91 & 98.13 & 40.32 & 92.68 & 35.83 & 91.48 \\
            \cmidrule(lr){2-16}
                                        & ASH	                      & 29.10 & 95.46 & 82.56 & 75.38 & 67.09 & 85.01 & 56.49 & 87.80 & 27.06 & 95.55 & 64.72 & 85.62 & 54.50 & 87.47 \\
                                        & + $\approach(m)$            & 12.41 & 97.87 & 81.30 & 75.93 & 46.38 & 91.18 & 26.58 & 94.94 & 18.44 & 96.99 & 48.47 & 90.45 & 38.93 & 91.23 \\
                                        & + $\approach(\mu, \sigma)$  & 13.44 & 97.70 & 81.85 & 75.09 & 49.06 & 90.36 & 28.55 & 94.52 & 19.93 & 96.69 & 50.46 & 89.97 & 40.55 & 90.72 \\
            \cmidrule(lr){2-16}
                                        & SCALE                       & 22.12 & 96.38 & 81.96 & 74.95 & 61.62 & 86.65 & 44.50 & 90.72 & 18.62 & 96.78 & 59.76 & 86.74 & 48.10 & 88.70 \\
                                        & + $\approach(m)$            & 10.40 & 98.10 & 82.86 & 74.60 & 45.08 & 91.15 & 24.10 & 95.04 & 14.09 & 97.49 & 48.37 & 89.96 & 37.48 & 91.06 \\
                                        & + $\approach({\mu,\sigma})$ & 11.43 & 97.94 & 83.24 & 74.00 & 47.37 & 90.57 & 25.64 & 94.78 & 15.35 & 97.28 & 49.16 & 89.69 & 38.70 & 90.71 \\
            \bottomrule
            \end{tabular}}
            \label{table: ResNet-18_CIFAR-100_combination}
        \end{table}
    \end{landscape}

    \begin{landscape}
        \begin{table}[ht]
        \centering
        \caption{Detailed results of post-hoc methods combined with $\approach$ \textbf{DenseNet-101} pre-trained on \textbf{CIFAR-100}. $\boldsymbol{\uparrow}$ indicates higher is better; $\boldsymbol{\downarrow}$ indicates lower is better. The symbols denote the statistic used: $\mu$ (mean), $\sigma$ (std. deviation), $m$ (maximum)}
        \resizebox{\linewidth}{!}{
        \begin{tabular}{c l  cc cc cc cc cc cc cc }
            \toprule
             \multirow{2}{*}{\textbf{Model}} & \multirow{2}{*}{\textbf{Combined Method}} & \multicolumn{2}{c}{\textbf{SVHN}} & \multicolumn{2}{c}{\textbf{Place365}} & \multicolumn{2}{c}{\textbf{iSUN}} & \multicolumn{2}{c}{\textbf{Textures}} & \multicolumn{2}{c}{\textbf{LSUN-c}}  & \multicolumn{2}{c}{\textbf{LSUN-r}} & \multicolumn{2}{c}{\textbf{Average}}  \\
            \cmidrule(lr){3-4} \cmidrule(lr){5-6} \cmidrule(lr){7-8} \cmidrule(lr){9-10} \cmidrule(lr){11-12} \cmidrule(lr){13-14} \cmidrule(lr){15-16}
            && \textbf{FPR95} $\downarrow$ & \textbf{AUROC} $\uparrow$ & \textbf{FPR95} $\downarrow$ & \textbf{AUROC} $\uparrow$ & \textbf{FPR95} $\downarrow$ & \textbf{AUROC} $\uparrow$ & \textbf{FPR95} $\downarrow$ & \textbf{AUROC} $\uparrow$ & \textbf{FPR95} $\downarrow$ & \textbf{AUROC} $\uparrow$ & \textbf{FPR95} $\downarrow$ & \textbf{AUROC} $\uparrow$ & \textbf{FPR95} $\downarrow$ & \textbf{AUROC} $\uparrow$ \\
            \midrule
            \multirow{27}{*}{DenseNet-101} & MSP	                  & 81.38 & 75.71 & 82.62 & 74.04 & 84.12 & 68.22 & 86.95 & 68.37 & 51.82 & 87.93 & 81.34 & 69.51 & 78.04 & 73.96 \\
                                        & + $\approach(m)$            & 84.27 & 77.18 & 88.63 & 69.53 & 86.77 & 72.39 & 86.15 & 72.76 & 80.46 & 79.07 & 86.99 & 71.73 & 85.54 & 73.78 \\
                                        & + $\approach(\mu, \sigma)$  & 83.27 & 78.70 & 87.90 & 70.83 & 87.57 & 69.56 & 85.85 & 72.53 & 77.17 & 80.93 & 88.06 & 69.25 & 84.97 & 73.63 \\
            \cmidrule(lr){2-16}
                                        & ODIN	                      & 85.94 & 80.35 & 75.59 & 77.62 & 48.03 & 89.12 & 83.37 & 67.83 & 12.78 & 97.70 & 40.28 & 91.35 & 57.67 & 84.00 \\
                                        & + $\approach(m)$            & 75.87 & 85.39 & 75.76 & 78.73 & 59.23 & 86.19 & 82.59 & 69.69 & 17.62 & 97.08 & 52.52 & 88.40 & 60.60 & 84.25 \\
                                        & + $\approach(\mu, \sigma)$  & 77.99 & 83.62 & 76.25 & 78.32 & 55.94 & 87.15 & 82.46 & 69.02 & 15.78 & 97.32 & 49.49 & 89.29 & 59.65 & 84.12 \\                            
            \cmidrule(lr){2-16}
                                        & Energy                      & 70.99 & 86.66 & 77.12 & 76.94 & 64.28 & 83.92 & 83.60 & 67.47 & 11.45 & 97.89 & 56.08 & 86.84 & 60.59 & 83.29 \\
                                        & + $\approach(m)$            & 52.32 & 89.41 & 73.49 & 79.62 & 35.76 & 93.37 & 49.88 & 88.07 & 18.58 & 96.44 & 37.27 & 92.98 & 44.55 & 89.98 \\
                                        & + $\approach(\mu, \sigma)$  & 45.23 & 91.91 & 72.11 & 79.78 & 42.21 & 92.16 & 55.85 & 85.47 & 13.92 & 97.53 & 40.52 & 92.45 & 44.97 & 89.88 \\ 
            \cmidrule(lr){2-16}
                                        & GradNorm                    & 35.49 & 93.07 & 87.03 & 70.09 & 71.36 & 82.74 & 61.83 & 75.72 &  0.94 & 99.75 & 68.56 & 83.65 & 54.20 & 84.17 \\
                                        & + $\approach(m)$            & 29.26 & 94.37 & 92.87 & 65.81 & 59.78 & 88.27 & 31.93 & 92.70 & 12.00 & 97.83 & 68.87 & 86.69 & 49.12 & 87.61 \\
                                        & + $\approach(\mu, \sigma)$  & 29.44 & 94.53 & 92.70 & 64.87 & 62.21 & 87.33 & 36.81 & 91.13 &  6.36 & 98.72 & 69.54 & 86.00 & 49.51 & 87.10 \\
            \cmidrule(lr){2-16}
                                        & KNN 	                      & 15.86 & 96.88 & 88.36 & 66.14 & 42.98 & 89.45 & 27.11 & 94.21 & 35.82 & 89.74 & 42.90 & 89.28 & 42.17 & 87.62 \\
                                        & + $\approach(m)$            & 12.24 & 97.74 & 91.03 & 65.18 & 36.69 & 93.44 & 26.01 & 93.71 & 20.19 & 96.31 & 43.81 & 92.40 & 38.33 & 89.80 \\
                                        & + $\approach(\mu, \sigma)$  & 10.78 & 97.84 & 88.61 & 67.45 & 30.98 & 93.88 & 23.09 & 94.70 & 23.33 & 95.03 & 34.62 & 93.33 & 35.23 & 90.37 \\                               
            \cmidrule(lr){2-16}
                                        & ReAct	                      & 67.12 & 87.20 & 77.75 & 76.18 & 56.39 & 89.46 & 75.98 & 79.16 & 13.26 & 97.53 & 49.92 & 90.94 & 56.74 & 86.74 \\
                                        & + $\approach(m)$            & 52.11 & 89.87 & 75.65 & 78.41 & 37.90 & 93.24 & 51.42 & 87.49 & 22.32 & 95.63 & 39.01 & 93.03 & 46.40 & 89.61 \\
                                        & + $\approach(\mu, \sigma)$  & 44.85 & 92.25 & 74.50 & 78.63 & 43.84 & 92.30 & 55.18 & 86.60 & 17.47 & 96.83 & 41.97 & 92.74 & 46.30 & 89.89 \\
            \cmidrule(lr){2-16}
                                        & DICE	                      & 33.87 & 93.97 & 79.95 & 76.75 & 47.76 & 89.61 & 63.42 & 73.33 &  0.79 & 99.76 & 43.65 & 91.00 & 44.91 & 87.40 \\
                                        & + $\approach(m)$            & 27.97 & 94.91 & 86.13 & 76.61 & 36.01 & 93.97 & 30.32 & 93.28 &  7.58 & 98.56 & 41.90 & 93.13 & 38.32 & 91.74 \\
                                        & + $\approach(\mu, \sigma)$  & 20.30 & 96.20 & 79.52 & 78.28 & 29.69 & 94.63 & 34.10 & 91.24 &  2.63 & 99.37 & 32.01 & 94.20 & 33.04 & 92.32 \\
            \cmidrule(lr){2-16}
                                        & ReAct+DICE                  & 28.01 & 95.38 & 83.56 & 74.73 & 37.02 & 93.92 & 46.93 & 86.09 & 0.68 & 99.79 & 37.20 & 93.93 & 38.90 & 90.64 \\
                                        & + $\approach(m)$            & 23.75 & 95.30 & 86.53 & 74.82 & 36.01 & 93.77 & 28.30 & 93.55 & 7.59 & 98.53 & 41.77 & 92.98 & 37.32 & 91.49 \\
                                        & + $\approach(\mu, \sigma)$  & 17.65 & 96.60 & 82.54 & 76.47 & 29.84 & 94.80 & 30.57 & 92.97 & 2.73 & 99.36 & 32.66 & 94.38 & 32.66 & 92.43 \\
            \cmidrule(lr){2-16}
                                        & ASH	                      & 10.32 & 97.99 & 85.93 & 71.95 & 39.69 & 92.04 & 35.67 & 91.76 &  5.43 & 98.98 & 42.89 & 91.30 & 36.66 & 90.67 \\
                                        & + $\approach(m)$            & 17.78 & 96.87 & 81.38 & 76.97 & 34.35 & 93.71 & 24.22 & 95.11 & 10.20 & 98.18 & 38.96 & 92.90 & 34.48 & 92.29 \\
                                        & + $\approach(\mu, \sigma)$  & 12.12 & 97.72 & 79.01 & 77.28 & 33.61 & 93.56 & 26.33 & 94.75 &  6.26 & 98.81 & 37.11 & 92.96 & 32.41 & 92.51 \\
            \cmidrule(lr){2-16}
                                        & Scale                       & 16.26 & 97.05 & 78.54 & 76.97 & 43.56 & 91.21 & 45.60 & 87.23 &  3.23 & 99.30 & 42.69 & 91.02 & 38.31 & 90.46 \\
                                        & + $\approach(m)$            & 19.45 & 96.40 & 81.18 & 76.81 & 36.21 & 93.26 & 25.11 & 94.71 & 11.07 & 97.97 & 44.32 & 91.81 & 36.22 & 91.83 \\
                                        & + $\approach({\mu,\sigma})$ & 13.46 & 97.53 & 78.75 & 77.82 & 32.68 & 93.77 & 24.29 & 95.06 &  6.09 & 98.85 & 38.36 & 92.74 & 32.27 & 92.63 \\            
            \bottomrule
            \end{tabular}}
            \label{table: DenseNet-101_CIFAR-100_combination}
        \end{table}
    \end{landscape}

    \begin{landscape}
    \begin{table}[!ht]
    \centering
     \caption{Detailed results of post-hoc methods combined with $\approach$ using \textbf{Swin-B} pre-trained on ImageNet-1K. $\boldsymbol{\uparrow}$ indicates higher is better; $\boldsymbol{\downarrow}$ indicates lower is better. The symbols denote the statistic used: $\mu$ (mean), $\sigma$ (std. deviation), $m$ (maximum).}
    \resizebox{0.85\linewidth}{!}{
    \begin{tabular}{ l  cc cc cc cc cc cc }
        \toprule
         \multirow{2}{*}{\textbf{Method}} & \multicolumn{2}{c}{\textbf{SUN}} & \multicolumn{2}{c}{\textbf{Place365}} & \multicolumn{2}{c}{\textbf{Textures}} & \multicolumn{2}{c}{\textbf{iNaturalist}} & \multicolumn{2}{c}{\textbf{OpenImage-O}} & \multicolumn{2}{c}{\textbf{Average}}  \\
        \cmidrule(lr){2-3} \cmidrule(lr){4-5} \cmidrule(lr){6-7} \cmidrule(lr){8-9} \cmidrule(lr){10-11} \cmidrule(lr){12-13}
        & \textbf{FPR95} $\downarrow$ & \textbf{AUROC} $\uparrow$ & \textbf{FPR95} $\downarrow$ & \textbf{AUROC} $\uparrow$ & \textbf{FPR95} $\downarrow$ & \textbf{AUROC} $\uparrow$ & \textbf{FPR95} $\downarrow$ & \textbf{AUROC} $\uparrow$ & \textbf{FPR95} $\downarrow$ & \textbf{AUROC} $\uparrow$ & \textbf{FPR95} $\downarrow$ & \textbf{AUROC} $\uparrow$ \\
        \midrule
        MSP                                      & 66.44 & 79.78 & 67.72 & 80.13 & 64.54 & 78.73 & 48.29 & 87.80 & 59.33 & 82.74 & 61.26 & 81.84 \\
         + $\approach(m)$                        & 77.97 & 76.93 & 81.08 & 75.06 & 74.70 & 78.05 & 68.93 & 81.66 & 74.72 & 78.59 & 75.48 & 78.06 \\
         + $\approach(\mu, \sigma)$              & 69.49 & 78.01 & 72.20 & 76.32 & 64.77 & 78.19 & 55.48 & 83.74 & 65.21 & 78.78 & 65.43 & 79.01 \\
        \cmidrule(lr){1-13}
        ODIN                                     & 88.14 & 43.79 & 89.48 & 42.23 & 78.94 & 54.79 & 83.03 & 49.31 & 88.18 & 42.02 & 85.55 & 46.43 \\
         + $\approach(m)$                        & 76.68 & 58.39 & 77.57 & 58.62 & 72.25 & 62.09 & 64.56 & 68.54 & 76.00 & 56.63 & 73.41 & 60.85 \\
         + $\approach(\mu, \sigma)$              & 80.25 & 51.87 & 82.21 & 51.00 & 74.59 & 58.07 & 69.14 & 61.94 & 81.56 & 48.23 & 77.55 & 54.22  \\
        \cmidrule(lr){1-13}
         Energy                                  & 84.07 & 58.30 & 81.84 & 59.67 & 72.02 & 66.46 & 75.73 & 67.70 & 78.72 & 60.14 & 78.48 & 62.45 \\
         + $\approach(m)$                        & 93.79 & 60.53 & 95.33 & 56.70 & 84.98 & 73.34 & 95.74 & 61.12 & 95.08 & 57.96 & 92.98 & 61.93 \\
         + $\approach(\mu, \sigma)$              & 85.39 & 56.92 & 83.66 & 57.56 & 72.11 & 66.34 & 80.23 & 63.12 & 81.91 & 56.20 & 80.66 & 60.03 \\
        \cmidrule(lr){1-13}
         GradNorm                                 & 78.53 & 82.27 & 78.15 & 79.97 & 77.87 & 76.93 & 78.22 & 86.25 & 65.83 & 85.78 & 75.72 & 82.24 \\
          + $\approach(m)$                        & 99.40 & 21.56 & 99.79 & 21.15 & 94.26 & 37.95 & 99.96 & 16.51 & 99.69 & 17.53 & 98.62 & 22.94 \\
          + $\approach(\mu, \sigma)$              & 99.27 & 21.14 & 99.01 & 21.76 & 93.71 & 35.51 & 99.92 & 16.52 & 99.33 & 17.09 & 98.25 & 22.41 \\
        \cmidrule(lr){1-13}
          KNN                                     & 82.36 & \underline{83.37} & 80.97 & 81.97 & 54.54 & 87.82 & 68.01 & \underline{91.08} & 59.82 & \underline{90.54} & 69.14 & \underline{86.96} \\
          + $\approach(m)$                        & 85.13 & 80.57 & 87.87 & 78.12 & \underline{36.45} & \underline{90.71} & 61.66 & 90.27 & 62.22 & 87.75 & 66.67 & 85.48 \\
           + $\approach(\mu, \sigma)$             & \underline{59.23}	& \textbf{87.34}	& \underline{66.20}	& \textbf{85.08} &\textbf{17.71} & \textbf{94.42} & \textbf{16.01} & \textbf{96.42} & \textbf{26.71}	& \textbf{94.69} & \textbf{37.17} & \textbf{91.59} \\
        \cmidrule(lr){1-13}
          ReAct                                    & \underline{67.49} & 81.94 & \textbf{65.68} & \underline{82.04} & 57.45 & 84.79 & \underline{49.74} & 90.74 & \underline{56.65} & 87.75 & \underline{59.40} & 85.45 \\
           + $\approach(m)$                        & 83.19 & 74.00 & 82.50 & 71.25 & 84.65 & 77.81 & 82.77 & 79.45 & 81.34 & 77.30 & 82.89 & 75.96 \\
           + $\approach(\mu, \sigma)$              & 84.29 & 81.42 & 79.56 & 81.50 & 90.04 & 78.40 & 83.76 & 87.17 & 75.33 & 86.72 & 82.59 & 83.04 \\
        \cmidrule(lr){1-13}
          DICE                                    & 91.63 & 35.01 & 96.45 & 25.15 & 66.01 & 67.84 & 97.95 & 17.93 & 92.73 & 30.44 & 88.95 & 35.27 \\
          + $\approach(m)$                        & 87.34 & 65.21 & 90.29 & 60.53 & 84.31 & 71.87 & 93.96 & 68.63 & 85.99 & 71.96 & 88.38 & 67.64 \\
           + $\approach(\mu, \sigma)$             & 86.15 & 50.32 & 91.63 & 41.82 & 66.72 & 74.64 & 95.35 & 36.33 & 86.16 & 52.42 & 85.20 & 51.11 \\
        \cmidrule(lr){1-13}
          ASH-S                                     & 99.36 & 20.18 & 99.59 & 21.37 & 98.65 & 18.41 & 99.81 & 10.69 & 99.84 & 11.94 & 99.45 & 16.52 \\
           + $\approach(m)$                         & 96.46 & 41.11 & 97.82 & 38.44 & 66.40 & 73.73 & 97.51 & 41.85 & 94.90 & 43.11 & 90.62 & 47.65 \\
            + $\approach(\mu, \sigma)$              & 93.91 & 48.81 & 96.28 & 44.86 & 57.78 & 80.39 & 92.51 & 55.97 & 92.05 & 50.98 & 86.51 & 56.20 \\
        \cmidrule(lr){1-13}                            
          SCALE                                    & 99.07 & 26.90 & 97.56 & 27.93 & 94.54 & 38.08 & 98.93 & 24.86 & 97.76 & 24.95 & 97.57 & 28.54 \\
           + $\approach(m)$                        & 97.23 & 36.04 & 98.58 & 33.71 & 66.81 & 71.39 & 97.93 & 39.20 & 95.67 & 39.18 & 91.24 & 43.90 \\
           + $\approach(\mu, \sigma)$              & 91.91 & 49.28 & 94.60 & 46.13 & 50.43 & 82.99 & 86.69 & 60.81 & 87.91 & 55.57 & 82.31 & 58.95 \\                           
        \bottomrule
        \end{tabular}}
        \label{table: imagenet_combination_swin}
    \end{table}
    \end{landscape}

    \begin{landscape}
    \begin{table}[!ht]
    \centering
     \caption{Detailed results of post-hoc methods combined with $\approach$ using \textbf{ConvNeXt-B} pre-trained on ImageNet-1K. $\boldsymbol{\uparrow}$ indicates higher is better; $\boldsymbol{\downarrow}$ indicates lower is better. The symbols denote the statistic used: $\mu$ (mean), $\sigma$ (std. deviation), $m$ (maximum).}
    \resizebox{0.85\linewidth}{!}{
    \begin{tabular}{ l  cc cc cc cc cc cc }
        \toprule
         \multirow{2}{*}{\textbf{Method}} & \multicolumn{2}{c}{\textbf{SUN}} & \multicolumn{2}{c}{\textbf{Place365}} & \multicolumn{2}{c}{\textbf{Textures}} & \multicolumn{2}{c}{\textbf{iNaturalist}} & \multicolumn{2}{c}{\textbf{OpenImage-O}} & \multicolumn{2}{c}{\textbf{Average}}  \\
        \cmidrule(lr){2-3} \cmidrule(lr){4-5} \cmidrule(lr){6-7} \cmidrule(lr){8-9} \cmidrule(lr){10-11} \cmidrule(lr){12-13}
        & \textbf{FPR95} $\downarrow$ & \textbf{AUROC} $\uparrow$ & \textbf{FPR95} $\downarrow$ & \textbf{AUROC} $\uparrow$ & \textbf{FPR95} $\downarrow$ & \textbf{AUROC} $\uparrow$ & \textbf{FPR95} $\downarrow$ & \textbf{AUROC} $\uparrow$ & \textbf{FPR95} $\downarrow$ & \textbf{AUROC} $\uparrow$ & \textbf{FPR95} $\downarrow$ & \textbf{AUROC} $\uparrow$ \\
        \midrule
        MSP                                      & 77.96 & 74.85 & 78.22 & 74.88 & 82.34 & 68.94 & 74.28 & 77.89 & 77.84 & 74.67 & 78.13 & 74.25 \\
         + $\approach(m)$                        & 85.68 & 65.46 & 86.91 & 64.02 & 85.94 & 65.59 & 84.59 & 66.68 & 84.13 & 67.13 & 85.45 & 65.78 \\
         + $\approach(\mu, \sigma)$              & 79.21 & 75.21 & 79.67 & 74.99 & 82.77 & 69.47 & 74.74 & 79.23 & 78.74 & 75.48 & 79.03 & 74.87 \\
        \cmidrule(lr){1-13}
        ODIN                                     & 80.37 & 53.65 & 85.86 & 49.67 & 71.44 & 66.14 & 68.86 & 69.62 & 81.22 & 55.74 & 77.55 & 58.96 \\
         + $\approach(m)$                        & 66.54 & 72.52 & 71.51 & 70.19 & 62.93 & 76.21 & 53.98 & 82.96 & 65.92 & 73.74 & 64.18 & 75.12 \\
         + $\approach(\mu, \sigma)$              & 69.75 & 65.80 & 74.92 & 62.78 & 64.82 & 71.97 & 56.61 & 79.54 & 71.82 & 65.94 & 67.58 & 69.21 \\
        \cmidrule(lr){1-13}
         Energy                                  & \underline{50.57} & \underline{89.24} & 51.92 & 88.89 & 70.12 & 75.63 & 30.71 & 94.09 & 48.21 & 88.82 & 50.31 & 87.33 \\
         + $\approach(m)$                        & 77.18 & 71.93 & 80.99 & 66.91 & 73.95 & 79.48 & 73.59 & 70.73 & 69.86 & 76.15 & 75.11 & 73.04 \\
         + $\approach(\mu, \sigma)$              & 51.32 & 89.09 & 52.67 & 88.79 & 69.75 & 76.50 & 31.62 & 93.91 & 48.48 & 88.82 & 50.77 & 87.42 \\
        \cmidrule(lr){1-13}
         GradNorm                                 & 95.04 & 41.58 & 94.01 & 45.25 & 98.40 & 19.87 & 91.10 & 53.95 & 95.60 & 37.07 & 94.83 & 39.55 \\
          + $\approach(m)$                        & 98.85 & 36.69 & 99.32 & 34.24 & 92.29 & 53.05 & 99.93 & 33.44 & 99.29 & 34.77 & 97.94 & 38.44 \\
          + $\approach(\mu, \sigma)$              & 98.28 & 36.96 & 98.96 & 35.02 & 92.80 & 50.95 & 99.72 & 37.04 & 98.72 & 35.80 & 97.70 & 39.15 \\
        \cmidrule(lr){1-13}
          KNN                                     & 71.10 & 85.34 & 71.67 & 83.92 & \underline{64.66} & \underline{85.76} & 69.26 & 89.48 & 59.38 & 89.31 & 67.21 & 86.76  \\
          + $\approach(m)$                        & 85.88 & 81.19 & 86.09 & 79.69 & 72.16 & 83.64 & 76.34 & 87.98 & 82.88 & 83.59 & 80.67 & 83.22  \\
           + $\approach(\mu, \sigma)$             & 51.49 & 87.70 & 56.99 & 85.61 & \textbf{31.05} & \textbf{90.71} & \textbf{25.89} & \textbf{94.96} & \textbf{31.73} & \textbf{92.97} & \textbf{39.43} & \textbf{90.23} \\
        \cmidrule(lr){1-13}
          ReAct                                   & 52.89 & 88.94 & 54.72 & 88.37 & 68.14 & 77.30 & 33.96 & 93.56 & 48.32 & \underline{89.35} & 51.60 & 87.50 \\
           + $\approach(m)$                       & 83.98 & 72.40 & 85.93 & 68.07 & 81.76 & 77.47 & 81.13 & 71.81 & 74.95 & 77.47 & 81.55 & 73.44 \\
           + $\approach(\mu, \sigma)$             & 62.36 & 86.56 & 62.16 & 86.67 & 75.21 & 72.01 & 43.29 & 92.01 & 56.07 & 87.42 & 59.82 & 84.94 \\
        \cmidrule(lr){1-13}
          DICE                                    & 60.65 & 85.04 & 64.94 & 83.37 & 67.66 & 75.52 & 45.76 & 90.63 & 54.89 & 86.50 & 58.78 & 84.21 \\
          + $\approach(m)$                        & 79.48 & 70.75 & 85.20 & 64.89 & 77.57 & 76.52 & 78.49 & 70.12 & 73.46 & 75.33 & 78.84 & 71.52 \\
           + $\approach(\mu, \sigma)$             & 62.44 & 84.53 & 66.49 & 82.79 & 70.69 & 73.78 & 49.83 & 90.05 & 57.60 & 85.92 & 61.41 & 83.41 \\
        \cmidrule(lr){1-13}
          ASH                                     & 53.01 & 87.51 & \underline{48.51} & \underline{88.33} & 79.11 & 76.68 & 36.88 & 91.60 & 54.72 & 85.66 & 54.45 & 85.96 \\
           + $\approach(m)$                       & 98.30 & 39.80 & 98.99 & 36.72 & 80.92 & 65.85 & 99.69 & 35.13 & 97.94 & 42.08 & 95.17 & 43.92 \\
            + $\approach(\mu, \sigma)$            & 94.53 & 50.09 & 96.10 & 46.19 & 68.83 & 72.03 & 94.48 & 50.22 & 91.12 & 50.16 & 89.01 & 53.74 \\
        \cmidrule(lr){1-13}                            
          SCALE                                   & \textbf{48.71} & \textbf{89.57} & \textbf{45.64} & \textbf{90.01} & 72.54 & 78.46 & \underline{30.34} & \underline{93.80} & \underline{47.65} & 88.86 & \underline{48.97} & \underline{88.14} \\
           + $\approach(m)$                       & 95.31 & 52.56 & 96.87 & 47.33 & 72.20 & 74.94 & 96.90 & 49.97 & 92.75 & 56.59 & 90.80 & 56.28 \\
           + $\approach(\mu, \sigma)$             & 83.44 & 62.69 & 87.27 & 59.18 & 56.37 & 78.19 & 77.71 & 68.00 & 77.36 & 62.82 & 76.43 & 66.18 \\                           
        \bottomrule
        \end{tabular}}
        \label{table: imagenet_combination_convnext}
    \end{table}
    \end{landscape}

    \begin{landscape}
    \begin{table}[!ht]
    \centering
     \caption{Detailed results of post-hoc methods combined with $\approach$ using \textbf{ResNet-50} pre-trained on ImageNet-1K. $\boldsymbol{\uparrow}$ indicates higher is better; $\boldsymbol{\downarrow}$ indicates lower is better. The symbols denote the statistic used: $\mu$ (mean), $\sigma$ (std. deviation), $m$ (maximum).}
    \resizebox{0.85\linewidth}{!}{
    \begin{tabular}{ l  cc cc cc cc cc cc }
        \toprule
         \multirow{2}{*}{\textbf{Method}} & \multicolumn{2}{c}{\textbf{SUN}} & \multicolumn{2}{c}{\textbf{Place365}} & \multicolumn{2}{c}{\textbf{Textures}} & \multicolumn{2}{c}{\textbf{iNaturalist}} & \multicolumn{2}{c}{\textbf{OpenImage-O}} & \multicolumn{2}{c}{\textbf{Average}}  \\
        \cmidrule(lr){2-3} \cmidrule(lr){4-5} \cmidrule(lr){6-7} \cmidrule(lr){8-9} \cmidrule(lr){10-11} \cmidrule(lr){12-13}
        & \textbf{FPR95} $\downarrow$ & \textbf{AUROC} $\uparrow$ & \textbf{FPR95} $\downarrow$ & \textbf{AUROC} $\uparrow$ & \textbf{FPR95} $\downarrow$ & \textbf{AUROC} $\uparrow$ & \textbf{FPR95} $\downarrow$ & \textbf{AUROC} $\uparrow$ & \textbf{FPR95} $\downarrow$ & \textbf{AUROC} $\uparrow$ & \textbf{FPR95} $\downarrow$ & \textbf{AUROC} $\uparrow$ \\
        \midrule
        MSP                                      & 69.11 & 81.64 & 72.06 & 80.54 & 66.26 & 80.43 & 52.83 & 88.39 & 66.97 & 83.89 & 65.45 & 82.98 \\
         + $\approach(m)$                        & 81.56 & 75.85 & 83.35 & 74.42 & 71.26 & 80.95 & 75.63 & 79.93 & 76.89 & 79.65 & 77.74 & 78.16 \\
         + $\approach(\mu, \sigma)$              & 72.96 & 79.77 & 75.22 & 78.59 & 64.49 & 81.44 & 58.92 & 85.81 & 67.98 & 82.79 & 67.91 & 81.68 \\
        \cmidrule(lr){1-13}
        ODIN                                     & 57.11 & 86.77 & 64.69 & 84.12 & 47.30 & 87.82 & 41.82 & 92.25 & 59.15 & 87.54 & 54.01 & 87.70 \\
         + $\approach(m)$                        & 58.10 & 86.94 & 65.19 & 84.48 & 49.88 & 87.66 & 45.37 & 91.92 & 61.19 & 87.62 & 55.95 & 87.72 \\
         + $\approach(\mu, \sigma)$              & 57.27 & 86.75 & 64.96 & 84.12 & 47.75 & 87.74 & 42.39 & 92.19 & 59.75 & 87.49 & 54.42 & 87.66 \\
        \cmidrule(lr){1-13}
         Energy                                  & 58.82 & 86.58 & 65.99 & 83.96 & 52.43 & 86.72 & 53.74 & 90.62 & 64.70 & 87.08 & 59.14 & 86.99 \\
         + $\approach(m)$                        & 53.12 & 85.49 & 63.58 & 80.54 & 20.94 & 95.09 & 36.11 & 91.29 & 40.83 & 90.56 & 42.92 & 88.59 \\
         + $\approach(\mu, \sigma)$              & 54.71 & 86.47 & 64.06 & 82.55 & 21.13 & 95.32 & 34.50 & 92.95 & 41.21 & 91.32 & 43.12 & 89.72 \\
        \cmidrule(lr){1-13}
         GradNorm                                 & 37.42 & 90.10 & 48.88 & 86.08 & 32.84 & 90.64 & 26.78 & 93.9 & 57.76 & 80.44 & 40.74 & 88.23 \\
          + $\approach(m)$                        & 58.87 & 78.88 & 71.83 & 70.67 & 19.47 & 94.61 & 41.84 & 88.48 & 59.86 & 77.39 & 50.37 & 82.01 \\
          + $\approach(\mu, \sigma)$              & 46.08 & 86.44 & 58.35 & 80.76 & 23.72 & 93.85 & 31.31 & 92.39 & 58.09 & 79.05 & 43.51 & 86.50 \\
        \cmidrule(lr){1-13}
          KNN                                      & 78.95 & 77.44 & 81.86 & 73.91 & 16.05 & 96.11 & 78.33 & 79.15 & 65.73 & 82.27 & 64.18 & 81.78 \\
          + $\approach(m)$                         & 54.28 & 86.75 & 61.99 & 82.79 & 18.60 & 96.41 & 42.36 & 91.23 & 66.56 & 83.12 & 48.76 & 88.06 \\
           + $\approach(\mu, \sigma)$              & 51.30 & 87.97 & 60.48 & 83.81 & 20.30 & 95.99 & 44.62 & 91.36 & 64.67 & 84.41 & 48.27 & 88.71 \\
        \cmidrule(lr){1-13}
          ReAct                                    & 23.95 & 94.46 & 33.48 & 91.97 & 46.4 & 90.31 & 19.56 & 96.4 & 49.78 & 89.06 & 34.64 & 92.44 \\
           + $\approach(m)$                        & 43.97 & 89.54 & 53.47 & 85.96 & 26.81 & 93.91 & 30.74 & 93.64 & 42.57 & 90.67 & 39.51 & 90.74 \\
           + $\approach(\mu, \sigma)$              & 34.14 & 92.45 & 44.47 & 89.64 & 34.11 & 92.88 & 21.69 & 95.91 & 43.74 & 91.30 & 35.63 & 92.44 \\
        \cmidrule(lr){1-13}
          DICE                                    & 36.49 & 90.92 & 47.93 & 87.65 & 32.59 & 90.45 & 26.61 & 94.51 & 54.67 & 85.67 & 39.66 & 89.84 \\
          + $\approach(m)$                        & 49.67 & 84.39 & 63.99 & 77.73 & 15.87 & 96.21 & 31.88 & 91.98 & 49.12 & 85.14 & 42.11 & 87.09 \\
           + $\approach(\mu, \sigma)$             & 37.38 & 90.32 & 50.34 & 86.42 & 19.75 & 94.92 & 21.76 & 95.33 & 46.10 & 88.05 & 35.07 & 91.01 \\
        \cmidrule(lr){1-13}
          ASH                                     & 28.00 & 94.04 & 39.67 & 91.03 & 11.88 & 97.62 & 11.41 & 97.88 & 38.70 & 90.79 & 25.93 & 94.27 \\
           + $\approach(m)$                       & 29.89 & 93.61 & 43.08 & 89.89 &  \textbf{9.22} & \textbf{98.26} & 15.25 & 97.21 & 39.73 & 91.08 & 27.43 & 94.01 \\
            + $\approach(\mu, \sigma)$            & 27.67 & 94.02 & 40.01 & 90.73 &  \underline{9.91} & \underline{98.07} & 11.69 & 97.81 & 37.48 & 90.92 & 25.35 & 94.31 \\
        \cmidrule(lr){1-13}                            
          SCALE                                   & 25.78 & 94.54 & 36.86 & 91.96 & 14.56 & 96.75 & \underline{10.37} & \underline{98.02} & 36.23 & 92.30 & \underline{24.76} & \underline{94.71} \\
           + $\approach(m)$                        & \underline{13.26} & \underline{97.37} & \underline{27.79} & \underline{93.66} & 40.56 & 90.02 &  \textbf{9.52} & \textbf{98.11} & \underline{34.48} & \underline{92.50} & 25.12 & 94.33 \\
           + $\approach(\mu, \sigma)$              &  \textbf{9.61} & \textbf{98.10} & \textbf{24.49} & \textbf{94.62} & 36.01 & 91.83 & 10.59 & 97.75 & \textbf{33.30} & \textbf{92.85} & \textbf{22.80} & \textbf{95.03} \\                          
        \bottomrule
        \end{tabular}}
        \label{table: imagenet_combination_resnet}
    \end{table}
    \end{landscape}

\section{Alternate Statistics: Median and Shannon Entropy}
\label{appendix: alternate statistics}

To justify our choice of mean, variance, and maximum statistics, we conducted an ablation study exploring two alternatives: the median and Shannon entropy. While an activation map encodes many statistical properties, for our framework to be effective, the chosen statistic must produce a distinctive signature for ID versus OOD samples. For the purpose of this study, we limit our experiment to SUN, Place, Texture and iNaturalist (ImageNet benchmark)

\textbf{Median.} We begin by extracting the \textit{median} from each activation map of \( g(\mathbf{x}) \in \mathbb{R}^{n \times k \times k} \), transforming it into an \(n\)-dimensional feature vector \( h(\mathbf{x}) \in \mathbb{R}^n \) using global median pooling, as defined in Equation~\ref{eq: median_pooling}:

\begin{equation}
    h(\mathbf{x}) = \texttt{median}\left( g(\mathbf{x}) \right)
    \label{eq: median_pooling}
\end{equation}

Here, \texttt{median} denotes a global median pooling operation applied independently to each of the \( n \) activation maps in \( g(\mathbf{x}) \). 

\textbf{Shannon Entropy}. In addition to the median, we compute the \textit{Shannon entropy} for each activation map. For the \( i \)-th channel activation \( g_i(\mathbf{x}) \in \mathbb{R}^{k \times k} \), the entropy is computed as shown in Equation~\ref{eq: entropy}. To do so, we first flatten \( g_i(\mathbf{x}) \) into a vector of length \( k^2 \), and normalize it to define a discrete probability distribution \( p_{ij} \), as described in Equation~\ref{eq: prob}. By collecting the entropy values across all channels, we obtain the final feature representation \( h(\mathbf{x}) \in \mathbb{R}^n \), as defined in Equation~\ref{eq: entropy_pooling} .

\begin{small}
    \begin{equation}
        p_{ij} = \frac{g_i(\mathbf{x})_j}{\sum_{l=1}^{k^2} g_i(\mathbf{x})_l}, \quad j = 1, \dots, k^2
        \label{eq: prob}
    \end{equation}  
\end{small}
\begin{small}
    \begin{equation}
        \texttt{entropy}_i(\mathbf{x}) = -\sum_{j=1}^{k^2} p_{ij} \log p_{ij}
        \label{eq: entropy}
\end{equation}  
\end{small}
\begin{small}
    \begin{equation}
        h(\mathbf{x}) = \texttt{entropy}\left( g(\mathbf{x}) \right) = 
        \left[ \texttt{entropy}_1(\mathbf{x}), \dots, \texttt{entropy}_n(\mathbf{x}) \right]^\top
        \label{eq: entropy_pooling}
    \end{equation}
\end{small}

\begin{figure}[ht]
  \centering
  \includegraphics[width=\textwidth]{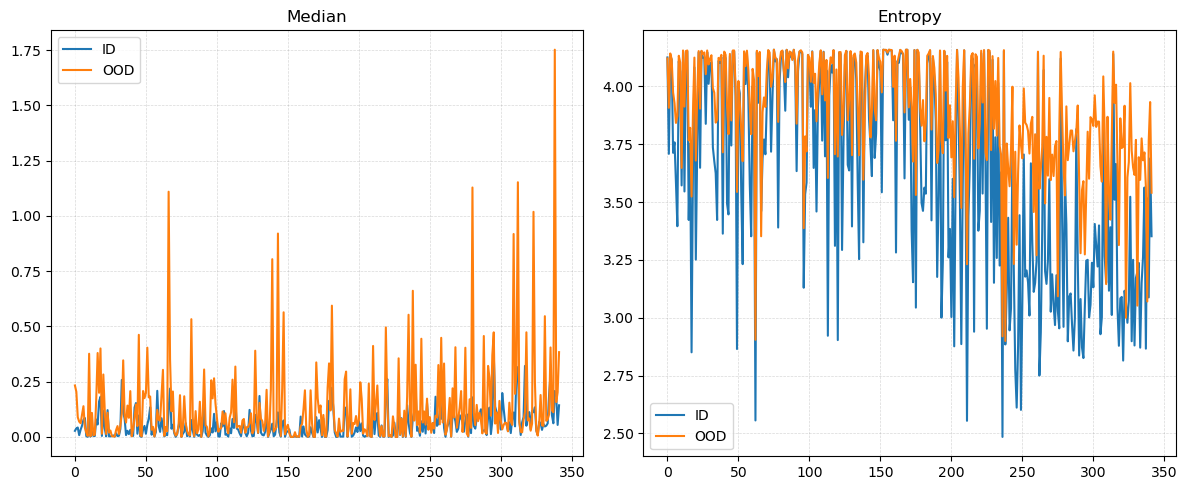}
  \caption{ \textit{Illustration of penultimate layer features derived from median (\textbf{left}) and entropy (\textbf{right}) statistics. For both measures, out-of-distribution (OOD) samples (Texture) exhibit consistently higher values than in-distribution (ID) samples (CIFAR-100), creating an "inverted separation" that is challenging for standard OOD scoring. (Model: DenseNet-101).}}
  \label{fig: median_entropy}
\end{figure}

The performance of these alternate statistics is presented in Table~\ref{table: ablation_median_entropy_imagenet} (ImageNet),  Table~\ref{table: ablation_median_entropy_cifar-10} and ~\ref{table: ablation_median_entropy_cifar-100} (CIFAR). The results are unambiguous: representations built from the maximum and mean/variance statistics are overwhelmingly superior to those from the median and entropy. The entropy-based features, in particular, perform poorly, with FPR95 scores often approaching 100\%, rendering them ineffective for OOD detection. The median-based features also struggle significantly, with FPR95 scores consistently above 80\% on ImageNet.

The reason for this poor performance is illustrated in the distributions shown in Figure~\ref{fig: median_entropy}. Both median and entropy produce feature representations where OOD samples consistently yield higher-magnitude values than ID samples. This creates an "inverted separation" that is fundamentally at odds with standard scoring functions (like energy), which assume higher scores correspond to ID samples. This confirms our central hypothesis: effective OOD detection requires statistics that specifically amplify the characteristic signals of ID samples relative to OOD samples, a property that the maximum and variance possess, but the median and entropy do not in this context.

\begin{table}[ht]
    \centering
    \caption{Performance of various pre-pooling statistics on the ImageNet-1K benchmark. To isolate the effect of each statistic, results are based solely on the energy score without post-hoc methods applied. $\boldsymbol{\downarrow}$ indicates that lower values are better, while $\boldsymbol{\uparrow}$ indicates that higher values are better.}
    \resizebox{\linewidth}{!}{
    \begin{tabular}{ l l cc cc cc cc cc}
    \toprule
    \multirow{2}{*}{\textbf{Model}} & \multirow{2}{*}{\textbf{Statistics}} & \multicolumn{2}{c}{\textbf{SUN}} & \multicolumn{2}{c}{\textbf{Place}} & \multicolumn{2}{c}{\textbf{Texture}} & \multicolumn{2}{c}{\textbf{iNaturalist}} & \multicolumn{2}{c}{\textbf{Average}}  \\
    \cmidrule(lr){3-4} \cmidrule(lr){5-6} \cmidrule(lr){7-8} \cmidrule(lr){9-10} \cmidrule(lr){11-12}

    && \textbf{FPR95} $\downarrow$ & \textbf{AUROC} $\uparrow$ & \textbf{FPR95} $\downarrow$ & \textbf{AUROC} $\uparrow$ & \textbf{FPR95} $\downarrow$ & \textbf{AUROC} $\uparrow$ & \textbf{FPR95} $\downarrow$ & \textbf{AUROC} $\uparrow$ & \textbf{FPR95} $\downarrow$ & \textbf{AUROC} $\uparrow$ \\
    \midrule
    \multirow{6}{*}{DenseNet-121}  & $\mu(\mathbf{x})$                          & 52.51 & 87.27 & 58.24 & 85.05 & 52.22 & 85.42 & 39.75 & 92.66 & 50.68 & 87.60 \\
                                   & $\mu(\mathbf{x}) + 2.0\sigma(\mathbf{x})$  & 48.40 & 88.64 & 58.38 & 84.94 & 25.37 & 94.14 & 30.31 & 93.79 & 40.62 & 90.38 \\
                                   & $m(\mathbf{x})$                            & 48.55 & 88.03 & 59.36 & 83.53 & 27.54 & 93.62 & 33.74 & 92.20 & 42.30 & 89.34 \\
                                   & $median(\mathbf{x})$                       & 91.02 & 73.85 & 88.85 & 73.68 & 87.43 & 67.32 & 90.3 & 78.44 & 89.4 & 73.32 \\
                                   & $entropy(\mathbf{x})$                      & 99.84 & 22.68 & 99.82 & 25.15 & 99.65 & 39.69 & 99.94 & 20.19 & 99.81 & 26.93 \\
    \midrule
    \multirow{6}{*}{ResNet-50}     & $\mu(\mathbf{x})$                          & 58.82 & 86.58 & 65.99 & 83.96 & 52.43 & 86.72 & 53.74 & 90.62 & 57.74 & 86.97 \\
                                   & $\mu(\mathbf{x}) + 2.0\sigma(\mathbf{x})$  & 54.71 & 86.47 & 64.06 & 82.55 & 21.13 & 95.32 & 34.50 & 92.95 & 43.60 & 89.33 \\
                                   & $m(\mathbf{x})$                            & 53.12 & 85.49 & 63.58 & 80.54 & 20.94 & 95.09 & 36.11 & 91.29 & 43.44 & 88.10 \\
                                   & $median(\mathbf{x})$                       & 82.74 & 79.96 & 83.14 & 78.55 & 82.66 & 76.52 & 86.80 & 82.26 & 83.83 & 79.32 \\
                                   & $entropy(\mathbf{x})$                      & 99.84 & 27.69 & 99.65 & 30.39 & 99.36 & 43.69 & 99.69 & 36.11 & 99.64 & 34.47 \\
    \midrule
    \multirow{6}{*}{MobileNet-v2}  & $\mu(\mathbf{x})$                          & 59.60 & 86.16 & 66.36 & 83.15 & 54.82 & 86.57 & 55.33 & 90.37 & 59.03 & 86.56 \\
                                   & $\mu(\mathbf{x}) + 2.0\sigma(\mathbf{x})$  & 55.47 & 87.15 & 64.82 & 83.18 & 25.78 & 94.33 & 38.94 & 92.98 & 46.25 & 89.41 \\
                                   & $m(\mathbf{x})$                            & 53.64 & 87.26 & 63.61 & 82.93 & 25.87 & 94.03 & 40.25 & 92.22 & 45.84 & 89.11 \\
                                   & $median(\mathbf{x})$                       & 85.12 & 78.08 & 85.02 & 76.46 & 83.88 & 74.62 & 86.35 & 80.98 & 85.09 & 77.53 \\ 
                                   & $entropy(\mathbf{x})$                      & 99.61 & 19.59 & 99.12 & 25.53 & 99.77 & 18.35 & 99.58 & 27.55 & 99.52 & 22.75 \\
    \midrule
    \multirow{6}{*}{EfficientNet-b0} & $\mu(\mathbf{x})$                          & 85.01 & 72.86 & 86.06 & 70.99 & 75.99  & 75.86 & 78.91  & 79.78 & 81.49 & 74.87 \\
                                     & $\mu(\mathbf{x}) + 2.0\sigma(\mathbf{x})$  & 61.85 & 84.67 & 71.31 & 79.01 & 12.18  & 97.52 & 48.25  & 88.35 & 48.40 & 87.39 \\
                                     & $m(\mathbf{x})$                            & 69.42 & 80.74 & 79.09 & 73.67 & 15.82  & 96.38 & 63.68  & 80.63 & 57.00 & 82.85 \\                                
                                     & $median(\mathbf{x})$                       & 99.92 & 17.72 & 99.71 & 22.15 & 100.00 & 3 .80 & 100.00 & 10.63 & 99.91 & 13.57 \\
                                     & $entropy(\mathbf{x})$                      & 97.25 & 47.42 & 97.96 & 41.37 & 81.35  & 76.51 & 98.14  & 38.82 & 93.67 & 51.03 \\                            
    \bottomrule
    \end{tabular}
    }
    \label{table: ablation_median_entropy_imagenet}
\end{table}

\begin{landscape}
\begin{table}[ht]
    \centering
     \caption{Performance of various pre-pooling statistics on the CIFAR-10. To isolate the effect of each statistic, results are based solely on the energy score without any additional post-hoc methods applied. $\boldsymbol{\downarrow}$ indicates that lower values are better, while $\boldsymbol{\uparrow}$ indicates that higher values are better.}
    \resizebox{\linewidth}{!}{
    \begin{tabular}{ c l cc cc cc cc cc cc cc}
    \toprule
     \multirow{2}{*}{\textbf{Model}} & \multirow{2}{*}{\textbf{Statistics}} & \multicolumn{2}{c}{\textbf{SVHN}} & \multicolumn{2}{c}{\textbf{Place365}} & \multicolumn{2}{c}{\textbf{iSUN}} & \multicolumn{2}{c}{\textbf{Textures}} & \multicolumn{2}{c}{\textbf{LSUN-c}}  & \multicolumn{2}{c}{\textbf{LSUN-r}} & \multicolumn{2}{c}{\textbf{Average}}  \\
    \cmidrule(lr){3-4} 
    \cmidrule(lr){5-6} 
    \cmidrule(lr){7-8} 
    \cmidrule(lr){9-10} 
    \cmidrule(lr){11-12} 
    \cmidrule(lr){13-14} 
    \cmidrule(lr){15-16}
    && \textbf{FPR95} $\downarrow$ & \textbf{AUROC} $\uparrow$ & \textbf{FPR95} $\downarrow$ & \textbf{AUROC} $\uparrow$ & \textbf{FPR95} $\downarrow$ & \textbf{AUROC} $\uparrow$ & \textbf{FPR95} $\downarrow$ & \textbf{AUROC} $\uparrow$ & \textbf{FPR95} $\downarrow$ & \textbf{AUROC} $\uparrow$ & \textbf{FPR95} $\downarrow$ & \textbf{AUROC} $\uparrow$ & \textbf{FPR95} $\downarrow$ & \textbf{AUROC} $\uparrow$ \\
    \midrule
     \multirow{6}{*}{ResNet-18}
     & $\mu(\mathbf{x})$                             & 44.32 & 94.04 & 41.31 & 91.73 & 35.46 & 94.64 & 50.39 & 91.12 &  9.77 & 98.19 & 32.41 & 95.16 & 35.61 & 94.14 \\
     & $\mu(\mathbf{x}) + 3.0\sigma(\mathbf{x})$     & 19.83 & 96.47 & 36.11 & 93.16 & 20.48 & 96.90 & 26.77 & 95.89 &  5.62 & 98.79 & 17.75 & 97.21 & 21.09 & 96.40 \\
     & $m(\mathbf{x})$                               & 19.81 & 96.29 & 32.32 & 93.73 & 15.79 & 97.37 & 21.90 & 96.44 &  5.83 & 98.73 & 13.63 & 97.61 & 18.21 & 96.69 \\
     & $median(\mathbf{x})$                          & 99.57 & 31.77 & 88.60 & 63.39 & 96.20 & 50.98 & 99.22 & 31.82 & 95.09 & 65.21 & 95.68 & 51.78 & 95.73 & 49.16 \\
     & $entropy(\mathbf{x})$                         & 98.01 & 78.03 & 98.99 & 52.54 & 99.99 & 43.91 & 99.52 & 65.07 & 99.90 & 62.83 & 99.98 & 41.74 & 99.40 & 57.35 \\
    \cmidrule(lr){2-16}
    \multirow{6}{*}{ResNet-34} 
    & $\mu(\mathbf{x})$                              & 35.44 & 93.76 & 38.15 & 92.27 & 19.90 & 96.87 & 42.52 & 92.54 &  3.38 & 99.10 & 16.86 & 97.20 & 26.04 & 95.29 \\
    & $\mu(\mathbf{x}) + 3.0\sigma(\mathbf{x})$      & 28.96 & 94.90 & 32.66 & 93.69 & 11.31 & 97.88 & 23.07 & 96.39 &  3.47 & 99.22 & 10.35 & 98.04 & 18.30 & 96.69 \\
    & $m(\mathbf{x})$                                & 29.29 & 94.59 & 29.59 & 94.21 & 10.17 & 98.03 & 20.23 & 96.73 &  4.00 & 99.10 &  9.45 & 98.16 & 17.12 & 96.81 \\
    & $median(\mathbf{x})$                           & 99.80 & 24.42 & 96.12 & 55.41 & 99.24 & 48.47 & 99.86 & 27.82 & 97.35 & 58.12 & 98.91 & 51.98 & 98.55 & 44.37 \\
    & $entropy(\mathbf{x})$                          & 80.63 & 89.37 & 97.05 & 62.50 & 99.73 & 61.00 & 96.52 & 79.21 & 98.92 & 78.69 & 99.66 & 60.31 & 95.42 & 71.85 \\
    \cmidrule(lr){2-16}
    \multirow{6}{*}{DenseNet-101} 
    & $\mu(\mathbf{x})$                             & 37.91 & 93.59 & 36.38 & 92.39 &  7.83 & 98.23 & 43.85 & 90.49 &  1.95 & 99.47 &  7.34 & 98.34 & 22.54 & 95.42 \\
    & $\mu(\mathbf{x}) + 3.0\sigma(\mathbf{x})$     & 24.75 & 95.93 & 32.08 & 93.43 &  5.77 & 98.64 & 25.23 & 95.84 &  5.13 & 98.94 &  5.63 & 98.70 & 16.43 & 96.91 \\
    & $m(\mathbf{x})$                               & 30.50 & 94.50 & 33.04 & 93.28 &  7.81 & 98.29 & 25.16 & 95.85 &  9.95 & 98.19 &  7.32 & 98.37 & 18.96 & 96.41 \\
    & $median(\mathbf{x})$                          & 96.80 & 78.31 & 98.70 & 55.92 & 99.78 & 72.66 & 98.55 & 70.95 & 99.49 & 67.91 & 99.76 & 72.17 & 98.85 & 69.65 \\
    & $entropy(\mathbf{x})$                         & 92.28 & 69.08 & 83.00 & 75.15 & 87.27 & 79.73 & 95.92 & 58.62 & 17.72 & 96.86 & 87.34 & 79.75 & 77.26 & 76.53 \\
    \cmidrule(lr){2-16}
    \multirow{6}{*}{MobileNet-v2} 
    & $\mu(\mathbf{x})$                             & 75.83 & 85.85 & 44.98 & 90.62 & 29.68 & 95.03 & 48.67 & 91.19 &  9.54 & 98.12 & 29.80 & 95.10 & 39.75 & 92.65 \\
    & $\mu(\mathbf{x}) + 3.0\sigma(\mathbf{x})$     & 62.07 & 89.18 & 43.87 & 90.86 & 22.38 & 96.40 & 33.24 & 94.71 & 14.21 & 97.50 & 21.32 & 96.57 & 32.85 & 94.20 \\
    & $m(\mathbf{x})$                               & 69.61 & 87.56 & 44.33 & 91.00 & 23.33 & 96.26 & 37.87 & 94.11 & 17.62 & 97.02 & 22.48 & 96.40 & 35.87 & 93.73 \\
    & $median(\mathbf{x})$                          & 89.92 & 80.06 & 60.84 & 85.74 & 54.69 & 89.94 & 66.37 & 84.49 & 12.42 & 97.68 & 54.48 & 89.74 & 56.45 & 87.94 \\
    & $entropy(\mathbf{x})$                         & 99.75 & 80.37 & 99.68 & 55.79 & 99.94 & 63.55 & 99.75 & 71.94 & 99.56 & 72.44 & 99.95 & 62.49 & 99.77 & 67.76 \\
    \bottomrule                                                              
    \end{tabular}}
    \label{table: ablation_median_entropy_cifar-10}
\end{table}
\end{landscape}

\begin{landscape}
\begin{table}[ht]
    \centering
     \caption{Performance of various pre-pooling statistics on CIFAR-100 benchmarks. To isolate the effect of each statistic, results are based solely on the energy score without post-hoc methods applied. $\boldsymbol{\downarrow}$ indicates that lower values are better, while $\boldsymbol{\uparrow}$ indicates that higher values are better.}
    \resizebox{\linewidth}{!}{
    \begin{tabular}{ c l cc cc cc cc cc cc cc}
    \toprule
     \multirow{2}{*}{\textbf{Model}} & \multirow{2}{*}{\textbf{Statistics}} & \multicolumn{2}{c}{\textbf{SVHN}} & \multicolumn{2}{c}{\textbf{Place365}} & \multicolumn{2}{c}{\textbf{iSUN}} & \multicolumn{2}{c}{\textbf{Textures}} & \multicolumn{2}{c}{\textbf{LSUN-c}}  & \multicolumn{2}{c}{\textbf{LSUN-r}} & \multicolumn{2}{c}{\textbf{Average}}  \\
    \cmidrule(lr){3-4} 
    \cmidrule(lr){5-6} 
    \cmidrule(lr){7-8} 
    \cmidrule(lr){9-10} 
    \cmidrule(lr){11-12} 
    \cmidrule(lr){13-14} 
    \cmidrule(lr){15-16}
    && \textbf{FPR95} $\downarrow$ & \textbf{AUROC} $\uparrow$ & \textbf{FPR95} $\downarrow$ & \textbf{AUROC} $\uparrow$ & \textbf{FPR95} $\downarrow$ & \textbf{AUROC} $\uparrow$ & \textbf{FPR95} $\downarrow$ & \textbf{AUROC} $\uparrow$ & \textbf{FPR95} $\downarrow$ & \textbf{AUROC} $\uparrow$ & \textbf{FPR95} $\downarrow$ & \textbf{AUROC} $\uparrow$ & \textbf{FPR95} $\downarrow$ & \textbf{AUROC} $\uparrow$ \\
    \midrule
     \multirow{6}{*}{ResNet-18} 
     & $\mu(\mathbf{x})$                         & 66.64 & 89.53 & 81.23 & 76.84 & 73.67 & 82.01 & 85.30 & 75.68 & 48.01 & 91.63 & 70.30 & 83.38 & 70.86 & 83.18 \\
     & $\mu(\mathbf{x}) + 3.0\sigma(\mathbf{x})$ & 42.01 & 93.75 & 78.99 & 77.58 & 61.13 & 88.60 & 59.89 & 87.89 & 38.56 & 93.48 & 59.76 & 88.86 & 56.72 & 88.36 \\
     & $\mu(\mathbf{x}) + 4.0\sigma(\mathbf{x})$ & 40.43 & 93.92 & 78.88 & 77.54 & 59.96 & 88.98 & 57.50 & 88.55 & 38.04 & 93.55 & 58.75 & 89.18 & 55.59 & 88.62 \\
     & $median(\mathbf{x})$                      & 92.21 & 76.58 & 86.71 & 72.47 & 88.20 & 68.61 & 96.58 & 56.13 & 74.53 & 84.11 & 85.97 & 71.06 & 87.37 & 71.50 \\
     & $entropy(\mathbf{x})$                     & 99.68 & 22.01 & 97.29 & 50.70 & 99.46 & 49.21 & 98.37 & 38.77 & 99.57 & 32.48 & 99.34 & 50.35 & 98.95 & 40.59 \\
    \cmidrule(lr){2-16}
    \multirow{6}{*}{ResNet-34} 
    & $\mu(\mathbf{x})$                         & 57.79 & 89.80 & 81.17 & 77.25 & 71.83 & 84.14 & 86.77 & 75.82 & 55.56 & 89.92 & 68.70 & 84.93 & 70.30 & 83.64 \\
    & $\mu(\mathbf{x}) + 3.0\sigma(\mathbf{x})$ & 31.07 & 95.00 & 80.04 & 77.53 & 59.46 & 89.50 & 59.75 & 88.31 & 41.21 & 93.00 & 60.05 & 89.32 & 55.26 & 88.78 \\
    & $m(\mathbf{x})$                           & 29.56 & 95.19 & 77.68 & 78.81 & 55.37 & 90.51 & 56.29 & 89.14 & 38.37 & 93.37 & 56.27 & 90.23 & 52.26 & 89.54 \\
    & $median(\mathbf{x})$                      & 99.80 & 24.42 & 96.12 & 55.41 & 99.24 & 48.47 & 99.86 & 27.82 & 97.35 & 58.12 & 98.91 & 51.98 & 98.55 & 44.37 \\
    & $entropy(\mathbf{x})$                     & 80.63 & 89.37 & 97.05 & 62.50 & 99.73 & 61.00 & 96.52 & 79.21 & 98.92 & 78.69 & 99.66 & 60.31 & 95.42 & 71.85 \\
    \cmidrule(lr){2-16}
    \multirow{6}{*}{DenseNet-101} 
    & $\mu(\mathbf{x})$                         & 70.99 & 86.66 & 77.12 & 76.94 & 64.28 & 83.92 & 83.60 & 67.47 & 11.45 & 97.89 & 56.08 & 86.84 & 60.59 & 83.29 \\
    & $\mu(\mathbf{x}) + 3.0\sigma(\mathbf{x})$ & 45.23 & 91.91 & 72.11 & 79.78 & 42.21 & 92.16 & 55.85 & 85.47 & 13.92 & 97.53 & 40.52 & 92.45 & 44.97 & 89.88 \\
    & $m(\mathbf{x})$                           & 52.32 & 89.41 & 73.49 & 79.62 & 35.76 & 93.37 & 49.88 & 88.07 & 18.58 & 96.44 & 37.27 & 92.98 & 44.55 & 89.98 \\
    & $median(\mathbf{x})$                      & 99.42 & 41.70 & 94.57 & 57.14 & 98.23 & 48.13 & 99.66 & 28.11 & 63.70 & 85.71 & 96.48 & 55.05 & 92.01 & 52.64 \\
    & $entropy(\mathbf{x})$                     & 98.87 & 48.54 & 97.38 & 47.45 & 97.54 & 56.47 & 96.56 & 56.33 & 99.15 & 48.05 & 96.62 & 62.24 & 97.69 & 53.18 \\
    \cmidrule(lr){2-16}
    \multirow{6}{*}{MobileNet-v2} 
    & $\mu(\mathbf{x})$                         & 69.65 & 85.98 & 81.26 & 75.21 & 78.17 & 83.33 & 80.02 & 78.63 & 50.19 & 89.42 & 76.59 & 84.06 & 72.65 & 82.77 \\
    & $\mu(\mathbf{x}) + 3.0\sigma(\mathbf{x})$ & 40.00 & 92.73 & 81.64 & 73.23 & 64.99 & 86.36 & 42.70 & 91.25 & 34.67 & 93.75 & 66.30 & 86.04 & 55.05 & 87.23 \\
    & $m(\mathbf{x})$                           & 41.94 & 92.00 & 81.49 & 73.31 & 64.52 & 86.40 & 44.22 & 90.68 & 34.22 & 93.72 & 65.30 & 86.10 & 55.28 & 87.04 \\
    & $median(\mathbf{x})$                      & 96.12 & 73.46 & 84.46 & 73.88 & 91.78 & 75.12 & 94.52 & 62.93 & 71.15 & 81.93 & 90.61 & 76.70 & 88.11 & 74.00 \\
    & $entropy(\mathbf{x})$                     & 99.81 & 30.47 & 99.13 & 33.71 & 99.94 & 33.94 & 99.73 & 36.10 & 99.86 & 40.63 & 99.95 & 32.66 & 99.74 & 34.59 \\ 
    \bottomrule                                                              
    \end{tabular}
    }
    \label{table: ablation_median_entropy_cifar-100}
\end{table}
\end{landscape}

\section{Analysis of the Hyperparameter $\gamma$}
\label{appendix: hyper-parameter sensitiviy}

In this section, we conduct two ablation studies. We begin by evaluating the OOD detection performance using different statistical feature representations, including the maximum \( m(\mathbf{x}) \), the mean \( \mu(\mathbf{x}) \), and a combination of \( \mu(\mathbf{x}) \) and the standard deviation \( \sigma(\mathbf{x}) \). This is followed by an analysis using the median value within each activation map, as well as the entropy computed over individual activation maps. For the purpose of this study, we limit our experiment to SUN, Place, Texture and iNaturalist (ImageNet benchmark)

We evaluate OOD detection performance using different formulations of \( \approach \) on the CIFAR and ImageNet benchmark, as formulated in Equation~\ref{eq:davis_formulation}. In particular for CIFAR benchmark, we vary the parameter \( \gamma \) over the set \(\{1.0, 2.0, 3.0, 4.0\}\) and compare the results using the energy score, as summarized in Table~\ref{table:ablation_cifar_10} and Table~\ref{table:ablation_cifar_100}. Similarly for ImageNet benchmark, we vary the parameter \( \gamma \) over the set \(\{0.5, 1.0, 1.5, 2.0\}\) and compare the results using the energy score, as summarized in Table~\ref{table:ablation_imagenet}.  Notably, we do not incorporate any existing techniques in this experiment; instead, we directly report the FPR95 and AUROC scores using the standard energy-based score.

\begin{subequations}
        \begin{align}
            h(\mathbf{x}) &= \mu(\mathbf{x}) \\
            h(\mathbf{x}) &= \mu(\mathbf{x}) + \gamma\sigma(\mathbf{x}) \\
            h(\mathbf{x}) &= m(\mathbf{x})
        \end{align}
        \label{eq:davis_formulation}
\end{subequations}

From Table~\ref{table:ablation_imagenet}, Table~\ref{table:ablation_cifar_10} and ~\ref{table:ablation_cifar_100}, we observe that as \( \gamma \) increases, the improvement in OOD detection performance gradually diminishes. When \( \gamma \) becomes very large, the performance saturates, leading to marginal gains or stagnation. Additionally, as \( \gamma \) increases, the behavior of the score function tends to converge to that of using \( m(\mathbf{x}) \) alone.
 
\begin{landscape}
\begin{table}[ht]
    \centering
     \caption{Ablation study of applying $\approach$ under different formulations on ImageNet-1K. The results are based solely on energy scores, with $\approach$ not combined with any other post-hoc methods. $\boldsymbol{\downarrow}$ indicates lower values are better and $\boldsymbol{\uparrow}$ indicates larger values are better.}
    \resizebox{0.85\linewidth}{!}{
    \begin{tabular}{ l l cc cc cc cc cc}
    \toprule
    \multirow{2}{*}{\textbf{Model}} & \multirow{2}{*}{$\approach$} & \multicolumn{2}{c}{\textbf{SUN}} & \multicolumn{2}{c}{\textbf{Place}} & \multicolumn{2}{c}{\textbf{Texture}} & \multicolumn{2}{c}{\textbf{iNaturalist}} & \multicolumn{2}{c}{\textbf{Average}}  \\
    \cmidrule(lr){3-4} \cmidrule(lr){5-6} \cmidrule(lr){7-8} \cmidrule(lr){9-10} \cmidrule(lr){11-12}

    && \textbf{FPR95} $\downarrow$ & \textbf{AUROC} $\uparrow$ & \textbf{FPR95} $\downarrow$ & \textbf{AUROC} $\uparrow$ & \textbf{FPR95} $\downarrow$ & \textbf{AUROC} $\uparrow$ & \textbf{FPR95} $\downarrow$ & \textbf{AUROC} $\uparrow$ & \textbf{FPR95} $\downarrow$ & \textbf{AUROC} $\uparrow$ \\
    \midrule
    \multirow{6}{*}{DenseNet-121}  & $\mu(\mathbf{x})$                          & 52.51 & 87.27 & 58.24 & 85.05 & 52.22 & 85.42 & 39.75 & 92.66 & 50.68 & 87.60 \\
                                   & $\mu(\mathbf{x}) + 0.5\sigma(\mathbf{x})$  & 48.33 & 88.64 & 56.95 & 85.80 & 37.50 & 90.73 & 31.69 & 94.14 & 43.62 & 89.83 \\
                                   & $\mu(\mathbf{x}) + 1.0\sigma(\mathbf{x})$  & 48.35 & 88.81 & 57.28 & 85.58 & 31.05 & 92.67 & 30.51 & 94.17 & 41.80 & 90.30 \\
                                   & $\mu(\mathbf{x}) + 1.5\sigma(\mathbf{x})$  & 48.31 & 88.75 & 57.99 & 85.24 & 27.38 & 93.61 & 30.40 & 93.99 & 41.02 & 90.40 \\
                                   & $\mu(\mathbf{x}) + 2.0\sigma(\mathbf{x})$  & 48.40 & 88.64 & 58.38 & 84.94 & 25.37 & 94.14 & 30.31 & 93.79 & 40.62 & 90.38 \\
                                   & $m(\mathbf{x})$                            & 48.55 & 88.03 & 59.36 & 83.53 & 27.54 & 93.62 & 33.74 & 92.20 & 42.30 & 89.34 \\
    \midrule
    \multirow{6}{*}{ResNet-50}     & $\mu(\mathbf{x})$                          & 58.82 & 86.58 & 65.99 & 83.96 & 52.43 & 86.72 & 53.74 & 90.62 & 57.74 & 86.97 \\
                                   & $\mu(\mathbf{x}) + 0.5\sigma(\mathbf{x})$  & 54.55 & 87.36 & 63.27 & 84.26 & 34.40 & 91.81 & 39.60 & 92.80 & 47.95 & 89.06 \\
                                   & $\mu(\mathbf{x}) + 1.0\sigma(\mathbf{x})$  & 53.82 & 87.18 & 63.12 & 83.72 & 27.36 & 93.80 & 35.92 & 93.14 & 45.05 & 89.46 \\
                                   & $\mu(\mathbf{x}) + 1.5\sigma(\mathbf{x})$  & 54.41 & 86.83 & 63.69 & 83.10 & 23.39 & 94.77 & 35.01 & 93.10 & 44.12 & 89.45 \\
                                   & $\mu(\mathbf{x}) + 2.0\sigma(\mathbf{x})$  & 54.71 & 86.47 & 64.06 & 82.55 & 21.13 & 95.32 & 34.50 & 92.95 & 43.60 & 89.33 \\
                                   & $m(\mathbf{x})$                            & 53.12 & 85.49 & 63.58 & 80.54 & 20.94 & 95.09 & 36.11 & 91.29 & 43.44 & 88.10 \\ 
    \midrule
    \multirow{6}{*}{MobileNet-v2}  & $\mu(\mathbf{x})$                          & 59.60 & 86.16 & 66.36 & 83.15 & 54.82 & 86.57 & 55.33 & 90.37 & 59.03 & 86.56 \\
                                   & $\mu(\mathbf{x}) + 0.5\sigma(\mathbf{x})$  & 56.37 & 87.07 & 64.55 & 83.69 & 38.76 & 90.99 & 44.87 & 92.34 & 51.14 & 88.53  \\
                                   & $\mu(\mathbf{x}) + 1.0\sigma(\mathbf{x})$  & 55.44 & 87.23 & 64.27 & 83.59 & 31.40 & 92.82 & 41.02 & 92.84 & 48.03 & 89.12 \\
                                   & $\mu(\mathbf{x}) + 1.5\sigma(\mathbf{x})$  & 55.23 & 87.22 & 64.45 & 83.39 & 27.70 & 93.77 & 39.68 & 92.97 & 46.76 & 89.33 \\
                                   & $\mu(\mathbf{x}) + 2.0\sigma(\mathbf{x})$  & 55.47 & 87.15 & 64.82 & 83.18 & 25.78 & 94.33 & 38.94 & 92.98 & 46.25 & 89.41 \\
                                   & $m(\mathbf{x})$                            & 53.64 & 87.26 & 63.61 & 82.93 & 25.87 & 94.03 & 40.25 & 92.22 & 45.84 & 89.11 \\
    \midrule
    \multirow{6}{*}{EfficientNet-b0} & $\mu(\mathbf{x})$                        & 85.01 & 72.86 & 86.06 & 70.99 & 75.99 & 75.86 & 78.91 & 79.78 & 81.49 & 74.87 \\
                                   & $\mu(\mathbf{x}) + 0.5\sigma(\mathbf{x})$  & 62.63 & 83.98 & 70.06 & 79.49 & 24.01 & 95.43 & 47.51 & 89.36 & 51.05 & 87.06 \\
                                   & $\mu(\mathbf{x}) + 1.0\sigma(\mathbf{x})$  & 61.07 & 84.67 & 70.09 & 79.51 & 16.01 & 96.97 & 46.69 & 89.09 & 48.47 & 87.56 \\
                                   & $\mu(\mathbf{x}) + 1.5\sigma(\mathbf{x})$  & 61.56 & 84.72 & 70.87 & 79.24 & 13.55 & 97.37 & 47.65 & 88.67 & 48.41 & 87.50 \\
                                   & $\mu(\mathbf{x}) + 2.0\sigma(\mathbf{x})$  & 61.85 & 84.67 & 71.31 & 79.01 & 12.18 & 97.52 & 48.25 & 88.35 & 48.40 & 87.39 \\
                                   & $m(\mathbf{x})$                            & 69.42 & 80.74 & 79.09 & 73.67 & 15.82 & 96.38 & 63.68 & 80.63 & 57.00 & 82.85 \\                                                            
    \bottomrule
    \end{tabular}
    }
    \label{table:ablation_imagenet}
\end{table}

\end{landscape}

\begin{landscape}
\begin{table}[ht]
    \centering
    \caption{Ablation study of applying $\approach$ under different formulations on CIFAR-10. The results are based solely on energy scores, with $\approach$ not combined with any other post-hoc methods. $\boldsymbol{\downarrow}$ indicates lower values are better and $\boldsymbol{\uparrow}$ indicates larger values are better.}
    \resizebox{\linewidth}{!}{
    \begin{tabular}{l c l cc cc cc cc cc cc cc}
    \toprule
     \multirow{2}{*}{\textbf{Dataset}} & \multirow{2}{*}{\textbf{$\approach$}} & \multirow{2}{*}{\textbf{Model}} & \multicolumn{2}{c}{\textbf{SVHN}} & \multicolumn{2}{c}{\textbf{Place365}} & \multicolumn{2}{c}{\textbf{iSUN}} & \multicolumn{2}{c}{\textbf{Textures}} & \multicolumn{2}{c}{\textbf{LSUN-c}}  & \multicolumn{2}{c}{\textbf{LSUN-r}} & \multicolumn{2}{c}{\textbf{Average}}  \\
    \cmidrule(lr){4-5} \cmidrule(lr){6-7} \cmidrule(lr){8-9} \cmidrule(lr){10-11} \cmidrule(lr){12-13} \cmidrule(lr){14-15} \cmidrule(lr){16-17}
    &&& \textbf{FPR95} $\downarrow$ & \textbf{AUROC} $\uparrow$ & \textbf{FPR95} $\downarrow$ & \textbf{AUROC} $\uparrow$ & \textbf{FPR95} $\downarrow$ & \textbf{AUROC} $\uparrow$ & \textbf{FPR95} $\downarrow$ & \textbf{AUROC} $\uparrow$ & \textbf{FPR95} $\downarrow$ & \textbf{AUROC} $\uparrow$ & \textbf{FPR95} $\downarrow$ & \textbf{AUROC} $\uparrow$ & \textbf{FPR95} $\downarrow$ & \textbf{AUROC} $\uparrow$ \\
    \midrule
    \multirow{24}{*}{CIFAR-10} & \multirow{3}{*}{ResNet-18} & $\mu(\mathbf{x})$     & 44.32 & 94.04 & 41.31 & 91.73 & 35.46 & 94.64 & 50.39 & 91.12 & 9.77 & 98.19 & 32.41 & 95.16 & 35.61 & 94.14 \\
                                   && $\mu(\mathbf{x}) + 1.0\sigma(\mathbf{x})$     & 23.17 & 96.07 & 37.39 & 92.89 & 24.13 & 96.45 & 33.60 & 94.96 & 6.35 & 98.69 & 21.25 & 96.81 & 24.32 & 95.98 \\
                                   && $\mu(\mathbf{x}) + 2.0\sigma(\mathbf{x})$     & 20.51 & 96.36 & 36.60 & 93.08 & 21.74 & 96.77 & 28.74 & 95.62 & 5.80 & 98.77 & 18.78 & 97.09 & 22.03 & 96.28 \\
                                   && $\mu(\mathbf{x}) + 3.0\sigma(\mathbf{x})$     & 19.83 & 96.47 & 36.11 & 93.16 & 20.48 & 96.90 & 26.77 & 95.89 & 5.62 & 98.79 & 17.75 & 97.21 & 21.09 & 96.40 \\
                                   && $\mu(\mathbf{x}) + 4.0\sigma(\mathbf{x})$     & 19.23 & 96.52 & 35.59 & 93.20 & 19.82 & 96.97 & 25.60 & 96.03 & 5.53 & 98.81 & 17.08 & 97.27 & 20.48 & 96.47 \\
                                   && $m(\mathbf{x})$                               & 19.81 & 96.29 & 32.32 & 93.73 & 15.79 & 97.37 & 21.90 & 96.44 & 5.83 & 98.73 & 13.63 & 97.61 & 18.21 & 96.69 \\
    \cmidrule(lr){2-17}
                               & \multirow{3}{*}{ResNet-34} & $\mu(\mathbf{x})$     & 35.44 & 93.76 & 38.15 & 92.27 & 19.90 & 96.87 & 42.52 & 92.54 & 3.38 & 99.10 & 16.86 & 97.20 & 26.04 & 95.29 \\
                               && $\mu(\mathbf{x}) + 1.0\sigma(\mathbf{x})$         & 29.12 & 94.81 & 33.66 & 93.45 & 12.81 & 97.70 & 27.71 & 95.66 & 3.28 & 99.22 & 11.35 & 97.89 & 19.65 & 96.45 \\
                               && $\mu(\mathbf{x}) + 2.0\sigma(\mathbf{x})$         & 28.70 & 94.88 & 32.81 & 93.62 & 11.50 & 97.83 & 24.41 & 96.18 & 3.39 & 99.22 & 10.55 & 97.99 & 18.56 & 96.62 \\
                               && $\mu(\mathbf{x}) + 3.0\sigma(\mathbf{x})$         & 28.96 & 94.90 & 32.66 & 93.69 & 11.31 & 97.88 & 23.07 & 96.39 & 3.47 & 99.22 & 10.35 & 98.04 & 18.30 & 96.69 \\
                               && $\mu(\mathbf{x}) + 4.0\sigma(\mathbf{x})$         & 28.88 & 94.91 & 32.29 & 93.72 & 10.96 & 97.91 & 22.34 & 96.50 & 3.47 & 99.21 & 10.15 & 98.06 & 18.01 & 96.72 \\
                               && $m(\mathbf{x})$                                   & 29.29 & 94.59 & 29.59 & 94.21 & 10.17 & 98.03 & 20.23 & 96.73 & 4.00 & 99.10 &  9.45 & 98.16 & 17.12 & 96.81 \\
    \cmidrule(lr){2-17}
                                & \multirow{3}{*}{DenseNet-101} & $\mu(\mathbf{x})$                         & 37.91 & 93.59 & 36.38 & 92.39 & 7.83 & 98.23 & 43.85 & 90.49 & 1.95 & 99.47 & 7.34 & 98.34 & 22.54 & 95.42 \\
                                                           && $\mu(\mathbf{x}) + 1.0\sigma(\mathbf{x})$     & 27.94 & 95.62 & 32.66 & 93.32 & 5.52 & 98.63 & 29.96 & 94.64 & 3.68 & 99.17 & 5.43 & 98.70 & 17.53 & 96.68 \\
                                                           && $\mu(\mathbf{x}) + 2.0\sigma(\mathbf{x})$     & 25.73 & 95.87 & 32.50 & 93.42 & 5.65 & 98.65 & 26.79 & 95.49 & 4.66 & 99.02 & 5.44 & 98.71 & 16.79 & 96.86 \\
                                                           && $\mu(\mathbf{x}) + 3.0\sigma(\mathbf{x})$     & 24.75 & 95.93 & 32.08 & 93.43 & 5.77 & 98.64 & 25.23 & 95.84 & 5.13 & 98.94 & 5.63 & 98.70 & 16.43 & 96.91 \\
                                                           && $\mu(\mathbf{x}) + 4.0\sigma(\mathbf{x})$     & 24.20 & 95.96 & 31.95 & 93.44 & 5.80 & 98.63 & 24.04 & 96.03 & 5.47 & 98.88 & 5.64 & 98.69 & 16.18 & 96.94 \\
                                                           && $m(\mathbf{x})$                               & 30.50 & 94.50 & 33.04 & 93.28 & 7.81 & 98.29 & 25.16 & 95.85 & 9.95 & 98.19 & 7.32 & 98.37 & 18.96 & 96.41 \\
    \cmidrule(lr){2-17}
                                & \multirow{3}{*}{MobileNet-v2} & $\mu(\mathbf{x})$                         & 75.83 & 85.85 & 44.98 & 90.62 & 29.68 & 95.03 & 48.67 & 91.19 &  9.54 & 98.12 & 29.80 & 95.10 & 39.75 & 92.65 \\
                                                           && $\mu(\mathbf{x}) + 1.0\sigma(\mathbf{x})$     & 66.48 & 88.29 & 43.56 & 90.97 & 24.53 & 96.08 & 38.67 & 93.72 & 11.88 & 97.81 & 23.22 & 96.22 & 34.72 & 93.85 \\
                                                           && $\mu(\mathbf{x}) + 2.0\sigma(\mathbf{x})$     & 63.81 & 88.91 & 43.94 & 90.92 & 23.16 & 96.31 & 35.46 & 94.40 & 13.38 & 97.62 & 21.98 & 96.47 & 33.62 & 94.11 \\
                                                           && $\mu(\mathbf{x}) + 3.0\sigma(\mathbf{x})$     & 62.07 & 89.18 & 43.87 & 90.86 & 22.38 & 96.40 & 33.24 & 94.71 & 14.21 & 97.50 & 21.32 & 96.57 & 32.85 & 94.20 \\
                                                           && $\mu(\mathbf{x}) + 4.0\sigma(\mathbf{x})$     & 60.98 & 89.32 & 43.59 & 90.81 & 21.82 & 96.44 & 31.79 & 94.88 & 14.55 & 97.42 & 20.56 & 96.62 & 32.21 & 94.25 \\
                                                           && $m(\mathbf{x})$                               & 69.61 & 87.56 & 44.33 & 91.00 & 23.33 & 96.26 & 37.87 & 94.11 & 17.62 & 97.02 & 22.48 & 96.40 & 35.87 & 93.73 \\
    \bottomrule
    \end{tabular}
    }
    \label{table:ablation_cifar_10}
\end{table}
\end{landscape}

\begin{landscape}
\begin{table}[ht]
    \centering
    \caption{Ablation study of applying $\approach$ under different formulations on CIFAR-100. The results are based solely on energy scores, with $\approach$ not combined with any other post-hoc methods. $\boldsymbol{\downarrow}$ indicates lower values are better and $\boldsymbol{\uparrow}$ indicates larger values are better.}
    \resizebox{\linewidth}{!}{
    \begin{tabular}{l c l cc cc cc cc cc cc cc}
    \toprule
     \multirow{2}{*}{\textbf{Dataset}} & \multirow{2}{*}{\textbf{$\approach$}} & \multirow{2}{*}{\textbf{Model}} & \multicolumn{2}{c}{\textbf{SVHN}} & \multicolumn{2}{c}{\textbf{Place365}} & \multicolumn{2}{c}{\textbf{iSUN}} & \multicolumn{2}{c}{\textbf{Textures}} & \multicolumn{2}{c}{\textbf{LSUN-c}}  & \multicolumn{2}{c}{\textbf{LSUN-r}} & \multicolumn{2}{c}{\textbf{Average}}  \\
    \cmidrule(lr){4-5} \cmidrule(lr){6-7} \cmidrule(lr){8-9} \cmidrule(lr){10-11} \cmidrule(lr){12-13} \cmidrule(lr){14-15} \cmidrule(lr){16-17}
    &&& \textbf{FPR95} $\downarrow$ & \textbf{AUROC} $\uparrow$ & \textbf{FPR95} $\downarrow$ & \textbf{AUROC} $\uparrow$ & \textbf{FPR95} $\downarrow$ & \textbf{AUROC} $\uparrow$ & \textbf{FPR95} $\downarrow$ & \textbf{AUROC} $\uparrow$ & \textbf{FPR95} $\downarrow$ & \textbf{AUROC} $\uparrow$ & \textbf{FPR95} $\downarrow$ & \textbf{AUROC} $\uparrow$ & \textbf{FPR95} $\downarrow$ & \textbf{AUROC} $\uparrow$ \\
    \midrule
    \multirow{24}{*}{CIFAR-100} & \multirow{3}{*}{ResNet-18} & $\mu(\mathbf{x})$                        & 66.64 & 89.53 & 81.23 & 76.84 & 73.67 & 82.01 & 85.30 & 75.68 & 48.01 & 91.63 & 70.30 & 83.38 & 70.86 & 83.18 \\
                                                           && $\mu(\mathbf{x}) + 1.0\sigma(\mathbf{x})$ & 50.36 & 92.68 & 79.76 & 77.60 & 66.90 & 86.55 & 70.83 & 84.19 & 41.42 & 93.01 & 64.24 & 87.16 & 62.25 & 86.86 \\
                                                           && $\mu(\mathbf{x}) + 2.0\sigma(\mathbf{x})$ & 44.41 & 93.44 & 79.20 & 77.62 & 62.91 & 87.94 & 63.87 & 86.71 & 39.10 & 93.34 & 61.13 & 88.32 & 58.44 & 87.89 \\
                                                           && $\mu(\mathbf{x}) + 3.0\sigma(\mathbf{x})$ & 42.01 & 93.75 & 78.99 & 77.58 & 61.13 & 88.60 & 59.89 & 87.89 & 38.56 & 93.48 & 59.76 & 88.86 & 56.72 & 88.36 \\
                                                           && $\mu(\mathbf{x}) + 4.0\sigma(\mathbf{x})$ & 40.43 & 93.92 & 78.88 & 77.54 & 59.96 & 88.98 & 57.50 & 88.55 & 38.04 & 93.55 & 58.75 & 89.18 & 55.59 & 88.62 \\
                                                           && $m(\mathbf{x})$                           & 38.40 & 94.25 & 77.62 & 78.98 & 56.43 & 89.99 & 55.73 & 89.14 & 35.90 & 94.17 & 55.87 & 89.95 & 53.32 & 89.41 \\
    \cmidrule(lr){2-17}
                                & \multirow{3}{*}{ResNet-34} & $\mu(\mathbf{x})$                         & 57.79 & 89.80 & 81.17 & 77.25 & 71.83 & 84.14 & 86.77 & 75.82 & 55.56 & 89.92 & 68.70 & 84.93 & 70.30 & 83.64 \\
                                                            && $\mu(\mathbf{x}) + 1.0\sigma(\mathbf{x})$ & 38.34 & 93.68 & 80.15 & 77.72 & 63.63 & 87.91 & 71.26 & 84.56 & 45.92 & 92.16 & 62.71 & 88.03 & 60.33 & 87.34 \\
                                                            && $\mu(\mathbf{x}) + 2.0\sigma(\mathbf{x})$ & 33.40 & 94.61 & 79.93 & 77.63 & 61.00 & 88.99 & 64.01 & 87.13 & 42.73 & 92.74 & 60.97 & 88.91 & 57.01 & 88.34 \\
                                                            && $\mu(\mathbf{x}) + 3.0\sigma(\mathbf{x})$ & 31.07 & 95.00 & 80.04 & 77.53 & 59.46 & 89.50 & 59.75 & 88.31 & 41.21 & 93.00 & 60.05 & 89.32 & 55.26 & 88.78 \\
                                                            && $\mu(\mathbf{x}) + 4.0\sigma(\mathbf{x})$ & 30.17 & 95.21 & 80.44 & 77.45 & 58.88 & 89.78 & 57.50 & 88.99 & 40.66 & 93.14 & 59.67 & 89.56 & 54.55 & 89.02 \\
                                                           && $m(\mathbf{x})$                            & 29.56 & 95.19 & 77.68 & 78.81 & 55.37 & 90.51 & 56.29 & 89.14 & 38.37 & 93.37 & 56.27 & 90.23 & 52.26 & 89.54 \\
    \cmidrule(lr){2-17}
                                & \multirow{3}{*}{DenseNet-101} & $\mu(\mathbf{x})$                     & 70.99 & 86.66 & 77.12 & 76.94 & 64.28 & 83.92 & 83.60 & 67.47 & 11.45 & 97.89 & 56.08 & 86.84 & 60.59 & 83.29 \\
                                                           && $\mu(\mathbf{x}) + 1.0\sigma(\mathbf{x})$ & 48.54 & 91.64 & 72.99 & 79.57 & 48.89 & 90.36 & 65.82 & 80.52 & 12.65 & 97.80 & 44.65 & 91.29 & 48.92 & 88.53 \\
                                                           && $\mu(\mathbf{x}) + 2.0\sigma(\mathbf{x})$ & 45.38 & 91.93 & 71.75 & 79.77 & 43.78 & 91.65 & 58.81 & 83.94 & 13.18 & 97.63 & 41.41 & 92.13 & 45.72 & 89.51 \\
                                                           && $\mu(\mathbf{x}) + 3.0\sigma(\mathbf{x})$ & 45.23 & 91.91 & 72.11 & 79.78 & 42.21 & 92.16 & 55.85 & 85.47 & 13.92 & 97.53 & 40.52 & 92.45 & 44.97 & 89.88 \\
                                                           && $\mu(\mathbf{x}) + 4.0\sigma(\mathbf{x})$ & 45.67 & 91.86 & 72.40 & 79.76 & 41.30 & 92.43 & 54.45 & 86.33 & 14.39 & 97.46 & 40.31 & 92.61 & 44.75 & 90.07 \\
                                                           && $m(\mathbf{x})$                           & 52.32 & 89.41 & 73.49 & 79.62 & 35.76 & 93.37 & 49.88 & 88.07 & 18.58 & 96.44 & 37.27 & 92.98 & 44.55 & 89.98 \\
    \cmidrule(lr){2-17}
                                & \multirow{3}{*}{MobileNet-v2} & $\mu(\mathbf{x})$                     & 69.65 & 85.98 & 81.26 & 75.21 & 78.17 & 83.33 & 80.02 & 78.63 & 50.19 & 89.42 & 76.59 & 84.06 & 72.65 & 82.77 \\
                                                           && $\mu(\mathbf{x}) + 1.0\sigma(\mathbf{x})$ & 47.36 & 91.05 & 80.49 & 74.45 & 68.34 & 85.74 & 56.91 & 87.66 & 38.63 & 92.57 & 68.52 & 85.76 & 60.04 & 86.20 \\
                                                           && $\mu(\mathbf{x}) + 2.0\sigma(\mathbf{x})$ & 42.01 & 92.25 & 81.14 & 73.72 & 66.13 & 86.22 & 47.16 & 90.17 & 35.84 & 93.39 & 67.00 & 86.00 & 56.55 & 86.96 \\
                                                           && $\mu(\mathbf{x}) + 3.0\sigma(\mathbf{x})$ & 40.00 & 92.73 & 81.64 & 73.23 & 64.99 & 86.36 & 42.70 & 91.25 & 34.67 & 93.75 & 66.30 & 86.04 & 55.05 & 87.23 \\
                                                           && $\mu(\mathbf{x}) + 4.0\sigma(\mathbf{x})$ & 38.28 & 92.99 & 81.77 & 72.90 & 64.03 & 86.42 & 40.00 & 91.85 & 33.51 & 93.93 & 65.60 & 86.02 & 53.87 & 87.35 \\
                                                           && $m(\mathbf{x})$                           & 41.94 & 92.00 & 81.49 & 73.31 & 64.52 & 86.40 & 44.22 & 90.68 & 34.22 & 93.72 & 65.30 & 86.10 & 55.28 & 87.04 \\
    \bottomrule
    \end{tabular}
    }
    \label{table:ablation_cifar_100}
\end{table}
\end{landscape}

\end{document}